\pgfplotsset{compat=1.18}
\newcommand{\cmark}{\ding{51}}
\definecolor{light-gray}{gray}{0.8}
\newcommand{\xmark}{\color{light-gray}\ding{55}}
\DeclareRobustCommand{\cev}[1]{
  {\mathpalette\do@cev{#1}}
}
\newcommand{\do@cev}[2]{
  \vbox{\offinterlineskip
    \sbox\z@{$\m@th#1 x$}
    \ialign{##\cr
      \hidewidth\reflectbox{$\m@th#1\vec{}\mkern4mu$}\hidewidth\cr
      \noalign{\kern-\ht\z@}
      $\m@th#1#2$\cr
    }
  }
}
\DeclareMathSymbol{\shortminus}{\mathbin}{AMSa}{"39}
\newcommand*{\sT}{\mathsf{T}}
\newcommand*{\rd}{\mathrm{d}}
\newcommand*{\bP}{\mathbb{P}}
\newcommand*{\bR}{\mathbb{R}}
\newcommand*{\bE}{\mathbb{E}}
\newcommand*{\cP}{\mathcal{P}}
\newcommand*{\cG}{\mathcal{G}}
\newcommand*{\bN}{\mathbb{N}}
\newcommand*{\cA}{\mathcal{A}}
\newcommand*{\cE}{\mathcal{E}}
\newcommand*{\cL}{\mathcal{L}}
\newcommand*{\cS}{\mathcal{S}}
\newcommand*{\cR}{\mathcal{R}}
\newcommand*{\cF}{\mathcal{F}}
\newcommand*{\cJ}{\mathcal{J}}
\newcommand*{\cK}{\mathcal{K}}
\newcommand*{\cU}{\mathcal{U}}
\newcommand*{\cQ}{\mathcal{Q}}
\newcommand*{\cW}{\mathcal{W}}
\newcommand*{\cM}{\mathcal{M}}
\newcommand*{\cN}{\mathcal{N}}
\newcommand*{\cX}{\mathcal{X}}
\newcommand*{\cY}{\mathcal{Y}}
\newcommand*{\cT}{\mathcal{T}}
\newcommand*{\cC}{\mathcal{C}}
\newcommand*{\cD}{\mathcal{D}}
\newcommand*{\cO}{\mathcal{O}}
\newcommand*{\cZ}{\mathcal{Z}}
\newcommand*{\cV}{\mathcal{V}}
\newcommand*{\cBWUVP}{{cB$\mathbb{W}^2_2$\textsc{-uvp}}}
\newcommand*{\BWUVP}{{B$\mathbb{W}^2_2$\textsc{-uvp}}}
\newcommand*{\vcdot}{\boldsymbol{\cdot}}
\newcommand*{\dx}{\mathrm{d}x}
\newcommand*{\dy}{\mathrm{d}y}
\newcommand*{\dt}{\mathrm{d}t}
\newcommand*{\dWt}{\mathrm{d}W_{\mspace{-1mu}t}}
\newcommand*{\KL}{\mathrm{KL}}
\newcommand*{\BW}{\mathtt{BW}}
\newcommand*{\WFR}{\mathtt{WFR}}
\newcommand*{\FR}{\mathtt{FR}}
\newcommand*{\WFRgrad}{\mathtt{WFRgrad}}
\newcommand*{\nablaWFR}{\nabla_{\mspace{-3mu}\texttt{WFR}}}
\newcommand*{\Umu}{U_{\mkern-1.2mu\raisebox{1pt}{$\scriptstyle\mu$}}}
\newcommand*{\Unu}{U_{\mkern-1.2mu\raisebox{1pt}{$\scriptstyle\nu$}}}
\newcommand*{\Ftildesmallt}{\mathord{\text{\small$\widetilde{F}_t$}}}
\newcommand*{\deltaC}{\delta_{\mspace{-1mu}\scriptscriptstyle\mathcal{C}}}
\newcommand*{\deltaD}{\delta_{\mspace{-1mu}\scriptscriptstyle\mathcal{D}}}
\newcommand*{\dsupplus}{d^{\mspace{-.3mu}\scriptscriptstyle+}}
\newcommand*{\nablaW}{\nabla_{\!\scriptscriptstyle\mathbb{W}}}
\newcommand*{\vSppd}{\mathbf{S}_{\mspace{-2mu}\raisebox{2pt}{$\scriptscriptstyle+\mspace{-1.1mu}+$}}^{d}}
\newcommand*{\pcD}{\pi^\circ_{\mspace{-1mu}\raisebox{1pt}{$\scriptscriptstyle\mathcal{D}$}}}
\newcommand*{\pct}{\pi^\circ_{\raisebox{.9pt}{$\scriptstyle t$}}}
\newcommand*{\pcn}{\pi^\circ_{\raisebox{.9pt}{$\scriptstyle n$}}}
\newcommand*{\vecpct}{\vec{\pi}^\circ_{\raisebox{.9pt}{$\scriptstyle t$}}}
\newcommand*{\pctplusone}{\pi^\circ_{\raisebox{.9pt}{$\scriptstyle t+1$}}}
\newcommand*{\piT}{\pi_{\mspace{-1mu}\scriptscriptstyle T}}
\newcommand*{\pcT}{\pi^\circ_{\mspace{-1mu}\raisebox{1pt}{$\scriptscriptstyle T$}}}
\newcommand*{\piTplusone}{\pi_{\mspace{-1mu}{\scriptscriptstyle T}+1}}
\newcommand*{\etaT}{\eta_{\mspace{-1mu}\scriptscriptstyle T}}
\newcommand*{\nablaGat}{\nabla_{\!\textrm{G\^at}}}
\newcommand*{\nablaFre}{\nabla_{\!\textrm{Fr\'e}}}
\newcommand*{\gT}{g^{\mspace{-2mu}\raisebox{-0.2ex}{$\scriptstyle\mathsf{T}$}}}
\renewcommand*{\div}{\nabla\!\boldsymbol{\cdot}\mspace{-1.5mu}}
\newcommand*{\divsq}{\nabla^{\mspace{-.5mu}2}\mspace{-2mu}\boldsymbol{\cdot}\mspace{-1.8mu}}
\newcommand*{\fsupplus}{f^{\mspace{-0.4mu}{\scriptscriptstyle+}}}
\newcommand*{\fsupminus}{f^{\mspace{-.5mu}{\scriptscriptstyle-}}}
\newcommand*{\vsupplus}{v^{\mspace{-1mu}{\scriptscriptstyle+}}}
\newcommand*{\vsupminus}{v^{\mspace{-1mu}{\scriptscriptstyle-}}}
\newcommand*{\gammasupplus}{\gamma^{\mspace{-0.4mu}{\scriptscriptstyle+}}}
\newcommand*{\gammasupminus}{\gamma^{\mspace{-0.4mu}{\scriptscriptstyle-}}}
\newcommand*{\rhosupplus}{\rho^{\mspace{-1mu}{\scriptscriptstyle+}}}
\newcommand*{\rhosupminus}{\rho^{\mspace{-1mu}{\scriptscriptstyle-}}}
\newcommand*{\ggTscript}{{gg^{\mspace{-1mu}\raisebox{-0.2ex}{$\scriptscriptstyle\mathsf{T}$}}}}
\newcommand*{\SFMSink}{SF$^2$M-Sink}
\newcommand*{\Gibbs}{Gibbs}
\newcommand*{\Fisher}{Fisher}
\newcommand*{\Doob}{Doob}
\newcommand*{\Schrodinger}{Schr\"odinger}
\newcommand*{\Girsanov}{Girsanov}
\newcommand*{\MonteCarlo}{Monte Carlo}
\newcommand*{\Riemannian}{Riemannian}
\newcommand*{\Euclidean}{Euclidean}
\newcommand*{\Bregman}{Bregman}
\newcommand*{\Sinkhorn}{Sinkhorn}
\newcommand*{\Fenchel}{Fenchel}
\newcommand*{\Lebesgue}{Lebesgue}
\newcommand*{\Wasserstein}{Wasserstein}
\newcommand*{\Gaussian}{Gaussian}
\newcommand*{\LaSalle}{LaSalle}
\newcommand*{\Lyapunov}{Lyapunov}
\newcommand*{\Markovian}{Markovian}
\newcommand*{\FokkerPlanck}{Fokker–Planck}
\newcommand*{\RadonNikodym}{Radon-Nikodym}
\newcommand*{\Hessian}{Hessian}
\newcommand*{\KullbackLeibler}{Kullback-Leibler}
\newcommand*{\FisherRao}{Fisher--Rao}
\newcommand*{\WassersteinFisherRao}{Wasserstein-Fisher-Rao}
\newcommand*{\Jordan}{Jordan}
\newcommand*{\Kinderlehrer}{Kinderlehrer}
\newcommand*{\Otto}{Otto}
\newcommand*{\BuresWasserstein}{Bures--Wasserstein}
\newcommand*{\Langevin}{Langevin}
\newcommand*{\Borel}{Borel}
\newcommand*{\Wiener}{Wiener}
\newcommand*{\Danskin}{Danskin}
\newcommand*{\Brenier}{Brenier}
\newcommand*{\Brownian}{Brownian}
\newcommand*{\BenamouBrenier}{Benamou--Brenier}
\newcommand*{\Hellinger}{Hellinger}
\newcommand*{\HellingerKantorovich}{Hellinger--Kantorovich}
\newcommand*{\Gateaux}{G\^ateaux}
\newcommand*{\Frechet}{Fr\'echet}
\newcommand*{\Lipschitz}{Lipschitz}
\newcommand*{\Sobolev}{Sobolev}
\newcommand*{\Kolmogorov}{Kolmogorov}
\newcommand*{\one}{{\raisebox{.53pt}{\small\textcircled{\raisebox{-.23pt} {\hskip.06pt\scriptsize1}}}}}
\newcommand*{\two}{{\raisebox{.53pt}{\small\textcircled{\raisebox{-.2pt} {\hskip.1pt\scriptsize2}}}}}
\newcommand*{\three}{{\raisebox{.53pt}{\small\textcircled{\raisebox{-.25pt} {\hskip.15pt\scriptsize3}}}}}
\newcommand*{\four}{{\raisebox{.53pt}{\small\textcircled{\raisebox{-.25pt} {\scriptsize4}}}}}
\newcommand*{\five}{{\raisebox{.53pt}{\small\textcircled{\raisebox{-.25pt} {\hskip.1pt\scriptsize5}}}}}
\newcommand*{\subfigA}{(a)}
\newcommand*{\subfigB}{(b)}
\newcommand*{\subfigC}{(c)}
\newcommand*{\ie}{\textit{i.e.}}
\newcommand*{\eg}{\textit{e.g.}}
\DeclareMathOperator*{\argmin}{arg\,min}
\DeclareMathOperator*{\argmax}{arg\,max}
\DeclareMathOperator*{\minimize}{minimize}
\DeclareMathOperator*{\maximize}{maximize}
\DeclareMathOperator*{\dom}{dom}
\DeclareMathOperator*{\Lip}{Lip}
\theoremstyle{plain}
\newtheorem{theorem}{Theorem}
\newtheorem{proposition}{Proposition}
\newtheorem{lemma}{Lemma}
\theoremstyle{definition}
\newtheorem{definition}{Definition}
\newtheorem{assumption}{Assumption}
\newtheorem{remark}{Remark}
\newcommand*{\titletext}{
  Variational Online Mirror Descent for\\Robust Learning in \Schrodinger{} Bridge} 
\newcommand*{\titleonelinetext}{
  Variational Online Mirror Descent for Robust Learning in \Schrodinger{} Bridge} 
\title{\titletext{}}
\author{\name Dong-Sig Han,$^1$ Jaein Kim,$^2$ Hee Bin Yoo,$^2$ and  Byoung-Tak Zhang$^2$ \\
  \addr $^1$Department of Computing, Imperial College London\enspace
    $^2$Artificial Intelligence Institute, Seoul National University
}
\begin{document}

\maketitle

\begin{abstract}
  The Schr\"{o}dinger bridge (SB) has evolved into a universal class of probabilistic generative models. In practice, however, estimated learning signals are innately uncertain, and the reliability promised by existing methods is often based on speculative optimal case scenarios. Recent studies regarding the Sinkhorn algorithm through mirror descent (MD) have gained attention, revealing geometric insights into solution acquisition of the SB problems. In this paper, we propose a variational online MD (OMD) framework for the SB problems, which provides further stability to SB solvers. We formally prove convergence and a regret bound for the novel OMD formulation of SB acquisition. As a result, we propose a simulation-free SB algorithm called Variational Mirrored Schr\"{o}dinger Bridge (VMSB) by utilizing the Wasserstein-Fisher-Rao geometry of the Gaussian mixture parameterization for Schr\"{o}dinger potentials. Based on the Wasserstein gradient flow theory, the algorithm offers tractable learning dynamics that precisely approximate each OMD step. In experiments, we validate the performance of the proposed VMSB algorithm across an extensive suite of benchmarks. VMSB consistently outperforms contemporary SB solvers on a wide range of SB problems, demonstrating the robustness as well as generality predicted by our OMD theory. 
\end{abstract}

\section{Introduction}

The \Schrodinger{} bridge (SB; \citealp{schrodinger2}) has emerged as a universal class of probabilistic generative models. Nevertheless, the methodologies used in SB algorithms remain somewhat \textit{atypical}, as each algorithm involves a series of sophisticated approaches to derive a solution. The excessive specialization in existing algorithms arises from the absence of a unified perspective, which has led contemporary studies to rest on overly idealized or unrealistic assumptions. To address the issue, a formal discussion of the SB algorithms' robustness, akin to discussions in classical optimization theory \citep{xu2008robust, duchi2021learning} is beneficial for ensuring the stability and reliability of their solutions against uncertainty, thereby improving overall performance. Considering the problem as an instance of ordinary optimization opens avenues for progress, particularly for improving stability. 

We draw inspiration from the rich geometric properties of the \Schrodinger{} bridge problem (SBP; \citealp{sbsurvey}) induced by \textit{entropy regularization} \citep{introeot}, which is closely related to the field of information geometry \citep{infogeo}. In this type of geometry, \textbf{mirror descent} (MD; \citealp{md}) provides a natural methodology for discrete updates, which has proven effective in various scenarios \citep{smd, shalev2012online, amid2020reparameterizing, amid2020winnowing, ghai2020exponentiated, mdirl}. For example, given a strictly convex function $\Omega: \bR^k \to \bR\cup\{+\infty\}$ on the Euclidean space, an update of classical MD with respect to a cost function $F: \bR^k \to \bR\cup\{+\infty\}$ is formulated as 
\begin{equation} \label{eq:md}
  \nabla\Omega(\hspace*{.2pt}w_{t+1}\hspace*{-.5pt})=\nabla\Omega(\hspace*{.2pt}w_t)-\eta_t\nabla\hspace*{-1pt} F(\hspace*{.2pt}w_t\hspace*{-.1pt}),
\end{equation}
where the gradient operator $\nabla\Omega(\hspace*{1pt}\cdot\hspace*{1pt})$ defines an injective mapping from the parameter space to the \textit{dual} space. Recent studies have shown that the \Sinkhorn{} algorithm for training SB models can be reinterpreted as mirror descent using log-\Schrodinger{} potentials as dual parameters \citep{cot, leger2021gradient, mdsem}. These prior analyses have predominantly focused on the optimal-case scenario with a fixed $F$ for simplicity. However, theoretical improvements for worst-case scenarios remain underexplored, particularly with respect to online convex learning methods \citep{zinkevich2003online} for unknown sequences of cost functions $\{F_t\}_{t=1}^\infty$, a setting known as \textbf{online mirror descent} (OMD; \citealp{omd_univ, omd_converge}). Since the SBP is an infinite-dimensional optimization \citep{sf} within optimal transport (OT), applying general OMD arguments to the SB context requires theoretical development and a computational strategy to ensure practical performance improvements. Leveraging the \Wasserstein{} gradient flow discovered by \Jordan{}, \Kinderlehrer{}, and \Otto{} (JKO; \citealp{jko}), this work proposes to solve these iterative optimization problems of OMD updates via gradient dynamics endowed with the \Wasserstein{} metric which has been canonically used in solving OT, replacing standard Euclidean gradients flows \citep{santambrogio2017euclidean}. We further establish a fundamental insight that OMD acts as the informational counterpart of gradient descent in Wasserstein geometry, thus providing a core theoretical rationale for our algorithm.  

\begin{figure}[t]
  \centering
  \begin{minipage}[b]{0.49\textwidth}
    \centering
    \definecolor{pastelgreen}{RGB}{205, 205, 205}
    \definecolor{pastelorange}{RGB}{205, 205, 205}
    \definecolor{superlightgray}{RGB}{235, 235, 235}
    \definecolor{verylightgray}{RGB}{185, 185, 185}
    \definecolor{medlightgray}{RGB}{165, 165, 165}
    \definecolor{somelightgray}{RGB}{125, 125, 125}
    \definecolor{someblue}{RGB}{45, 77, 139}
    \definecolor{lightblue}{RGB}{100, 144, 233}
    \definecolor{extremelightgray}{RGB}{245, 245, 245}
    \newcommand*{\PrimalSpace}{Primal Space}
    \newcommand*{\DualSpace}{Dual Space}
    \newcommand*{\Convert}{$\deltaC\Omega$}
    \newcommand*{\Reverse}{$\deltaD\Omega^\ast$}
    \tikzset{inner sep=0pt, outer sep=0pt}
    \centering 
    \begin{tikzpicture}[tight background, scale=1.28]
      \begin{scope}[xshift=-1.35cm, yshift=-0.27cm]
        \node[transform shape, scale=0.83] at (-0.18,1.48) {\PrimalSpace};
        \coordinate (O1) at (0, 0);
        \coordinate (O2) at (1.384, -0.8408);
        \coordinate (A1) at (0, 1.31);
        \coordinate (A2) at (1.236, 1.38);
        \coordinate (B1) at (-1.48, 0);
        \coordinate (B2) at (-0.68, -0.8096);
        \coordinate (C1) at (-1.42, 1.38);
        \coordinate (C2) at (0.2, 1.204);
        \coordinate (C3) at (0.89, -0.535);
        \coordinate (C4) at (-1.06, -0.06);
        \coordinate (D2) at (0.794, 0.795);
        \coordinate (D3) at (0.984, -0.707);
        \coordinate (D4) at (-1.18, -0.584);
        \coordinate (P0) at (-0.72, 0.96);
        \coordinate (P1) at (-0.0, 0.37);
        \coordinate (P2) at (0.5, -0.3);
        \shade[left color=extremelightgray!15, right color=red, bottom color=extremelightgray, shading angle=45, thin] (C2) to[out=-25, in=130] (D2) to[out=-70, in=85] (D3) to[out=180, in=-5] (D4) to[out=68, in=-95] (C4);
        \begin{scope}
          \clip (C2) to[out=-25, in=130] (D2) to[out=-70, in=85] (D3) to[out=180, in=-5] (D4) to[out=68, in=-95] (C4);
          \fill[verylightgray, opacity=0.5] (C3) circle (17pt);
        \end{scope}
        \draw[draw=black] (C2) to[out=-25, in=130] (D2) to[out=-70, in=85] (D3) to[out=180, in=-5] (D4) to[out=68, in=-95] (C4);
        \shade[bottom color=pastelgreen, top color=pastelgreen!1, shading angle=45, draw=somelightgray, very thin] (O1) to[out=100, in=-95] (A1) to[out=-5, in=180] (A2) to[out=-120, in=140] (O2) -- cycle;
        \shade[bottom color=pastelorange!1, top color=pastelorange, shading angle=-40, draw=somelightgray, very thin] (O1) to[out=-200, in=30] (B1) to[out=-10, in=122] (B2) to[out=30, in=150] (O2) -- cycle;
        \draw[black, black, line cap=round, shorten >=0.6pt] (O1) -- (O2);
        \shade[left color=extremelightgray!5, right color=extremelightgray!86, shading angle=45, draw=black, thin] (C1) to[out=0, in=165] (C2) to[out=-60, in=105] (C3) to[out=160, in=0] (C4) to[out=90, in=-70] cycle;
        \node[transform shape, scale=0.65] at (0.86, 1.135) {$\Pi^\perp_\nu$}; 
        \node[transform shape, scale=0.65] at (-0.5325, -0.45) {$\Pi^\perp_\mu$}; 
        \node[transform shape, scale=0.8] at (-1.07,1.18) {$\cC$};
        \begin{scope}
          \clip (C1) to[out=0, in=165] (C2) to[out=-60, in=105] (C3) to[out=160, in=0] (C4) to[out=90, in=-70] cycle;
          \fill[verylightgray, opacity=0.5] (C3) circle (17pt);
          \fill (P0) circle (1pt) node[above right=2pt, on grid, transform shape, scale=0.75] {$\pi_{\mspace{-1mu}\scriptscriptstyle t}$};
          \fill (P1) circle (1pt) node[above=3pt, on grid, transform shape, scale=0.75] {$\pi_{\mspace{-1mu}\scriptscriptstyle t+1}$};
          \fill (P2) circle (1pt) node[above left=2.5pt, on grid, transform shape, scale=0.75] {$\pct$};
          \begin{scope}[every path/.style={>={Stealth[scale=0.6]}, thick, dotted}]
            \draw (P0) to[out=-20, in=120] (C3);
          \end{scope}
        \end{scope}
        \fill (C3) circle (1pt) node[on grid, transform shape, right=6pt, scale=0.9]{$\pi^\ast$};
      \end{scope}
      \begin{scope}[xshift=1.72cm, yshift=-0.04cm]
        \begin{scope} 
          \coordinate (A) at (-1.05,  1.05);
          \coordinate (B) at (-1.05, -1.05);
          \coordinate (C) at ( 1.05, -1.05);
          \coordinate (D) at ( 1.05,  1.05);
          \coordinate (Z0) at (-0.8, 0.8);
          \coordinate (Z1) at (-0.5, 0.3);
          \coordinate (Z2) at (0.02, 0.05);
          \coordinate (Z3) at (0.35, -0.33);
          \coordinate (Zc) at (0.9,-0.9);
          \coordinate (U0) at (0.1, 1.1); 
          \coordinate (U1) at (0.05, 0.4);
          \coordinate (U2) at (0.3, -0.3);
          \coordinate (V0) at (-1.1, -0.1); 
          \coordinate (V1) at (-0.4, -0.05);
          \coordinate (V2) at (0.3, -0.3);
          \begin{scope}
            \node[transform shape, scale=0.83] at (-0.03,1.25) {\DualSpace};
            \clip (A) rectangle (C);          
            \fill[verylightgray, opacity=0.5] (Zc) circle (23pt);
            \node[transform shape, scale=0.55] at ($(Zc) + (-1.3,0.25)$) {\textsf{empirical}};
            \node[transform shape, scale=0.55] at ($(Zc) + (-1.3,0.05)$) {\textsf{estimates}};
            \draw[semithick] (A) rectangle (C);
            \fill (Z0) circle (1pt) node[right=3pt, on grid, transform shape, scale=0.6] {$\varphi_{\mspace{-1mu}\scriptscriptstyle t}\!\oplus\!\psi_{\mspace{-1mu}\scriptscriptstyle t}$};
            \fill (Z1) circle (1pt);
            \fill (Z2) circle (1pt);
            \fill (Zc) circle (1.5pt) node[left=3pt, on grid, transform shape, scale=0.6] {$\varphi_{\mspace{-1mu}\ast}\!\oplus\!\psi_{\mspace{-1mu}\ast}$};
            \draw[-{>[width=3pt, length=3pt]}, shorten >=1.4pt] (Z0) -- (Z1);
            \draw[-{>[width=3pt, length=3pt]}, shorten >=1.4pt] (Z1) -- (Z2);
            \draw[-{>[width=3pt, length=3pt]}, shorten >=1.4pt] (Z2) -- (Z3);          
            \draw[thin] (U0) .. controls (U1) and (U2) .. (Zc);
            \draw[thin] (V0) .. controls (V1) and (V2) .. (Zc);
          \end{scope}
          \begin{scope}
            \clip (A) -- ($(A) + (0.04, 0.05)$) -- ($(D) + (0.04, 0.05)$) -- (D) -- cycle;
            \draw[thin, fill=superlightgray] (A) -- ($(A) + (0.04, 0.05)$) -- ($(D) + (0.04, 0.05)$) -- (D) -- cycle;
          \end{scope}
          \begin{scope}
            \clip (D) -- ($(D) + (0.04, 0.05)$) -- ($(C) + (0.04, 0.05)$) -- (C) -- cycle;
            \draw[thin, fill=superlightgray] (D) -- ($(D) + (0.04, 0.05)$) -- ($(C) + (0.04, 0.05)$) -- (C) -- cycle;
          \end{scope}
        \end{scope}
        \node[transform shape, scale=0.8] at (0.82,0.82) {$\cD$};      
      \end{scope}
      \begin{scope}[xshift=0.12cm]
        \begin{scope}[yshift=0.26cm]
          \draw[transform shape, line width=12pt, -{Triangle[width=18pt, length=5.5pt]}] (-0.2, 0) -- (0.4,0);
          \draw[transform shape, white, line width=11pt, -{Triangle[width=15.5pt, length=4.5pt]}] (-0.186, 0) -- (0.383, 0);  
          \node[transform shape, scale=0.6] at (0.05, 0) {\Convert};
        \end{scope}
        \begin{scope}[yshift=-0.4cm]
          \draw[transform shape, line width=12pt, {Triangle[width=18pt, length=5.5pt]}-] (-0.2, 0) -- (0.4,0); 
          \draw[transform shape, white, line width=11pt, {Triangle[width=15.5pt, length=4.5pt]}-] (-0.183, 0) -- (0.386,0); 
        \node[transform shape, scale=0.6] at (0.15, 0) {\Reverse};
        \end{scope}    
      \end{scope}
    \end{tikzpicture}
    \caption{A schematic illustration. The primal and dual spaces $(\cC, \cD)$ retain bidirectional maps $(\deltaC \Omega, \deltaD \Omega^\ast\mspace{-1.5mu})$. $\Pi^\perp_\nu$ and $\Pi^\perp_\mu$ indicate projection spaces of $\gamma_1\pi = \mu$ and $\gamma_2\pi = \nu$, respectively. The current $\pi_t$ performs an online learning update following a ``unreliable'' leader $\pct$ in a region shaded in gray.} \label{fig:schema}  
  \end{minipage}
  \hfill 
  \begin{minipage}[b]{0.49\textwidth}
    \centering
    \captionsetup{type=table, position=above}
    \caption{A technical overview. Combining chracteristics of existing methods, VMSB offers a simulation-free SB solver that produces iterative OMD solutions. Our VMSB additionally provides a strong theoretical guarantee of convergence based on a regert analysis.} \label{tab:tech}
    \begin{adjustbox}{width=0.99\textwidth,center}
      \begin{tabular}{l c c c}
        \toprule
          & Iterative & Simulation-free & Regret analysis \\
        \midrule
        \makecell[l]{DSB \vspace*{-3pt} \\ {\scriptsize (De Bortoli et al.)}} & \cmark & \xmark & \xmark \\ 
        \makecell[l]{DSBM \vspace*{-3pt} \\ {\scriptsize (Shi et al.)}} & \cmark & \xmark & \xmark \\ 
        \makecell[l]{LightSB \vspace*{-3pt} \\ {\scriptsize (Korotin et al.)}} & \xmark & \cmark & \xmark \\ 
        \makecell[l]{LightSB-M \vspace*{-3pt} \\ {\scriptsize (Gushchi et al.)}} & \xmark & \cmark & \xmark \\ 
        \midrule
        \textbf{VMSB (ours)} & \cmark & \cmark & \cmark \\
        \bottomrule 
      \end{tabular}
    \end{adjustbox}
    \null
  \end{minipage}
\end{figure}

In this paper, we present a novel algorithm for SB acquisition through the lens of information geometry to further enhance stability in the learning process. As illustrated in \cref{fig:schema}, we leverage primal and dual geometries $(\cC, \cD)$ under the \Bregman{} potential $\Omega(\cdot) = \KL(\cdot\Vert e^{-c_\varepsilon}\mu\otimes\nu)$, where the transformations between the coupling $\pi_t$ and the dual potential $\varphi_t\oplus \psi_t$ are uniquely defined by first variation operators $(\deltaC, \deltaD)$ \citep{mdsem}. For online learning, we account for the solver’s optimization errors due to uncertainty (gray region), and we propose an online mirror descent framework to minimize the regret resulting from these errors. We show that this framework enables a complete form of online learning that tolerates unreliable empirical estimates from arbitrary data-driven SB solvers. To this end, we propose a robust, simulation-free SB algorithm, called \textbf{variational mirrored \Schrodinger{} bridge} (VMSB), with a new parametric update rule that we call \textbf{variational online mirror descent} (VOMD). We seek an \textit{exact} and \textit{computationally feasible} MD update mechanism by narrowing down the solution space to a subset of tractable distributions, which is often referred to as taking a \textit{variational} form \citep{vbi, vistat}. To solve the SBP in a computationally viable manner, our VOMD method utilizes a variational approach \citep{viwgf} based on the Wasserstein gradient flow with respect to the \WassersteinFisherRao{} (WFR) geometry. The proposed VMSB offers closed-form mirror descent updates by formulating the dynamics of iterative subproblems using \Wasserstein{} gradient flows, and our experiments indicate that VMSB generally outperforms existing simulation-free methods in various benchmarks.

\textbf{Our contributions.}\hspace*{6pt} This work builds a novel algorithm grounded in statistical learning theory, formally derived from the information geometric perspective for SBPs. Recently, a \Gaussian{} mixture parameterization of the \Schrodinger{} potentials has been proposed by \citet{lsb}. The simulation-free \textit{LightSB} solver is simple yet general, and its authors proved a universal approximation property for the parameterization. The expressiveness of the solver aligns with the geometric properties of \Gaussian{} variational inference and mixture models \citep{chen2018optimal, daudel2021mixture, viwgf, diao2023forward}. However, its shortcoming---and that of other \textit{simulation-free} solvers \citep{tong2024aistat, lsbm}---is the uncertainty of data-driven learning signals for non-convex objectives. We argue that this presents an opportunity for improvement by leveraging the rich geometric properties of the SBP in a variational form. To the best of our knowledge, VMSB is the first SB algorithm based on VOMD and inherits the theoretical essence of OMD. \cref{tab:tech} shows that VMSB is a simulation-free solver that admits a rigorous convergence analysis in general learning scenarios for probabilistic generative models. We verify the validity of our core theoretical principles through an extensive suite of SB benchmarks, including real-time online learning, standard benchmarks in optimal transport, and image-to-image translation tasks. Our main contributions are summarized below:
\begin{itemize}[leftmargin=*, noitemsep, partopsep=0pt, topsep=0pt, parsep=0pt]
  \item We develop a robust SB learning algorithm built upon the online mirror descent formulation, whose specific update rules follow \Wasserstein{}-2 dynamics derived from local MD objectives. Under mild assumptions, we formally prove the convergence of the proposed algorithm in general online learning scenarios (\S~\ref{sect:md}).
  \item We introduce a simulation-free SB method, leveraging the \WassersteinFisherRao{} geometry to ensure asymptotic stability within \Wasserstein{} gradient flows. The resulting VMSB algorithm admits closed-form dynamics, enabling an accurate and efficient implementation via the LightSB parameterization (\S~\ref{sect:vmsb}). 
  \item We validate our algorithm across diverse SB problem settings, highlighting the effectiveness of our VOMD-based framework in various scenarios, including online learning, classical optimal transport benchmarks, and image-to-image translation tasks. Empirical results consistently demonstrate that our proposed methods outperform existing simulation-free solvers, validating our theoretical assumptions and corresponding claims (\S~\ref{sect:expr}).
\end{itemize}

\section{Related Work} \label{sect:works}

\textbf{Simulation-free SB.}\hspace*{6pt} The \Schrodinger{} bridge problems are originated from a physical formulation for evolution of a dynamical system between measures \citep{sbp2mkp, onfree}. The study of SB has gained popularity due to its connection to entropy-regularized optimal transport (EOT; \citealp{cot, introeot}). Its association with optimal transport suggests various applications across various fields related to machine learning, such as image processing, natural language processing, and control systems \citep{swav, i2sb, wassword, likesb}. Historically, the most representative algorithm for SBP is Sinkhorn \citep{kullback}, and there has been progress in training \textit{simulation-based} SB with nonlinear networks \citep{sbml, dsb} by ``matching'' with simulation data of a half-bridge of forward and backward diffusion at each time. An SB solver is called as \textit{simulation-free} \citep{tong2024tmlr, tong2024aistat} if the solver is trained without samples from the simulation of SB diffusion processes. LightSB \citep{lsb} is a special type of simulation-free solver using the maximum likelihood method of Gaussian mixture models (GMMs). Building upon these advancements, our approach focuses on enhancing simulation-free SB solvers by leveraging geometric insights derived from the generalized dual geometry inherent to the SBP.

\textbf{MD and \Sinkhorn{}.}\hspace*{6pt} The \Bregman{} divergence \citep{bregman} is a family of statistical divergence that is particularly useful when analyzing constrained convex problems \citep{md_nonlin, cvx, fundamentals_cvx}. Notably, \citet{leger2021gradient} and \citet{mdsem} adopted the \Bregman{} divergence into entropic optimal transport and SB problems with probability measures, and the studies revealed that \Sinkhorn{} can be considered to be mirror descent with a constant step size $\eta \equiv 1$. In statistical geometries, the \Bregman{} divergence is a first-order approximation of a \Hessian{} structure \citep{geohess, bregdist}, which interprets MD as natural discretization on a gradient flow. \citet{wmf} introduced \Wasserstein{} mirror flow, and the results include a geometric interpretation of \Sinkhorn{} for unconstrained OT, \ie{}, when $\varepsilon\to 0$ from the entropic regularization setup. \citet{sf} formulated a \textit{half-iteration} of the \Sinkhorn{} algorithm into a mirror flow, \ie{}, $\eta_t \to 0$.  

\textbf{\Wasserstein{} gradient flows}\enspace have attracted considerable attention in machine learning where their intrinsic geometry is governed by the Wasserstein-2 metric. \citep{ambrosio2005gf, villani2009oldnew, santambrogio2017euclidean}. \citet{otto} introduced a formal \Riemannian{} structure to interpret various evolutionary equations as gradient flows with the \Wasserstein{} space, which is closely related to our variational approach. The mirror \Langevin{} dynamics is an early work describing the evolution of the \Langevin{} diffusion \citep{hsieh2018mirrored}, and was later incorporated in the geometry of the \Bregman{} \Wasserstein{} divergence \citep{rankin2023bregman}. We relate our methodology with recent approaches of variational inference on the \BuresWasserstein{} space \citep{viwgf, diao2023forward}. Utilizing \BuresWasserstein{} geometry, the \WassersteinFisherRao{} geometry \citep{liero2016, chizat2018, liero2018, viwgf} additionally provides ``liftings,'' which yield an interaction among measures. 
 
\textbf{Learning theory and Robustness.}\hspace*{6pt} Suppose we have time-varying costs $\{F_t\}_{t=1}^\infty$. We generally referred to learning through these signals as \textit{online learning} \citep{online}. Our interest lies in temporal costs defined in a probability space, where following the ordinary gradient may not be the best choice due to the geometric constraints \citep{infogeo, methodinfogeo}. In this sense, we primarily relate our work to OMD, an online learning form of mirror descent \citep{omd_univ,md_info,omd_converge}. The OMD algorithm provides a generalization of robust learning by seeking solutions that are optimal in a worst case sense, ensuring performance guarantees under adversarial or uncertain conditions \citep{xu2008robust, zinkevich2003online, madry2017towards}. Another relevant design of the online algorithm is the follow-the-regularized-leader (FTRL; \citealp{mcmahan2011follow}). OMD focuses on scheduling proximity of updates through $\{\eta_t\}_{t=1}^T$, whereas FTRL minimizes historical losses with a fixed proximity term. Meanwhile, there have been several notions of robustness. Distributionally robust optimization (DRO; \citealp{dro}) addresses robustness by solving a minimax objective under adversarial distributions, a concept originating from Knightian uncertainty. Our work addresses this via regret minimization, which is aligned with the core argument of Wasserstein distributionally robust regret optimization (DRRO; \citealp{drro}). The main difference of our work from DRRO is the premise of the ambiguity sets: whereas DRRO relies on Wasserstein balls, we model uncertainty in the dual geometry under statistical assumptions.

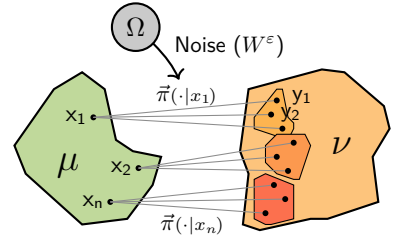
\begin{wrapfigure}{r}{0.3\textwidth}
  \definecolor{mugreen}{RGB}{192, 217, 155}
  \definecolor{nuorange}{RGB}{253, 196, 124}
  \definecolor{noisepurple}{RGB}{202, 202, 202}
  \definecolor{nuorange_1}{RGB}{253, 180, 80}
  \definecolor{nuorange_2}{RGB}{252, 153, 79}
  \definecolor{nuorange_n}{RGB}{252, 112, 78}
  \definecolor{pathgray}{RGB}{140, 140, 140}
  \tikzset{inner sep=0pt, outer sep=0pt}
  \vskip-17.5pt
  \centering 
  \begin{tikzpicture}[tight background]
    \begin{scope}[scale=0.75]
    \node[draw, thick, fill=noisepurple, circle, minimum width=18pt] (Om) at (-1,2.6) {$\Omega$};    
    \begin{scope}[xshift=-1.95cm]
      \draw[fill=mugreen, thick] (-1,0) -- (-1.2, 0.5) -- (-0.5,1.5) -- (0,1.8)  -- (0.4,1.5)  -- (0.6,0.6) -- (0.9,0.4) -- (1.4, 0.3) -- (1.25,-0.2) -- (0.8, -0.8) -- (0,-0.9) -- cycle;
      \coordinate (X1) at (0.2,  1);
      \coordinate (X2) at (1,  0.1);
      \coordinate (Xn) at (0.5, -0.5);
    \end{scope}
    \begin{scope}[xshift=2.3cm]
      \draw[fill=nuorange, thick] (1.1,1.2) -- (1.2,0.8) -- (0.9,0.3) -- (1,-0.5) -- (0.1, -.8) -- (-0.2, -.8) -- (-1.1,-1) -- (-1.4,-0.9) -- (-1.3,0) -- (-1.45,0.4) -- (-1.3, 1.5) -- (0,1.85) -- (0.4,1.8)  -- cycle;
      \draw[fill=nuorange_1, thin] (-0.5,0.8) -- (-0.5,0.9) -- (-0.6,1.) -- (-0.7,1.5) -- (-0.8,1.5) -- (-1.2,1) -- (-1.2,0.8) -- (-0.6,0.6) -- cycle;
      \draw[fill=nuorange_2, thin] (-0.3,0.7) -- (-0.2,0.5) --  (-0.4,-0.1) -- (-0.5,-0.1) -- (-0.8,0.) -- (-1,0.2) -- (-1,0.4) -- (-1,0.5) -- (-0.8,0.7) -- cycle;
      \draw[fill=nuorange_n, thin] (-0.5,-0.2) -- (-0.5,-0.8) -- (-0.9,-0.9) -- (-1,-0.9) -- (-1.2,-0.8) -- (-1.2,-0.1) -- (-1,0) -- cycle;
      \coordinate (X1_1) at (-0.8, 1.3);
      \coordinate (X1_2) at (-0.9, 1.05);
      \coordinate (X1_3) at (-0.7, 0.8);
      \coordinate (X2_1) at (-0.5, 0.55);
      \coordinate (X2_2) at (-0.8, 0.3);
      \coordinate (X2_3) at (-0.6, 0.05);
      \coordinate (Xn_1) at (-0.85, -0.2);
      \coordinate (Xn_2) at (-0.65, -0.45);
      \coordinate (Xn_3) at (-1, -0.7);
    \end{scope}
    \foreach \idx in {1, 2, n} {
      \fill (X\idx) circle (1.5pt);
      \node at ($(X\idx) - (0.3,0)$) {\small $\mathsf{x_\idx}$};
      \foreach \jdx in {1, 2, 3} {
        \draw[pathgray] (X\idx) -- (X\idx_\jdx);
        \fill (X\idx_\jdx) circle (1.5pt);
      }
    }
    \node[scale=0.95] at (1.95,1.35) {\small $\mathsf{y_1}$};
    \node[scale=0.95] at (1.75,1.05) {\small $\mathsf{y_2}$};
    \node[scale=0.95] (T1) at (-0.1,1.4) {\small${\vec{\pi}}$\scriptsize${(\cdot | x_1\mkern-2mu)}$};
    \node[scale=0.95] (Tn) at (0,-0.9) {\small${\vec{\pi}}$\scriptsize${(\cdot | x_n\mkern-2mu)}$};
    \node (mu) at (-2.2, 0.15) {\Large$\mu$}; 
    \node (nu) at (2.65, 0.5) {\Large$\nu$};
    \draw[thick, ->, shorten >= 2pt] (Om) to[bend left=10] node[above right=0.1cm, scale=0.85] {\textsf{Noise ($W^\varepsilon$)}} (T1);
  \end{scope}     
  \end{tikzpicture} 
  \caption{The SB problem.} \label{fig:setup}
  \vskip-13pt
\end{wrapfigure}

\section{General Properties of \Schrodinger{} Bridge Problems} \label{sect:prelim}

\textbf{Notation.}\hspace*{6pt} Let $\cP(\cS)$ ($\cP_2(\cS)$) denote the set of (absolutely continuous) \Borel{} probability measures on $\cS \subseteq \bR^d$ (with a finite second moment). For marginals $\mu,\nu\in\cP_2(\cS)$, $\Pi(\mu, \nu)$ denotes the set of couplings \citep{cot}. For a coupling $\pi$, we often use a shorthand notation $\vec{\pi}^x$ ($\cev{\pi}^y$) to denotes a conditional distribution for a sample data $\vec{\pi}(\cdot\vert x)$ (or $\cev{\pi}(\cdot\vert y)$; see \cref{fig:setup}). We use $\KL(\cdot\Vert\cdot)$ to denote the KL functional (\eg{}, formally defined at \cref{eq:eot}) and assume $+\infty$ if an argument is not absolutely continuous. We employ a notation $\bP([0,1], \cS)$ to denote a set of path measures from the time interval $[0,1]$. 

\textbf{The static SB problem and \Sinkhorn{}.}\hspace*{6pt} We first introduce a problem definition of a static variant of \Schrodinger{} bridge problems.  For a regularization coefficient $\varepsilon\in\bR^+$, the \textit{static} SB problem, or  the entropic optimal transport problem with a quadratic cost function $c(x,y) = \tfrac{1}{2}\lVert x-y\rVert^2$, is defined as optimization of finding a unique minimizer of optimal coupling $\pi^\ast$ with repect to the following objective 
\begin{equation}\label{eq:eot}
  \mathrm{OT}_{\mspace{-.5mu}\varepsilon}\mkern-.5mu(\mu, \nu)\coloneqq \! \inf_{\pi\in\Pi(\mu,\nu)}\mkern-1.2mu\iint_{\cS\times\cS}\!c(x,y)\,  d\pi(x,y) + \varepsilon \KL(\pi\Vert\mu\otimes\nu) 
\end{equation}
where $\mu\otimes\nu$ denotes, the product of marginals and $\KL(\cdot\Vert\cdot)$ in the regularization term denotes the \KullbackLeibler{} (KL) functional, or relative entropy,  defined as 
\[
\KL(\pi\Vert\mu\otimes\nu)\coloneqq
\begin{dcases}
  \iint_{\cS\times\cS}\mkern-1.2mu\log\mkern-1.2mu\frac{d\pi}{d(\mu\otimes\nu)}\,d\pi,
  & \pi\ll\mu\otimes\nu,\\
  \,\infty, & \pi\not\ll\mu\otimes\nu.
\end{dcases}
\]
It is well studied that the EOT problem \eqref{eq:eot} can be translated to a singular form of divergence minimization problem \citep{introeot}, \ie{}, $\KL(\pi \Vert \cR)$ where the reference measure $\cR$ is of \textit{\Gibbs{} parameterization}, satisfying the \RadonNikodym{} derivative of $\frac{d\cR}{d(\mu\otimes\nu)} = e^{-c_\varepsilon}$ with $c_\varepsilon = c/\varepsilon$.
We are particularly interested in its dual aspect, which contains a pair of \textit{log-\Schrodinger{} potentials} as an alternative representation for EOT solutions.
\begin{lemma}[Duality; \citealp{introeot}]\label{lem:dual}
  Suppose existence of optimal coupling $\pi^\ast\in\Pi(\mu,\nu)$ and log-\Schrodinger{} potentials $(\varphi^\ast\mspace{-3mu}, \psi^\ast)\in L^1(\mu)\!\times\!L^1(\nu)$. Then, the dual problem of \eqref{eq:eot} is found without gap by 
  \begin{equation}\label{eq:dual}
    \inf_{\pi\in\Pi(\mu,\nu)} \mkern-1.2mu \KL(\pi\Vert\cR) = \!\sup_{\varphi \in L^1\mkern-.5mu(\mu),\psi\in L^1\mkern-.5mu(\nu)}\! \mu(\varphi) + \nu(\psi) - \textstyle\iint_{\cS\times\cS} e^{\varphi\oplus\psi}\,d\cR + 1,
  \end{equation}
  where $\mu(\varphi)\coloneqq\!\int_\cS\varphi\, d\mu$;\enspace$\nu(\psi) \coloneqq\!\int_\cS\psi\,d\nu$; the operator $\oplus$ indicates the direct sum of two potentials; the symbol $\cR$ denotes the reference of \Gibbs{} measure: $d\cR = e^{-c_\varepsilon} d (\mu \otimes \nu)$ with $c_\varepsilon\mspace{-1.1mu}(x,y) \coloneqq \tfrac{1}{2\varepsilon}\lVert x - y \rVert^2$.
\end{lemma}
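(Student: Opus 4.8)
The plan is to establish \eqref{eq:dual} in two moves: first a \textbf{weak duality} inequality valid for \emph{every} feasible coupling and \emph{every} admissible potential pair, and then an \textbf{attainment} argument that closes the gap using the hypothesized optimizers. The engine behind both is the \Fenchel--\Young{} inequality for the entropy germ $\phi(a)=a\log a-a+1$, whose convex conjugate is $\phi^\ast(b)=e^b-1$, so that
\begin{equation*}
  a\log a \;\geq\; ab - e^b + a \qquad (a\geq 0,\ b\in\bR),
\end{equation*}
with equality if and only if $b=\phi'(a)=\log a$, \ie{} $a=e^b$.

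For weak duality, fix any $\pi\in\Pi(\mu,\nu)$ with $\KL(\pi\Vert\cR)<\infty$ and any $\varphi\in L^1(\mu)$, $\psi\in L^1(\nu)$. Writing $a=\tfrac{d\pi}{d\cR}$ and $b=\varphi\oplus\psi$, I integrate the pointwise inequality against $\cR$. The left-hand side returns $\int a\log a\,d\cR=\KL(\pi\Vert\cR)$; on the right, $\int ab\,d\cR=\iint(\varphi\oplus\psi)\,d\pi=\mu(\varphi)+\nu(\psi)$ because the marginals of $\pi$ are exactly $\gamma_1\pi=\mu$ and $\gamma_2\pi=\nu$, while $\int a\,d\cR=\pi(\SxS)=1$ and $\int e^b\,d\cR=\iint e^{\varphi\oplus\psi}\,d\cR$. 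This yields
\begin{equation*}
  \KL(\pi\Vert\cR)\;\geq\;\mu(\varphi)+\nu(\psi)-\iint_{\SxS} e^{\varphi\oplus\psi}\,d\cR+1 ,
\end{equation*}
and taking the infimum on the left and supremum on the right gives the ``$\geq$'' direction of \eqref{eq:dual}. The only care needed is integrability of $\varphi\oplus\psi$ against $\pi$, which follows from $\varphi\in L^1(\mu)$, $\psi\in L^1(\nu)$ and feasibility.

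To close the gap I invoke the assumed optimizers $\pi^\ast$ and $(\varphi^\ast,\psi^\ast)$. The first-order optimality (\Schrodinger{}) condition for the entropic problem forces the \Gibbs{} relation $\tfrac{d\pi^\ast}{d\cR}=e^{\varphi^\ast\oplus\psi^\ast}$ $\cR$-a.e., which is precisely the equality case $b=\log a$ of the inequality above; hence weak duality holds with equality at $(\pi^\ast,\varphi^\ast,\psi^\ast)$. Concretely, feasibility gives $\KL(\pi^\ast\Vert\cR)=\iint\log\tfrac{d\pi^\ast}{d\cR}\,d\pi^\ast=\iint(\varphi^\ast\oplus\psi^\ast)\,d\pi^\ast=\mu(\varphi^\ast)+\nu(\psi^\ast)$, while $\iint e^{\varphi^\ast\oplus\psi^\ast}\,d\cR=\pi^\ast(\SxS)=1$ makes the dual objective at $(\varphi^\ast,\psi^\ast)$ equal to $\mu(\varphi^\ast)+\nu(\psi^\ast)-1+1$. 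The two values coincide, so the dual point attains the primal value, and together with weak duality this establishes \eqref{eq:dual} without gap.

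The step I expect to be the genuine obstacle is \textbf{justifying the \Gibbs{} relation and its integrability}, not the algebra. One must argue that the hypothesized optimal potentials actually realize $d\pi^\ast=e^{\varphi^\ast\oplus\psi^\ast}\,d\cR$ (the \Schrodinger{} system), and that $e^{\varphi^\ast\oplus\psi^\ast}\in L^1(\cR)$ together with $\varphi^\ast\oplus\psi^\ast\in L^1(\pi^\ast)$, so that every integral manipulation is finite and the equality case is legitimate. The existence hypothesis in the statement is exactly what lets me bypass the harder compactness and existence analysis (\eg{} via a direct study of the \Schrodinger{} system or a Fenchel--Rockafellar constraint qualification) and reduce the lemma to verifying this complementary-slackness identity.
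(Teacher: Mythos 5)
Your argument is correct. Note that the paper does not prove this lemma itself --- it is imported verbatim from \citet{introeot} --- so there is no in-paper proof to compare against; your route (the \Fenchel{}--\Young{} inequality for $a\log a - a + 1$ giving weak duality for every feasible $(\pi,\varphi,\psi)$, then closing the gap via the equality case $d\pi^\ast/d\cR = e^{\varphi^\ast\oplus\psi^\ast}$) is exactly the standard argument behind the cited result, and your reading of the hypothesis as supplying the \Gibbs{} relation is consistent with the sentence immediately following the lemma in the paper, which asserts $d\pi^\ast = e^{\varphi^\ast\oplus\psi^\ast - c_\varepsilon}\,d(\mu\otimes\nu)$.
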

By the duality, the \Lebesgue{} integrable functions $(\varphi^\ast\mspace{-3mu}, \psi^\ast)\in L^1(\mu)\!\times\!L^1(\nu)$ represent dual solution of \eqref{eq:dual}, associated with $\pi^\ast$ by $d\pi^\ast\mspace{-1mu} = e^{\varphi^\ast \oplus \psi^\ast - c_\varepsilon} d(\mu\otimes\nu)$, $(\mu\otimes\nu)$-almost surely. Apparently, the result is a general form of Legendre duality from convex optimization, and the realization that every element of the dual space can be split into two separate potential functions has inspired a \textit{halfway} projection method known as the Sinkhorn algorithm \citep{lightspeed}. The \Sinkhorn{} algorithm is defined as following alternating updates:
\begin{equation} \label{eq:sink}
  \psi_{2t+1}\mspace{-1mu}(y) = -\mspace{-1mu}\log\!\int_\cS\mspace{-1mu} e^{\varphi_{2t}\mspace{-1mu}(x) - c_\varepsilon\mspace{-1mu}(x,y)}\mu(d x), \mspace{49mu} \varphi_{2t+2}\mspace{-1mu}(x) = -\mspace{-1mu}\log\!\int_\cS\mspace{-1mu} e^{\psi_{2t+1}\mspace{-1mu}(y) - c_\varepsilon\mspace{-1mu}(x,y)}\nu(d y),
\end{equation}
where each update is called a iterative proportional fitting (IPF; \citealp{kullback}) procedure. Based on the geometric understanding of Bregman gradient descent and flow \citep{leger2021gradient, sf}, the static SB problem and \Sinkhorn{} reveal a profound connection to the information geometry incurred by the KL functional. Of course, this implies that that the \Sinkhorn{} algorithm is not the only way of finding the solution space. Akin to differential geometry, JKO discovered that each density can be analyzed using the standard geometry of the Wasserstein-2  distance, defined as \citep{jko, villani2009oldnew}  
\begin{equation}
  W_{\mkern-1.2mu2}(\mu, \nu) \coloneqq \inf_{\pi \in \Pi(\mu, \nu)} \Bigl(\bE_{(x,y) \sim \pi} \lVert x- y\rVert^2 \Bigr)^{\!\frac{1}{2}},
\end{equation}
where $\pi$ is a joint distribution that has $\mu$ and $\nu$ as its marginals. In essence, the distance measures the minimum cost to transform the distribution $\mu$ into $\nu$, and provides an $L^2$-like metric that quantifies the discrepancy between two marginal distributions by considering the distance that mass must travel. Our method utilizes the theory of \Wasserstein{} gradient flows whose technical machinery is noted in Appendix~\ref{sect:discuss}.

\textbf{The dynamic SB problem.}\hspace*{6pt}An alternative perspective on the \Schrodinger{} bridge is the interpretation in terms of fluid dynamics or diffusion. In this setup, one considers a \Wiener{} process $W^\varepsilon$ with volatility $\varepsilon\in\bR^+$ to be the reference for SB. The \textit{dynamic} SBP \citep{sbml} aims to find a process $\cT^\ast$ such that
\begin{equation} \label{eq:sbp}
  \cT^\ast \coloneqq \argmin_{\cT\in\cQ(\mu, \nu)}\KL(\mspace{2mu}\cT\mspace{2mu}\Vert W^\varepsilon),
\end{equation}
where $\cQ(\mu, \nu) \subset \bP([0,1], \cS)$ is the set of processes with marginals $\mu$ and $\nu$. The SB process $\cT^\ast$ is uniquely described by a stochastic differential equation (SDE): $d X_t = g^\ast(t, X_t) + d W^\varepsilon_t$ in $t \in [0,1)$, governed by a drift function $g^\ast$ along with noises generated by $W^\varepsilon_t$.
Addressing $\cT^\ast$ involves continuous-time log-\Schrodinger{} potentials $\{\varphi_t\}_{t\in[0,1]}$, $\{\psi_t\}_{t\in[0,1]}$, and the time-dependent density $\rho_t$, governed by the following differential equations for $\rho_0 =\mu$ and $\rho_1 = \nu$ \citep{bbrcd} 
\begin{equation}\label{eq:pde4}
  \begin{dcases}
    \mspace{17mu}\partial_t \varphi_t = \tfrac{1}{2}\lvert\nabla \varphi_t\rvert^2 \! + \tfrac{\varepsilon}{2} \Delta \varphi_t,\\
    \,-\partial_t \psi_t = \tfrac{1}{2}\lvert\nabla \psi_t\rvert^2 \! + \tfrac{\varepsilon}{2} \Delta \psi_t,
  \end{dcases}
  \mspace{90mu}
  \begin{dcases}
    \,-\partial_t \rho_t + \nabla\!\vcdot\mkern-0.5mu(\nabla \varphi_t\rho_t) = \tfrac{\varepsilon}{2}\Delta\rho_t,\\
    \mspace{17mu}\partial_t \rho_t + \nabla\!\vcdot\mkern-0.5mu(\nabla \psi_t\rho_t) = \tfrac{\varepsilon}{2}\Delta\rho_t,
  \end{dcases}
\end{equation}
where the operators $(\nabla,\nabla\vcdot,\Delta)$ denote gradient, divergence, and Laplacian, respectively. The pairs on the left and the right are commonly known as Hamilton-Jacobi-Bellman equations and Fokker-Planck equations, respectively. The fundamental equivalence between static and dynamic SBPs \citep{onfree, sbp2mkp} allows us to consider the optimal coupling $\pi^\ast$ when finding the SB process $\cT^\ast$\!, vice versa. 

\textbf{The Bregman divergence.}\hspace*{6pt}To advance the discussion, one needs an alternative notion of gradients in order to generalize the classical Bregman divergence, defined as $D_\Omega(x\Vert y) \coloneqq \Omega(x) - \Omega(y) - \bigl\langle\nabla\Omega(y),\;x-y\bigr\rangle$. However, since the domain of $\Omega$ considered in this paper (the measure space of SB solutions) has an empty interior \citep{mdsem}, \Gateaux{} differentiability is not ensured. Consequently, to discuss OMD here, we adopt the following definitions of \textit{directional derivatives} \citep{infinite} and \textit{first variations} \citep{mdsem} in order to discuss MD. We also refer the readers to Definition~7.12~of~\citet{santambrogio2015ot} for alternative description on this topic.
\begin{definition}[Directional derivative] \label{def:directional}
  Given a locally convex topological vector space $\cM$, the directional derivative of $F$ in the direction $\xi$ is defined as $\dsupplus\mspace{-1mu}F(x; \xi) = \lim_{h\to 0^+}\mspace{-2mu} \frac{F(x + h\xi) - F(x)}{h}$.
\end{definition}
\begin{definition}[First variation] \label{def:firstvar}
  Given a topological vector space $\cM$ and a convex constraint $\cC \subseteq \cM$ and a function $F: \cM\to \bR\cup\{\infty\}$, we define the first variation of $F$ over $\cC$ evaluated at $x \in \cC \cap \dom(F)$, as $\deltaC F(x) \in \cM^\ast$, where $\cM^\ast$ is the topological dual of $\cM$, such that it holds for all $y \in \cC \cap \dom(F)$ and $v = y - x \in \cM$: $\langle \deltaC F(x), v\rangle = \dsupplus\mspace{-1mu}F(x; v)$. $\langle\cdot, \cdot\rangle$ denotes the duality product of $\cM$ and $\cM^\ast$.
\end{definition}
Following \citet{sf}, this work considers the following generalized divergence defined with a weak notion of the directional derivative. We explicitly set the \Bregman{} potential $\Omega(\cdot) = \KL(\cdot\Vert e^{-c_\varepsilon}\mu\otimes\nu)$ in the SB problems for the rest of the paper, which enforces the {\Gibbs{}} parameterization for the OT couplings.
\begin{definition}[Generalized \Bregman{} divergence] \label{def:breg}
  Let a convex functional $\Omega: \cM \to \bR \cup \{+\infty\}$ be a \Bregman{} potential. Define the \Bregman{} divergence associated with $\Omega$ as
  \begin{equation}D_\Omega(x\Vert y) \mspace{-1mu} \coloneqq \Omega(x) - \Omega(y) - \dsupplus\Omega\mspace{1mu}(y; x - y),\end{equation}
  for every pair of elements in the domain $x,y\in\cM$.
\end{definition}
The above definition preserves the essential role of the Bregman divergence as the first-order Taylor expansion of $\Omega$, allowing us to utilize properties in optimization similar to those used in the classical setting. To account for smoothness- and convexity-like properties of a functional defined on metric spaces, we employ the notions of \textbf{relative smoothness and convexity} with respect to the generalized Bregman divergence. These notions were first introduced by \citet{birnbaum2011distributed} and later popularized by \citet{mdsem}, who applied them to spaces of measures to provide a theoretical foundation for the Sinkhorn and EM algorithms. The formal definitions are given in \cref{def:relsmooth}.
 
\textbf{Asymptotically log-concave distributions.}\hspace*{6pt}Lastly, our theoretical analysis works with a certain form of \textit{measure concentration} property, and we formally address this property with asymptotically strong log-concave (alc) distributions. This section rigorously states the resulting assumption which is later used in address the desired properties of OMD, analougous to strong convexity assumption in the classical literature.
Let us consider the following informal definition of asymptotically strong log-concave distributions
\begin{equation}
  \cP_\textrm{alc}(\bR^d) \coloneqq \Bigl\{\,\zeta(\dx) = \exp\bigl(-U(x)\bigr)\dx\,:\, U \in C_2\bigl(\bR^d\bigr),\ \textrm{$U$ is asymptotically strongly convex}\,\Bigr\},
\end{equation}
where \cref{sect:proof} contains a formal version on asymptotical convexity. Note that asymptotically log-concave functions satisfy a certain form of logarithmic \Sobolev{} inequality (LSI; \citealp{lsi}). The condition can be an extension of \Sobolev{} space \citep{adams2003sobolev} for informational geometric problems. The simplest case of such condition for the \Gaussian{} measure is represented as follows.
\begin{remark}[LSI for the standard \Gaussian{}] \label{rem:lsi_gau}
  Suppose that $f$ is a nonnegative function, integrable with respect to a measure $\gamma$, and that the entropy is defined as $\mathrm{Ent}_\gamma(f) = \int_{\bR^d} f \log f d \gamma - \bigl(\int_{\bR^d} f d \gamma \bigr) \log \bigl(\int_{\bR^d} f d \gamma \bigr)$. the log \Sobolev{} inequality when $\gamma$ is the standard \Gaussian{} measure reads $\mathrm{Ent}_\gamma(f) \le \frac{1}{2}\int_{\bR^d} \frac{\lvert f\rvert^2 }{f} d \gamma$. 
\end{remark}
Historically, the log \Sobolev{} inequality condition arises from the implication of satisfying the Talagrand's inequality for bounding the Wasserstein-2 distance, and is closely related to measure concentration \citep{otto2000generalization}. The important extension of asymptotically strong log-concave distributions for \Schrodinger{} bridge $d \pi = e^{\varphi \oplus \psi - c_\varepsilon}d(\mu\otimes\nu)$, $(\mu\otimes\nu)$-a.s. is that induced SB model also satisfies asymptotically strongly log-concaveness and the LSI condition \citep{conforti2024weak}. For a representative model related to our work, the \Gaussian{} mixture parameterization \citep{lsb} is a representative model that our theoretical analysis holds, because \Gaussian{} mixture weights does not alter the asymptotic characteristic.
\begin{remark}[\citealp{conforti2024weak}]
  Let $\mu, \nu\in\cP_\textrm{alc}(\bR^d)$ with finite entropy on \Lebesgue{} measures and $\pi\in\cC$ be a coupling in the static \Schrodinger{} bridge problem. Then, for a quadratic cost function, the coupling distribution is also asymptotically log-concave and satisfies a form of logarithmic \Sobolev{} inequality.
\end{remark}
Let us suppose that a parameterized SB model $d\pi_t = e^{\varphi_t \oplus \psi_t - c_\varepsilon}d(\mu\otimes\nu)$ obeys the following constraints for marginals and potentials: 
\begin{equation} \label{eq:c_def}
  \cC \coloneqq \bigl\{\,\pi : (\mu, \nu) \in \cP_2\mspace{-1mu}(\bR^d)\cap \cP_\textrm{alc}\mspace{-1mu}(\bR^d),\enspace (\varphi, \psi) \in L^1\mspace{-2mu}(\mu) \mspace{-2mu} \times \mspace{-2.5mu} L^1\mspace{-2mu}(\nu),\enspace\textrm{and}\quad\varphi, \psi \in C^2(\bR^d)\cap\Lip(\cK) \,\bigr\},
\end{equation}
where $\Lip(\cK)$ denotes a set of functions with $\cK$-\Lipschitz{} continuity.
Using the disintegration theorem for probability measures \citep{someppm}, we assume the boundedness of \Bregman{} divergence between two transport plans using derivatives of first variations with a positive constraint $\omega > 0$ by the following assumption. 
\begin{assumption}[LSI for couplings] \label{asm:lsi} 
  Let us suppose $\Omega(\cdot) = \KL(\cdot\Vert \cR)$ for a reference measure $\cR$. Suppose that arbitrary $\pi,\bar{\pi}\in\cC$ satisfy a type of logarithmic \Sobolev{} inequality for relative entropy (KL divergence) is upper bounded by the relative \Fisher{} information \citep{lsi} for some $\bar{\omega} \in \bR_+$ as 
  \begin{equation}\label{eq:lsi_kl}
    \KL(\pi \Vert \cR) \le \frac{1}{2\bar{\omega}} \iint_{\bR^d\times\bR^d} \biggl\lvert \nabla \log \frac{d\pi(x,y)}{d\cR(x,y)}  \biggr\rvert^2 \! \pi(\dx, \dy),
  \end{equation}
  By the equivalence of the first variations of Bregman divergences (detailed in \cref{lem:equivf}), convergence in $\KL(\pi \Vert \bar\pi)$ is equivalent to convergence in $D_\Omega\mspace{-1mu}(\pi\Vert \bar{\pi})$. Introducing the relative logarithm $\log_\cR(\pi) \coloneqq \log(\pi/\cR)$ and adopting the framework of \citet{conforti2024weak}, we can assume there exists a constant $\omega > 0$ such that
  \begin{equation}
    D_\Omega\mspace{-1mu}(\pi\Vert \bar{\pi}) \le \frac{1}{2\omega} \bigl\lVert \nabla (\deltaC\Omega(\pi) - \deltaC\Omega(\bar{\pi})) \bigr\rVert^2_{L^2(\pi)} 
  \end{equation}
  holds for $\Omega = \KL(\cdot \Vert e^{-c_\varepsilon}\mu\otimes\nu)$ and the first variation $\deltaC$. We call the condition as $\mathrm{LSI}(\omega)$.
\end{assumption}
Since, as illustrated in \cref{fig:schema}, $\varphi_t\oplus\psi_t = \deltaC\Omega(\pi_t)$ for arbitrary $\pi_t$, the assumption geometrically enforces a quadratic upper bound on the Bregman divergence in terms of dual space gradients. In general, the LSI condition also has often been used to analyze the convergence of partial differential equations \citep{malrieu2001logarithmic}. To make our analysis on improvement (\cref{lem:step2}) and a solid regret bound of OMD (\cref{lem:single}), this work finds that \cref{asm:lsi} is necessary to ensure a certain asymptotical type of measure concentration. 

\begin{figure}[t]
  \centering  
  \tikzset{>=latex, inner sep=0pt, outer sep=0pt}
  \definecolor{ipfred}{RGB}{255, 87, 92}
  \definecolor{ipfblue}{RGB}{34, 110, 255}
  \definecolor{ipfpurple}{RGB}{186,85,211}
  \definecolor{ipfyellow}{RGB}{246, 175, 49}
  \definecolor{ipfgreen}{RGB}{46,139,87}
  \definecolor{ipfbg}{RGB}{255, 252, 245}
  \definecolor{verylightgray}{RGB}{185, 185, 185}
  \definecolor{sinkbg}{RGB}{255, 252, 237}
  \newcommand*{\MeasureSpace}{$\mathcal{C}$}
  \newcommand*{\RedLabel}{$\Pi^\perp_\mu$}
  \newcommand*{\BlueLabel}{$\Pi^\perp_\nu$}
  \newcommand*{\PointLabel}[1]{$\pi_{\scriptscriptstyle #1}$}
  \newcommand*{\Optimal}{$\pi^\ast$}
  \def\scaling{0.75}
  \newcommand*{\tstar}[5]{
    \pgfmathsetmacro{\starangle}{360/#3}
    \draw[#5] (#4:#1)
    \foreach \x in {1,...,#3}
    { -- (#4+\x*\starangle-\starangle/2:#2) -- (#4+\x*\starangle:#1)
    } -- cycle;
  }
  \newcommand{\ngram}[4]{ 
    \pgfmathsetmacro{\starangle}{360/#2}
    \pgfmathsetmacro{\innerradius}{#1*sin(90-\starangle)/sin(90+\starangle/2)}
    \tstar{\innerradius}{#1}{#2}{#3}{#4}
  }
  \centering 
  \begin{tikzpicture}[tight background, scale=1.26]
    \begin{scope}
      \fill[sinkbg, draw=gray, line width=.5pt, rounded corners=1.5mm]
        (-0.1,0.215) -- (3.051,0.215) -- (3.051,2.175) -- (-0.1,2.175) -- cycle;
      \begin{scope}[scale=0.63, yshift=-0.34cm]
        \coordinate (a0) at (3.9,3.45);
        \coordinate (ac1) at (3.1,3.4);
        \coordinate (ac2) at (1.1,1.9);
        \coordinate (b0) at (4.4,1.37);
        \coordinate (bc1) at (3.4,1.85);
        \coordinate (bc2) at (1.3,1.35);
        \coordinate (optimal) at (0.6,0.95);
        \node at (0.25,3.3) {\small\MeasureSpace{}};
        \node at ($(optimal) + (-0.2, 0.4)$) {\Optimal};
        \coordinate (c0) at (4.2, 2.48);
        \draw[ipfred, line width=1pt]
          (a0) .. controls (ac1) and (ac2) ..
            coordinate[pos=0.12] (a1)
            coordinate[pos=0.31] (a2)
            coordinate[pos=0.49] (a3)
            coordinate[pos=0.64] (a4)
            coordinate[pos=0.76] (a5)
            coordinate[pos=0.85] (a6)
            coordinate[pos=0.91] (a7)
          (optimal);
        \draw[ipfblue, line width=1pt]
          (b0) .. controls (bc1) and (bc2) ..
            coordinate[pos=0.2]   (b1)
            coordinate[pos=0.4]   (b2)
            coordinate[pos=0.56]  (b3)
            coordinate[pos=0.69]  (b4)
            coordinate[pos=0.79]  (b5)
            coordinate[pos=0.86]  (b6)
            coordinate[pos=0.92]  (b7)
          (optimal);
        \node[right=0.1cm, scale=\scaling] at (c0) {$\pi_{\scriptscriptstyle 0}$};
        \foreach \coor [evaluate=\coor as \idx using int(2*\coor-1)] in {1, 2, 3, 4}
          \node[above left =0.0cm and 0.04cm, scale=\scaling] at (a\coor) {\PointLabel{\idx}};
        \foreach \coor [evaluate=\coor as \idx using int(2*\coor)] in {1, 2, 3, 4}
          \node[below=0.1cm, scale=\scaling] at (b\coor) {\small \PointLabel{\idx}};
        \draw[densely dashed, ipfred, thin] (c0) to[bend right=15] (a1); 
        \foreach \coor [evaluate=\coor as \coornext using int(\coor+1)] in {1, 2, 3, 4, 5, 6, 7} {
          \draw[densely dashed, ipfblue, thin] (a\coor)to[bend right=10] (b\coor);
        }
        \foreach \coor [evaluate=\coor as \coornext using int(\coor+1)] in {1, 2, 3, 4, 5, 6} {
          \draw[densely dashed, ipfred, thin] (b\coor)to[bend right=20] (a\coornext);
        }
        \fill[black] (c0) circle (0.04); 
        \foreach \coor in {1,2,3,4,5,6,7} {
          \fill[black] (a\coor) circle (0.04);
        }        
        \foreach \coor in {1,2,3,4,5,6,7} {
          \fill[black] (b\coor) circle (0.04);
        }
        \begin{scope}[xshift=0.56cm, yshift=0.92cm]
          \ngram{0.15}{5}{45}{fill=ipfyellow}
        \end{scope}
        \begin{scope}
          \node[scale=0.58] at ($(a0)+(0.35,-0.05)$) {\RedLabel};
          \node[scale=0.58] at ($(b0)+(-0.05,-0.24)$) {\BlueLabel};
        \end{scope}
      \end{scope}
      \node[font=\sffamily, scale=0.5] at (2.68,1.65) {IPF};
      \node[font=\sffamily, scale=0.5] at (2.38,1.36) {IPF};
    \end{scope}
    \begin{scope}[xshift=3.5cm]
      \fill[sinkbg, draw=gray, line width=.5pt, rounded corners=1.5mm]
       (-0.1,0.215) -- (3.051,0.215) -- (3.051,2.175) -- (-0.1,2.175) -- cycle;
      \begin{scope}[scale=0.63, yshift=-0.34cm]
        \coordinate (a0) at (3.9,3.45);
        \coordinate (ac1) at (3.1,3.4);
        \coordinate (ac2) at (1.1,1.9);
        \coordinate (b0) at (4.4,1.37);
        \coordinate (bc1) at (3.4,1.85);
        \coordinate (bc2) at (1.3,1.35);
        \coordinate (c0) at (4.2, 2.48);
        \coordinate (cc1) at (3.33, 2.55);
        \coordinate (cc2) at (1.17,1.55);
        \coordinate (optimal) at (0.6,0.95);
        \node at (0.25,3.3) {\small\MeasureSpace{}};
        \node at ($(optimal) + (-0.2, 0.4)$) {\Optimal};
        \draw[ipfred, line width=1pt]
          (a0) .. controls (ac1) and (ac2) ..
            coordinate[pos=0.12] (a1)
            coordinate[pos=0.23] (a2)
            coordinate[pos=0.32] (a3)
            coordinate[pos=0.42] (a4)
            coordinate[pos=0.49] (a5)
            coordinate[pos=0.61] (a6)
            coordinate[pos=0.7] (a7)
            coordinate[pos=0.76] (a8)
            coordinate[pos=0.83] (a9)
          (optimal);
        \draw[ipfblue, line width=1pt]
          (b0) .. controls (bc1) and (bc2) ..
            coordinate[pos=0.12] (b1)
            coordinate[pos=0.23] (b2)
            coordinate[pos=0.32] (b3)
            coordinate[pos=0.41] (b4)
            coordinate[pos=0.47] (b5)
            coordinate[pos=0.55] (b6)
            coordinate[pos=0.66] (b7)
            coordinate[pos=0.72] (b8)
            coordinate[pos=0.79] (b9)
          (optimal);
        \draw[ipfgreen, line width=1pt, -{Latex}]
          (c0) .. controls (cc1) and (cc2) ..
            coordinate[pos=0.15] (c1)
            coordinate[pos=0.26] (c2)
            coordinate[pos=0.36] (c3)
            coordinate[pos=0.45] (c4)
            coordinate[pos=0.55] (c5)
            coordinate[pos=0.64] (c6)
            coordinate[pos=0.71] (c7)
            coordinate[pos=0.78] (c8)
            coordinate[pos=0.86] (c9)
          (optimal);
        \foreach \coor in {0,1,2,3,4,5}
          \node[above=0.05cm, scale=\scaling] at (c\coor) {\PointLabel{\coor}};
        \foreach \coor in {0,1,2,3,4,5} 
          \fill[black] (c\coor) circle (0.04);
        \begin{scope}[xshift=0.56cm, yshift=0.92cm]
          \ngram{0.15}{5}{45}{fill=ipfyellow}
        \end{scope}        
      \end{scope}
      \node[font=\sffamily, scale=0.8] at (2.68,1.65) {$\nablaW F(\cdot)$};
    \end{scope}
    \begin{scope}[xshift=7cm]
      \fill[sinkbg, draw=gray, line width=.5pt, rounded corners=1.5mm]
        (-0.1,0.215) -- (3.051,0.215) -- (3.051,2.175) -- (-0.1,2.175) -- cycle;
      \begin{scope}
        \clip[rounded corners=1.5mm]
          (-0.1,0.215) -- (3.051,0.215) -- (3.051,2.175) -- (-0.1,2.175) -- cycle;
        \fill[verylightgray, opacity=0.5] (0.4,0.4139) circle (17pt);
      \end{scope}
      \begin{scope}[scale=0.63, yshift=-0.34cm]
        \coordinate (a0) at (3.9,3.45);
        \coordinate (ac1) at (3.1,3.4);
        \coordinate (ac2) at (1.1,1.9);
        \coordinate (b0) at (4.4,1.37);
        \coordinate (bc1) at (3.4,1.85);
        \coordinate (bc2) at (1.3,1.35);
        \coordinate (c0) at (4.2, 2.48);
        \coordinate (cc1) at (3.33, 2.55);
        \coordinate (cc2) at (1.17,1.55);
        \coordinate (optimal) at (0.6,0.95);
        \node at (0.25,3.3) {\small\MeasureSpace{}};
        \node[scale=0.9] at ($(optimal) + (0, 1.24)$) {$\{\pct\}_{t=1}^\infty$};
        \draw[ipfred, line width=1pt]
          (a0) .. controls (ac1) and (ac2) ..
            coordinate[pos=0.12] (a1)
            coordinate[pos=0.23] (a2)
            coordinate[pos=0.32] (a3)
            coordinate[pos=0.42] (a4)
            coordinate[pos=0.49] (a5)
            coordinate[pos=0.61] (a6)
            coordinate[pos=0.7] (a7)
            coordinate[pos=0.76] (a8)
            coordinate[pos=0.83] (a9)
          (optimal);
        \draw[ipfblue, line width=1pt]
          (b0) .. controls (bc1) and (bc2) ..
            coordinate[pos=0.12] (b1)
            coordinate[pos=0.23] (b2)
            coordinate[pos=0.32] (b3)
            coordinate[pos=0.41] (b4)
            coordinate[pos=0.47] (b5)
            coordinate[pos=0.55] (b6)
            coordinate[pos=0.66] (b7)
            coordinate[pos=0.72] (b8)
            coordinate[pos=0.79] (b9)
          (optimal);
        \coordinate (c1) at ($(c0) - (0.5,0.2)$);
        \coordinate (c2) at ($(c1) - (0.6,-0.08)$);
        \coordinate (c3) at ($(c2) - (0.3,0.6)$);
        \coordinate (c4) at ($(c3) - (0.5,-0.2)$);
        \coordinate (c5) at ($(c4) - (0.45,0.05)$);
        \coordinate (c6) at ($(c5) - (0.4,0.1)$);
        \coordinate (c7) at ($(optimal) + (0.6,0.5)$);
        \coordinate (c8) at ($(optimal) + (0.4,0.3)$);
        \coordinate (c9) at ($(optimal) + (0.1,-0.05)$);
        \coordinate (c10) at ($(optimal) + (0.1,0.1)$);
        \coordinate (c11) at ($(optimal) + (-0.1, 0.0)$);
        \coordinate (c12) at ($(optimal) + (0.0, 0.15)$);
        \foreach \coor [evaluate=\coor as \coornext using int(\coor+1)] in {0,1,2,3,4,5,6,7,8, 9, 10, 11} {
          \draw[ipfgreen, thick] (c\coor) to (c\coornext);
        }
        \foreach \coor in {0,1,2,3,4,5}
          \node[above=0.05cm, scale=\scaling] at (c\coor) {\PointLabel{\coor}};
        \foreach \coor in {0,1,2,3,4,5,6,7,8,9,10, 11, 12} 
          \fill[black] (c\coor) circle (0.04);
      \end{scope}
      \node[font=\sffamily, scale=0.8] at (2.68,1.65) {$\nablaW F_t(\cdot)$};
    \end{scope}    
  \end{tikzpicture}
  \caption{Learning for an SB model $\{\pi_t\}_{t=1}^\infty$ in the primal space $\cC$ (see \cref{fig:schema} for the details). Left: \Sinkhorn{} (\cref{lem:sink}). Middle: \Wasserstein{} gradient descent in the distributional space $\cC$ for fixed $F$ (\cref{lem:wd}). Right: Variational online mirror descent with sequence of convex costs using uncertain estimates $\{\pct\}_{t=1}^\infty$.} \label{fig:mdvar} 
\end{figure}

\section{Learning \Schrodinger{} Bridge via Online Mirror Descent} \label{sect:md}

The goal in this section is to derive an OMD update rule for SB, and analyze its convergence. To accomplish this objective, we postulate on the existence of temporal estimates and an online learning problem. Our analysis suggests that applying an MD approach can reduce the uncertainty of these estimates.

\subsection{\Sinkhorn{} and \Wasserstein{} descent as mirror descent algorithms} \label{subsect:swmd}

We start with a novel characterization of \Sinkhorn{} as a heuristic form of online mirror descent as illustrated in the left side of \cref{fig:mdvar}, which will lead to a better understanding of our arguments. OMD updates are determined by the first order approximation of costs $F_t$ and proximity of previous iterate with respect to a \Bregman{} divergence \citep{md_nonlin}. Using the first variation $\deltaC$ in \cref{def:firstvar} instead of standard gradient $\nabla$, the proximal form of generalized OMD over $\mathcal{C}$ is derived as \citep{sf}
\begin{equation} \label{eq:omd}
  \pi_{t+1}\mspace{-1mu}= \mspace{-1mu} \argmin_{\pi\in\cC}\Bigl\{\mspace{-1mu}\bigl\langle \deltaC\mspace{-1mu} F_t(\pi_{t}), \pi - \pi_{t} \bigr\rangle\mspace{-1mu} + \tfrac{1}{\eta_t}\mspace{-1mu}D_\Omega(\pi\Vert\pi_{t})\mspace{-1mu}\Bigr\},
\end{equation}
where $F_t$ denotes a temporal cost function for SB models in $\cC$. In \cref{eq:omd}, the updates are determined by the first order approximation of $F_t$ and proximity of previous iterate $\pi_t$ with respect to the \Bregman{} divergence \citep{md_nonlin}. In contrast to the `half-bridge' interpretation provided by \citet{sf}, the online MD iteration \eqref{eq:omd} involves a temporal cost $F_{t}$, which offers more general reinterpretation of the \Sinkhorn{} algorithm. Using the feasible model space $\cC$ in \eqref{eq:c_def}, IPF projections \eqref{eq:sink} are reformulated as following subproblems of alternating \Bregman{} projections:
\begin{equation} \label{eq:canon_sink}
 \argmin_{\pi \in \Pi^\perp_\mu}\mspace{-.5mu}\bigl\{ \KL(\pi\Vert \pi_{2t}) : \pi \in \cC, \gamma_2 \pi = \nu\bigr\},\mspace{40mu} 
 \argmin_{\pi \in \Pi^\perp_\nu}\mspace{-.5mu}\bigl\{ \KL(\pi\Vert \pi_{2t+1}) : \pi \in \cC, \gamma_1 \pi = \mu\bigr\},
\end{equation}  
where $\gamma_1\pi(x) \coloneqq \smallint\pi(x,y)\,\dy$ and $\gamma_2\pi(y) \coloneqq \smallint\pi(x,y)\,\dx$ denotes the marginalization operations, and the symbols $(\Pi^\perp_\mu, \Pi^\perp_\nu)$ denote the \Sinkhorn{} projection spaces that preserve the property of marginals. As a generalized optimization problem in $\cC$, one can consider a temporal cost $\Ftildesmallt(\pi) \coloneqq a_t \KL(\mspace{-1mu}\gamma_1\pi \Vert \mu) + (1- a_t) \KL(\mspace{-1mu}\gamma_2\pi \Vert \nu)$ with sequence $\{a_t\}_{t=1}^\infty = \{0, 1, 0, 1, \dots \}$. By construction, we show that an online form of MD for $\Ftildesmallt$ with a constant step size $\eta_t\!\equiv\!1$ matches the \Sinkhorn{}.
\begin{lemma}[\Sinkhorn{}] \label{lem:sink}
  For $\Omega(\cdot) \mspace{-1mu}= \mspace{-1mu}\KL(\pi\Vert e^{-c_\varepsilon}\mu\otimes\nu)$, iterates from  $\pi_{t+1} = \argmin_{\pi\in\cC}\bigl\{\mspace{-1mu}\langle \deltaC\mspace{-1mu} \Ftildesmallt(\pi_{t}), \pi - \pi_{t} \rangle\mspace{-1mu} + D_\Omega(\pi\Vert\pi_t)\bigr\}$ are equivalent to estimates from $(\varphi_t, \psi_t)$ of \eqref{eq:sink}, for every update step $t\in\mathbb{N}_0$.
\end{lemma}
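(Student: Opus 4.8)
The plan is to show that a single proximal mirror-descent step in \cref{lem:sink} reproduces exactly one iterative proportional fitting update of \eqref{eq:sink}; since the alternating sequence $\{a_t\} = \{0,1,0,1,\dots\}$ toggles which marginal is corrected, matching half-steps then yields the full \Sinkhorn{} recursion after aligning the index $t$ with the pair $(2t{+}1, 2t{+}2)$. The crucial first simplification is that for the \Gibbs{} potential $\Omega(\cdot) = \KL(\cdot\Vert\cR)$ the generalized \Bregman{} divergence of \cref{def:breg} collapses to a relative entropy: expanding $D_\Omega(\pi\Vert\pi_t) = \Omega(\pi) - \Omega(\pi_t) - \dsupplus\Omega(\pi_t; \pi - \pi_t)$ and using that $\pi,\pi_t$ share the same total mass, the reference and constant contributions cancel, leaving $D_\Omega(\pi\Vert\pi_t) = \KL(\pi\Vert\pi_t)$. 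Hence the subproblem reduces to minimizing $\langle \deltaC\Ftildesmallt(\pi_t), \pi\rangle + \KL(\pi\Vert\pi_t)$ over $\pi\in\cC$.

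Next I would compute the first variation of the temporal cost. Consider the branch $a_t = 0$, where $\Ftildesmallt(\pi) = \KL(\gamma_2\pi\Vert\nu)$. Using \cref{def:directional,def:firstvar}, a direction $v = \pi - \pi_t$ with marginal $\gamma_2 v$ gives $\dsupplus\KL(\gamma_2\pi\Vert\nu)(\pi_t; v) = \int \bigl(\log\tfrac{d\gamma_2\pi_t}{d\nu}(y) + 1\bigr)\,\gamma_2 v(dy)$, so that $\deltaC\Ftildesmallt(\pi_t)$ is represented by the function $(x,y)\mapsto \log\tfrac{d\gamma_2\pi_t}{d\nu}(y) + 1$, which depends on $y$ only. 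The branch $a_t = 1$ is symmetric, producing the function $(x,y)\mapsto \log\tfrac{d\gamma_1\pi_t}{d\mu}(x) + 1$ depending on $x$ only.

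I would then solve the proximal problem in closed form. Writing the stationarity condition for $\langle \deltaC\Ftildesmallt(\pi_t),\pi\rangle + \KL(\pi\Vert\pi_t)$ with a Lagrange multiplier enforcing $\int d\pi = 1$ yields $\log\tfrac{d\gamma_2\pi_t}{d\nu}(y) + \log\tfrac{d\pi}{d\pi_t}(x,y) + \mathrm{const} = 0$, i.e. $d\pi_{t+1}(x,y) = Z^{-1}\tfrac{d\nu}{d\gamma_2\pi_t}(y)\,d\pi_t(x,y)$. Integrating both sides and recognizing $\gamma_2\pi_t$ as the $y$-marginal shows $Z = \int \tfrac{d\nu}{d\gamma_2\pi_t}\,d\gamma_2\pi_t = \nu(\cS) = 1$, so the normalization is trivial and $\gamma_2\pi_{t+1} = \nu$ by a direct marginal check. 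This is precisely the \Bregman{} projection onto $\Pi^\perp_\nu$ in \eqref{eq:canon_sink}, leaving the $x$-conditionals of $\pi_t$ untouched; one must also verify that this minimizer still lies in $\cC$, which holds because multiplying a \Gibbs{} density by a function of $y$ alone preserves the $\varphi\oplus\psi$ structure.

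Finally, substituting the \Gibbs{} parameterization $d\pi_t = e^{\varphi_t\oplus\psi_t - c_\varepsilon}\,d(\mu\otimes\nu)$ into $d\pi_{t+1} = \tfrac{d\nu}{d\gamma_2\pi_t}\,d\pi_t$ and reading off $\tfrac{d\gamma_2\pi_t}{d\nu}(y) = e^{\psi_t(y)}\!\int e^{\varphi_t(x) - c_\varepsilon(x,y)}\mu(dx)$ gives, after cancellation, $\varphi_{t+1} = \varphi_t$ and $\psi_{t+1}(y) = -\log\!\int e^{\varphi_t(x) - c_\varepsilon(x,y)}\mu(dx)$, which is exactly the \Sinkhorn{} update \eqref{eq:sink}; the symmetric branch recovers the $\varphi$-update. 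The main obstacle I anticipate is the first-variation computation together with the optimality argument over the constrained, empty-interior domain $\cC$: one must justify the directional-derivative manipulations of \cref{def:firstvar} rigorously and confirm that the unconstrained stationary point automatically satisfies the \Gibbs{} and marginal constraints defining $\cC$, rather than requiring an extra projection.
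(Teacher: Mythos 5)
Your proof is correct and lands on the same computation as the paper's, but it takes a recognizably different route through the proximal subproblem. The paper works in the dual (potential) space throughout: it writes the variational inequality characterizing the minimizer over $\cC$, perturbs along $\pi = \pi_{t+1} + h(\pi - \pi_{t+1})$ with $h\to 0^+$, obtains the dual iteration $\deltaC\Omega(\pi_{t+1}) - \deltaC\Omega(\pi_t) = -\deltaC\Ftildesmallt(\pi_t)$, and reads off the updates of $(\varphi_t,\psi_t)$ from the fact that $\deltaC\Ftildesmallt(\pi_t)$ equals $\log\tfrac{d(\gamma_2\pi_t)}{d\nu}$ (a function of $y$ alone) or its $x$-counterpart, presenting the result as a discretized dynamics. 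You instead reduce $D_\Omega(\pi\Vert\pi_t)$ to $\KL(\pi\Vert\pi_t)$, solve the proximal problem in the primal to get the closed-form multiplicative update $d\pi_{t+1} = \tfrac{d\nu}{d\gamma_2\pi_t}\,d\pi_t$ with trivial normalization, and only then substitute the \Gibbs{} parameterization to recover the potential updates. Your version is more self-contained and makes the \Bregman{}-projection structure of \eqref{eq:canon_sink} explicit; the paper's dual-iteration formulation is the one that generalizes to arbitrary step sizes $\eta_t$ and is reused in the convergence analysis (\cref{lem:duali} and \cref{rem:duali}). Your observations that the additive constant in the first variation is annihilated by the zero-mass direction, and that multiplying a \Gibbs{} density by a function of $y$ alone preserves the $\varphi\oplus\psi$ structure, are both correct and are implicitly relied on by the paper as well.

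The one step where your argument is less rigorous than the paper's is exactly the one you flag: the unconstrained Euler--Lagrange/Lagrange-multiplier condition is not literally available because $\cC$ has empty interior, which is why the paper replaces it with the one-sided perturbation $h\to 0^+$ inside the convex set (\cref{def:directional,def:firstvar}). Since you verify a posteriori that the candidate lies in $\cC$ and the objective is convex, this is a presentational gap rather than a mathematical one. A minor notational point: under the paper's convention (caption of \cref{fig:schema} and \eqref{eq:canon_sink}), the constraint set with $\gamma_2\pi=\nu$ is denoted $\Pi^\perp_\mu$, so the $a_t=0$ half-step is the projection onto $\Pi^\perp_\mu$ rather than $\Pi^\perp_\nu$ as you write.
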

The proof is in \cref{sect:proof}. Consequently, we established that the \Sinkhorn{} algorithm corresponds to an instance OMD; however, its inherent structure limits flexibility on step sizes and other underlying assumptions, making it challenging to analyze directly using standard OMD theoretical arguments.  Instead, one can alternatively consider OMD by recovering a ``static'' objective, namely $F(\cdot) \coloneqq \KL(\cdot\Vert\pi^\ast)$, where the KL functional is originated from the formal definition of SBP \citep{sbml, likesb}. The following lemma shows that the MD updates for the static cost correspond to discretization of a \Wasserstein{} gradient flow for SB models, a \Riemannian{} steepest descent in the model space of $\cC$. 
\begin{lemma}[MD in the \Wasserstein{} space] \label{lem:wd}
  Suppose that $F(\pi) \coloneqq \KL(\pi\Vert \pi^\ast\mspace{-1mu})$ for $\pi\in\cC$. The MD formulation of $F$ corresponds to a discretization of a geodesic flow such that $\lim_{\eta_t\to0^+}\frac{\pi_{t+1} - \pi_t}{\eta_t} = - \nablaW F(\pi_t)$, where $\nablaW$ denotes the \Wasserstein{}-2 gradient operator.
\end{lemma}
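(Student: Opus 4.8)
\emph{Proof proposal.} The plan is to extract the first-order stationarity condition of the proximal subproblem, read it as a mirror (dual-coordinate) update, pass to the vanishing step-size limit to obtain a continuous-time mirror flow, and finally recognize that flow as the \Wasserstein{}-2 gradient flow of $F$ via \Otto{}'s \Riemannian{} calculus. In contrast to \cref{lem:sink}, which fixes $\eta_t\equiv1$ and lets the cost alternate, here the cost $F(\cdot)=\KL(\cdot\Vert\pi^\ast)$ is static and $\eta_t\to0^+$, so the iterates should trace out a genuine gradient trajectory rather than a two-block projection cycle.

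First I would write the optimality condition for \eqref{eq:omd} with $F_t\equiv F$. Since the first variation of $\pi\mapsto D_\Omega(\pi\Vert\pi_t)$ equals $\deltaC\Omega(\pi)-\deltaC\Omega(\pi_t)$ (the linear term in \cref{def:breg} contributes the constant covector $\deltaC\Omega(\pi_t)$, cf.\ \cref{def:firstvar}), stationarity of $\pi_{t+1}$ in every feasible direction gives the dual update
\[
  \deltaC\Omega(\pi_{t+1})-\deltaC\Omega(\pi_t)=-\eta_t\,\deltaC F(\pi_t),
\]
which is the measure-space analogue of \eqref{eq:md}. For $\Omega=\KL(\cdot\Vert\cR)$ the mirror map is the relative logarithm, $\deltaC\Omega(\pi)=\log_\cR(\pi)$ up to the additive constant pinned down by $\pi\in\cC$, so the update reads $\log_\cR(\pi_{t+1})-\log_\cR(\pi_t)=-\eta_t\,\deltaC F(\pi_t)$ with $\deltaC F(\pi_t)=\log(\pi_t/\pi^\ast)$. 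Dividing by $\eta_t$ and letting $\eta_t\to0^+$ then yields the dual-coordinate (mirror) flow $\partial_t\log_\cR(\pi_t)=-\deltaC F(\pi_t)$. The regularity built into $\cC$ in \eqref{eq:c_def} together with the $\mathrm{LSI}(\omega)$ estimate of \cref{asm:lsi} makes this step rigorous: relative strong convexity of $D_\Omega(\cdot\Vert\pi_t)$ near $\pi_t$ makes the proximal map single-valued and \Lipschitz{} in $\eta_t$, so the difference quotient has a well-defined limit.

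The crux is to re-express this mirror flow as the \Wasserstein{}-2 gradient flow, i.e.\ to show it coincides with the curve solving $\partial_t\pi_t=\div\bigl(\pi_t\nabla\log(\pi_t/\pi^\ast)\bigr)=-\nablaW F(\pi_t)$, the \FokkerPlanck{}/\Langevin{} flow toward $\pi^\ast$. Here I would invoke the dynamic (\BenamouBrenier{}) representation and the \Otto{} calculus recalled in Appendix~\ref{sect:discuss} (\citealp{otto,jko}): encoding the evolution through the continuity equation $\partial_t\pi_t+\div(\pi_t v_t)=0$ identifies the admissible tangent velocity with $v_t=-\nabla\tfrac{\delta F}{\delta\pi}(\pi_t)=-\nabla\log(\pi_t/\pi^\ast)$, so that the discrete iterates are recognized as the \Bregman{} JKO discretization of a geodesic (steepest-descent) flow and the claimed identity $\lim_{\eta_t\to0^+}(\pi_{t+1}-\pi_t)/\eta_t=-\nablaW F(\pi_t)$ follows. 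The main obstacle is precisely this identification of geometries: one must show that, on the \Gibbs{} submanifold $\cC$ and in the $\eta_t\to0^+$ limit, the information-geometric proximity encoded by $D_\Omega$ selects the \Wasserstein{}-2 velocity field $\nabla\log(\pi_t/\pi^\ast)$ rather than the unconstrained \FisherRao{} velocity a naive dual-coordinate expansion would produce. I expect this metric identification—converting the dual mirror flow into its \Wasserstein{} transport form via the continuity-equation lift—to be the hardest part, and it is exactly where the $\mathrm{LSI}(\omega)$ hypothesis and the relative smoothness/convexity of $\Omega$ are needed to control the remainder terms and justify the passage to the limit.
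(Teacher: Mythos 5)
Your proposal follows essentially the same route as the paper's proof: extract the dual-space update $\deltaC\Omega(\pi_{t+1})-\deltaC\Omega(\pi_t)=-\eta_t\,\deltaC F(\pi_t)$ from stationarity of the proximal subproblem, identify $\deltaC F(\pi_t)=\log(\pi_t/\pi^\ast)$ as the first variation of the KL cost, and pass to the $\eta_t\to0^+$ limit while invoking \Otto{}'s calculus and the JKO identification of $\partial_t\pi_t=-\nablaW F(\pi_t)$ with the \FokkerPlanck{} equation. The ``metric identification'' you single out as the hardest step is exactly the point the paper also settles only by appeal to \citet{otto} and \citet{jko} (natural gradient descent discretization plus the \Wasserstein{}-2 gradient operator $\nablaW=\div(\rho\nabla\frac{\delta}{\delta\rho})$), without the $\mathrm{LSI}(\omega)$ machinery you anticipate needing there.
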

According to \cref{lem:wd}, gradients of $F$ are tangential to the geodesic curve from $\pi_0$ to $\pi^\ast$ (green line in the middle of \cref{fig:mdvar}) in terms of the \Wasserstein{}-2 metric $W_2$. Hence, the geometric interpretation allows us to consider the static cost $F$ as the ground-truth cost for optimization in our variational OMD framework. However, the ideal case falls short in practice since $\pi^\ast$ is inherently unknown. Therefore, we postulate on an online learning problem that nonstationary estimates $\{\pct\}_{t=1}^\infty$ are offered instead of $\pi^\ast$ as learning signals, making an optimization process with $F_t(\cdot) \coloneqq \KL(\cdot\Vert \pct)$. As shown in in the right side of \cref{fig:mdvar}, we focus on the online learning setting where $\{\pct\}_{t=1}^\infty$ are fundamentally uncertain with perturbation, since the optimal coupling $\pi^\ast$ is not accessible during the training time.

\begin{figure*}
  \def\toytitletxt{0.85}
  \tikzset{inner sep=0pt, outer sep=0pt}
  \centering
  \begin{tikzpicture}[tight background]
    \clip (-3.83,-2.35) rectangle (11.72, 2.32);
    \begin{scope}[xshift=0cm]
      \node at (0,1.2) {\includegraphics[width=195pt]{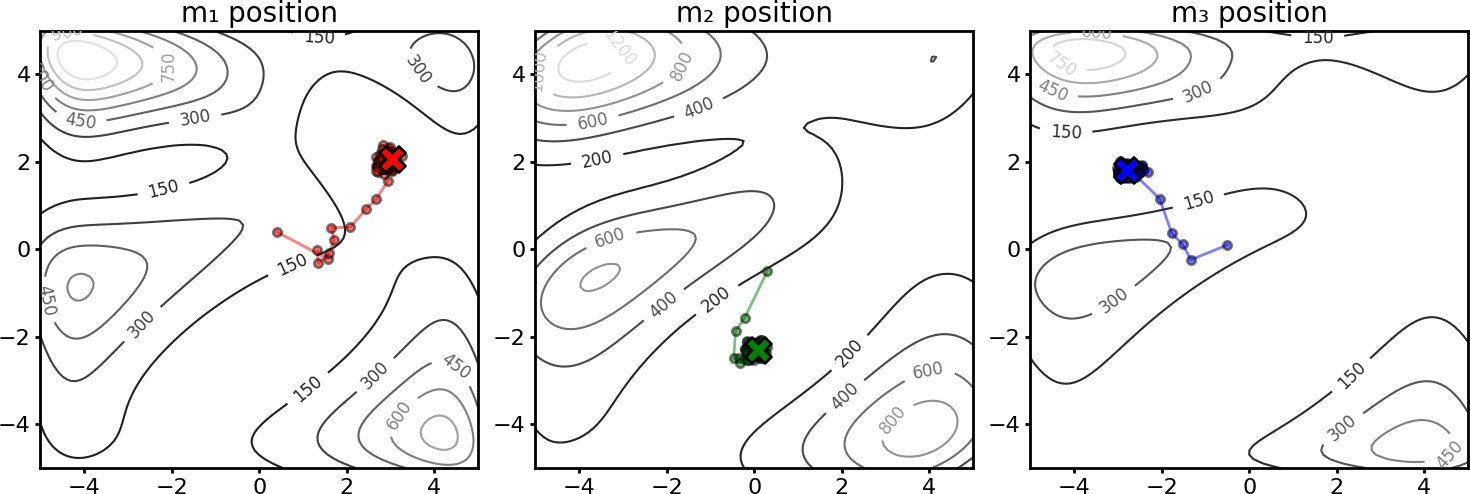}}; 
      \node at (0,-1.2) {\includegraphics[width=195pt]{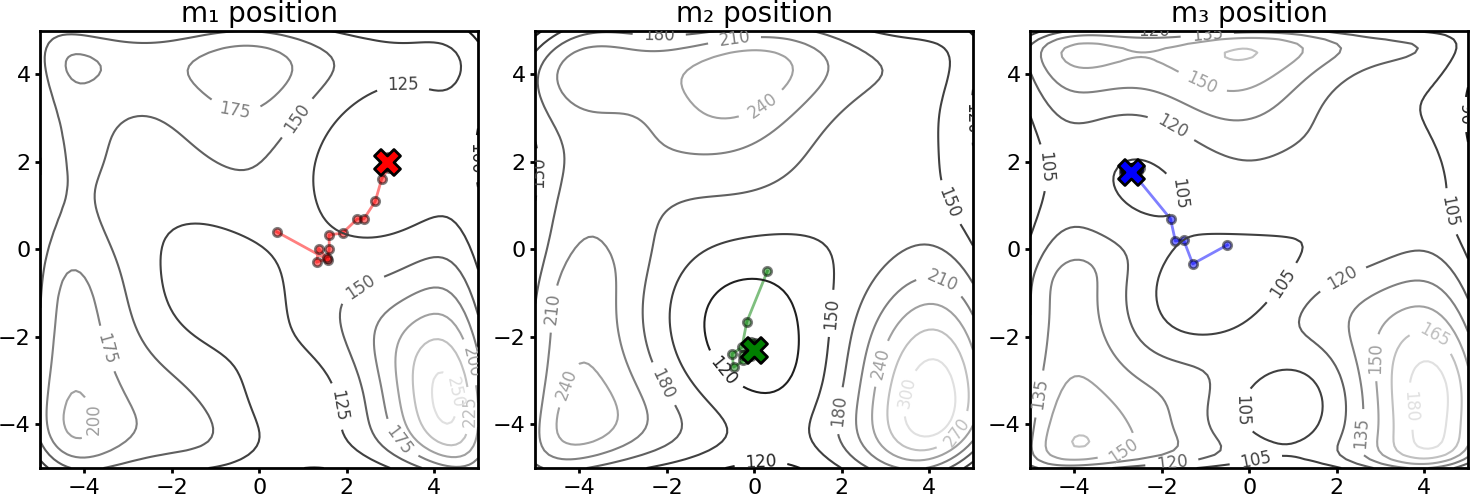}}; 
      \node[rotate=90, scale=0.75] at (-3.67,1.2) {\bfseries \textsf{LightSB} ($\pct$)};
      \node[rotate=90, scale=0.75] at (-3.67,-1.18) {\bfseries \textsf{Ours} ($\pi_t$)};
    \end{scope}
    \begin{scope}[xshift=7.9cm]
      \node at (0,1.2) {\includegraphics[width=195pt]{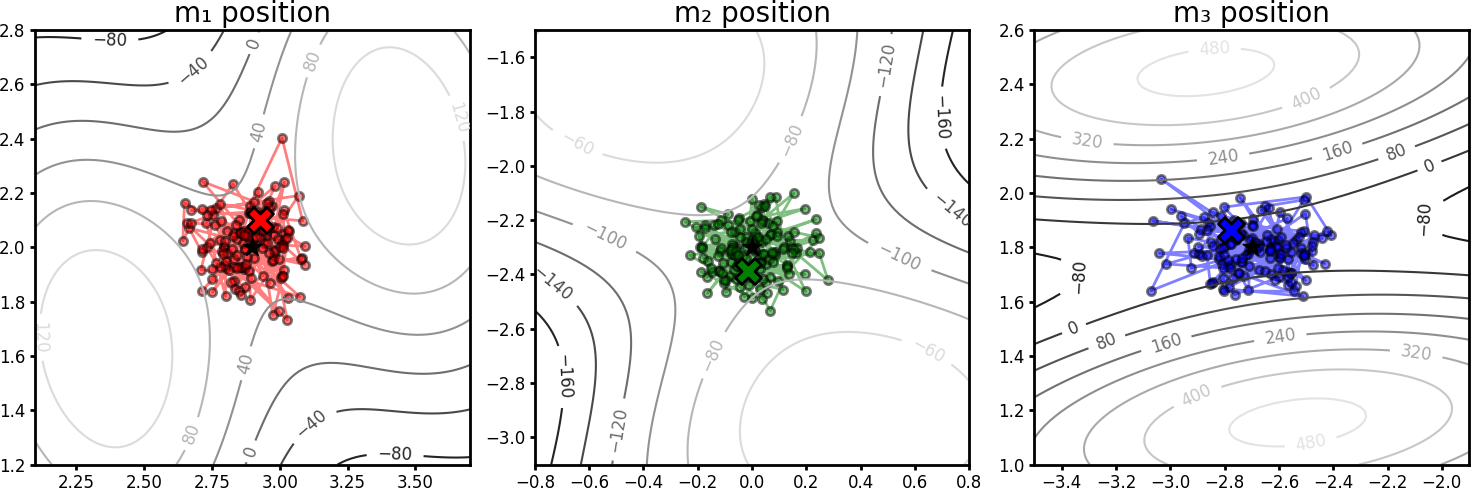}};
      \node at (0,-1.2) {\includegraphics[width=195pt]{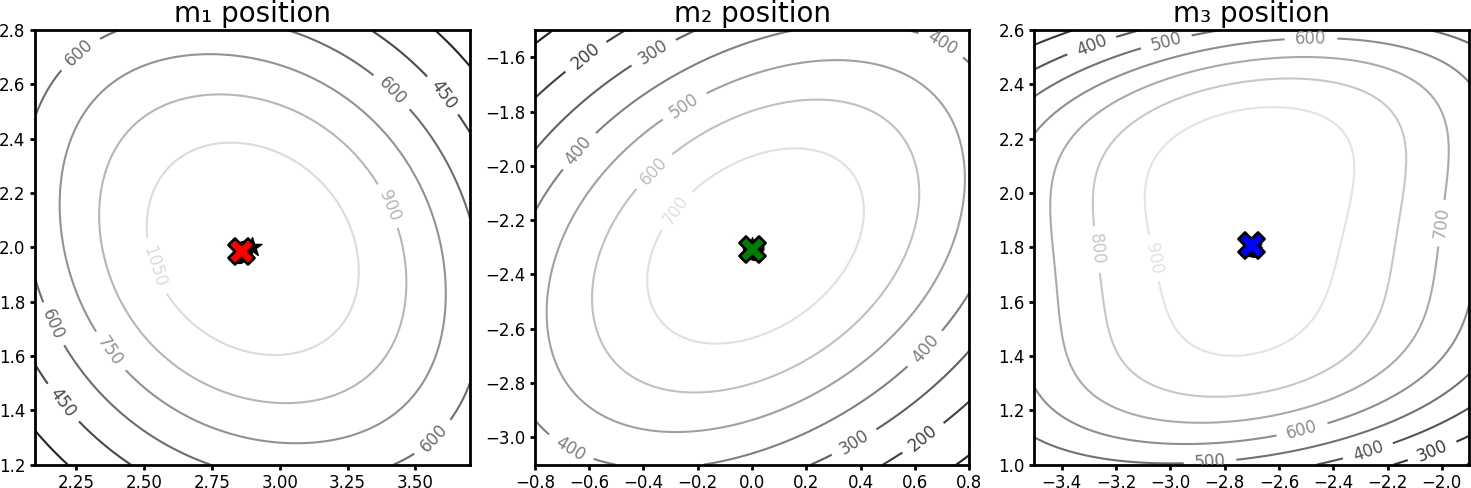}};
      \node[rotate=90, scale=0.75] at (-3.67,1.2) {\bfseries \textsf{LightSB}};
      \node[rotate=90, scale=0.75] at (-3.67,-1.18) {\bfseries \textsf{Ours}};
    \end{scope}
  \end{tikzpicture}
  \caption{Loss landscapes and gradient dynamics in a 2D problem. Left: In an early stage, parameters of three modalities $\{m_k\}_{k=1}^3$ (mean estimations) for both LightSB (top) and VMSB (bottom) methods approach the optimality with different costs. Right: When magnified the landscapes in the late stages (10 times), while LightSB is vibrant, whereas our method emits strictly convex landscape and stable dynamics.} \label{fig:loss_landscape}
\end{figure*}

\subsection{Online mirror descent for \Schrodinger{} bridges: theoretical analysis} \label{subsect:analysis}

In this section, we examine the robustness of our OMD algorithm by analyzing its convergence behavior under statistical uncertainty. From a learning theoretic standpoint, an apparent yet understated premise is that an SB algorithm does not retain the global target $\pi^\ast$ in practice. The global objective $F$ (also $\Ftildesmallt$) are fundamentally unknown, but are instead inferred, imposing innate uncertainty of optimization. Instead, we postulate on an online learning problem that nonstationary ergodic estimates $\{\pct\}_{t=1}^\infty$ are offered instead of $\pi^\ast$ (gray region in \cref{fig:schema}). Let $\Omega^\ast$ be the \Fenchel{} conjugate of $\Omega + \mspace{-1mu}i_{\mspace{.5mu}\cC}$ with the convex indicator\footnote{Defined as $i_{\mspace{.5mu}\cC}(x) = 0$ if $x \in \cC$ and $+\infty$ otherwise.} $i_{\mspace{.5mu}\cC}$. For the space $\cD \coloneqq \deltaC \Omega(\cC)$, a directional derivative $\deltaD$ of $\Omega^\ast$ exists by the \Danskin{}'s theorem \citep{danskin2012theory, bernhard1995theorem}, such that
\begin{equation}
  \deltaD \Omega^\ast(\varphi \oplus \psi) = \argmax_{\pi \in\cC} \bigl\{ \langle \varphi \oplus \psi, \pi \rangle - \Omega(\pi) \bigr\}.
\end{equation}
Note that $(\deltaC\Omega, \deltaD \Omega^\ast\mspace{-1mu})$ form bidirectional maps; a direct sum of potentials $\varphi\oplus\psi \in \cD$ represent an element of the generalized dual space. The key assumption is that the learning target $\pct$ is asymptotically mean stationary \citep{gray1980asymptotically} for the dual space, which have been used to analyze stochastic dynamics. Since iterates are updated through dual parameters in MD, we refer to the process as being dually stationary.
\begin{assumption}[Dually stationary process] \label{asm:bs}
  Suppose that $\pcD \in \cC$ exists, which is the primal representation of asymptotic dual average $\pcD \coloneqq \deltaD (\lim_{t\to\infty} \bE_{t}[\deltaC\Omega( \pct)])$, where the notation $\bE_{t}$ denotes the time-average.
\end{assumption}

\begin{wrapfigure}{r}{0.24\textwidth}
  \vskip-10.8pt
  \centering
  \begin{tikzpicture}[scale=0.42, transform shape,tight background, every node/.style={inner sep=0,outer sep=0},on grid]
    \clip (-0.8,-0.34) rectangle+(8.36,6.0);
    \begin{axis}[grid=both, mark=none, xmin=0.85, ymin=0, xmax=3.4, ymax=2.4,
      axis line style=ultra thick,
      axis lines=middle,
      enlargelimits=upper,
      clip=false,
      xtick={1,1.6,...,3.4},
      ytick={0,0.6,...,2.4},
      yticklabel=\empty,
      xticklabel=\empty,
      every axis plot/.append style={very thick},
      legend style={at={(0.65,0.83)},anchor=west}]
      \addplot[line width=0.65mm,domain=1:3.6,restrict y to domain=0:2.6, samples=100]{1/(x-0.5)};
      \addplot[line width=1mm,dashed, domain=1:3.6,restrict y to domain=0:2.6, samples=100]{1-1/(x-0.5))};
      \node[circle,fill,inner sep=3.5pt] at (axis cs:1,2) {};
      \legend{\LARGE$\eta_t$,\LARGE$1\!-\!\eta_t$};
    \end{axis}
    \node at (7.05,-0.1) {\huge $t$};
    \node at (-0.44,4.3) {\huge $\eta_1$};
  \end{tikzpicture}
  \vskip-3pt
  \caption{A sequence example of $\eta_t$ and $1- \eta_t$.} \label{fig:eta_eg}
  \vskip-10pt
\end{wrapfigure}
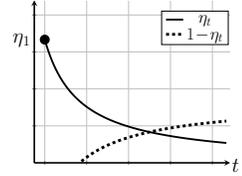
Plugging a temporal cost$F_t(\cdot) = \KL(\cdot\Vert\pct)$ to \eqref{eq:omd} from \S~\ref{subsect:swmd}, we achieve a distinct problem setup of online mirror descent. \cref{fig:loss_landscape} shows a demonstrative experiment regarding our online learning hypothesis. OMD decomposes the global problem into local convex problems, and prevented iterates from being vibrant by stopping at a single point $\pcD$. This verifies that OMD stabilizes learning of $\pi_t$, even when the reference $\pct$ tends to inherently have some perturbation. Additionally, we state two conditions for the sequence $\{\eta_t\}_{t=0}^\infty$, which will be justified in \cref{thm:step_size,prop:convergence}. \cref{fig:eta_eg} shows a plot of harmonic progression $\frac{1}{a+td}$ for $a\in\bR$ and $d\in\bR^+$ with respect to $t$ which satisfies both conditions. 
\begin{assumption}[Step sizes] \label{asm:eta} 
  Assume two conditions for step sizes $\{\eta_t\}_{t=0}^\infty$. (a) \textit{Convergent sequence \& divergent series:} $\lim_{t\to \infty }\eta_t=0$ and $\sum\nolimits_{t=1}^\infty \eta_t=\infty$. (b) \textit{Convergent series for squares:} $\sum\nolimits_{t=1}^\infty \eta^2_t < \infty$.
\end{assumption}
Under the above assumptions, we firstly argue that OMD for the temporal cost $\KL(\cdot\Vert\pct)$ with respect to the \Bregman{} potential $\Omega = \KL(\cdot\Vert e^{-c_\varepsilon}\mu\otimes\nu)$ requires step size scheduling for the sake of convergence. The following theorem states that convergence in the case of $\pi^\ast=\pcD$ is assured when $\eta_t$ follows Assumption~(\ref{asm:eta}a). In contrast to the well-known linear convergence guarantees for the \Sinkhorn{} algorithm under bounded costs with fixed marginals \citep{centered}, our OMD-based analysis establishes sublinear convergence rates, accommodating scenarios involving unbounded and non-stationary costs.
\begin{theorem}[Step size considerations] \label{thm:step_size}
  Suppose the idealized case of $\pi^\ast=\pcD$. Then, for $\{\pi_t\}_{t_1}^T\subset \cC$ we get $\lim_{T\to\infty}\bE_{1:T}[D_\Omega(\pcD \Vert \piT)] =0$ if and only if Assumption~(\ref{asm:eta}a) is satisfied. Furthermore, if the step size is in the form of $\eta_t = \frac{2}{t+1}$, then $\bE_{1:T}[D_\Omega(\pi^\ast \Vert \pi_t)] = \cO(1/T)$.
\end{theorem}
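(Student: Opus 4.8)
The plan is to pass to the \emph{dual} coordinates and reduce the update to a linear averaging recursion, after which the step-size conditions become the classical regularity conditions of a summability method. First I would record two structural identities specific to the choice $\Omega(\cdot)=\KL(\cdot\Vert\cR)$ and $F_t(\cdot)=\KL(\cdot\Vert\pct)$: since adding a term linear in the argument leaves a Bregman divergence unchanged, one has $D_\Omega(\pi\Vert\bar\pi)=\KL(\pi\Vert\bar\pi)$ for all $\pi,\bar\pi\in\cC$, and the first variation of the cost is the difference of dual images, $\deltaC F_t(\pi_t)=\deltaC\Omega(\pi_t)-\deltaC\Omega(\pct)$. Writing $\theta_t\coloneqq\deltaC\Omega(\pi_t)$ and $\theta^\circ_t\coloneqq\deltaC\Omega(\pct)$, the optimality condition of \eqref{eq:omd}, namely $\deltaC\Omega(\pi_{t+1})=\deltaC\Omega(\pi_t)-\eta_t\deltaC F_t(\pi_t)$, then collapses to the affine recursion
\[
  \theta_{t+1}=(1-\eta_t)\theta_t+\eta_t\theta^\circ_t ,
\]
which stays in $\cD=\deltaC\Omega(\cC)$ by convexity of the direct-sum structure $\varphi\oplus\psi$. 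Thus $\theta_T$ is an explicit convex combination of $\theta_1$ and the past leaders $\{\theta^\circ_s\}_{s<T}$ whenever $\eta_t\in[0,1]$, so the entire dynamics is an averaging of the dual potentials of the (unreliable) leaders.

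Setting $\theta^\ast\coloneqq\deltaC\Omega(\pcD)$ and $\delta_t\coloneqq\theta_t-\theta^\ast$, the recursion yields $\delta_{t+1}=(1-\eta_t)\delta_t+\eta_t\xi_t$ with $\xi_t\coloneqq\theta^\circ_t-\theta^\ast$, whose closed form is
\[
  \delta_T=\Bigl(\textstyle\prod_{t<T}(1-\eta_t)\Bigr)\delta_1+\sum_{s<T}w_{T,s}\,\xi_s,\qquad w_{T,s}=\eta_s\textstyle\prod_{s<t<T}(1-\eta_t).
\]
By \cref{asm:bs} the leaders are dually stationary, i.e.\ the Ces\`aro averages of $\theta^\circ_t$ tend to $\theta^\ast$, so $\delta_T\to0$ is exactly the statement that the triangular array $\{w_{T,s}\}$ preserves this limit. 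I would invoke the Silverman--Toeplitz characterization: the array is limit-preserving iff its row sums tend to one, each column weight tends to zero, and its total variation is bounded. Here the row sums equal $1-\prod_{t<T}(1-\eta_t)$, which tend to $1$ precisely when $\sum_t\eta_t=\infty$, while the individual weights vanish precisely when $\eta_t\to0$; these are exactly the two clauses of \cref{asm:eta}(a). To transport the result back to the primal I would use $D_\Omega(\pcD\Vert\piT)=\KL(\pi^\ast\Vert\piT)$ together with $\mathrm{LSI}(\omega)$ from \cref{asm:lsi}, which bounds this divergence by $\tfrac{1}{2\omega}\lVert\nabla\delta_T\rVert^2_{L^2(\pi^\ast)}$ and hence forces $\bE_{1:T}[D_\Omega(\pcD\Vert\piT)]\to0$ along with the dual error.

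For the rate, specializing to $\eta_t=\tfrac{2}{t+1}$ makes every weight explicit: $1-\eta_t=\tfrac{t-1}{t+1}$ telescopes to $\prod_{s<t<T}(1-\eta_t)=\tfrac{s(s+1)}{(T-1)T}$, the factor $\eta_1=1$ annihilates $\delta_1$ after the first step, and the surviving weights are the convex family $w_{T,s}=\tfrac{2s}{(T-1)T}$ with $\sum_{s<T}w_{T,s}=1$. Decomposing $\nabla\delta_T=\sum_{s<T}w_{T,s}\nabla\xi_s$ into a mean-zero fluctuation and a bias term, and using the bounded second moments of the estimates together with the summable autocovariances guaranteed by dual stationarity, the cross terms are of lower order and one is left with $\bE\lVert\nabla\delta_T\rVert^2_{L^2(\pi^\ast)}\le\sigma^2\sum_{s<T}w_{T,s}^2=\tfrac{4\sigma^2}{(T-1)^2T^2}\sum_{s<T}s^2=\cO(1/T)$, since $\sum_{s<T}s^2\sim T^3/3$. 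Feeding this into the $\mathrm{LSI}(\omega)$ bound of the previous paragraph gives $\bE_{1:T}[D_\Omega(\pi^\ast\Vert\piT)]=\cO(1/T)$.

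The hard part will be the two places where the infinite-dimensional, measure-theoretic nature intrudes on the otherwise classical argument. First, the necessity (``only if'') direction must rule out convergence whenever \cref{asm:eta}(a) fails: if $\sum_t\eta_t<\infty$ then $\prod_t(1-\eta_t)>0$ and the initial dual error $\delta_1$ never dies, while if $\eta_t\not\to0$ the weights on the most recent $\xi_s$ stay bounded away from zero, so the un-averaged noise cannot cancel. Making this rigorous requires the estimates $\{\pct\}$ to be genuinely non-degenerate, which I would encode through a nonvanishing lower bound on the conditional variance of $\xi_t$ so that $\bE\lVert\nabla\delta_T\rVert^2$ is bounded below. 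Second, the primal--dual transfer through \cref{asm:lsi} must be carried out in a topology strong enough that $\theta_T\to\theta^\ast$ controls the spatial gradient on the Fisher-information side of the LSI; this is exactly where the regularity built into $\cC$ in \eqref{eq:c_def}---the $C^2\cap\Lip(\cK)$ potentials and the asymptotic log-concavity---is needed to legitimately replace norms of first variations by the $L^2(\pi^\ast)$ gradient seminorm.
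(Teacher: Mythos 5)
Your route is genuinely different from the paper's. You solve the dual recursion $\theta_{t+1}=(1-\eta_t)\theta_t+\eta_t\theta^\circ_t$ in closed form and treat the weights $w_{T,s}=\eta_s\prod_{s<t<T}(1-\eta_t)$ as a summability method; the paper never unrolls the recursion, but instead derives the one-sided recursive inequalities $\bE_{1:t+1}[D_\Omega(\piT\Vert\pcT)]\ge(1-\eta_t)\bE_{1:t}[D_\Omega(\pi_t\Vert\pct)]$ (for necessity, via $1-x\ge e^{-ax}$) and $A_{t+1}\le(1-\eta_t)A_t+2\eta_t^2\omega^{-1}\cK$ (for sufficiency, via the right Bregman difference, \cref{asm:lsi}, and the $\Lip(\cK)$ bound), and then closes with a contradiction argument and, for $\eta_t=\tfrac{2}{t+1}$, the telescoping $t(t+1)A_{t+1}\le(t-1)tA_t+4z$. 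Your dual-iteration identity, the telescoped weights $w_{T,s}=\tfrac{2s}{(T-1)T}$, the row-sum computation, and the observation that $\eta_1=1$ annihilates $\delta_1$ are all correct and match the paper's \cref{lem:duali} and \cref{rem:duali}.

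The gap is in how you dispose of the noise. Silverman--Toeplitz regularity (columns vanish, row sums tend to one, bounded variation) guarantees that the array maps \emph{convergent} sequences to their limits; \cref{asm:bs} only supplies Ces\`aro convergence of $\theta^\circ_t$ to $\theta^\ast$, and a regular matrix method need not be consistent with $(C,1)$ on merely Ces\`aro-summable sequences. So "the array preserves this limit" does not follow from the two clauses of Assumption~(\ref{asm:eta}a) alone, and your sufficiency direction is open at exactly the point where the paper instead invokes the deterministic per-step bound $D_\Omega(\pi_t\Vert\pi_{t+1})\le 2\eta_t^2\omega^{-1}\cK$ (LSI plus Lipschitz potentials), which requires no structure on the leader sequence beyond boundedness. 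The same issue resurfaces in your rate: the step $\bE\lVert\sum_{s<T}w_{T,s}\nabla\xi_s\rVert^2\le\sigma^2\sum_{s<T}w_{T,s}^2$ needs the $\nabla\xi_s$ to be uncorrelated or to have summable autocovariances, and "summable autocovariances guaranteed by dual stationarity" is an overclaim --- asymptotic mean stationarity implies neither. Without it the second moment is only $O\bigl((\sum_s w_{T,s})^2\bigr)=O(1)$. To repair the argument within the paper's assumptions you would replace the variance computation by the paper's deterministic increment bound, i.e.\ bound $\lVert\nabla(\deltaC\Omega(\pi_t)-\deltaC\Omega(\pi_{t+1}))\rVert_{L^2(\pi_t)}\le 2\eta_t\cK$ and feed that into $\mathrm{LSI}(\omega)$, which recovers the $z\eta_t^2$ term and hence the $\cO(1/T)$ rate without any covariance hypothesis. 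Your treatment of the "only if" direction (persistence of $\delta_1$ when $\sum_t\eta_t<\infty$, un-averaged recent noise when $\eta_t\not\to0$, and the explicit need for a variance lower bound) is sound and, if anything, more candid than the paper's.
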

Therefore, we can guarantee the ideal convergence in the SB learning when the scheduling of $\eta_t$ follows the step size assumptions. Next, we argue that general convergence toward $\pcD$ is guaranteed under Assumption~(\ref{asm:eta}b). Given the convex nature of SB cost functionals, we argue that this convergence toward $\pcD$ is beneficial as long as $\pct$ is trained to approximate $\pi^\ast$ and remain bounded. Therefore, we argue that the convergence of SB is beneficial and address the following statement.
\begin{proposition}[Convergence] \label{prop:convergence}
  Suppose that $\pi^\ast\ne\pcD$, hence $\inf_{\pi\in\cC} \bE[F_t(\pi)] >0$. If the step sizes $\{\eta_t\}_{t=0}^\infty$ satisfies \cref{asm:eta}, then $\lim_{t\to\infty} \bE_{1:t}[D_\Omega(\pcD \Vert \pi_t)]$ converges to $0$ almost surely. 
\end{proposition}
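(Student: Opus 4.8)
The plan is to reduce the update to a convex combination in the dual space and then run a stochastic-approximation argument, extending \cref{thm:step_size} from the idealized case $\pi^\ast=\pcD$ to the biased regime $\pi^\ast\ne\pcD$. The starting point is that both the potential and the temporal cost are relative entropies against Gibbs references, so their first variations (\cref{def:firstvar}) essentially coincide. Writing $\theta_t \coloneqq \deltaC\Omega(\pi_t)$ and $\theta^\circ_t \coloneqq \deltaC\Omega(\pct)$, the first variation of $F_t = \KL(\cdot\Vert\pct)$ at $\pi_t$ equals $\log\tfrac{d\pi_t}{d\pct} = \log\tfrac{d\pi_t}{d\cR}-\log\tfrac{d\pct}{d\cR} = \theta_t-\theta^\circ_t$ up to an additive constant annihilated in the pairing against $\pi-\pi_t$. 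Feeding this into the dual form of \eqref{eq:omd} produces the affine recursion $\theta_{t+1}=(1-\eta_t)\theta_t+\eta_t\theta^\circ_t$, so the point $\pcD$ of \cref{asm:bs} is exactly the primal image of the asymptotic dual average $\theta^\circ_\infty\coloneqq\deltaC\Omega(\pcD)=\lim_{t\to\infty}\bE_t[\theta^\circ_t]$.

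Next I would derive a one-step contraction toward $\pcD$ (this is where \cref{lem:step2} enters). Apply the generalized three-point identity for $D_\Omega$ (\cref{def:breg}) at the triple $(\pcD,\pi_{t+1},\pi_t)$ and substitute $\deltaC\Omega(\pi_{t+1})-\deltaC\Omega(\pi_t)=-\eta_t(\theta_t-\theta^\circ_t)$. Splitting $\theta_t-\theta^\circ_t=(\theta_t-\theta^\circ_\infty)+(\theta^\circ_\infty-\theta^\circ_t)$, the deterministic part yields, via the symmetrized Bregman identity $\langle\theta_t-\theta^\circ_\infty,\,\pi_t-\pcD\rangle = D_\Omega(\pi_t\Vert\pcD)+D_\Omega(\pcD\Vert\pi_t)\ge D_\Omega(\pcD\Vert\pi_t)$, a genuine negative drift $-\eta_t D_\Omega(\pcD\Vert\pi_t)$. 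The residual cross term $\eta_t\langle\theta_t-\theta^\circ_t,\,\pi_t-\pi_{t+1}\rangle-D_\Omega(\pi_{t+1}\Vert\pi_t)$ is absorbed into an $\cO(\eta_t^2)$ remainder by a \Fenchel--\Young{} inequality, using that \cref{asm:lsi} is precisely a relative strong-convexity condition on $\Omega$ (equivalently, smoothness of the conjugate $\Omega^\ast$) and that $\{\pi_t\},\{\pct\}$ stay bounded inside $\cC$. Collecting terms, with $V_t\coloneqq D_\Omega(\pcD\Vert\pi_t)\ge 0$, $N_t\coloneqq\langle\theta^\circ_\infty-\theta^\circ_t,\,\pcD-\pi_t\rangle$ and a constant $C$,
\[
  V_{t+1}\ \le\ (1-\eta_t)\,V_t+\eta_t N_t+C\eta_t^2 .
\]

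The final step is a Robbins--Siegmund / stochastic-approximation argument. Rearranging to $\eta_t V_t\le V_t-V_{t+1}+\eta_t N_t+C\eta_t^2$ and summing telescopes the first two terms, leaving $\sum_{t\le T}\eta_t V_t\le V_1+\sum_{t\le T}\eta_t N_t+C\sum_{t\le T}\eta_t^2$. I would decompose $N_t$ into a conditionally-centered (martingale-difference) component, obtained by disintegration as in \cref{asm:lsi}, plus a slowly-averaging bias; Assumption~(\ref{asm:eta}b) together with bounded increments forces the martingale part of $\sum\eta_t N_t$ to converge almost surely, while \cref{asm:bs} (asymptotic mean stationarity of the dual estimates) makes the $\eta_t$-weighted \Cesaro{} average of the bias vanish via a Kronecker/Toeplitz estimate. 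Hence $\sum_t\eta_t V_t<\infty$ almost surely, and dividing by $\sum_{t\le T}\eta_t=\infty$ from Assumption~(\ref{asm:eta}a) gives $\bE_{1:T}[V_T]\to 0$ almost surely, which is the claim; under $\eta_t=\tfrac{2}{t+1}$ the same bound recovers the $\cO(1/T)$ rate of \cref{thm:step_size}.

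The main obstacle I anticipate lies in this last step, because \cref{asm:bs} guarantees convergence only of the \emph{time-averaged} dual estimates, not pointwise convergence of $\theta^\circ_t$. Controlling $\sum\eta_t N_t$ therefore requires a clean separation of the martingale component (handled by $\sum\eta_t^2<\infty$ and boundedness of iterates) from the bias, and the Kronecker/Toeplitz step must interact correctly with the harmonic weights $\eta_t$. A secondary subtlety is rigor of the first-variation calculus on a measure space with empty interior: \Gateaux{} differentiability fails, so every gradient must be read through the directional-derivative framework of \cref{def:directional,def:firstvar}, and the \Fenchel--\Young{} absorption justified through the relative convexity furnished by \cref{asm:lsi} rather than a naive Hilbertian norm.
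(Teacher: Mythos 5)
Your proposal is correct in substance and its core decomposition coincides with the paper's: your dual affine recursion is exactly the paper's dual iteration (\cref{lem:duali}, \cref{eq:duali_kl}), and your one-step bound $V_{t+1}\le(1-\eta_t)V_t+\eta_t N_t+C\eta_t^2$ is precisely what the paper derives in the sufficiency half of \cref{lem:step2} via the right \Bregman{} difference (\cref{lem:right_breg_diff}), the symmetrized identity \cref{eq:diffprod}, and \cref{asm:lsi} combined with the \Lipschitz{} bound on the potentials. The difference lies in the concluding step. The paper absorbs the noise term into time-averaged expectations, so that your $N_t$ is dispatched by the definition of $\pcD$ in \cref{asm:bs} together with the ergodicity hypothesis; it then forms the explicit nonnegative supermartingale $X_t=D_\Omega(\pcD\Vert\pi_t)+2\omega^{-1}\cK\sum_{i\ge t}\eta_i^2$, finite by Assumption~(\ref{asm:eta}b), and invokes \Doob{}'s forward convergence theorem (\cref{thm:doobfwd}); the identification of the limit as $0$ is inherited from the drift term and Assumption~(\ref{asm:eta}a) as in \cref{eq:seqa}, a step the paper's written proof leaves implicit. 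You instead telescope $\eta_tV_t\le V_t-V_{t+1}+\eta_tN_t+C\eta_t^2$ and run a Robbins--Siegmund/Kronecker argument with a pathwise martingale-plus-bias split of $N_t$. The two routes are equivalent (Robbins--Siegmund is itself a corollary of \Doob{}'s theorem), but yours is more explicit about why the limit is zero rather than merely existent, at the price of a heavier pathwise treatment of $N_t$: your martingale component needs the almost-sure boundedness of iterates (the paper's general assumption \four{}) to control its increments, whereas the paper's time-averaged formulation lets \cref{asm:bs} carry that load directly. Either way the argument goes through.
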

Lastly, assume that the log \Sobolev{} inequality in \cref{asm:lsi} holds  with continuity of potentials. We establish an online learning regret bound of $\cO(\textrm{\small$\sqrt{T}$})$ for certain instance of step sizes, demonstrating that imposing specific measure-theoretic properties in SBPs generalizes classical OMD results \citep{nesterov2009primal, omd_univ, orabona2018scale, omd_converge}. The analysis on our general online learning setup is compatible with these results by using the \textit{dual} norm $\lVert\hat{g}_t\rVert$, which represents a distance between the OMD iterate $\pi_t$ and the empirical estimate $\pct$ in the dual space $\cD$, where the generalized notional of duality comes from the transformation $\deltaC\Omega(\cdot)$, defined by applying the first variation of $\Omega$ with respect to $\cC$.
\begin{proposition}[Regret bound] \label{thm:regret}
  Assume $\varphi,\psi\in C^2(\bR^d)\cap \Lip(\cK)$, \cref{asm:lsi} holds, and the given costs $\{F_t\}_{t=1}^T$ are bounded. For arbitrary $u\in\cC$ and a total step $T$, define $D^2 = \max_{1 \le t \le T} D_\Omega(u\Vert \pi_t)$. (a) When the number of time step is  known a priori, the regret is bounded to {\small $2D\sqrt{2 \omega^{-1} \cK T}$} for a constant step size  {\small $\eta \equiv \frac{D\sqrt{\omega}}{\sqrt{2 \cK T}}$}. (b) For an adaptive scheduling {\small $\eta_t = D\sqrt{\omega}/ \sqrt{2\sum_{i=1}^{t} \lVert \hat{g}_i\rVert^2}$} the regret is bounded to  {\small $2D\sqrt{2\omega^{-1}\sum_{t=1}^{T} \lVert \hat{g}_t \rVert^2 }$} where $\hat{g}_t \coloneqq \deltaC\Omega(\pi_t) - \deltaC\Omega(\pct)$.
\end{proposition}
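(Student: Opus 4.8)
The plan is to bound the online regret $R_T = \sum_{t=1}^T\bigl(F_t(\pi_t) - F_t(u)\bigr)$, with $F_t(\cdot) = \KL(\cdot\Vert\pct)$, by the standard mirror-descent telescoping argument, transported into the generalized first-variation calculus of \cref{def:firstvar,def:breg}. First I would linearize each term using convexity of relative entropy in its first argument: $F_t(\pi_t) - F_t(u) \le \langle \deltaC F_t(\pi_t),\,\pi_t - u\rangle$. A direct computation of the first variation gives $\deltaC F_t(\pi_t) = 1 + \log(\pi_t/\pct) = 1 + \hat g_t$ with $\hat g_t = \deltaC\Omega(\pi_t) - \deltaC\Omega(\pct)$, using $\deltaC\Omega(\pi) = 1 + \log_\cR(\pi)$ for $\Omega = \KL(\cdot\Vert\cR)$. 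Since every element of $\cC$ is a probability measure, the constant $1$ annihilates against any difference of couplings, so $\langle\deltaC F_t(\pi_t),\cdot\rangle$ may be replaced by $\langle\hat g_t,\cdot\rangle$ throughout; this is exactly the quantity appearing in the claimed bound.

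Next I would split $\pi_t - u = (\pi_t - \pi_{t+1}) + (\pi_{t+1} - u)$ and treat the two pieces separately. For the second piece, the first-order optimality condition of the proximal update \eqref{eq:omd}, namely $\langle \eta_t\,\deltaC F_t(\pi_t) + \deltaC\Omega(\pi_{t+1}) - \deltaC\Omega(\pi_t),\, \pi - \pi_{t+1}\rangle \ge 0$ for all $\pi\in\cC$, evaluated at $\pi = u$ together with the three-point identity $\langle \deltaC\Omega(\pi_{t+1}) - \deltaC\Omega(\pi_t),\, u - \pi_{t+1}\rangle = D_\Omega(u\Vert\pi_t) - D_\Omega(u\Vert\pi_{t+1}) - D_\Omega(\pi_{t+1}\Vert\pi_t)$, yields $\langle \hat g_t,\, \pi_{t+1} - u\rangle \le \tfrac{1}{\eta_t}\bigl(D_\Omega(u\Vert\pi_t) - D_\Omega(u\Vert\pi_{t+1}) - D_\Omega(\pi_{t+1}\Vert\pi_t)\bigr)$. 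The three-point identity follows by expanding each Bregman term through \cref{def:breg}; its validity with the directional-derivative definition is what renders the empty-interior domain harmless here.

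The heart of the argument is the per-step descent lemma, where \cref{asm:lsi} enters. I would read $\mathrm{LSI}(\omega)$ through conjugate duality: the bound $D_\Omega(\pi\Vert\bar\pi) \le \tfrac{1}{2\omega}\lVert\nabla(\deltaC\Omega(\pi) - \deltaC\Omega(\bar\pi))\rVert^2_{L^2(\pi)}$ is precisely the statement that $\Omega^\ast$ is $\tfrac1\omega$-smooth in the dual norm $\lVert\theta\rVert \coloneqq \lVert\nabla\theta\rVert_{L^2}$, via the conjugacy relation $D_\Omega(\pi\Vert\bar\pi) = D_{\Omega^\ast}(\deltaC\Omega(\bar\pi)\Vert\deltaC\Omega(\pi))$, equivalently that $\Omega$ is $\omega$-strongly convex, so that $D_\Omega(\pi_{t+1}\Vert\pi_t) \ge \tfrac{\omega}{2}\lVert\pi_{t+1} - \pi_t\rVert^2$ in the predual norm. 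Applying \Fenchel{}--\Young{} to the remaining piece then gives $\langle\hat g_t,\,\pi_t - \pi_{t+1}\rangle - \tfrac{1}{\eta_t}D_\Omega(\pi_{t+1}\Vert\pi_t) \le \tfrac{\eta_t}{2\omega}\lVert\hat g_t\rVert^2$. Adding the two pieces over $t$ produces the master inequality $R_T \le \sum_{t=1}^T \tfrac{1}{\eta_t}\bigl(D_\Omega(u\Vert\pi_t) - D_\Omega(u\Vert\pi_{t+1})\bigr) + \sum_{t=1}^T\tfrac{\eta_t}{2\omega}\lVert\hat g_t\rVert^2$.

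Finally I would specialize the step sizes. For (a), a constant $\eta$ telescopes the first sum to $\tfrac{1}{\eta}D_\Omega(u\Vert\pi_1) \le D^2/\eta$, while $\varphi,\psi\in\Lip(\cK)$ with bounded costs bound each $\lVert\hat g_t\rVert^2$ by a constant multiple of $\cK$, so the second sum is $\cO(\eta\cK T/\omega)$; minimizing the scalar map $a/\eta + b\,\eta$ over $\eta$ then reproduces the stated step size $\eta\equiv D\sqrt{\omega}/\sqrt{2\cK T}$ and the regret $2D\sqrt{2\omega^{-1}\cK T}$. For (b), I would bound the telescoped first sum by $D^2/\eta_T$ using monotonicity of $\{1/\eta_t\}$ and summation by parts, and control the second sum with the adaptive-rate lemma $\sum_{t=1}^T \lVert\hat g_t\rVert^2 / \sqrt{\sum_{i\le t}\lVert\hat g_i\rVert^2} \le 2\sqrt{\sum_{t=1}^T\lVert\hat g_t\rVert^2}$, which together give $2D\sqrt{2\omega^{-1}\sum_{t=1}^T\lVert\hat g_t\rVert^2}$. \textbf{The main obstacle} I anticipate is the per-step descent lemma: rigorously converting $\mathrm{LSI}(\omega)$ into the $\omega$-strong-convexity lower bound $D_\Omega(\pi_{t+1}\Vert\pi_t) \ge \tfrac\omega2\lVert\pi_{t+1}-\pi_t\rVert^2$ against the correct pair of predual and dual norms, and justifying the optimality condition and three-point identity purely through directional derivatives on the empty-interior domain $\cC$, so that the \Fenchel{}--\Young{} step closes with the dual norm $\lVert\hat g_t\rVert = \lVert\nabla\hat g_t\rVert_{L^2(\pi_t)}$ that matches \cref{asm:lsi}.
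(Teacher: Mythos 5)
Your proposal is correct in substance and follows the same skeleton as the paper's argument: linearize $F_t(\pi_t)-F_t(u)\le\langle\hat g_t,\pi_t-u\rangle$, split off $\pi_{t+1}-u$ and control it with the first-order optimality condition of the proximal update plus the three-point identity (the paper's \cref{lem:subgrad}), telescope with non-increasing $1/\eta_t$ to obtain the $D^2/\etaT$ term (\cref{lem:totr}), and close parts (a) and (b) exactly as you describe, including the Auer-type lemma $\sum_t a_t/\sqrt{\sum_{i\le t}a_i}\le 2\sqrt{\sum_t a_t}$. The one step where you diverge---and which you yourself flag as the main obstacle---is the per-step descent bound. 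You propose the textbook route: read $\mathrm{LSI}(\omega)$ as $\omega$-strong convexity of $\Omega$ against a predual norm and apply \Fenchel{}--\Young{} to $\langle\hat g_t,\pi_t-\pi_{t+1}\rangle-\tfrac1{\eta_t}D_\Omega(\pi_{t+1}\Vert\pi_t)$. The paper avoids this entirely: by the dual iteration (\cref{eq:duali_kl}) one has $\eta_t\hat g_t=\deltaC\Omega(\pi_t)-\deltaC\Omega(\pi_{t+1})$ exactly, so the identity $D_\Omega(\rho\Vert\bar\rho)+D_\Omega(\bar\rho\Vert\rho)=\langle\deltaC\Omega(\rho)-\deltaC\Omega(\bar\rho),\rho-\bar\rho\rangle$ gives $\langle\eta_t\hat g_t,\pi_t-\pi_{t+1}\rangle-D_\Omega(\pi_{t+1}\Vert\pi_t)=D_\Omega(\pi_t\Vert\pi_{t+1})$ with no inequality at all, and \cref{asm:lsi} is already phrased as an upper bound on $D_\Omega(\pi\Vert\bar\pi)$ by $\tfrac1{2\omega}\lVert\nabla(\deltaC\Omega(\pi)-\deltaC\Omega(\bar\pi))\rVert^2_{L^2(\pi)}$, which applied to the pair $(\pi_t,\pi_{t+1})$ yields $\tfrac{\eta_t^2}{2\omega}\lVert\hat g_t\rVert^2_{L^2(\pi_t)}$ directly. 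This buys exactly what you were worried about losing: there is no need to identify a predual norm, establish strong convexity against it, or contend with the fact that the $L^2(\pi_t)$ norm is measure-dependent (which makes a clean norm-duality pairing genuinely delicate); the assumption is consumed verbatim. Your route would also close if that pairing were set up carefully, but the cleaner path is to substitute $\eta_t\hat g_t=\deltaC\Omega(\pi_t)-\deltaC\Omega(\pi_{t+1})$ first and only then invoke the LSI, as in the paper's \cref{lem:single}.
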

Note that although our analysis establishes a rigorous connection between SB and OMD, it inherits certain limitations from classical regret analyses. For instance, sublinear regrets in Proposition~\ref{thm:regret} relies on an additional boundedness assumption on costs, and there exist some cases of \cref{asm:eta} that may yield asymptotically linear regret \citep{orabona2018scale}. Addressing these limitations may involve advanced hybrid OMD methodologies which are actively being studied, such as dual averaging \citep{fang2022online} or FTRL \citep{chen2023generalized}. As exploring (as well as computing) such extensions for SBPs falls beyond our scope, from now on we focus on practical implementations of OMD that demonstrate theoretical convergence (\cref{prop:convergence}) over a relatively long horizon of several hundred steps. Hence, we adopt \cref{asm:eta} and provide corresponding experimental evidence to substantiate its validity in practical scenarios.

\begin{figure*}
  \definecolor{schedulelinecolor}{RGB}{25,119,178}
  \tikzset{inner sep=0pt,outer sep=0pt}
  \centering
  \begin{tikzpicture}
    \clip (-8.68,-1.753) rectangle (7.65, 1.771);
    \begin{scope}[xshift=-4.4cm]
      \node at (-2.62,0) {\includegraphics[width=89.7pt] {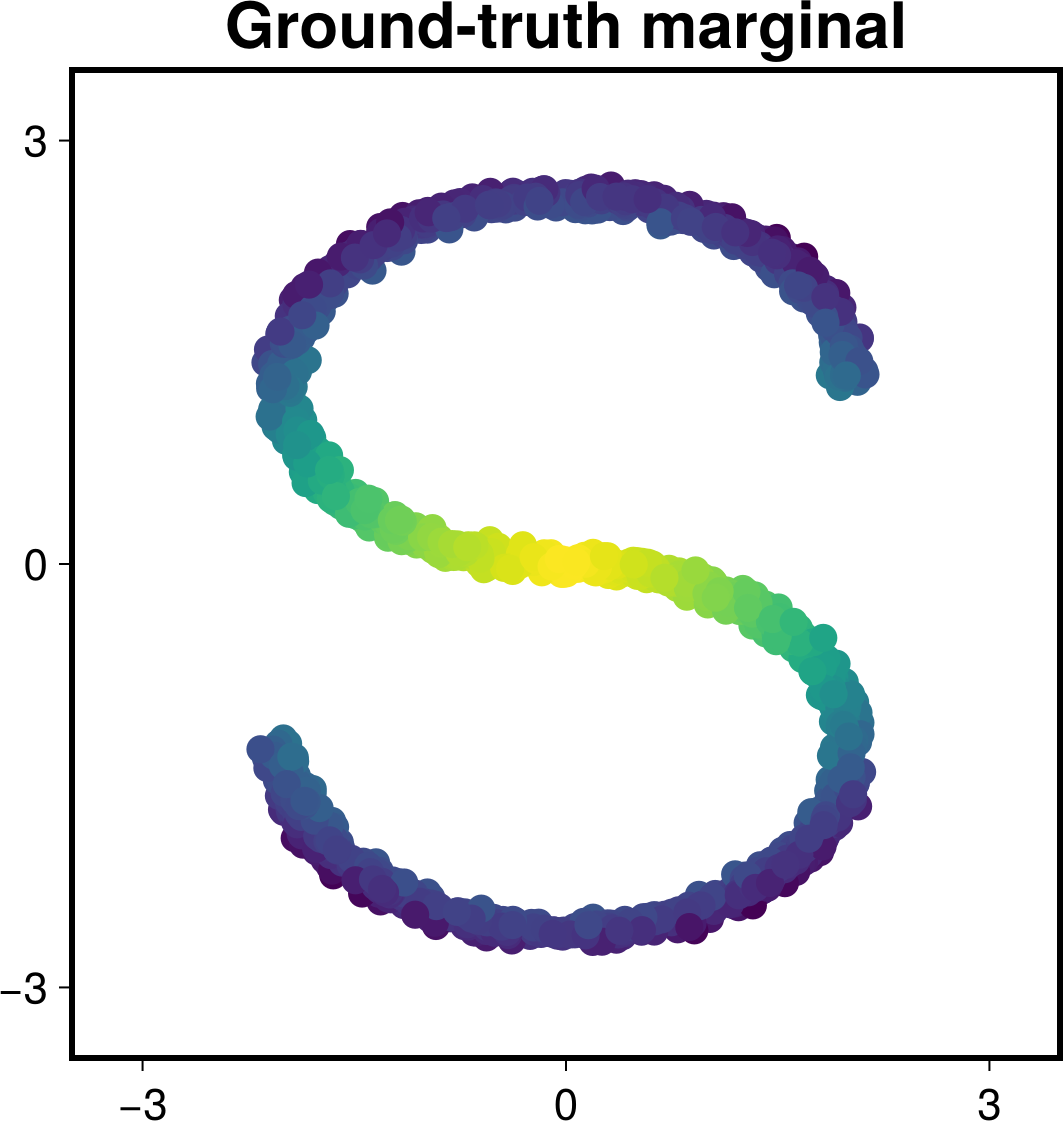}};
      \begin{scope}[xshift=-0.05cm, yshift=-0.17cm]  
        \node[scale=0.6] at (0.7,1.8){\textsf{\textbf{Trained distribution}}};
        \node[scale=0.55] at (2.74,1.81){\textsf{\textbf{Zoom 4X}}};
        \node[scale=0.55, rotate=90] at (-0.53,0.85){\textsf{\textbf{Monte Carlo}}};
        \node[scale=0.55, rotate=90] at (-0.53,-0.85){\textsf{\textbf{Variational MD}}};
        \begin{scope}
          \begin{scope}[yshift=0.85cm] 
            \node at (0.7,0) {\includegraphics[width=62.5pt]{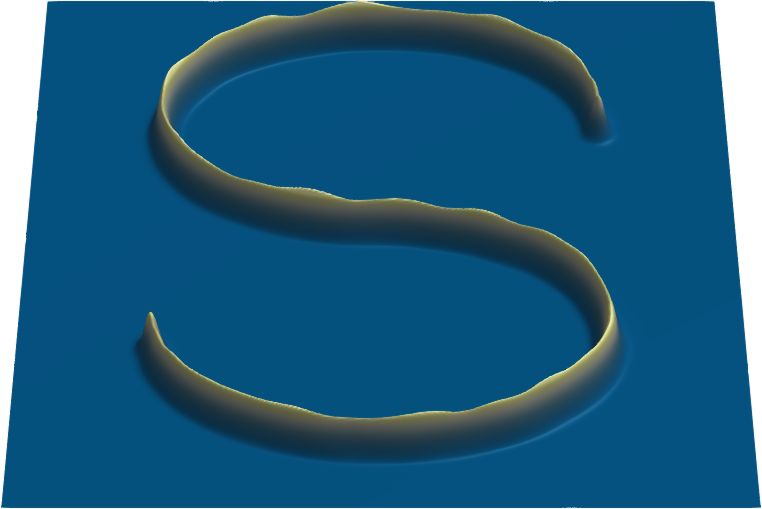}};    
            \coordinate (Zoom) at (2.75, 0);
            \coordinate (ZoomTL) at ($(Zoom) + (-0.725, 0.725)$);
            \coordinate (ZoomBL) at ($(Zoom) + (-0.725, -0.725)$);
            \coordinate (ZoomTR) at ($(Zoom) + (0.725, 0.725)$);
            \coordinate (ZoomL) at ($(Zoom) - (0.725, 0)$);
            \coordinate (ZoomBR) at ($(Zoom) + (0.725, -0.725)$);
            \coordinate (Zoomed) at (0.68, 0.4);
            \coordinate (ZoomedTL) at ($(Zoomed) + (-0.3625, 0.3625)$);
            \coordinate (ZoomedTR) at ($(Zoomed) + (0.3625, 0.3625)$);
            \coordinate (ZoomedR) at ($(Zoomed) + (0.3625, 0)$);
            \coordinate (ZoomedBR) at ($(Zoomed) + (0.3625, -0.3625)$);
            \coordinate (ZoomedBL) at ($(Zoomed) + (-0.3625, -0.3625)$);
            \path (Zoomed) -- (ZoomTL) coordinate[pos=0.4] (TopSplit);
            \path (Zoomed) -- (ZoomBL) coordinate[pos=0.47] (BotSplit);
            \begin{scope}[xshift=2.75cm, yshift=-0.7cm]
              \clip (-0.725,0.1) rectangle (0.725, 1.55);
              \node[draw, thick] at (0,0) {\includegraphics[width=125pt]{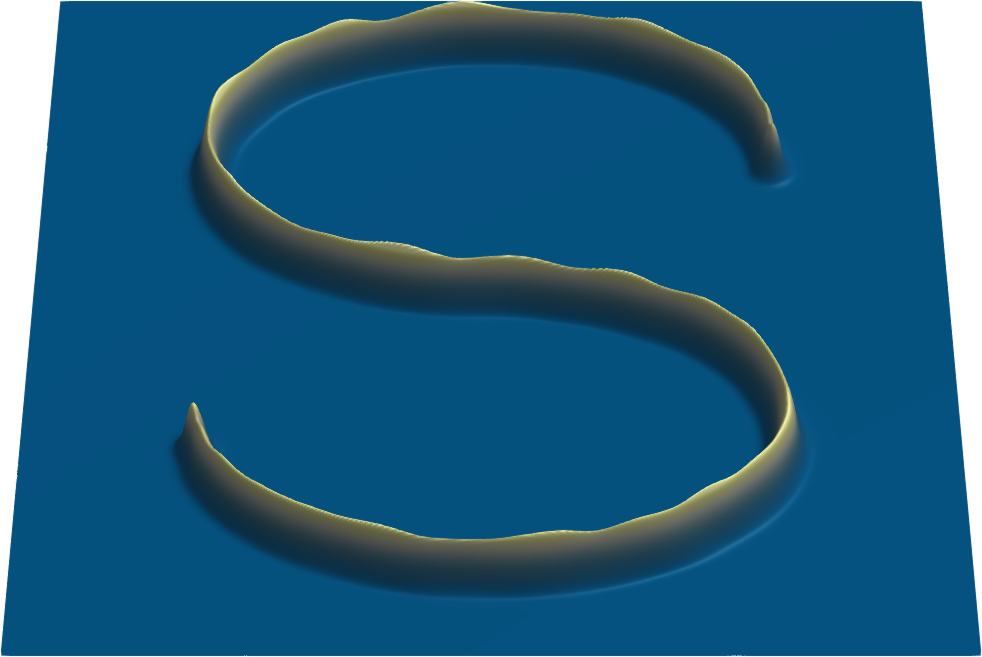}}; 
              \draw[line width=2.4pt, orange] (-0.725,0.1) rectangle (0.725, 1.55);
            \end{scope}
            \draw[orange, thick] (ZoomedBL) rectangle (ZoomedTR);
            \draw[orange, thick] (ZoomL) -- (ZoomedR);
          \end{scope} 
          \begin{scope}[yshift=-0.85cm]
            \node at (0.7,0) {\includegraphics[width=62.5pt]{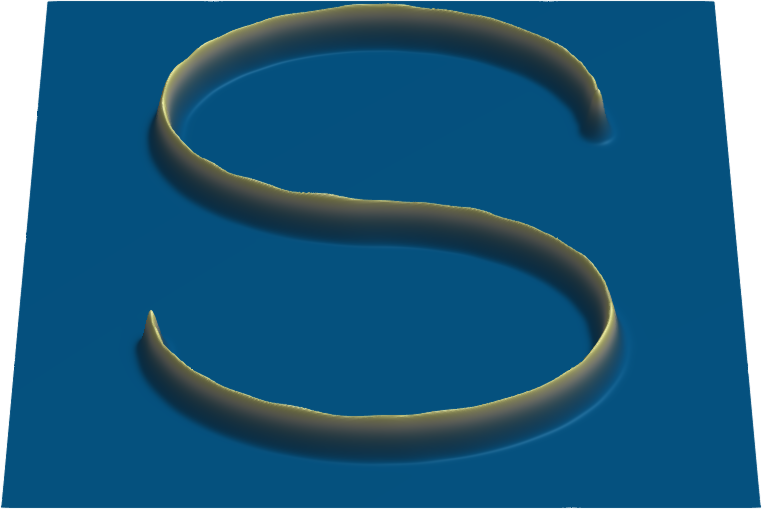}};  
            \coordinate (Zoom) at (2.75, 0);
            \coordinate (ZoomTL) at ($(Zoom) + (-0.725, 0.725)$);
            \coordinate (ZoomBL) at ($(Zoom) + (-0.725, -0.725)$);
            \coordinate (ZoomTR) at ($(Zoom) + (0.725, 0.725)$);
            \coordinate (ZoomL) at ($(Zoom) - (0.725, 0)$);
            \coordinate (ZoomBR) at ($(Zoom) + (0.725, -0.725)$);
            \coordinate (Zoomed) at (0.68, 0.4 );
            \coordinate (ZoomedTL) at ($(Zoomed) + (-0.3625, 0.3625)$);
            \coordinate (ZoomedTR) at ($(Zoomed) + (0.3625, 0.3625)$);
            \coordinate (ZoomedR) at ($(Zoomed) + (0.3625, 0)$);
            \coordinate (ZoomedBR) at ($(Zoomed) + (0.3625, -0.3625)$);
            \coordinate (ZoomedBL) at ($(Zoomed) + (-0.3625, -0.3625)$);
            \path (Zoomed) -- (ZoomTL) coordinate[pos=0.4] (TopSplit);
            \path (Zoomed) -- (ZoomBL) coordinate[pos=0.47] (BotSplit);
            \begin{scope}[xshift=2.75cm, yshift=-0.7cm]
              \clip (-0.725,0.1) rectangle (0.725, 1.55);
              \node[draw, thick] at (0,0) {\includegraphics[width=125pt]{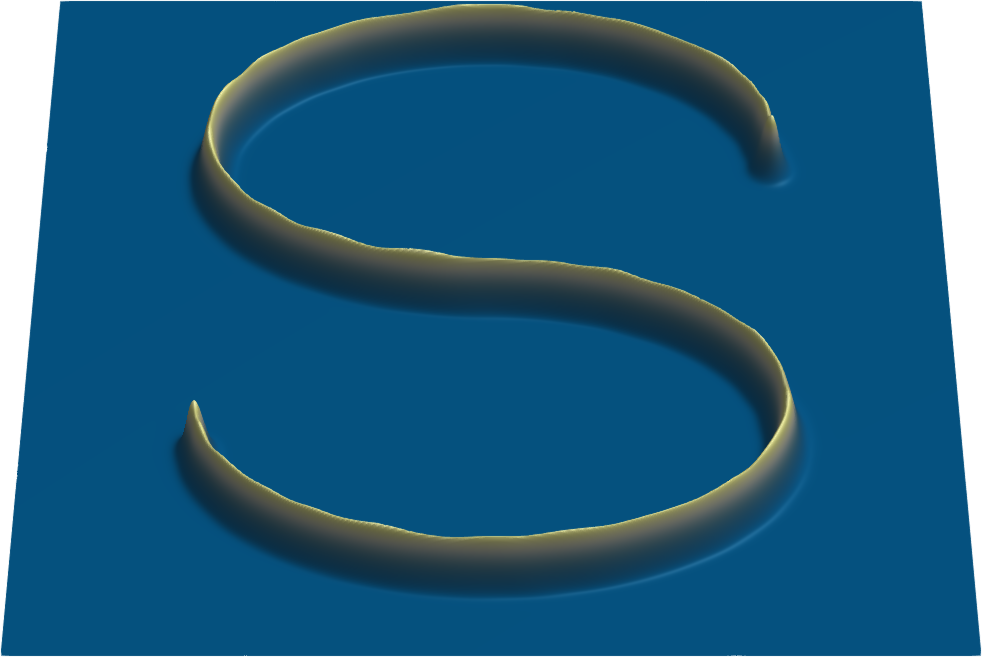}}; 
              \draw[line width=2.4pt, orange] (-0.725,0.1) rectangle (0.725, 1.55);
            \end{scope}
            \draw[orange, thick] (ZoomedBL) rectangle (ZoomedTR);
            \draw[orange, thick] (ZoomL) -- (ZoomedR);
          \end{scope} 
        \end{scope}
      \end{scope}
    \end{scope}
    \node at (3.45,0) {\includegraphics[width=224pt]{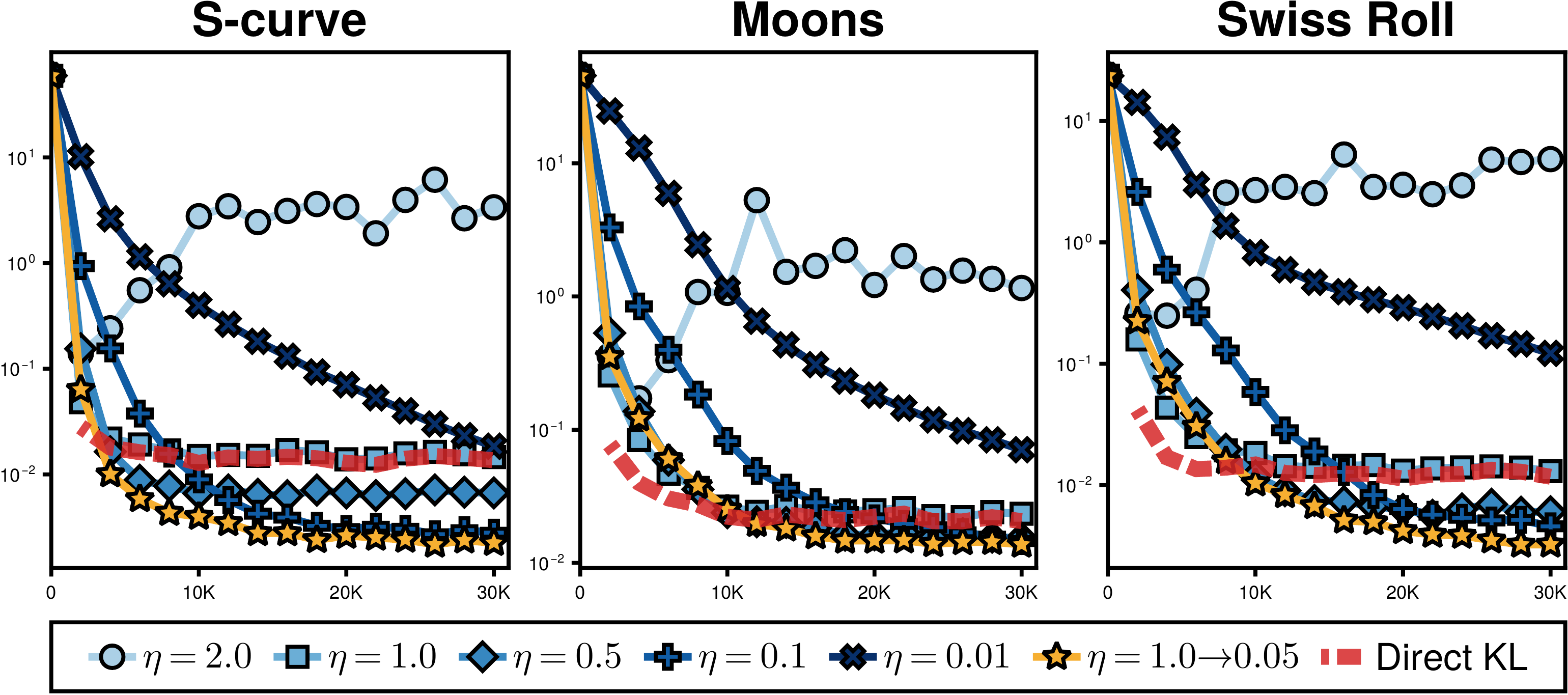}};
    \node[scale=0.48, rotate=90] at (-0.56,0.15) {\textsf{\textbf{KL estimates \small(100K)}}};
    \node[scale=1.39] at (-8.55,1.58) {\textsf{a}};
    \node[scale=1.39] at (-4.97,1.585) {\textsf{b}};
    \node[scale=1.39] at (-0.55,1.58) {\textsf{c}};
  \end{tikzpicture}
  \caption{Variational MD with synthetic datasets. \subfigA{} A distribution is accessible by finite batch data. \subfigB{} 3D surfaces of $(\vec{\pi}^\circ_{\mspace{-1mu}\scriptscriptstyle T}, \vec{\pi}_{\mspace{-1mu}\scriptscriptstyle T})$ trained by \MonteCarlo{} method for KL (top) and variational MD (bottom) show that the MD results in more stable outcomes. \subfigC{} The plots show the estimated $\KL(\vec{\pi}_{\mspace{-1mu}t} \Vert \vec{\pi}^\ast)$ with different step size scheduling (5 runs), with red dashed baselines $\KL(\vecpct \Vert \vec{\pi}^\ast)$.} \label{fig:md_demo}
\end{figure*}

\subsection{Online mirror descent updates using \Wasserstein{} gradient flows} \label{subsect:md_wgf}

The remaining challenge is practicality of our OMD theory, namely the accurate computability of each critical point for \eqref{eq:omd}. To resolve this issue, this work newly presents an approximation method using \Wasserstein{} gradient flows \citep{jko} through an equivalence property of first variations. Suppose we expand a subinterval $[t, t+1)$ for each OMD step \eqref{eq:omd} into continuous dynamics of $\rho(\tau) \in \cC$ for a $\tau \in [0, \infty)$. By \Otto{}'s calculus on the \Wasserstein{} space, known as the \Otto{} calculus (\citealp{otto}; see Appendix~\ref{subsect:otto_intro}), one can describe the dynamics of $\rho_\tau$ for minimizing a strictly convex functional $\cE_t:\cC \to \bR$ as the PDE  
\begin{equation} \label{eq:main_pde}
  {\partial_\tau \rho_\tau = -\nablaW\mspace{1mu} \cE_t(\rho)},
\end{equation}
where $\nablaW$ denotes the \Wasserstein{}-2 gradient operator $\nablaW \coloneqq \nabla\!\vcdot\!\bigl(\rho\,\nabla \frac{\delta}{\delta \rho}\bigr)$. In this work, we adopt the \Wasserstein{} gradient flow theory \citep{jko} to efficiently perform OMD where the equilibrium indicates the subsequent iterate $\pi_{t+1}$. Note that \Wasserstein{} gradient flows are asymptotically stable by the \LaSalle{}'s invariance principle \citep{ipgf}. We present a simple and exact closed-form expression for the VOMD update. Note that the cost $F_t(\cdot) = \KL(\cdot\Vert\pct)$ satisfies the 1-relative-smoothness and 1-strong-convexity relative to $\Omega$ (see \cref{def:relsmooth}; \citealp{mdsem}). Then, a first variation of the OMD problem can be decomposed into multiple variations of another problem with similar characteristics (\eg{}, equilibrium, smoothness, and convexity). We present the following theorem for the computation of OMD.
\begin{theorem}[Dynamics equivalence in first variation] \label{thm:vmdkl}
  Consider the \Wasserstein{} gradient dynamics of \eqref{eq:main_pde} which solves a local update of \eqref{eq:omd}. The gradient dynamics of updates are equivalent to that of a linear combination of KL functionals such that for any $\rho_\tau\in\cC$
  \begin{equation} \label{eq:vmdkl}
    \eta_t \deltaC \cE_t(\rho_\tau) = \deltaC\mspace{-1mu} \bigl\{\eta_t\mspace{1mu}\KL(\rho_\tau \Vert \pct)  +  \mspace{-1mu}(1\mspace{-1mu}-\mspace{-1mu}\eta_t)\mspace{1mu}\KL(\rho_\tau \Vert \pi_t)\mspace{-1mu}\bigr\} \quad \forall \rho_\tau\in\cC,
  \end{equation}
  and the PDE \eqref{eq:vmdkl} converges to a unique critical point of subsequent OMD iterate \eqref{eq:omd} as $\tau\to \infty$.
\end{theorem}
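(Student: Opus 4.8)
The plan is to take the driving functional $\cE_t$ in \eqref{eq:main_pde} to be the local OMD objective from \eqref{eq:omd}, namely $\cE_t(\pi)=\langle\deltaC F_t(\pi_t),\pi-\pi_t\rangle+\tfrac{1}{\eta_t}D_\Omega(\pi\Vert\pi_t)$ with $F_t(\cdot)=\KL(\cdot\Vert\pct)$, and then to show that its first variation agrees---up to an additive constant that is invisible to the \Wasserstein{} gradient---with that of the convex combination $G(\rho)\coloneqq\eta_t\KL(\rho\Vert\pct)+(1-\eta_t)\KL(\rho\Vert\pi_t)$. The first simplification I would make is to exploit the fact that, for $\Omega=\KL(\cdot\Vert\cR)$ with $\cR=e^{-c_\varepsilon}\mu\otimes\nu$, the generalized \Bregman{} divergence collapses to $D_\Omega(\pi\Vert\pi_t)=\KL(\pi\Vert\pi_t)$ on probability couplings, since the reference $\cR$ cancels between $\Omega(\pi)$, $\Omega(\pi_t)$ and the linear term while the total-mass contributions vanish. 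Scaling by $\eta_t$, the object to differentiate is thus $\eta_t\cE_t(\pi)=\eta_t\langle\deltaC F_t(\pi_t),\pi-\pi_t\rangle+\KL(\pi\Vert\pi_t)$.

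Next I would compute both first variations explicitly. Using $\deltaC\KL(\rho\Vert\sigma)=\log(\rho/\sigma)+1$ together with the fact that the linear term contributes the fixed function $\eta_t\deltaC F_t(\pi_t)=\eta_t(\log(\pi_t/\pct)+1)$, one finds
\begin{equation*}
  \eta_t\deltaC\cE_t(\rho)=\log\rho-\eta_t\log\pct-(1-\eta_t)\log\pi_t+\eta_t+1,
\end{equation*}
whereas expanding $\deltaC G(\rho)=\eta_t(\log(\rho/\pct)+1)+(1-\eta_t)(\log(\rho/\pi_t)+1)$ yields the same expression but with the constant $\eta_t$ absent. The two first variations therefore differ only by the constant $\eta_t$. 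The crucial observation is that on the constrained set $\cC$ of probability measures every admissible direction $v=\rho'-\rho$ satisfies $\int v=0$, so the duality pairing annihilates constants, $\langle\eta_t,v\rangle=0$; equivalently $\nabla\eta_t=0$, and the \Wasserstein{} gradient $\nablaW=\nabla\vcdot(\rho\,\nabla\tfrac{\delta}{\delta\rho})$ sees only the spatial gradient of the first variation. Hence $\eta_t\deltaC\cE_t$ and $\deltaC G$ induce the identical \Wasserstein{} gradient field, which proves \eqref{eq:vmdkl} and the claimed equivalence of the two PDEs.

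For the convergence claim I would argue that, for $\eta_t\in(0,1)$, the functional $G$ is strictly convex---a strictly positive convex combination of KL functionals, each $1$-strongly convex relative to $\Omega$ (\cref{def:relsmooth})---so its \Wasserstein{} gradient flow is asymptotically stable and converges to the unique minimizer by \LaSalle{}'s invariance principle \citep{ipgf}. Setting $\deltaC G(\rho^\ast)$ equal to a constant (the Lagrange multiplier enforcing unit mass) yields the closed form $\rho^\ast\propto(\pct)^{\eta_t}(\pi_t)^{1-\eta_t}$, which is precisely the exponentiated-gradient update $\pi_{t+1}\propto\pi_t\exp(-\eta_t\deltaC F_t(\pi_t))$ and hence the unique critical point of the local OMD problem \eqref{eq:omd}. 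Therefore the flow \eqref{eq:vmdkl} equilibrates at $\pi_{t+1}$ as $\tau\to\infty$.

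The main obstacle I anticipate is the rigorous justification of the ``equality up to a constant'' step within the generalized first-variation calculus on $\cC$, whose interior is empty: one must verify that the directional-derivative definition (\cref{def:directional,def:firstvar}) indeed places constants in the annihilator of admissible tangent directions, and that the \Otto{} calculus and \Wasserstein{} gradient-flow convergence theory apply under the relative smoothness/convexity and $\mathrm{LSI}(\omega)$ assumptions rather than classical \Gateaux{} differentiability. Establishing strict geodesic convexity of $G$ in the \Wasserstein{} sense---as opposed to ordinary convexity---so as to license the uniqueness-and-convergence argument, is the technical crux.
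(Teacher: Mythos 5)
Your proposal is correct and follows essentially the same route as the paper: both reduce the local OMD objective to the convex combination $\eta_t\mspace{1mu}\KL(\cdot\Vert\pct)+(1-\eta_t)\KL(\cdot\Vert\pi_t)$ up to an additive constant that is invisible to the first variation and to $\nablaW$, and both conclude convergence to the unique critical point via strict convexity and the \LaSalle{} invariance principle. The only difference is mechanical --- you compute $\deltaC\KL(\rho\Vert\sigma)=\log(\rho/\sigma)+1$ explicitly and track the constant discrepancy by hand, whereas the paper routes the same cancellation through the abstract \Bregman{} identities (\cref{lem:equivf} and \cref{lem:right_breg_diff}); your closed-form critical point $\rho^\ast\propto(\pct)^{\eta_t}(\pi_t)^{1-\eta_t}$ is exactly the dual iteration of \cref{eq:duali_kl}.
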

\begin{proof}[Sketch of Proof] 
  We identify $\delta\cE_t$ as a dynamics that reaches an equilibrium solution for 
  \begin{equation} \label{eq:omd_update}
  \begin{aligned}
    &\minimize_{\pi\in\cC}\, \bigl\langle \deltaC F_t(\pi_t), \pi - \pi_t \bigr\rangle + \tfrac{1}{\eta_t} D_\Omega(\pi\Vert\pi_t) \\
    &\mspace{60mu}\iff\mspace{15mu} \minimize_{\pi\in\cC}\, \eta_t\mspace{-5mu} \underbrace{D_\Omega(\pi \Vert \pct)}_\textrm{empirical estimates} +\ (1 - \eta_t) \underbrace{D_\Omega(\pi \Vert \pi_t)}_\textrm{proximity}, 
  \end{aligned}
  \end{equation}
  and then the equivalence of first variation for recursively defined \Bregman{} divergences is applied (\cref{lem:equivf}). At a glance, \cref{eq:omd_update} appears analogous to the interpolation search between two points, where the influence of $\pct$ is controlled by $\eta_t$. We leave the full version of proof in \cref{subsect:proof_grad_equiv}.
\end{proof}
\cref{thm:vmdkl} holds practical importance for OMD computation, since following the argument allows us to perform gradient-based updates without directly constructing a desired \Bregman{} divergence. That is, updates can be drawn based on a linear combination of gradient flows $\eta_t\nablaW \KL(\rho_\tau\Vert\pct)+(1-\eta_t)\nablaW \KL(\rho_\tau\Vert\pi_t)$. Note that, just like ordinary gradients, Wasserstein gradient operators on a measure space allow for this direct translation of \cref{eq:vmdkl}, where such expression has been extensively studied both theoretically and computationally \citep{ipgf, viwgf}. Therefore, we can utilize interpolation of \Wasserstein{} gradient flows for performing updates and utilize a certain variational class for reducing the computational cost. \cref{fig:md_demo} shows our actual experiments using GMMs. Let a reference estimation be fitted using a \MonteCarlo{} method, and our model be trained through a variational OMD method which is explained in the subsequent section. We initially observed that the VOMD method provides stability improvement when $\eta < 1$. In contrast, the condition of $\eta > 1$ performed worse than the Monte Carlo method and $\eta = 1$ showed almost equivalent performance. Furthermore, the performance of VOMD was greatly improved by choosing a harmonic step size scheduling in the interval $[1.0, 0.05]$. All of these results on variational approximation precisely matches our analysis.

\section{Variational Mirrored \Schrodinger{} Bridge} \label{sect:vmsb}

In this section, we propose variational mirrored \Schrodinger{} bridge, a simulation-free method that offers iterative MD updates for parameterized SB models with mixture models, using  the \WassersteinFisherRao{} geometry. We provide a tractable and exact VOMD-based update rule for LightSB models and draw a practical VOMD updates algorithm that closely resembles ordinary machine learning methods.

\subsection{\Gaussian{} mixture parameterization for the \Schrodinger{} bridge problem} \label{subsect:lsb}

In order to translate our theoretical arguments on VOMD into practical algorithm implementation, this section focuses on a computational implementation of our theory. Recently, \citet{lsb} proposed the GMM parameterization, which provides theoretically and computationally desirable models for our variational OMD approach.\footnote{Adapting the GMM parameterization to our theory is straightforward, achieved by specifying the cost $c_\varepsilon(x,y) = \nicefrac{1}{2\varepsilon}\lVert x - y\rVert^2$.} The parameterization considers the \textit{adjusted} \Schrodinger{} potential $u^\ast\mspace{-1mu}(x) \coloneqq \exp\mspace{-1mu}(\varphi^\ast\mspace{-1mu}(x) - \nicefrac{\lVert x \rVert^2}{2\varepsilon}\mspace{-.5mu})$ and $v^\ast\mspace{-1mu}(y) \coloneqq \exp\mspace{-.5mu}(\psi^\ast\mspace{-1mu}(y) - \nicefrac{\lVert y \rVert^2}{2\varepsilon}\mspace{-.5mu})$ such that we have a proportional property $\pi^\ast(y \vert x) \propto \exp(\nicefrac{\langle x, y\rangle}{\varepsilon}) v^\ast(y)$. With a finite set of parameters $\theta\triangleq \{\alpha_k, m_k, \Sigma_k\}_{k=1}^K$ for weights $\alpha_k > 0$, means $m_k \in \bR^d$ and covariances $\Sigma_k \in \vSppd$, \citet{lsb} proposed to approximate the adjusted \Schrodinger{} potential $v_\theta$ and conditional probability density $\vec{\pi}_\theta$
\begin{equation} \label{eq:gmm_param}
  v_\theta(y) \coloneqq \sum_{k=1}^K \alpha_k\mspace{2mu}\cN\mspace{-1.7mu}(\mspace{1mu}y\mspace{1.2mu}\vert m_k\mspace{-1mu}, \varepsilon \Sigma_k),\mspace{70mu} \vec{\pi}^x_\theta(y) \coloneqq \frac{1}{z^x_\theta}\sum_{k=1}^K \alpha^x_k\mspace{2mu} \cN(\mspace{1mu}y\mspace{1.2mu}\vert \mspace{1mu}m^x_k, \varepsilon \Sigma_k),
\end{equation}
where GMM component for $\vec{\pi}^x_\theta$ is conditioned by an input $x$: $m^x_k \coloneqq m_k + \Sigma_k x$, $\alpha^x_k \coloneqq \mspace{-1mu}\alpha_k \exp\mspace{-1mu}\bigl(\mspace{-1mu}\tfrac{x^{\sT} \Sigma_k x + \langle m_k, x\rangle}{2\varepsilon}\mspace{-1mu}\bigr)$, $z^x_\theta \coloneqq \sum_{k=1}^K \alpha^x_k$ (see Proposition~3.2 of \citeauthor{lsb}). For this parameterization, the closed-from expression of SB process $\cT_\theta$ is given as the following SDE for $t\in[0, 1)$:
\begin{equation} \label{eq:lsb_drift}
  \begin{gathered}
    \cT_\theta: \rd X_t = g_\theta(t, X_t)\,\dt + \sqrt{\varepsilon}\,\dWt,\\
    g_\theta\mspace{-1mu}(t, x) \coloneqq \varepsilon\mspace{1mu} \nabla\mspace{-1mu} \log\mspace{-1mu}\cN\mspace{-1mu}(\mspace{1mu}x\mspace{1.5mu}\vert\mspace{1.5mu} 0, \varepsilon(1-t)I_d) \sum_{k=1}^K \alpha_k\, \cN\mspace{-1mu}(m_k\mspace{1.5mu}\vert\mspace{1.5mu} 0, \varepsilon \Sigma_k)\mspace{1.5mu}\cN\bigl(m_k\mspace{-1mu}(t,x) \mspace{1.5mu}\big\vert\mspace{1.5mu} 0, A_k\mspace{-1mu}(t)\bigr),
  \end{gathered}
\end{equation}
where $m_k(t,x) \triangleq \frac{x}{\varepsilon(1-t)} + \frac{1}{\varepsilon} \Sigma_k^{-1} m_k$ and $A_k(t) \triangleq \frac{t}{\varepsilon(1-t)} I_d + \frac{1}{\varepsilon} \Sigma_k^{-1}$. Therefore, the LightSB parameterization represent both static and dynamic SB models and arbitrary SB solvers can be applied without restrictions. We utilize the GMM parameterization for our computational algorithm for three key reasons. Firstly, the parameterization induce the universal approximation property for both $\vec{\pi}_\theta$ and $\cT_\theta$ \citep{lsb}. Secondly, GMMs are asymptotically log-concave (see \cref{lem:gmmalc}), which is a fundamental assumption for our theory. Lastly, the GMM parameterization makes the computation of \Wasserstein{} gradient flows with respect to the KL divergence tractable, which in turn enables us to apply a canonical treatment, akin to solving ordinary convex optimization problems endowed with a Riemannian geometry. 

\subsection{Computation of VOMD in the \WassersteinFisherRao{} geometry} \label{subsect:wfr}
For tractable computation, we formally derive a particular variant of Wasserstein gradient flow for the GMM parameterization. The space of \Gaussian{} parameters $\bR^d \times \vSppd$ ($d$-dimensinoal mean vectors and positive definite, symmetric convariance matrices), endowed with the Wasserstein-2 metric $W_2$, is formally recognized as the \BuresWasserstein{} (BW) geometry \citep{bures, bhatia2019bures, viwgf} $\BW(\bR^d) \subseteq \cP_2(\bR^d)$. The \WassersteinFisherRao{} (WFR) geometry, equivalently characterized by the spherical \HellingerKantorovich{} distance, extends this setting by considering liftings of positive, complete, and separable measures while preserving total mass \citep{liero2018, chizat2018, lu2019accelerating}.  Building upon the BW space, the \WassersteinFisherRao{} geometry of GMMs, namely $\cP_2(\BW(\bR^d))$, naturally provides liftings of \Gaussian{} particles satisfying distributional consistency. In this work, we introduce the following proposition, which refines and extends the results from \citet[\S~6]{viwgf}  specifically enhancing their framework through the introduction of freely trainable GMM weights $\alpha_k$. 
\begin{proposition}[WFR gradient dynamics] \label{prop:wfr}
  Suppose a time-varying GMM model $\rho_{\theta_\tau}$ with the parameter $\theta_\tau = \{\alpha_{k,\tau}, m_{k,\tau}, \Sigma_{k,\tau}\}_{k=1}^K$ at time $\tau$. Let $y_{k,\tau} \sim \cN(m_{k,\tau}, \Sigma_{k,\tau})$ denote a sample from the $k$-th \Gaussian{} particle of $\rho_{\theta_\tau}$. Then, the WFR dynamics $\nabla_{\mspace{-3mu}\textup{\texttt{WFR}}} \KL(\rho_{\theta_\tau}\Vert \rho^\ast\mspace{-1mu})$ wrt $\dot{\theta}_\tau = \{\dot{\alpha}_{k,\tau}, \dot{m}_{k,\tau}, \dot{\Sigma}_{k,\tau}\}_{k=1}^K$ are given as
  \begin{equation} \label{eq:wfr}
  \begin{gathered}
    \dot{\alpha}_{k,\tau}\mspace{-1.5mu}= -\biggl(\bE\mspace{-1mu}\biggl[\log\frac{\rho_{\theta_\tau}}{\mspace{1.5mu}\rho^\ast}(y_{k,\tau})\mspace{-1mu}\biggr] - \frac{1}{z_\tau}\sum_{\ell=1}^K \mspace{-1mu}\alpha_\ell \mspace{1mu} \bE\mspace{-1mu}\biggl[ \log\frac{\rho_{\theta_\tau}}{\mspace{1.5mu}\rho^\ast}(y_{\ell,\tau})\mspace{-1mu}\biggr] \mspace{-1mu} \biggr) \alpha_{k,\tau}, \\ 
    \dot{m}_{k,\tau}\mspace{-1.5mu}= - \bE\mspace{-1mu}\biggl[\nabla\log\frac{\rho_{\theta_\tau}}{\mspace{1.5mu}\rho^\ast}(y_{k,\tau})\biggr]\mspace{-1.5mu}, 
    \quad \dot{\Sigma}_{k,\tau}\mspace{-1.5mu}= - \bE\mspace{-1mu}\biggl[\nabla^2\mspace{-1mu} \log\frac{\rho_{\theta_\tau}}{\mspace{1.5mu}\rho^\ast}(y_{k,\tau})\mspace{-1mu}\biggr]\mspace{-1mu}\Sigma_{k,\tau}\mspace{-1mu} - \Sigma_{k,\tau}\bE\mspace{-1mu}\biggl[\nabla^2\mspace{-1mu} \log\frac{\rho_{\theta_\tau}}{\mspace{1.5mu}\rho^\ast}(y_{k,\tau})\mspace{-1mu}\biggr]\mspace{-1.5mu},
  \end{gathered} 
  \end{equation}
  for $\tau\in[0,\infty)$, where $z_\tau\coloneqq\sum_{k=1}^K \alpha_k$; $\nabla$ and $\nabla^2$ denote gradient and \Hessian{} with respect to $y_{k,\tau}$.
\end{proposition}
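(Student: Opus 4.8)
The plan is to exploit the product structure of the WFR geometry on $\cP_2(\BW(\bR^d))$, in which the tangent space at a mixture $\rho_\theta$ splits orthogonally into a \emph{transport} (Bures--Wasserstein) component acting on each particle's location parameters $(m_k, \Sigma_k)$ and a \emph{reaction} (Fisher--Rao) component acting on the mixture weights $\alpha_k$ under the total-mass (spherical) constraint. Accordingly, $\nablaWFR \KL(\rho_{\theta_\tau}\Vert\rho^\ast)$ decomposes as a direct sum of a BW gradient in $(m_k, \Sigma_k)$ and an $\FR$ gradient in $\alpha_k$, which I would derive separately and then assemble.

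First I would compute the Euclidean parameter gradients of the energy $\mathcal{F}(\theta) \coloneqq \KL(\rho_\theta\Vert\rho^\ast)$. Writing the mixture as $\rho_\theta = \sum_k \alpha_k\, \cN(\cdot\mid m_k, \Sigma_k)$, the derivatives with respect to $m_k$ and $\Sigma_k$ are converted into expectations of the \emph{relative score} $\nabla\log\frac{\rho_\theta}{\rho^\ast}$ and relative Hessian $\nabla^2\log\frac{\rho_\theta}{\rho^\ast}$ under the $k$-th Gaussian via Bonnet's and Price's theorems (the Gaussian integration-by-parts identities $\nabla_m \bE_{\cN(m,\Sigma)}[f] = \bE[\nabla f]$ and $\nabla_\Sigma \bE_{\cN(m,\Sigma)}[f] = \tfrac12 \bE[\nabla^2 f]$). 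A key simplification here is that the self-referential contribution arising from differentiating $\log\rho_\theta$ through its own $k$-th component vanishes, because $\int \rho_\theta\,\partial_\theta\log\rho_\theta = \partial_\theta\!\int\rho_\theta$ and the mass is held fixed along the transport directions; only the $\log\rho^\ast$ and explicit-dependence terms survive, producing exactly the expectations in \eqref{eq:wfr}.

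Second, I would push these Euclidean gradients through the BW metric. Since the Bures--Wasserstein metric is flat (Euclidean) in the mean direction, the mean flow is simply the negative mean gradient, yielding $\dot m_{k,\tau} = -\bE[\nabla\log\tfrac{\rho_{\theta_\tau}}{\rho^\ast}(y_{k,\tau})]$. For the covariance, the BW metric induces the Lyapunov/Sylvester structure, so the raw gradient $H_k \coloneqq \bE[\nabla^2\log\tfrac{\rho_{\theta_\tau}}{\rho^\ast}(y_{k,\tau})]$ maps to the symmetrized flow $\dot\Sigma_{k,\tau} = -H_k\Sigma_{k,\tau} - \Sigma_{k,\tau} H_k$, matching the stated expression. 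This step is essentially the mixture-indexed version of \citet[\S 6]{viwgf}; I would follow their single-particle computation componentwise, the particles being coupled only through the shared mixture density inside the logarithm.

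The genuinely new part---and the step I expect to be the main obstacle---is the weight dynamics, which \citet{viwgf} held fixed. Here I would equip the weight simplex with the Fisher--Rao metric and impose the \emph{spherical} (mass-preserving) constraint characteristic of the WFR/Hellinger--Kantorovich lift. The Fisher--Rao gradient of $\mathcal{F}$ in the weights is the fitness vector $g_k \coloneqq \bE[\log\tfrac{\rho_{\theta_\tau}}{\rho^\ast}(y_{k,\tau})]$; projecting onto the tangent of the mass-constrained sphere subtracts the weight-averaged fitness $\bar g \coloneqq z_\tau^{-1}\sum_\ell \alpha_\ell g_\ell$, giving the centered replicator flow $\dot\alpha_{k,\tau} = -(g_k - \bar g)\,\alpha_{k,\tau}$ of \eqref{eq:wfr}. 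The delicate points are (i) verifying that the same self-referential cancellation used above leaves $g_k$ in the clean expected-log-ratio form, and (ii) checking $\sum_k\dot\alpha_{k,\tau}=0$, which confirms total mass is conserved and certifies that the derived flow indeed lives on the spherical WFR manifold rather than the unconstrained cone. Establishing these, together with the strict convexity of $\KL(\cdot\Vert\rho^\ast)$ that guarantees a unique equilibrium, completes the identification of $\nablaWFR\KL(\rho_{\theta_\tau}\Vert\rho^\ast)$ with \eqref{eq:wfr}.
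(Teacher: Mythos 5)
Your proposal is correct and follows essentially the same route as the paper's proof: decompose the WFR gradient into a Bures--Wasserstein transport part on $(m_k,\Sigma_k)$ (computed via the Gaussian integration-by-parts identities $\nabla_m\rho_{m,\Sigma}=-\nabla_x\rho_{m,\Sigma}$, $\nabla_\Sigma\rho_{m,\Sigma}=\tfrac12\nabla_x^2\rho_{m,\Sigma}$, following \citet[\S~6]{viwgf}) and a \Fisher{}--Rao reaction part on the weights yielding the centered replicator flow. The only cosmetic difference is that the paper obtains the weight dynamics by passing to Hellinger coordinates $r_k=\sqrt{\alpha_k}$ and invoking Proposition~A.1 of \citet{lu2019accelerating}, whereas you project the fitness vector onto the tangent of the mass-preserving sphere directly; these are the same computation in different coordinates.
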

Appendices~\ref{subsect:proof_wfr}~and~\ref{sect:discuss} contain the complete theory. \cref{prop:wfr} argues that the one parameter family $\theta_\tau$ predicts a gradient-based algorithm of $\nabla_{\mspace{-3mu}\textup{\texttt{WFR}}} \KL(\rho_{\theta_\tau}\Vert \rho^\ast\mspace{-1mu})$, and thus \cref{eq:wfr} can be directly used for training GMM models. Recall that GMMs have a closed form expression of likelihoods, which means each log likelihood difference can be calculated without errors. Given that the target has the identical number of \Gaussian{} particles, both \cref{eq:wfr} and its approximation using finite samples strictly induce zero gradients at the equilibrium. Hence, we argue that the simulation-free algorithm VMSB will result in more robust and stable outcomes than standard data-driven SB learning.  Our VOMD framework can be implemented in modern deep learning libraries and, when coupled with advanced optimizers that provide inherently more stable gradient estimates (e.g., adaptive learning rate schedules), the proposed method empirically find convergence even faster than the conservative rates predicted by theory.

\subsection{Algorithmic considerations}

\begin{algorithm}[t]
  \renewcommand{\algorithmicrequire}{\textbf{Input:}}
  \renewcommand{\algorithmicensure}{\textbf{Output:}}
  \caption{Variational Mirrored SB (VMSB).} \label{alg:vmsb}
  \begin{algorithmic}[1] 
    \Require SB models $(\vec{\pi}_\theta, \vec{\pi}_\phi)$ parameterized by \Gaussian{} mixtures, step sizes $(\eta_{\scriptscriptstyle1}, \eta_{\scriptscriptstyle T})$, $n_y,B\!\in\!\bN$.
    \For{$t\gets1$ \textbf{to} $T$} 
      \State Acquire $\phi_t$ with an external data-driven SB solver.
      \State $\theta_{t}\gets\theta$, $\eta_t \gets { 1/\bigl(\eta^{\shortminus 1}_1 + (\eta^{\shortminus 1}_{\scriptscriptstyle T} - \eta^{\shortminus 1}_1) (\nicefrac{t-1}{T-1})\bigr)}$ \label{line:eta}    
      \For{$n\gets1$ \textbf{to} $N$}
        \State $\{x_i\}_{i=1}^B \gets$ sample mini batch data from $\mu$.
        \State $\tfrac{\partial \cL}{\partial \theta} \gets \tfrac{1}{B}\mspace{-2mu}\sum_{i=1}^B\mspace{-2mu} \eta_t\mspace{1mu}\mathtt{WFRgrad}(\theta; \phi_t, x_i, n_y)\mspace{1mu}+(1 - \eta_t)\mspace{1mu}\mathtt{WFRgrad}(\theta; {\theta_{t}}, x_i, n_y)$ \label{line:wfr} 
        \State Update $\theta$ with the gradient $\tfrac{\partial \cL}{\partial \theta}$.
      \EndFor
    \EndFor
    \Ensure Trained SB model $\vec{\pi}_\theta$. 
  \end{algorithmic}
\end{algorithm}

\cref{alg:vmsb} outlines the overall procedure. The proposed VMSB algorithm requires SB parameters $\theta$ and $\phi$, which represents $\vec{\pi}_t$ and $\vec{\pi}^\circ_t$ from the theoretical framework in \S~\ref{subsect:analysis}. The target model $\vec{\pi}_\phi$ is independently fitted using an arbitrary SB solver. By the results of analysis, one can schedule of the step size $\eta_t$ with a harmonic progression satisfying \cref{asm:eta}; thus, we propose to schedule by the series for $1 \ge \eta_1 \ge \etaT > 0$ as in Line~\ref{line:eta} of the algorithm. In our settings, the hyperparameters are set $\eta_1 = 1$ and $\eta_T \in \{0.05, 0.01\}$ which varies depending on each length of training. The algorithm can also put ``warm up'' steps leveraged by a existing solver, and start from $\theta=\phi_t$ enforcing $\eta_t \equiv 1$ for a certain period of the early stage. 

The VMSB algorithm is essentially designed to perform the following approximation of the WFR gradient operation \eqref{eq:wfr} in \cref{prop:wfr}, approximated by the following equation with finite data samples $\{x_i\}_{i=1}^B\sim\mu$ 
\begin{equation*}
 \textstyle\frac{1}{B}\sum_{i=1}^B\WFRgrad(\theta; \phi, x_i, n_y) \approx \nablaWFR\,\KL(\vec{\pi}_{\theta} \Vert \vec{\pi}_\phi), 
\end{equation*}
where each expectation is estimated using $n_y$ samples from each \Gaussian{} particle. Following \cref{thm:vmdkl}, we propose to update the SB model $\vec{\pi}_\theta$ with $\eta_t\WFRgrad(\theta; \phi_t, x_i, n_y) + (1-\eta_t)\WFRgrad(\theta; \theta_{t-1}, x_i, n_y)$ at each VOMD iteration $t$ (see Line~\ref{line:wfr}). When $\mu$ is a zero-centered distribution, we set $B=1$ and $x=0$ for the fast training time. This trick is equivalent to training the adjusted \Schrodinger{} potential \citep{lsb} $v_\theta \coloneqq \sum_{k=1}^K \alpha_k\mspace{2mu}\cN\mspace{-1.7mu}(\mspace{1mu}y\mspace{1.2mu}\vert m_k\mspace{-1mu}, \varepsilon \Sigma_k) \propto \pi_\theta(\cdot|x=0)$ directly, which enables the VMSB algorithm to run efficiently for certain tasks. We argue that our design provides a simple yet faithful realization of OMD updates, yielding a procedure that closely resembles classical gradient descent in machine learning. Although our methodology and computational strategy build on well-established ideas \citep{viwgf, mdsem, sf}, we deliberately integrate them into a unified framework to verify our novel online learning theory for the SB problem. As a result, VMSB emerges as a robust solver that embeds OMD within a variational formulation, offering both rigorous theoretical guarantees and clear computational advantages. 

\begin{figure}[b]
  \tikzset{inner sep=0pt, outer sep=0pt}
  \definecolor{plotblue}{RGB}{25,119,178}
  \definecolor{plotred}{RGB}{214, 39, 40}
  \definecolor{plotgreen}{RGB}{46,139,87}
  \definecolor{plotpurple}{RGB}{186,85,211}
  \definecolor{plotorange}{RGB}{255, 127, 14}
  \definecolor{colorone}{RGB}{224, 146, 24}
  \definecolor{colortwo}{RGB}{82, 173, 228}
  \definecolor{colorthree}{RGB}{20, 148, 104}
  \definecolor{colorfour}{RGB}{236, 222, 66}
  \definecolor{colorfive}{RGB}{22, 105, 167}
  \definecolor{colorsix}{RGB}{205, 79, 28}
  \definecolor{colorseven}{RGB}{196, 109, 157}
  \def\figw{8.2cm}
  \def\figh{9.45cm}
  \def\pltlw{0.8mm}
  \def\pltms{4pt}
  \def\errbarlw{1mm}
  \tikzset{inner sep=0pt, outer sep=0pt}    
  \centering
  \begin{tikzpicture}[transform shape,tight background]
    \clip  (-4.78,-0.29) rectangle (11.59,2.98);
    \begin{scope}[xshift=-2.82cm, yshift=1.33cm]
      \node at (0,0) {\includegraphics[width=95pt]{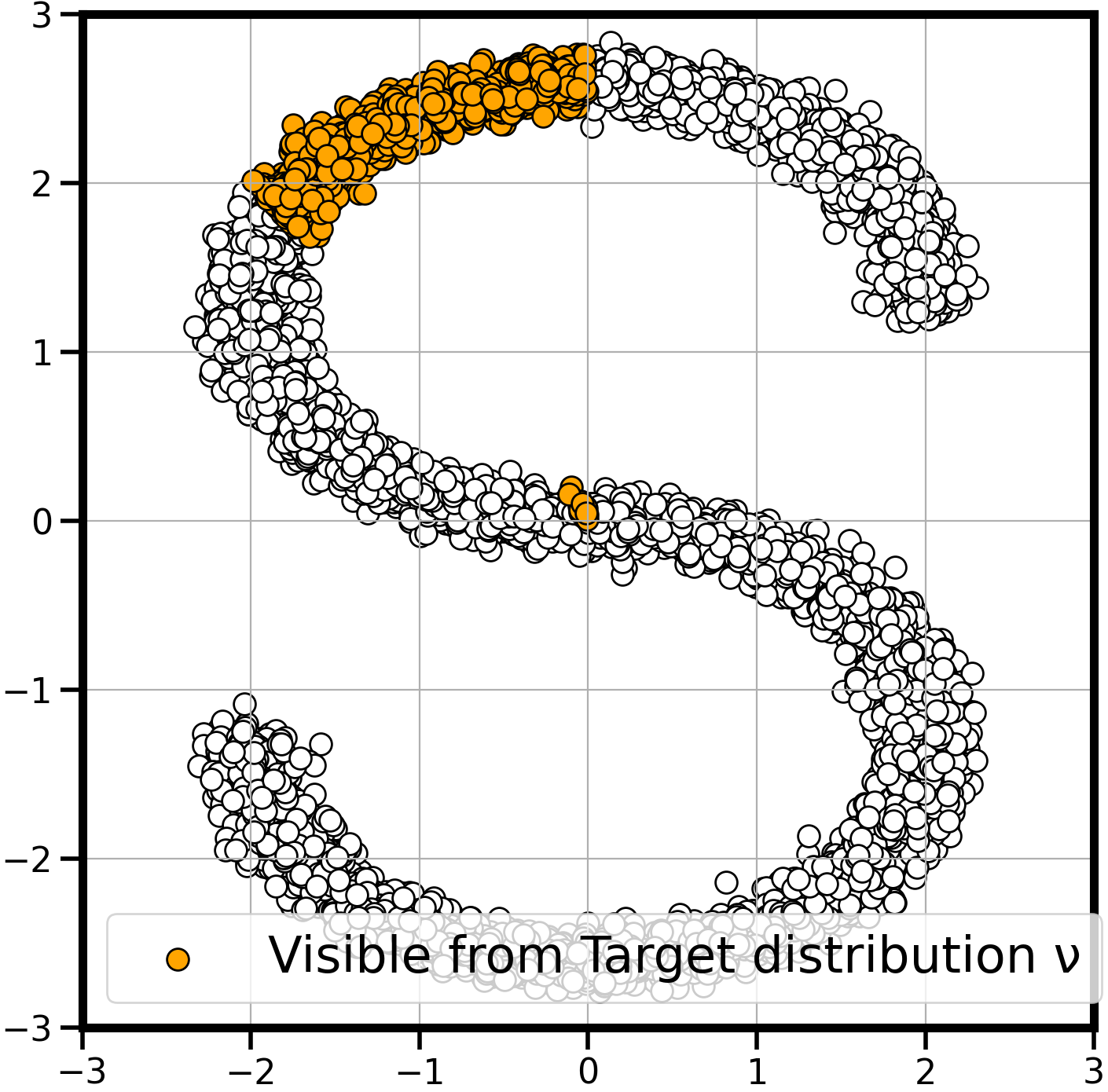}};
      \draw[->, plotred, line width=0.5mm] (0.36,0.12) arc[start angle=0, end angle=300, radius=0.3cm]; 
      \node[scale=0.7] at (0.42,0.52) {\color{plotred} \textbf{\textsf{filter\,{\scriptsize (45\textdegree)}}}};
    \end{scope}
    \begin{scope}[scale=0.35]
      \begin{scope}
        \begin{axis}[
            grid=both, mark=none, xmin=4, ymin=1.7, xmax=54, ymax=3.15,
            axis line style={ultra thick},
            xtick={0, 8, 20, 50}, ytick={0, 1, 2,3},
            yticklabels = {$0$, $0.01$, $0.02$, $0.03$},
            yticklabel style={font=\Large\sffamily, xshift=-4pt},
            xticklabel style={font=\large\sffamily, yshift=-4pt},
            title={8 gaussian\,$\bm{\to}$\,Swiss Roll},
            xlabel={\Gaussian{} particles},
            ylabel={Energy distance},
            title style={font=\Large\bfseries\sffamily, yshift=-2pt},
            xlabel style={font=\large\sffamily, yshift=-1.5pt},
            ylabel style={font=\Large\sffamily},
            width=\figw, height=\figh]
          \addplot[
            line width=\pltlw,
            color=colorone,
            mark=*,
            mark size=\pltms,
            error bars/.cd,
            y dir=both,
            y explicit,
            error bar style={line width=\errbarlw},
            ] coordinates {
              (8, 2.685416851) +- (0,0.1394441222)
              (20, 2.696946447) +- (0,0.1581699982)
              (50, 2.724788823) +- (0,0.1670845176)
            };
          \addplot[
            line width=\pltlw,
            color=colorseven,
            mark=*,
            mark size=\pltms,
            error bars/.cd,
            y dir=both,
            y explicit,
            error bar style={line width=\errbarlw},
            ] coordinates {
              (8, 2.674450701) +- (0,0.3783114392)
              (20, 2.62375619) +- (0,0.2474319895)
              (50, 2.758492105) +- (0,0.341031204)
            };
          \addplot[
            line width=\pltlw,
            color=colortwo,
            mark=*,
            mark size=\pltms,
            error bars/.cd,
            y dir=both,
            y explicit,
            error bar style={line width=\errbarlw},
            ] coordinates {
              (8, 2.070668729) +- (0,0.06411765085)
              (20, 2.032815039) +- (0,0.09105797441)
              (50, 2.030671606) +- (0,0.03251723978)
            };
          \addplot[
            line width=\pltlw,
            color=colorthree,
            mark=*,
            mark size=\pltms,
            error bars/.cd,
            y dir=both,
            y explicit,
            error bar style={line width=\errbarlw},
            ] coordinates {
              (8, 2.0817168) +- (0,0.06659024547)
              (20, 2.069781638) +- (0,0.06820182444)
              (50, 1.982159899) +- (0,0.08201169112)
            };
        \end{axis}
      \end{scope}
      \begin{scope}[xshift=8.28cm]
        \begin{axis}[
           grid=both, mark=none, xmin=4, ymin=0.2, xmax=54, ymax=2.8,
           axis line style={ultra thick},
           xtick={0, 8, 20, 50},
           ytick={0, 1, 2, 3},
           yticklabels = {$0$, $0.01$, $0.02$, $0.03$},
           yticklabel style={font=\Large\sffamily, xshift=-4pt},
           xticklabel style={font=\large\sffamily, yshift=-4pt},
           title={Swiss Roll\,$\bm{\to}$\,8 gaussian},
           xlabel={\Gaussian{} particles},
           title style={font=\Large\bfseries\sffamily, yshift=-2pt},
           xlabel style={font=\large\sffamily, yshift=-1.5pt},
           width=\figw,
           height=\figh]
          \addplot[
            line width=\pltlw,
            color=colorone,
            mark=*,
            mark size=\pltms,
            error bars/.cd,
            y dir=both,
            y explicit,
            error bar style={line width=\errbarlw},
            ] coordinates {
              (8, 2.185881728) +- (0,0.173838597)
              (20, 2.136568137) +- (0,0.5661596855)
              (50, 1.70831089) +- (0,0.1806868487)
            };
          \addplot[
            line width=\pltlw,
            color=colorseven,
            mark=*,
            mark size=\pltms,
            error bars/.cd,
            y dir=both,
            y explicit,
            error bar style={line width=\errbarlw},
            ] coordinates {
              (8, 2.180818299) +- (0,0.07332761836)
              (20, 1.367886579) +- (0,0.1277935528)
              (50, 1.074704952) +- (0,0.4357037214)
            };
          \addplot[
            line width=\pltlw,
            color=colortwo,
            mark=*,
            mark size=\pltms,
            error bars/.cd,
            y dir=both,
            y explicit,
            error bar style={line width=\errbarlw},
            ] coordinates {
              (8, 1.843582596) +- (0,0.2308115405)
              (20, 1.120157388) +- (0,0.3258815513)
              (50, 0.8210918099) +- (0,0.2323436778)
            };
          \addplot[
            line width=\pltlw,
            color=colorthree,
            mark=*,
            mark size=\pltms,
            error bars/.cd,
            y dir=both,
            y explicit,
            error bar style={line width=\errbarlw},
            ] coordinates {
              (8, 1.731504202) +- (0,0.4044850959)
              (20, 1.026545347) +- (0,0.1682403357)
              (50, 0.5949850185) +- (0,0.1533958277)
            };
        \end{axis}
      \end{scope}  
      \begin{scope}[xshift=16.56cm]
        \begin{axis}[
           grid=both, mark=none, xmin=4, ymin=0.9, xmax=54, ymax=3.1,
           axis line style={ultra thick},
           xtick={0, 8, 20, 50},
           ytick={0, 1, 2, 3},
           yticklabels = {$0$, $0.01$, $0.02$, $0.03$},
           yticklabel style={font=\Large\sffamily, xshift=-4pt},
           xticklabel style={font=\large\sffamily, yshift=-4pt},
           title={Moons\,$\bm{\to}$\,S-curve},
           xlabel={\Gaussian{} particles},
           title style={font=\Large\bfseries\sffamily, yshift=-2pt},
           xlabel style={font=\large\sffamily, yshift=-1.5pt},
           width=\figw,
           height=\figh]
          \addplot[
            line width=\pltlw,
            color=colorone,
            mark=*,
            mark size=\pltms,
            error bars/.cd,
            y dir=both,
            y explicit,
            error bar style={line width=\errbarlw},
            ] coordinates {
              (8, 2.761366359) +- (0,0.119303427)
              (20, 2.59794369) +- (0,0.07937971184)
              (50, 2.55781511) +- (0,0.087194847)
            };
          \addplot[
            line width=\pltlw,
            color=colorseven,
            mark=*,
            mark size=\pltms,
            error bars/.cd,
            y dir=both,
            y explicit,
            error bar style={line width=\errbarlw},
            ] coordinates {
              (8, 1.774686684) +- (0,0.2283845975)
              (20, 1.619390317) +- (0,0.2182340672)
              (50, 1.48363352) +- (0,0.08473560905)
            };
          \addplot[
            line width=\pltlw,
            color=colortwo,
            mark=*,
            mark size=\pltms,
            error bars/.cd,
            y dir=both,
            y explicit,
            error bar style={line width=\errbarlw},
            ] coordinates {
              (8, 1.936359132) +- (0,0.1751042479)
              (20, 1.847294976) +- (0,0.1459069063)
              (50, 1.828126779) +- (0,0.1104989364)
            };
          \addplot[
            line width=\pltlw,
            color=colorthree,
            mark=*,
            mark size=\pltms,
            error bars/.cd,
            y dir=both,
            y explicit,
            error bar style={line width=\errbarlw},
            ] coordinates {
              (8, 1.413202536) +- (0,0.06947994509)
              (20, 1.361967089) +- (0,0.2031265124)
              (50, 1.194983179) +- (0,0.1567007307)
            };
        \end{axis}
      \end{scope}
      \begin{scope}[xshift=24.84cm]
        \begin{axis}[
           grid=both, mark=none, xmin=4, ymin=0.8, xmax=54, ymax=3.15,
           axis line style={ultra thick},
           xtick={0, 8, 20, 50},
           ytick={0, 1, 2, 3},
           yticklabels = {$0$, $0.01$, $0.02$, $0.03$},
           yticklabel style={font=\Large\sffamily, xshift=-4pt},
           xticklabel style={font=\large\sffamily, yshift=-4pt},
           title={S-curve\,$\bm{\to}$\,Moon},
           xlabel={\Gaussian{} particles},
           title style={font=\Large\bfseries\sffamily, yshift=-2pt},
           xlabel style={font=\large\sffamily, yshift=-1.5pt},
           legend entries={LightSB, LightSB-M, VMSB, VMSB-M},
             legend style={font=\small, at={(0.565, 0.695)}, draw=none, anchor= south west 
           },
           width=\figw,
           height=\figh]
          \addplot[
            line width=\pltlw,
            color=colorone,
            mark=*,
            mark size=\pltms,
            error bars/.cd,
            y dir=both,
            y explicit,
            error bar style={line width=\errbarlw},
            ] coordinates {
              (8, 2.327995253) +- (0,0.1425853481)
              (20, 2.438335231) +- (0,0.06529486516)
              (50, 2.559868725) +- (0,0.3067355946)
            };
          \addplot[
            line width=\pltlw,
            color=colorseven,
            mark=*,
            mark size=\pltms,
            error bars/.cd,
            y dir=both,
            y explicit,
            error bar style={line width=\errbarlw},
            ] coordinates {
              (8, 2.411090799) +- (0,0.3775710934)
              (20, 1.886241183) +- (0,0.3725341518)
              (50, 1.590254167) +- (0,0.1844505505)
            };
          \addplot[
            line width=\pltlw,
            color=colortwo,
            mark=*,
            mark size=\pltms,
            error bars/.cd,
            y dir=both,
            y explicit,
            error bar style={line width=\errbarlw},
            ] coordinates {
              (8, 2.258539859) +- (0,0.294511892)
              (20, 2.267123233) +- (0,0.20089427191)
              (50, 2.163867585) +- (0,0.11225540266)
            };
          \addplot[
            line width=\pltlw,
            color=colorthree,
            mark=*,
            mark size=\pltms,
            error bars/.cd,
            y dir=both,
            y explicit,
            error bar style={line width=\errbarlw},
            ] coordinates {
              (8, 2.231518043) +- (0,0.3023465231)
              (20, 1.464797827) +- (0,0.3869413849)
              (50, 1.128870973) +- (0,0.1085810034)
            };
        \end{axis}
      \end{scope}            
    \end{scope}
    \node[scale=1.25] at (-4.58,2.796) {\textsf{a}};
    \node[scale=1.25] at (-0.45,2.811) {\textsf{b}};
  \end{tikzpicture}
  \caption{Online SBPs for synthetic dataset streams.  
  \subfigA{} We designed an online learning problem with a rotating filter where an algorithm is allowed to observe the data in $y\sim\nu$ only 12.5\% at a time. \subfigB{} The plots show that our VMSB and VMSB-M show consistent improvements from their references regarding the ED metric with 95\% confidence intervals for 5 runs with different seeds.} \label{fig:2d}
\end{figure}

\section{Experimental Results} \label{sect:expr}

\textbf{Experiment goals.}\hspace*{6pt} We aimed to test our online learning hypothesis and verify that the VMSB effectively induces OMD updates. Since our theoretical claims are intended to be highly versatile, consistent performance improvements for each setting coincides with the generality of the proposed VOMD method. We delineate our objectives as follows: \one{} We aimed to affirm our online learning hypothesis by demonstrating consistent improvements. \two{} We sought to corroborate our theoretical results, aiming for stable performance that consistently exceeds that of benchmarks. \three{} We aimed to verify that our algorithm effectively induces OMD by the \Wasserstein{} gradient flow. To achieve these goals, we validate our algorithm across diverse SB problem settings, including online learning scenarios, classical OT benchmarks, and image translation tasks. 

\textbf{Baselines and VMSB variants.}\hspace*{6pt}  
\citet{lsb} proposed a streamlined, simulation-free solver referred to as \textbf{LightSB} that optimizes $\phi$ through \MonteCarlo{} approximation of $\KL(\vec{\pi}^\ast\Vert\vec{\pi}_\phi)$. As an alternative, \textbf{LightSB-M} \citep{lsbm} reformulated the reciprocal projection from DSBM \citep{dsbm} to a projection method termed \textit{optimal projection}, establishing approximated bridge matching for the path measure $\cT_\phi$. Applying \cref{alg:vmsb}, we derived two distinct methods called \textbf{VMSB} and \textbf{VMSB-M} ($\vec{\pi}_\theta$), trained upon LightSB and LightSB-M solvers ($\vec{\pi}_\phi$), respectively. Since the theoretical arguments imply that the algorithm is agnostic to targets, the performance benefits of VMSB variants from their references support the generality of our claims. Additionally, we adopted VMSB on \textit{hybrid} settings, leveraging networks or embeddings for complex problems. We refer readers to Appendix~\ref{sect:details} for additional experimental setups.

\subsection{Online SB learning for synthetic data streams}

To validate our online learning hypothesis, we first considered two-dimensional synthetic SB problems for data streams depicted in \cref{fig:2d}~\subfigA{}. We applied an angle-based rotating filter, making the marginal as a data stream where only 12.5\% (or 45-degree angle) of the total data is accessible for each step $t$. We trained conditional models $\vec{\pi}_{\theta}$ with VMSB and  $\vec{\pi}_{\phi}$ with other baseline SB solvers for the 2D problem, respectively. \cref{fig:2d}~\subfigB{} shows the plots of squared energy distance (ED), which is a special instance of squared maximum mean discrepancy (MMD), approximating the $L^2$ distance between distributions: $\mathrm{ED}(P, Q) = \int (P(x) - Q(x))^2 \rd x$ \citep{rizzo2016energy}. In our ED evaluation, the VMSB algorithm achieved a strictly lower divergence than the LightSB and LightSB-M solvers for various numbers of \Gaussian{} particles $K$. Therefore, we conclude that these results aligned with our hypothesis and theory of online mirror descent.

\begin{table*}[t]
  \tikzset{inner sep=0pt, outer sep=0pt}    
  \caption{EOT benchmark scores with \cBWUVP{} $\downarrow$ (\%) between the optimal coupling $\pi^\ast$ and the learned model $\pi_\theta$ (5 runs). Results of classical EOT solvers marked with $\dagger$ are taken from \citep{lsb}, and $\ddagger$ from \citep{lsbm}. Additionally, LightSB-EMA indicates a hybrid approach using the exponential moving average techniques (EMA; \citealp{ema}) for LightSB parameters ($decay=0.99$). Our VMSB and VMSB-M results are highlighted in bold when VMSB methods exceed their reference algorithm.} \label{tab:eot}
  \centering
  \begin{adjustbox}{width=0.995\textwidth, center}
  \begin{tabular}{c c c c c c c c c c c c c c}
    \toprule
    \multirow{2}{*}{\raisebox{-5pt}{\textbf{Type}}}& \multirow{2}{*}{\raisebox{-5pt}{\textbf{Solver}}} & \multicolumn{4}{c}{$\varepsilon=0.1$} & \multicolumn{4}{c}{$\varepsilon=1$} & \multicolumn{4}{c}{$\varepsilon=10$} \\
    \cmidrule(lr){3-6}\cmidrule(lr){7-10}\cmidrule(lr){11-14}  
    & & $d=2$ & $d=16$ & $d=64$ & $d=128$ & $d=2$ & $d=16$ & $d=64$ & $d=128$ & $d=2$ & $d=16$ & $d=64$ & $d=128$ \\
    \midrule
      \multicolumn{2}{c}{Classical solvers (best; \citeauthor{lsb})$^\dagger$}& 1.94 & 13.67 & 11.74 & 11.4 & 1.04 & 9.08 & 18.05 & 15.23 & 1.40 & 1.27 & 2.36 & 1.31 \\ 
            {\small Bridge-M}&DSBM (\citeauthor{dsbm})$^\ddagger$& 5.2 & 10.8 & 37.3 & 35 & 0.3 & 1.1 & 9.7 & 31 & 3.7 & 105 & 3557 & 15000 \\
      {\small Bridge-M}&SF$^2$M-Sink (\citeauthor{tong2023simulation})$^\ddagger$& 0.54 & 3.7 & 9.5 & 10.9 & 0.2 & 1.1 & 9 & 23 & 0.31 & 4.9 & 319 & 819 \\
      \midrule
      rev. KL& LightSB (\citeauthor{lsb}) & 
        $0.007$ & $0.040$ & $0.100$ & $0.140$ & 
        $0.014$ & $0.026$ & $0.060$ & $0.140$ & 
        $0.019$ & $0.027$ & $0.052$ & $0.092$ \\
      {\small Bridge-M}&LightSB-M (\citeauthor{lsbm}) & 
        $0.017$ & $0.088$ & $0.204$ & $0.346$ & 
        $0.020$ & $0.069$ & $0.134$ & $0.294$ & 
        $0.014$ & $0.029$ & $0.207$ & $0.747$ \\
      EMA& LightSB-EMA & 
        $0.005$ & $0.040$ & $0.078$ & $0.149$ & 
        $0.012$ & $0.022$ & $0.051$ & $0.127$ & 
        $0.017$ & $0.021$ & $0.025$ & $0.042$ \\
    \midrule
      Var-MD &VMSB {\small(ours)}& 
        $\mathbf{0.004}$ & $\mathbf{0.012}$ & $\mathbf{0.038}$ & $\mathbf{0.101}$ & 
        $\mathbf{0.010}$ & $\mathbf{0.018}$ & $\mathbf{0.044}$ & $\mathbf{0.114}$ & 
        $\mathbf{0.013}$ & $\mathbf{0.019}$ & $\mathbf{0.021}$ & $\mathbf{0.040}$ \\
      Var-MD &VMSB-M {\small(ours)}& 
        $\mathbf{0.015}$ & $\mathbf{0.067}$ & $\mathbf{0.108}$ & $\mathbf{0.253}$ & 
        $\mathbf{0.010}$ & $\mathbf{0.019}$ & $\mathbf{0.094}$ & $\mathbf{0.222}$ &
        $\mathbf{0.013}$ & $\mathbf{0.029}$ & $\mathbf{0.193}$ & $0.748$ \\
    \bottomrule
  \end{tabular}
  \end{adjustbox}
\end{table*} 

\subsection{Quantitative Evaluation} \label{subsect:eot_bench}

\textbf{EOT benchmark.}\hspace*{6pt} Next, we considered the EOT benchmark proposed by \citet{eotbench}, which contains 12 entropic OT problems with different volatility and dimensionality settings. \cref{tab:eot} shows that LightSB and VMSB methods outperforms other EOT methods in terms of the \cBWUVP{} metric, or conditional Bures--Wasserstein unexplained variance percentage, soldifying previous reports by \citet{lsb} and \citet{lsbm}. We also observed that a hybrid approach combining LightSB and the exponential moving average (EMA; \citealp{ema}) named as LightSB-EMA was effective for improving stability. Among 24 different settings, our MD approach exceeded the reference model and the EMA method in 23 settings in terms of the \cBWUVP{} metric \citep{eotbench}. Our replication of LightSB/LightSB-M achieved better performance than originally reported results, and our method accordingly reached the state-of-the-art performance in this benchmark with stability, which represents strong evidence of \cref{prop:convergence}. Among all cases, the only exception was LightSB-M, which had the highest dimension and volatility. We suspected that the drift form \cref{eq:lsb_drift}, which is proportional to $\varepsilon$, may have violated our assumptions \cref{asm:bs} and the boundedness assumption during the training. Thus, we conclude that our variational MD training is effective in various EOT setups. Tables~\ref{tab:app_bw}~and~\ref{tab:app_cbw} in the appendix show comprehensive statistics on this benchmark with a wider range of SB solvers.

\begin{wraptable}{r}{0.56\textwidth}
  \vskip-15pt
  \centering
  \caption{Energy distance on the MSCI dataset (95\% confidence interval, ten trials with instances of two setups). Results marked with $\ddagger$ are from \citep{lsbm}.} \label{tab:msci}
  \begin{adjustbox}{width=0.55\textwidth}
  \begin{tabular}{c|c c c c c}
    \toprule
    \textbf{Type}& \textbf{Solver} & $d=50$ & $d=100$ & $d=1000$ \\
    \midrule
      \Sinkhorn{}&\citet{sbml}$^\dagger$& 2.34 & 2.24 & 1.864 \\
      {\small Bridge-M}&DSBM (\citeauthor{dsbm})$^\ddagger$& $2.46\pm0.1$ & $2.35\pm0.1$ & $1.36\pm0.04$ \\
      {\small Bridge-M}&SF$^2$M-Sink (\citeauthor{tong2023simulation})$^\ddagger$& $2.66\pm0.18$ & $2.52\pm0.17$ & $1.38\pm0.05$ \\
    \midrule
      rev. KL&LightSB& $2.31\pm0.08$ & $2.15\pm0.09$ & $1.264\pm0.06$ \\
      {\small Bridge-M}&LightSB-M& $2.30\pm0.08$ & $2.15\pm0.08$ & $1.267\pm0.06$ \\
    \midrule
      Var-MD &VMSB (ours)& $\mathbf{2.28\pm0.09}$ & $\mathbf{2.13\pm0.09}$& ${\mathbf{1.260\pm0.06}}$ \\
      Var-MD &VMSB-M (ours)& $\mathbf{2.26\pm0.10}$ & $\mathbf{2.12\pm0.09}$ &${\mathbf{1.265\pm0.05}}$ \\
    \bottomrule
  \end{tabular}
  \end{adjustbox}
  \vskip-8.5pt
\end{wraptable}

\textbf{SB on single cell dynamics.}\hspace*{6pt}We evaluated VMSB on unpaired single-cell data problems in the high-dimensional single cell dynamics experiment \citep{tong2024tmlr}. The dataset provided single cell data from four donors on days 2, 3, 4, and 7, describing the gene expression levels of distinct cells. Given  samples collected on two different dates, the task involves performing inference on temporal evolution, such as interpolation and extrapolation of PCA projections with $\{50, 100, 1000\}$ dimensions. We evaluated the energy distance over ten trials, which were divided into two distinct settings with five runs each \citep{tong2024tmlr}. The first setting spanned from Day 2 to Day 4 (evaluated at Day 3), while the second setting considers duration from Day 3 to Day 7 (evaluated at Day 4). The quantitative results in \cref{tab:msci} show that our VMSB method exhibited competitive results even for large dimensionalities, demonstrating its competitiveness as a practical SB solver for real-world problems.

\subsection{Unpaired image-to-image transfer}

\begin{figure}[t]
  \centering
  \begin{minipage}[b]{0.49\textwidth}
    \centering
    \tikzset{inner sep=0pt, outer sep=0pt}    
    \begin{tikzpicture}
      \clip (-3.96, -3.516) rectangle (3.96,1.28);
      \begin{scope}
        \node[scale=0.78, rotate=90] at (-3.75,0) {Pixel Space};
        \begin{scope}[xshift=-2.45cm]
          \node[scale=0.51] at (0, 1.18) {\bfseries\textsf{VMSB-adv (MNIST)}};      
          \node at (0,0) {\includegraphics[width=62pt]{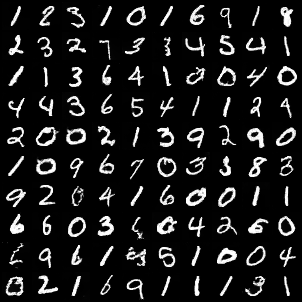}};
        \end{scope}
        \begin{scope}
          \node[scale=0.51] at (0, 1.18) {\bfseries\textsf{LightSB-adv (EMNIST)}};
          \node at (0,0) {\includegraphics[width=62pt]{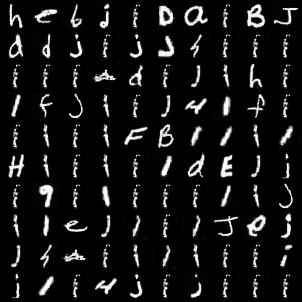}};          
        \end{scope}   
        \begin{scope}[xshift=2.45cm]
          \node[scale=0.51] at (0, 1.18) {\bfseries\textsf{VMSB-adv (EMNIST)}};
          \node at (0,0) {\includegraphics[width=62pt]{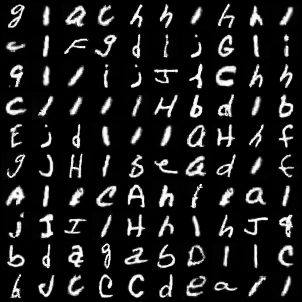}};
        \end{scope}
      \end{scope}
      \begin{scope}[yshift=-2.43cm]
        \node[scale=0.78, rotate=90] at (-3.75,0) {Latent Space};
        \begin{scope}[xshift=-2.45cm]
          \node[scale=0.51] at (0, 1.18) {\bfseries\textsf{VMSB (MNIST)}};      
          \node at (0,0) {\includegraphics[width=62pt]{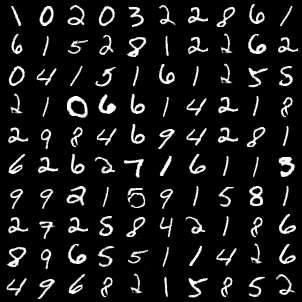}};
        \end{scope}
        \begin{scope} 
          \node[scale=0.51] at (0, 1.18) {\bfseries\textsf{LightSB (EMNIST)}};
          \node at (0,0) {\includegraphics[width=62pt]{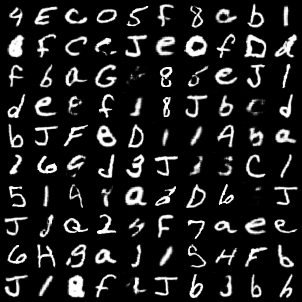}};
        \end{scope}
        \begin{scope}[xshift=2.45cm] 
          \node[scale=0.51] at (0, 1.18) {\bfseries\textsf{VMSB (EMNIST)}};
          \node at (0,0) {\includegraphics[width=62pt]{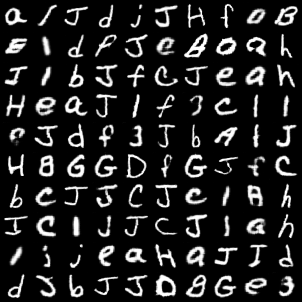}};
        \end{scope}
      \end{scope}
    \end{tikzpicture}
    \caption{Generated MNIST/EMNIST samples. Top: Raw pixel SB results. Bottom: Latent SB results.} \label{fig:mnist_gen}
  \end{minipage}
  \hfill  
  \begin{minipage}[b]{0.49\textwidth} 
    \centering
    \captionsetup{type=table, position=above}
    \caption{FID and MSD scores in EMNIST-to-MNIST translation tasks. Hyperparameters between LightSB and VMSB are shared. We examined the scores with five runs for the ALAE case.} \label{tab:mnist_fid}
    \begin{adjustbox}{width=0.99 \textwidth,center}
      \begin{tabular}{c| l c c}
        \toprule 
        \multicolumn{2}{c}{\normalsize\textbf{Method}} & \textbf{FID} & \textbf{MSD} \\
        \midrule
        \multirow{3}{*}{U-net} 
        &\SFMSink{} & $23.215$ & $0.456$\\ 
        &DSBM-IPF & $15.211$ & $0.352$\\
        &DSBM-IMF & $11.429$ & $0.373$ \\ 
        \midrule
        \multirow{2}{*}{Pixel} 
        &LightSB-adv & $20.017$ & $0.362$ \\
        &VMSB-adv (ours) & $15.471$ & $0.356$ \\
        \midrule
        \multirow{2}{*}{ALAE}
        & LightSB & $9.183_{\pm 0.569}$ & $0.371_{\pm 0.018}$ \\
        & VMSB  (ours) & $\mathbf{8.774_{\pm 0.065}}$ & $0.365_{\pm 0.002}$ \\
        \bottomrule 
      \end{tabular}
    \end{adjustbox}
    \vskip6pt
    \null
  \end{minipage}
\end{figure}

\textbf{MNIST-EMNIST.}\hspace*{6pt} We applied VMSB to unpaired image translation tasks for MNIST and EMNIST datasets. In these tasks, LightSB methods struggled to generate raw pixels due to the limited scalability of the loss function. To solve this issue, we opted to find a viable alternative to LightSB the raw pixel space, and we discovered that the capabilities of GMM parameterization can be extended by incorporating the adversarial learning technique (\citealp{gan}; see \cref{subsect:mnist_detail}) was effective in providing rich learning signals for $\pi_\phi$. Therefore, we named the adversarial method and the VMSB adaptation \textbf{LightSB-adv} and \textbf{VMSB-adv}. Also, we pretrained encoder networks using the \textit{Adversarial Latent AutoEncoder} (ALAE; \citealp{alae}) technique, and applied the LightSB and VMSB algorithms on the 128-dimensional latent space that represent the both of data. \cref{fig:mnist_gen} shows that VMSB/VMSB-adv outperformed Light/LightSB-adv (with identical architecture) in the fidelity of samples and semantics of letters for latent and pixel spaces. In \cref{tab:mnist_fid}, the VMSB method on the ALAE embedding space was able to surpass deep SB models with a fewer number of parameters of $K=256$. Even for raw pixels, our algorithm also achieved competitive FID and input/output MSD similarity scores for $K=4096$. The consistent performance gains from the LightSB and LightSB-adv algorithms strongly supports our theoretical claims on online learning.

\textbf{FFHQ.}\hspace*{6pt} Following the latent SB setting of \citet{lsb}, we assessed our method by utilizing a pretrained ALAE model for generating $1024\times1024$ images of the FFHQ dataset \citep{ffhq}. With the predefined 512-dimensional embedding space, we trained our SB models on the latent space to solve four distinct tasks: \textit{Adult\,$\to$\;Child}, \textit{Child\,$\to$\;Adult}, \textit{Female\,$\to$\;Male}, and \textit{Male\,$\to$\;Female}. \cref{fig:i2i} illustrates that our method delivered high-quality translation results. We also conducted a quantitative analysis using the ED on the ALAE embedding as a metric for evaluation, and the corresponding quantitative results are reported in \cref{tab:emmd}. The result also verifies that our VMSB and VMSB-M algorithms consistently achieved lower ED scores than other baselines, demonstrating its applicability for the high dimensional embedding space. Consequently, the image-to-image transfer results showed that the generality of our online learning hypothesis and that the proposed algorithm is highly capable of interacting with neural networks of complex learning dynamics. Considering the significantly higher dimensionality of image domains relative to the batch sizes used in VOMD, the consistent and stable performance improvements demonstrated in our experiments strongly validate our theoretical claims regarding the robustness of our approach in online learning scenarios. 

\begin{figure}[t]
  \tikzset{inner sep=0pt, outer sep=0pt}
  \centering
  \def\gaph{1.9cm}
  \def\gapw{2.595cm}
  \def\imgapw{1.585}
  \def\labelh{0.898cm}
  \def\imw{45pt}
  \begin{tikzpicture}[tight background]
    \clip (-5.2,-0.799) rectangle (10.8, 2.87);
    \begin{scope}[yshift=\gaph]
      \begin{scope}[yshift=\labelh]
        \begin{scope}[xshift=-\gapw, every node/.style={font=\sffamily, scale=0.51}]
          \node at (-\imgapw, 0) {\textit{\textbf{Adult $\bm{\to}$ Child}}};
          \node at (0, 0) {VMSB};
          \node at (\imgapw, 0) {VMSB-M};
        \end{scope}  
        \begin{scope}[xshift=\gapw, every node/.style={font=\sffamily, scale=0.51}]
          \node at (-\imgapw, 0) {\textit{\textbf{Male $\bm{\to}$ Female}}};
          \node at (0, 0) {VMSB};
          \node at (\imgapw, 0) {VMSB-M};
        \end{scope}        
      \end{scope}
      \begin{scope}[xshift=-\gapw]
        \node at (-\imgapw, 0) {\includegraphics[width=\imw]{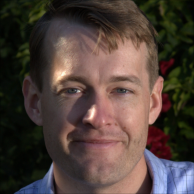}};
        \node at (0, 0) {\includegraphics[width=\imw]{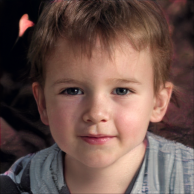}};
        \node at (\imgapw, 0) {\includegraphics[width=\imw]{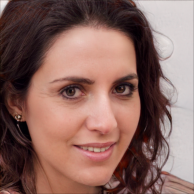}};
      \end{scope}   
      \begin{scope}[xshift=\gapw]
        \node at (-\imgapw, 0) {\includegraphics[width=\imw]{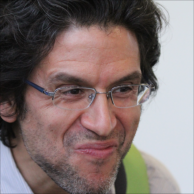}};
        \node at (0, 0) {\includegraphics[width=\imw]{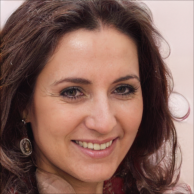}};
        \node at (\imgapw, 0) {\includegraphics[width=\imw]{fig/ffhq_male-female_md_eps_0.5.png}};
      \end{scope}  
    \end{scope}
    \begin{scope}
      \begin{scope}[yshift=\labelh]
        \begin{scope}[xshift=-\gapw, every node/.style={font=\sffamily, scale=0.51}]
          \node at (-\imgapw, 0) {\textit{\textbf{Child $\bm{\to}$ Adult}}};
          \node at (0, 0) {VMSB};
          \node at (\imgapw, 0) {VMSB-M};
        \end{scope}  
        \begin{scope}[xshift=\gapw, every node/.style={font=\sffamily, scale=0.51}]
          \node at (-\imgapw, 0) {\textit{\textbf{Female $\bm{\to}$ Male}}};
          \node at (0, 0) {VMSB};
          \node at (\imgapw, 0) {VMSB-M};
        \end{scope}        
      \end{scope}
      \begin{scope}[xshift=-\gapw]
        \node at (-\imgapw, 0) {\includegraphics[width=\imw]{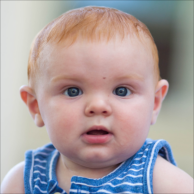}};
        \node at (0, 0) {\includegraphics[width=\imw]{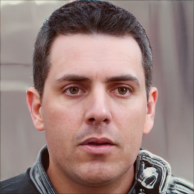}};
        \node at (\imgapw, 0) {\includegraphics[width=\imw]{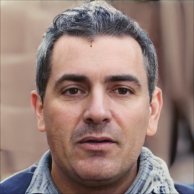}};
      \end{scope}   
      \begin{scope}[xshift=\gapw]
        \node at (-\imgapw, 0) {\includegraphics[width=\imw]{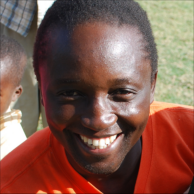}};
        \node at (0, 0) {\includegraphics[width=\imw]{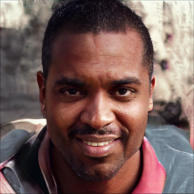}};
        \node at (\imgapw, 0) {\includegraphics[width=\imw]{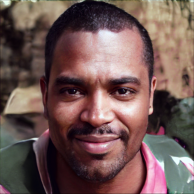}};
      \end{scope}  
    \end{scope}
    \begin{scope}[xshift=8.15cm, yshift=0.914cm]
      \node at (0,0) {\includegraphics[width=143pt]{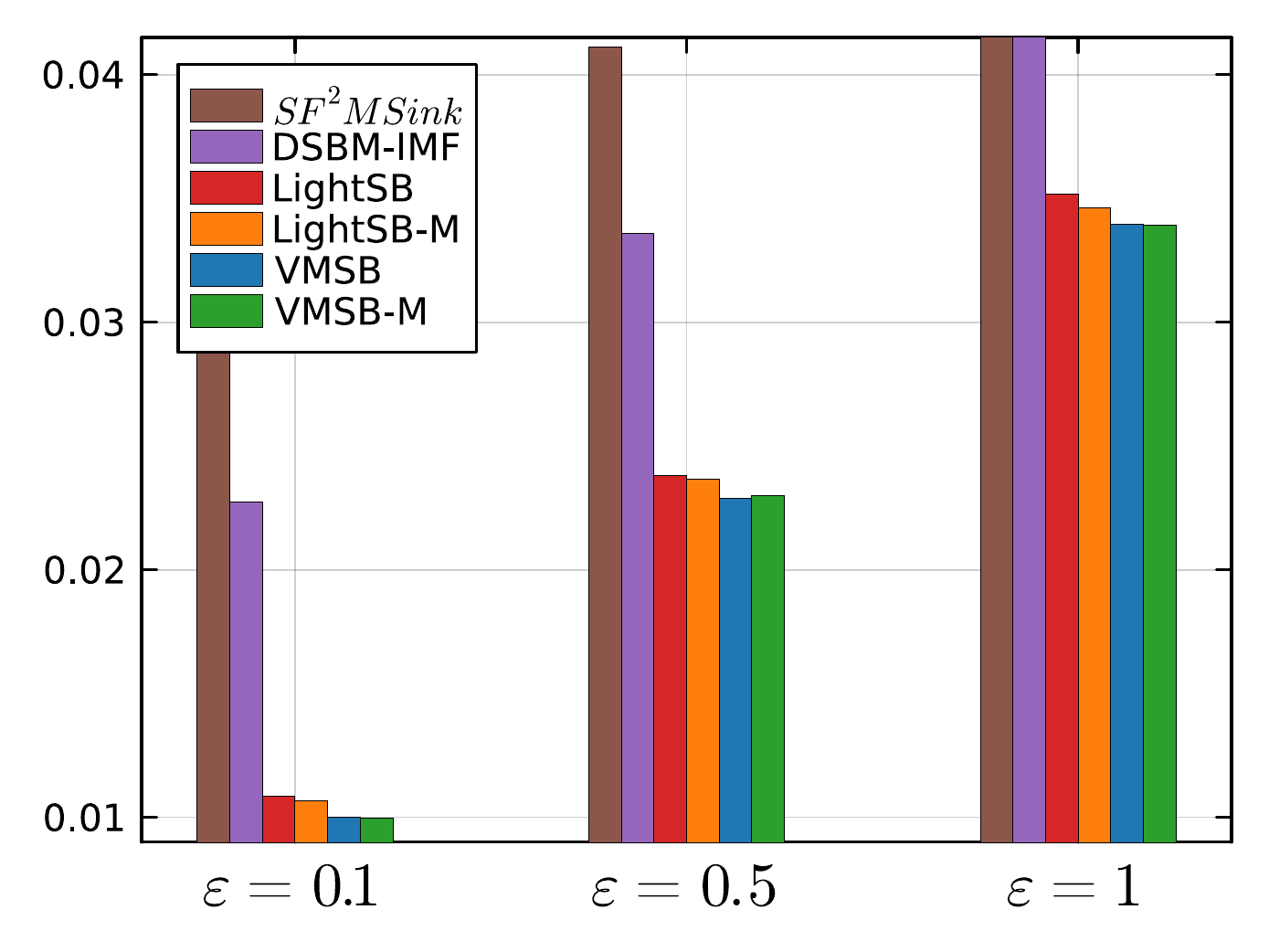}};
      \node[scale=0.61] at (0.17, 1.85) {\bfseries \textsf{Embedding energy distance}};
    \end{scope}
  \end{tikzpicture}
  \caption{Image-to-Image translation on a latent space. Left: Generation results for the FFHQ dataset ($1024\times1024$) using our two SB variants. Right: Quantitative results of ED metrics for ALAE embeddings.} \label{fig:i2i}
\end{figure} 

\section{Conclusion} \label{sect:concl}

In this paper, we introduced VMSB, a practical simulation-free \Schrodinger{} bridge algorithm based on the online learning theory, designed for effectively solving SB problems encountered in real-world scenarios. We proposed a robust theoretical learning framework applicable to general SB solvers, leveraging a dual geometric interpretation of convex optimization to construct a robust OMD algorithm with rigorous guarantees on convergence and regret bounds. Furthermore, we proposed the computational algorithm for our OMD framework by employing the \WassersteinFisherRao{} geometry. Through extensive empirical evaluation, we validated the effectiveness of VMSB across diverse settings, including high-dimensional spaces, limited-sample regimes, and online learning environments. The experimental results consistently demonstrated stable and superior benchmark performance, highlighting the enhanced robustness of our approach. Consequently, we argue that the proposed VMSB algorithm and our theoretical arguments regarding VOMD provide a promising and robust methodology for probabilistic generative modeling within learning-theoretic contexts.

\textbf{Limitations.}\hspace*{6pt} In this work, we significantly reduced the computational complexity inherent in the MD framework by adopting the \WassersteinFisherRao{} geometry. GMM-based models, due to the lack of deep structural processing, tend to focus on \textit{instance-level} associations of images in EOT couplings rather than the \textit{subinstance-} or \textit{feature-level} associations that are intrinsic to deep generative models. As a result, while VMSB produces statistically valid representations of optimal transportation within the given architectural constraints, these outcomes may be perceived as somewhat synthetic compared to large neural networks. Nevertheless, GMM-based models still hold an irreplaceable role in numerous problems such as latent diffusion and variational methods, due to their simplicity and distinctive properties \cite{lsb}. As we successfully demonstrated in two distinct ways of interacting with neural networks for solving unpaired image transfer, we hope our theoretical and empirical findings help novel neural architecture studies. While VMSB strictly outperforms existing SB solvers across standard numerical benchmarks, the performance gains are marginal in some scenarios. For instance, the single-cell dynamics experiment utilizes a PCA-based preprocessing pipeline \citep{tong2024tmlr}, and the transformation into this lower-dimensional space compromises the biological relevance of the SB objective, consequently narrowing the discernible performance gap.

\textbf{Future research.}\hspace*{6pt} One line of future studies in SB is a general understanding of learning in diffusion models with various regularizations. This includes diffusion models in various problem-specific constraints, and geometric constraints from manifolds. Another direction is the extension of the theoretical results into network architecture design. From \S~\ref{subsect:analysis}, a pair of \Schrodinger{} potentials represent a dual representation of SB in a statistical manifold. In \cite{bbrcd}, such potentials satisfy the Hamilton-Jacobi-Bellman (HJB) equations and, this can be trained with forward-backward SDE (SB-FBSDE) as presented by \citet{deepgsb}. However, this requires many simulation samples from SDEs, and the requirements for applying VMSB contain a tractable way of estimating gradient flows, and a guarantee of measure concentration. Therefore, we expect there will be a new studies of energy-based neural architecture for efficiently representing SB, which will advance various subfields of machine learning. Lastly, a theoretical generalization of our work can be done by considering the Orlicz space for EOT studied by \citet{lorenz2022orlicz}. Since we essentially devised our theoretical framework to be compatible with arbitrary \Bregman{} potentials, we believe controlling regularity of Young functionals can find more generalized learning algorithms for a wider range of OT problems.

{
  \small
  \bibliography{main}
  \bibliographystyle{tmlr}
}

\clearpage
\appendix

\vbox{
  \vskip-3pt
  \hsize\textwidth
  \linewidth\hsize
  \vskip 0.08in
  \hrule height 1pt
  \vskip 0.118in
  \vskip -\parskip
    \centering
    {\Large \normalsize\bf\sffamily Appendices for\vskip-2pt{\large \bf\sffamily \titleonelinetext{}}\par}
  \vskip 0.2in
  \vskip -\parskip
  \hrule height 1pt
}

\section*{Abbreviation and Notation}

\begin{center}
  \tikzset{inner sep=0, outer sep=0}
  \begin{tikzpicture}[tight background]
    \begin{scope}[xshift=-.85cm]
      \node[scale=0.85] at (0, -0.57) {
        \begin{tabular}{ll}
          \toprule
            Abbreviation&Expansion\\
          \midrule
            SB&\Schrodinger{} Bridge\\
            SBP&\Schrodinger{} Bridge Problem\\
            EOT&Entropy-regularized Optimal Transport\\
            MD&Mirror Descent\\
            OMD&Online Mirror Descent\\
            KL&\KullbackLeibler{}\\
            IPF&Iterative Proportional Fitting\\
            BW&\BuresWasserstein{}\\
            WFR&\WassersteinFisherRao{}\\
            SDE&Stochastic Differential Equation\\
            PDE&Partial Differential Equation\\
            FP&\FokkerPlanck{}\\
            GMM&\Gaussian{} mixture model\\
          \bottomrule
        \end{tabular}};
    \end{scope}
    \begin{scope}[xshift=6.85cm, yshift=0.17cm]
      \node[scale=0.85] at (0, -1.1) {
        \begin{tabular}{ll}
          \toprule
            Notation&Usage\\
          \midrule
            $\mu,\nu$& marginal distributions\\
            $\varepsilon$&volatility of reference measure\\
            $c_\varepsilon$&cost $c_\varepsilon(x,y) \coloneqq \frac{1}{2\varepsilon}\lVert x - y\rVert^2$\\
            $\pi$ & a coupling of $\mu$ and $\nu$\\
            $\vec{\pi},\cev{\pi}$&conditional distributions \\
            $\gamma_n$ & $n$-th marginal\\
            $\varphi, \psi$& log-\Schrodinger{} potential\\
            $\Omega, D_\Omega$& \Bregman{} potential/divergence\\
            $d^+$&directional derivative\\
            $\deltaC, \deltaD$&First variations\\
            $\nablaW$&\Wasserstein{}-2 gradient operator\\
            $\cT$&dynamic stochastic process in SB\\
            $g$&drift function\\
            $i_\cC$&indicator function\\
            $\vSppd$&positive definite and symmetric matrices\\
          \bottomrule
        \end{tabular}};
    \end{scope}
  \end{tikzpicture}
\end{center}

\section{Theoretical Details and Proofs} \label{sect:proof}

\textbf{Background on first variation operators.}\hspace*{6pt} In this paper, we utilize the notation of first variation operators $\deltaC$ and $\deltaD$ to identify the generalized primal and dual spaces in \Schrodinger{} bridge. Since the problems are classified as an infinite-dimensional optimization \citep{infinite}, we introduce the essential background supporting the necessity of these operators. We introduce \Gateaux{} and \Frechet{} differentiablility \citep{mdsem, sf}.
\begin{definition}[\Gateaux{} \& \Frechet{} differentiablility] \label{def:gatfre} 
  Let $\cM$ be a topological vector space of measures on a space $\cX$. Define the \Gateaux{} differentiablity of a functional $F: \cM\to\bR$, if there exists a gradient operator $\nablaGat$ such that for an arbitrary direction $v \in\cM$, defined as the limit
  \[
    \nablaGat F(x)[v] = \lim_{h\to0} \frac{F(x+hv) - F(x)}{h},\quad x\in\cM
  \]
  If the limit exists in the unit ball in $\cM$, the function $F$ is called \Frechet{} differentiable with $\nablaFre F(x)$.
\end{definition}
The problem of the \Gateaux{} and \Frechet{} differentiability in the context of SB is that the limit must be given in \textit{all} directions, implying that every neighboring point must be within the domain of the topological space $\cM$. For the case of functionals such as the KL divergence functional $F(\cdot) = \KL(\cdot\vert\pi^\ast)$, the domain of $F$ and has an empty interior \citep{mdsem}. To resolve this issue, we use \textit{directional derivatives} and \textit{first variations}, defined in Definitions~\ref{def:directional}~and~\ref{def:firstvar}.

\textbf{First variations of KL.}\hspace*{6pt}
Suppose that we have two distributions $\rho, \rho^\prime \in \cP_2(\cX), \cX\subseteq\bR^d$. Let us consider the log likelihood of $\rho^\prime$: $\ell^\prime(x) \coloneqq \log\mspace{-1mu}\rho^\prime(x)$, and an element of a (topological) tangent space $v \in T_\rho\cP_2(\cX)$ \citep{milnor1964microbundles}. Then, we can achieve the followings:
\begin{gather}
  \KL(\rho\Vert \rho^\prime) = \int_\cX\mspace{-1mu}\log\rho(x)\ \rd \rho(x) - \int_\cX\mspace{-1mu}\ell^\prime(x)\ \rd \rho(x) \label{eq:freederive1} \\
  \int\mspace{-1mu}\ell^\prime(x)\bigl[\rho(x) + h v(x)\bigr]\ \dx = \int\mspace{-1mu}\ell^\prime(x)\rho(x)\ \rd x+h\!\int\mspace{-1mu}\ell^\prime(x) v(x)\,\dx \label{eq:freederive2}    
\end{gather}
Given that $\log(z+\varepsilon)(z+\varepsilon) = \log(z) z+ [\mspace{1mu}\log(z) + 1] \varepsilon + o(\varepsilon)$, and $\int_\cX\mspace{-1mu} v(x)\,\dx = 0$, we achieve
\begin{equation} \label{eq:freederive3}
\begin{aligned}
  \int_\cX\mspace{-1mu}\log\bigl(\rho(x) + h v(x)\mspace{-1mu}\bigr)\bigl(\rho(x) + h v(x)\mspace{-1mu}\bigr)\,\dx  &=\int_\cX\mspace{-1mu}\log\rho(x)\rho(x)\,\dx + [\mspace{1mu}\log\rho(x) + 1]h v(x) + o(h)\ \dx\\
  &=\int_\cX\mspace{-1mu}\log\rho(x)\rho(x)\,\dx + h\!\int_\cS\mspace{-1mu}\log\rho(x)v(x)\dx + h\!\int\!v(x)\dx + o(h) 
\end{aligned}
\end{equation}
Combining Eqs.~(\ref{eq:freederive1}\,-\ref{eq:freederive3}), we achieve
\begin{equation} \label{eq:Fcombine}
  F(\rho + h v) = F(\rho) + h \biggl\langle \mspace{-1mu}\log\biggl(\frac{\rho}{\rho^\prime}\biggr)\mspace{-1mu},\, v\biggr\rangle + o(h).
\end{equation}
By \cref{eq:Fcombine} and \cref{def:firstvar}, the first variation $\delta F_2 \in T^\ast\cP(\cX)$ exists for infinitesimal $h > 0$. Therefore, the first variation of KL is derived as $\delta \KL(\rho\Vert\rho^\prime) = \log\frac{\rho}{\rho^\prime}$. In machine learning, log likelihoods of probabilistic models are often given in a closed-form expression, incentivizing development of computational continuous EOT/SB methods. Generally, identical arguments generally apply to all KL functionals with respect to distributions ($\pi$, $\vec{\pi}$, and marginals) in our setup.

\textbf{Asymptotically log-concave distributions.}\hspace*{6pt} For convergence analysis, we assume each marginal distribution is in log-concave distribution, particularly satisfying the log \Sobolev{} inequality of measures, motivated by relevant literature \citep{otto2000generalization, conforti2024weak}. This assumption works a wider range of costs and marginals beyond popular choices with boundedness and compactness \citep{sscs, conforti2023quantitative}. Suppose that marginals admit densities of the form
\begin{equation} \label{eq:marginal_alc}
  \mu(\dx) = \exp\bigl(-\Umu\mspace{-1mu}(x)\bigr) \rd x \mspace{50mu}\textrm{and}\mspace{50mu}\nu(\dy) = \exp\bigl(-\Unu\mspace{-1mu}(y)\bigr) \rd y.
\end{equation}
We exploit the following definition from \citep{conforti2023quantitative} in order to describe asymptotically log-concaveness.
\begin{definition}[Asymptotically strongly log-concavity; \citealp{conforti2023quantitative}]
  Suppose that marginals $\mu$ and $\nu$ admit a positive density against the \Lebesgue{} measure, which can be written in the form (\ref{eq:marginal_alc}). In particular, consider a collection of functions $\cG \coloneqq \{g\in C^2((0, +\infty), \bR_+)\vert r\mapsto r^{1/2} g(r^{1/2}) \textrm{is non-increasing and concave,}\enspace \lim_{r\to0} r g(r) = 0 \}$. Accordingly, define a set
  \[
    \tilde{\cG} \coloneqq \{g\in\cG\ \textrm{bounded and s.t.}\enspace \lim_{r\to0^+}g(r) = 0, \enspace g^\prime \ge  0 \quad \textrm{and} \quad 2g^{\prime\prime} + g g^\prime \le 0  \} \subset \cG. 
  \]
  and \textit{convexity profile} $\kappa_U:\bR_+ \to \bR$ of a differentiable function $U$ as the following
  \[
    \kappa_U(r) \coloneqq \inf \biggl\{\frac{\langle \nabla U(x) - \nabla U(y), x - y \rangle}{\lvert x - y \rvert^2} \ :\ \lvert x - y \rvert = r \biggr\}.
  \]
  We say a potential is asymptotically strongly convex if there exists $\alpha_U \in \bR_+$ and $\tilde{g}_U\in\tilde{\cG}$ such that
  \begin{equation} \label{eq:cvx_prof}
    \kappa_U(r) \ge \alpha_U - r^{-1} \tilde{g}_U(r)
  \end{equation}
  holds for all $r> 0$. We consider the set of asymptotically strongly log-concave probability measures
  \[
    \cP_\textrm{alc}(\bR^d) \coloneqq \{ \zeta(\dx) = \exp(-U(x))\dx : U \in C_2(\bR^d),\enspace \textrm{$U$ is asymptotically strongly convex} \}.
  \]
\end{definition}
It is essential to note that a mixture of asymptotically log concave is also asymptotically log concave.
\begin{lemma} \label{lem:gmmalc}
  For positive weights $\bm{\beta} = \{\beta_k\}_{k=1}^K$ with $\sum_{k=1}^K \beta_k = 1$ and asymptotically log concave distributions $\{\rho_k\}_{k=1}^K$, $\pi = \sum_{k=1}^K\beta_k \rho_k$.    
\end{lemma}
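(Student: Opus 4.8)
The plan is to verify, for the potential $U \coloneqq -\log\pi = -\log\sum_{k=1}^K \beta_k e^{-U_k}$ of the mixture (writing $\rho_k = e^{-U_k}$, with each $U_k$ asymptotically strongly convex so that $\kappa_{U_k}(r)\ge \alpha_k - r^{-1}\tilde g_k(r)$), the defining inequality $\kappa_U(r)\ge \alpha_U - r^{-1}\tilde g_U(r)$ with some $\alpha_U\in\bR_+$ and $\tilde g_U\in\tilde{\cG}$, thereby placing $\pi$ in $\cP_\textrm{alc}(\bR^d)$. First I would introduce the responsibilities $p_k(x)\coloneqq \beta_k e^{-U_k(x)}/\sum_\ell \beta_\ell e^{-U_\ell(x)}$, so that $\nabla U = \sum_k p_k\,\nabla U_k$ and, differentiating once more,
\[
  \nabla^2 U(x) = \sum_{k} p_k(x)\,\nabla^2 U_k(x)\;-\;\mathrm{Cov}_{p(x)}\!\bigl(\nabla U_k(x)\bigr),
\]
the covariance being taken over the discrete law $p(x)$ on $\{1,\dots,K\}$. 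The first term is a convex combination of component Hessians, hence $\succeq(\min_k\alpha_k)\,I$ wherever all components are in their strongly convex regime; the negative covariance term is the sole source of non-convexity and is the crux.

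The two structural claims I would establish are: (i) a \emph{global} lower bound $\nabla^2 U \succeq -M I$ for some $M\ge 0$; and (ii) strong convexity \emph{outside a compact set}, i.e. $\nabla^2 U(x)\succeq \alpha' I$ for $x\notin B_R$ with $\alpha'\in\bR_+$ close to $\min_k\alpha_k$. Claim (ii) holds because as $|x|\to\infty$ the posterior $p(x)$ concentrates on the single heaviest-tail component (in the \Gaussian{} case, the one minimizing $(x-m_k)^\sT\Sigma_k^{-1}(x-m_k)$), so $\mathrm{Cov}_{p(x)}(\nabla U_k)\to 0$ and $\nabla^2 U$ approaches that component's strongly convex Hessian. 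In the degenerate equal-covariance case the differences $\nabla U_k-\nabla U_\ell$ are constant and $\mathrm{Cov}_{p(x)}$ is bounded by $\tfrac14\max_{k,\ell}\lVert\nabla U_k-\nabla U_\ell\rVert^2$, which also yields (i); on $B_R$ the gradients $\nabla U_k$ are bounded, so the covariance is bounded there as well.

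Given (i)--(ii), I would pass to a profile bound by integrating along the chord $\gamma(t)=y+t(x-y)$, $|x-y|=r$. Setting $A=\{t:\gamma(t)\in B_R\}$ and using that a chord meets $B_R$ in length at most $2R$ (so $|A|\le 2R/r$),
\[
  \langle \nabla U(x)-\nabla U(y),\,x-y\rangle = \int_0^1 (x-y)^\sT\nabla^2 U(\gamma(t))(x-y)\,\dt \;\ge\; \alpha' r^2 - (\alpha'+M)\,2R\,r,
\]
giving $\kappa_U(r)\ge \alpha'-(\alpha'+M)2R/r$ for $r\ge 2R$, while (i) gives $\kappa_U(r)\ge -M$ for small $r$. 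It then remains to exhibit one $\tilde g_U\in\tilde{\cG}$ dominating both regimes: a bounded, non-decreasing profile with $\tilde g_U(r)\to 0$ as $r\to 0^+$, $\tilde g_U(r)\ge (\alpha'+M)r$ near the origin, and $\tilde g_U(r)\uparrow(\alpha'+M)2R$ as $r\to\infty$, and to check it obeys the conditions defining $\tilde{\cG}$ (namely $2g''+gg'\le 0$, $g'\ge 0$, and the monotonicity/concavity of $r\mapsto r^{1/2}g(r^{1/2})$). Taking $\alpha_U\coloneqq\alpha'$ then closes the argument.

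I expect the main obstacle to be concentrated in the last paragraph and to be twofold: first, making claim (ii) rigorous for \emph{general} asymptotically log-concave (not merely \Gaussian{}) components, where one must rule out slowly-decaying directions in which the responsibilities fail to concentrate; and second, the explicit construction of an admissible $\tilde g_U\in\tilde{\cG}$ meeting the restrictive regularity conditions, for which I would invoke the characterization of asymptotic strong convexity in \citet{conforti2023quantitative, conforti2024weak} rather than build it by hand. In the \Gaussian{}-mixture setting of \eqref{eq:gmm_param} relevant to VMSB, both difficulties simplify markedly, since the Hessians $\Sigma_k^{-1}$ are constant and the tail analysis is explicit.
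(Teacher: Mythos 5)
Your Hessian decomposition $\nabla^2 U=\sum_k p_k\nabla^2 U_k-\mathrm{Cov}_{p(x)}(\nabla U_k)$ is correct, and the chord-integration step would indeed convert ``strong convexity outside a compact set plus a global lower bound'' into a profile bound of the admissible form $\kappa_U(r)\ge\alpha'-C/r$. The genuine gap is your claim (ii): the region where $\nabla^2 U$ fails to be positive definite is in general \emph{not} compact, so the premise of the chord argument is unavailable. Take $K=2$ \Gaussian{}s with identical covariance $I$ and means with $\lVert m_1-m_2\rVert>2$. Then $\nabla U_1-\nabla U_2=m_2-m_1$ is constant, the responsibilities depend only on $\langle x,m_1-m_2\rangle$, and $\nabla^2 U=I-p_1p_2\,(m_1-m_2)(m_1-m_2)^{\sT}$ has a negative eigenvalue on the entire unbounded slab where $p_1p_2$ is near $1/4$; the posterior never concentrates in directions orthogonal to $m_1-m_2$. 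Your stated fallback for this case, the global bound of claim (i), does not close the argument: a bound $\nabla^2 U\succeq(\alpha'-M)I$ holding on an unbounded set only yields $\kappa_U(r)\ge\alpha'-M$ for all $r$, a \emph{constant} deficit, whereas the definition requires the deficit $r^{-1}\tilde g_U(r)$ with $\tilde g_U$ bounded, hence vanishing as $r\to\infty$. The example is still asymptotically log-concave, but one sees this only at the gradient level: $\nabla U(x)=x-m_2-p_1(x)(m_1-m_2)$ is the identity field plus a globally bounded perturbation, giving $\kappa_U(r)\ge 1-\lVert m_1-m_2\rVert/r$ directly; no second-order argument localized to a compact bad set can produce this.

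This is also exactly where your route diverges from the paper's. The paper never passes to Hessians: it works with the first-order convexity profiles, sets $\alpha_U=\min_k\alpha_{U_k}$, and combines the component profiles into the soft-min $\tilde g_U(r)=-r\log\sum_k\exp(-r^{-1}\tilde g_{U_k}(r))$, asserting by ``direct calculation'' that this stays in $\tilde{\cG}$. Whatever one thinks of the level of detail there, the first-order viewpoint is the right one for this lemma, because the obstruction to log-concavity of a mixture is a globally bounded perturbation of the gradient field rather than a compactly supported perturbation of the Hessian. To repair your plan you would need to replace (ii) by a statement of the form ``$\nabla U$ differs from a field of profile $\alpha'-r^{-1}\tilde g(r)$ by a globally bounded remainder,'' which is no longer the argument you outlined.
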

\begin{proof}
  Let us reformulate the mixture as $\log\pi(x) = \log\sum_{k}\beta_k\exp(-U_k(x))$ for asymptotically strongly convex functions $\mathbf{U} = \{U_k\}_{k=1}^K$. The gradient is
  \[
    \nabla\log\pi = \mathfrak{J}^T \mathbf{p}, \quad \mathfrak{J} = - \begin{bmatrix}
      \nabla U_1 \\ 
      \vdots\\
      \nabla U_K
    \end{bmatrix}, \quad \mathbf{p} =  \textrm{softmax}(\log\bm{\beta} - \mathbf{U})
  \]
  If each  $U_k$ of mixture satisfy \cref{eq:cvx_prof} with $\alpha_{U_k}$ and $\tilde{g}_{U_k}$, there exist $\alpha_U = \min_{1 \le k \le K }\alpha_{U_k}$ and $\tilde{g}_U(r) = -r \log\sum_{k=1}^K \exp(-r^{-1}\tilde{g}_{U_k})$ that satisfies the condition \eqref{eq:cvx_prof} for $U = -\log\pi$. By direct calculation, one can easily see that soft min-like property of $r^{-1}\tilde{g}_U$ from $\{r^{-1}\tilde{g}_{U_k}\}_{k=1}^K$ does not change the conditions of $\tilde{\cG}$.
\end{proof}

\textbf{General assumptions and justifications.}\hspace*{6pt} We additionally need the following general assumptions for our OMD framework. \one{} (Existence) The sequence of MD from \cref{eq:omd} exists $\{\pi_t\}_{t\in \bN}\subset \cC$, and are unique, \two{} (Relative smoothness/convexity) For some $l, L \ge 0$, the functional $F_t$ is $L$-smooth and $l$-strongly-convex relative to $\Omega$. \three{} (Existence of first variations) For each $t\ge0$, the first variation $\deltaC \Omega(\pi_t)$ exists. \four{} (Boundedness of estimations) The asymptotic dual mean $\pcD$ is almost surely bounded $\Pr(D_\Omega(\pi_t \Vert \pcD) \le R) = 1$ for some $R>0$. \five{} (Ergodicity) The estimation process of $\{\pct\}_{t=1}^\infty$ is governed by a measure-preserving transformation on a measure space $(\cY, \Sigma, \varsigma)$ with $\varsigma(\cY)=1$; for every event $E\in\Sigma$, $\varsigma(T^{-1}(E) \triangle E) = 0$ (that is, $E$ is invariant), either $\varsigma(E) = 0$ or $\varsigma(E) = 1$ \citep{ergodic}.\footnote{Here, $\triangle$ denotes the symmetric difference, equivalent to the exclusive-or with respect to set membership.}  For \one{}, the temporal cost $F_t(\cdot) = \KL(\cdot\vert \pct)$ is well defined since KL is a strong \Bregman{} divergence with lower semicontinuity, where the existence of a primal solution in guaranteed as discussed in \citet{mdsem}. For \two{}-\three{}, we can identify $l=L=1$ and close-form expression of the first variation that is shown in \cref{def:relsmooth} and \cref{prop:wfr}. For the assumptions \four{}-\five{}, we postulate the existence of estimates produced from a Monte-Carlo method, using a fixed amount of updates on topological vector space. Hence, it is natural to consider that these estimates will be bounded in a probabilistic sense and yield \Markovian{} transitions, which are aperiodic and irreducible.

\subsection{Proofs of Lemmas~\ref{lem:sink}~and~\ref{lem:wd}}

The EOT in \cref{eq:eot} can be reformulated as a divergence minimization problem with respective to a reference measure. If a \Gibbs{} parameterization is enforced with the quadratic cost functional $c_\varepsilon(x,y) = \frac{1}{2\varepsilon}\lVert x - y\rVert^2$ for $\varepsilon>0$, it is well-known that the problem has the equivalence with the entropy regularized optimal transport problem \citep{introeot}
\begin{equation} \label{eq:sbp2}
  \mathrm{OT}_{\mspace{-.5mu}\varepsilon}(\mu, \nu)= \! \inf_{\pi\in\Pi(\mu, \nu)} \KL\bigl(\pi \Vert e^{-c_\varepsilon}\mu\otimes\nu\bigr).
\end{equation}
Note that the above equation corresponds to the constrained minimization of $\KL(\cT\Vert W^\varepsilon)$ in \cref{eq:sbp} by the disintegration theorem of \Schrodinger{} bridge (Appendix~A of \citealp{sbml}). While the \Bregman{} projection formulation of \Sinkhorn{} \cref{eq:canon_sink} are described by the spaces $(\Pi^\perp_\mu, \Pi^\perp_\nu)$, it is (equally) natural to think that considering the problem as convex problem with the distributional constraint $\cC$ (see the primal space in illustrated in \cref{fig:schema}). As a problem in the constraint $\cC$, one can consider a temporal cost functional $\Ftildesmallt(\pi) \coloneqq a_t \KL(\mspace{-1mu}\gamma_1\pi \Vert \mu) + (1-a_t) \KL(\mspace{-1mu}\gamma_2\pi \Vert \nu)$ with sequences $\{a_t\}_{t=1}^\infty = \{0, 1, 0, 1, \dots \}$  for $\gamma_1\pi(x) \coloneqq \smallint\pi(x,y) \dy$ and $\gamma_2\pi(y) \coloneqq \smallint\pi(x,y) \dx$. By construction, we have the following MD update:
\begin{equation} \label{eq:omd_canon_sink}
  \minimize_{\pi\in\cC} \bigl\langle \deltaC \widetilde{F}_t(\pi_t), \pi - \pi_t \bigr\rangle + D_\Omega(\pi\Vert \pi_t).
\end{equation}
The optimization problem \eqref{eq:omd_canon_sink} is equivalent to having the property for subsequent $\pi_{t+1}$:
\begin{equation}
\begin{aligned}
  &\dsupplus\mspace{-1.2mu} \Ftildesmallt(\pi_t; \pi - \pi_t) + D_\Omega(\pi \Vert \pi_t) \ge \dsupplus\mspace{-1.2mu} \Ftildesmallt(\pi_t; \pi_{t+1}\! - \pi_t) + D_\Omega(\pi_{t+1} \vert \pi_t)\\
  &\mspace{60mu} \iff \bigl\langle\mspace{-1.2mu} \deltaC \Ftildesmallt(\pi_t) - \deltaC \Omega(\pi_t),\, \pi - \pi_{t+1} \mspace{-1mu}\bigr\rangle + \bigl(\Omega(\pi) - \Omega(\pi_{t+1})\bigr) \ge 0,\quad \forall \pi \in \cC.
\end{aligned}
\end{equation}
Setting the free parameter $\pi = \pi_{t+1} + h(\pi - \pi_{t+1})$ and taking the limit $h \to 0^+$ yields described the time evolution of the log-\Schrodinger{} potentials for $\pi_t = e^{\varphi_t \oplus \psi_t - c_\varepsilon}\rd(\mu\otimes\nu)$:
\begin{subequations} \label{eq:sink_dyna}
\begin{align}
  \dot{\varphi}_t &= -\log\frac{\rd (\gamma_1 \pi_t)}{\rd\nu_\ast} = -\alpha \biggl(\mspace{-1mu}\varphi_t - \varphi^\ast + \log\int_{\bR^d} e^{\psi_t - \psi^\ast} \nu(\dy)\!\biggr),\\
  \dot{\psi}_t &= -\log\frac{\rd (\gamma_2\pi_t)}{\rd\mu_\ast} = -\beta \biggl(\psi_t - \psi^\ast  + \log\int_{\bR^d} e^{\varphi_t - \varphi^\ast} \mu(\dx)\!\biggr),
\end{align}    
\end{subequations}
for $\alpha = a_t$ and $\beta = 1 - a_t$.\footnote{More precisely, one needs to apply \cref{lem:equivf} for KL, and the disintegration theorem to get \cref{eq:sink_dyna}.} Setting a discrete approximation of dynamics \cref{eq:sink_dyna}: $\varphi_{t+1} = \varphi_t + \dot{\varphi}_t$ and $\psi_{t+1} = \psi_t + \dot{\psi}_t$ yields the following alternating updates: 
\begin{equation*}
  \psi_{2t+1}\mspace{-1mu}(y) = -\mspace{-1mu}\log\!\int_{\bR^d}\mspace{-1mu} e^{\varphi_{2t}\mspace{-1mu}(x) - c_\varepsilon\mspace{-1mu}(x,y)}\mu(\rd x), \quad \varphi_{2t+2}\mspace{-1mu}(x) = -\mspace{-1mu}\log\!\int_{\bR^d}\mspace{-1mu} e^{\psi_{2t+1}\mspace{-1mu}(x) - c_\varepsilon\mspace{-1mu}(x,y)}\nu(\rd y).
\end{equation*}
Therefore, the proof of \cref{lem:sink} is complete.

From the dual iteration of KL stated in \cref{eq:duali_kl} \citep{mdsem}, the static, idealized MD cost $F(\cdot) = \KL(\cdot\Vert\pi^\ast)$ yield the following closed-form expression for the first variation:
\[
  \deltaC \Omega(\pi_t) - \deltaC \Omega(\pi_{t+1}) = \eta_t \bigl(\deltaC \Omega(\pi_t) - \deltaC \Omega(\pi^\ast) \bigr),
\]
where the equation implies that setting $\eta_t \equiv 1$ for MD yields one-step optimality $\pi^\ast$ in this idealized condition. Utilizing the equivalence of first variation stated in \cref{lem:equivf} and the disintegration theorem for the \RadonNikodym{} derivatives, we get the first variation of $F$ with respect to $\pi$ for all $x$ as
\begin{equation} \label{eq:fvar}
  \delta F(\pi_t) = \log\frac{d\pi^\ast}{d\pi}.
\end{equation}
And by the disintegration theorem \citep{someppm}, we also achieve the first variation of $f$ with respect to $\vec{\pi}$ for all $x$ as
\begin{equation}
  \delta f(\vec{\pi}^x_t) = \log\frac{d (\vec{\pi}_t^\ast)^x}{d \vec{\pi}^x},
\end{equation}
where $f(\vec{\pi}^x) = \KL(\vec{\pi}^x\Vert (\vec{\pi}^\ast)^x\mspace{-1mu})$. Since this disintegration theorem always hold for every directional derivative, we can use expression for $\vec{\pi}^x$ and $\pi$ interchangeably. It is well-known that MD is a discretization of natural gradient descent \citep{mirrorless}, and our setting for $\Omega$ generates the geometry governed by the (generalized) Fisher information. In this particular case, one can use \Otto{}'s formalization of \Riemannian{} calculus (\citealp{otto}; \S~3.2), and the probability space equipped with the \Wasserstein{}-2 metric $(\cP_2(\bR^d), W_2)$, is generally represented as a \Wasserstein{} gradient flow
\begin{equation} \label{eq:jko}
  \partial_t \pi_t  = - \nablaW\mspace{1mu} F\mspace{-.3mu}(\pi_t),\qquad \forall \pi_t \in \cC,
\end{equation}
where $\nablaW$ denotes the \Wasserstein{}-2 gradient operator $\nablaW \coloneqq \nabla\!\vcdot\!\bigl(\rho\,\nabla \frac{\delta}{\delta \rho}\bigr)$. In particular, plugging \cref{eq:fvar} yields
\begin{equation} \label{eq:fokker}
  \partial_t \pi_t  =  -\nabla\mspace{-2mu}\vcdot\mspace{-1mu}(\pi \nabla \log \pi^\ast) + \Delta\pi,    
\end{equation}
where $\Delta$ denotes the Laplace operator. The foundational results concerning \Wasserstein{} gradients were initially established by JKO \citep{jko}, who demonstrated that the formulation in \cref{eq:jko} corresponds precisely to the \FokkerPlanck{} equation \eqref{eq:fokker}. Consequently, it follows that \Wasserstein{} gradients characterize the tangential direction of flows on a manifold constrained by distributional properties and endowed with the $W_2$ metric. \hfill\qedsymbol

\subsection{Proof of \texorpdfstring{\cref{thm:step_size}}{Theorem \ref{thm:step_size}}}

We start with introducing basic properties of the \Bregman{} divergence in \cref{def:breg}. First, the \textit{idempotence} property states that a \Bregman{} divergence associated with another \Bregman{} divergence $D_\Omega(\cdot \vert y)$ remains as the identical divergence with the original. Note that the (global or universal) idempotence initially stated by \citet{mdsem}, but we apply some changes to the statement and only work with localized version of idempotence for the purpose of this paper.
\begin{lemma}[Idempotence] \label{lem:idem}
  Suppose a convex potential $\Omega: \cM(\cX) \to \bR \cup \{+\infty\}$, where $\cM(\cX)$ denotes a topological vector space for $\cX$. Assume that for all $z \in \dom(\Omega)$, $\deltaC \Omega(z)$ exists. Then, $\forall x, y \in \cC \cap \dom(\Omega)$: $D_{D_\Omega(\cdot | y)}(x | y) = D_\Omega(x | y)$.
\end{lemma}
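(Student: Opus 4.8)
The plan is to unfold both \Bregman{} divergences directly from \cref{def:breg} and show that the affine correction term defining the outer divergence contributes nothing. Fix $y \in \cC \cap \dom(\Omega)$ and set $\Phi \coloneqq D_\Omega(\cdot \Vert y)$. First I would observe that $\Phi$ is again convex on $\cM(\cX)$: by \cref{def:breg}, $\Phi(x) = \Omega(x) - \Omega(y) - \dsupplus\Omega(y; x-y)$, which is $\Omega$ plus a term affine in $x$ (the constant $-\Omega(y)$ together with the map $x \mapsto -\dsupplus\Omega(y; x-y)$, positively homogeneous and, using the assumed existence of $\deltaC\Omega(y)$, equal to the linear functional $-\langle\deltaC\Omega(y), x-y\rangle$ along admissible directions). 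Hence the outer \Bregman{} divergence $D_\Phi(\cdot \Vert \cdot)$ is well defined, and its first variation exists wherever that of $\Omega$ does.

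Next I would evaluate the three pieces of $D_\Phi(x \Vert y) = \Phi(x) - \Phi(y) - \dsupplus\Phi(y; x-y)$. The term $\Phi(x)$ is exactly $D_\Omega(x \Vert y)$ by construction. For $\Phi(y)$, the identity $\dsupplus\Omega(y; 0) = 0$, immediate from \cref{def:directional}, gives $\Phi(y) = \Omega(y) - \Omega(y) - \dsupplus\Omega(y;0) = 0$. The crux is the directional derivative $\dsupplus\Phi(y; x-y)$: writing $v = x-y$ and using convexity of $\cC$ so that $y + hv = (1-h)y + hx \in \cC$ for $h \in (0,1]$, I compute $\Phi(y+hv) = \Omega(y+hv) - \Omega(y) - \dsupplus\Omega(y; hv)$. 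Invoking positive homogeneity of the directional derivative in its direction argument, $\dsupplus\Omega(y; hv) = h\,\dsupplus\Omega(y; v)$, so that dividing by $h$ and letting $h \to 0^+$ yields $\dsupplus\Phi(y; v) = \dsupplus\Omega(y; v) - \dsupplus\Omega(y; v) = 0$.

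Combining the three evaluations gives $D_\Phi(x \Vert y) = D_\Omega(x \Vert y) - 0 - 0 = D_\Omega(x \Vert y)$, which is the claimed idempotence.

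I expect the only delicate point to be justifying the positive homogeneity $\dsupplus\Omega(y; hv) = h\,\dsupplus\Omega(y; v)$ and ensuring every intermediate point lies in the convex domain so that each directional derivative actually exists. Homogeneity follows by the substitution $s \mapsto hs$ inside the limit defining $\dsupplus\Omega(y;\cdot)$ in \cref{def:directional}, and admissibility of the directions is guaranteed by convexity of $\cC$ together with the standing assumption that $\deltaC\Omega(z)$ exists for all $z \in \dom(\Omega)$. Since these are precisely the hypotheses already in force, no stronger regularity (e.g.\ \Gateaux{} differentiability, which fails here because $\dom(\Omega)$ has empty interior) is required.
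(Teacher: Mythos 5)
Your proof is correct and follows essentially the same route as the paper's: both expand the outer Bregman divergence from \cref{def:breg} and observe that the affine correction term vanishes because the first variation of $D_\Omega(\cdot\Vert y)$ at $y$ is zero. The only cosmetic difference is that the paper first writes the identity for $D_{D_\Omega(\cdot\vert z)}(x\vert y)$ with arbitrary $z$ and then sets $z=y$, whereas you specialize to $z=y$ from the outset and spell out the positive-homogeneity step justifying $\dsupplus\Phi(y;x-y)=0$.
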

\begin{proof}[Proof of \cref{lem:idem}]
  Both \Bregman{} divergences and \Bregman{} potentials are convex functionals. By definition, we have $D_{D_\Omega(\cdot \vert z)}(x | y) = D_\Omega(x \Vert z) - D_\Omega(y \Vert z) - \langle \deltaC \Omega(y) - \deltaC \Omega(z),\, x - y\mspace{.5mu}\rangle$ for arbitrary $z$, and setting $z = y$ completes the proof. Another (informal) point of view is considering the \Bregman{} divergence as a first-order approximation of a \Hessian{} structure, and $D_{D_\Omega(\cdot \vert z)}$ converges to $D_\Omega(\cdot \vert z)$ by taking a limit, knowing that $D_\Omega(y|y) = 0$.
\end{proof}

We then proceed to the equivalence property of the family of recursive \Bregman{} divergences. The property is important for proving the theorem and representing the dual representation of MD. Moreover, it is also used in \cref{thm:vmdkl} as a key ingredient which constructs our VOMD framework.
\begin{lemma}[Equivalence of first variations] \label{lem:equivf}
  Suppose $\Omega: \cM(\cX) \to \bR \cup \{+\infty\}$ Assume that for all $z\in\dom(\Omega)$, the first variation $\deltaC\Omega(z)$ exists, then, for all $x,y,y_1, y_2 \in\dom(\Omega)$, the first variation taken for the first argument $x$ of the following \Bregman{} divergences are equivalent: $\deltaC D_\Omega(x\vert y) = \deltaC D_{D_\Omega(\cdot \vert y_1)}(x|y) = \deltaC D_{D_\Omega(\cdot \vert y_2)}(x|y)$.
\end{lemma}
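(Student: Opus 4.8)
The plan is to reduce everything to one computational identity — the first variation of a generalized Bregman divergence with respect to its left argument — and then apply it twice to collapse the recursion. First I would establish that, for any fixed base point $y$, the map $x\mapsto D_\Omega(x\Vert y)$ satisfies $\deltaC D_\Omega(\cdot\Vert y)(x) = \deltaC\Omega(x) - \deltaC\Omega(y)$. This is read off directly from \cref{def:breg}: writing $D_\Omega(x\Vert y) = \Omega(x) - \Omega(y) - \dsupplus\Omega(y; x-y)$ and using the standing hypothesis that $\deltaC\Omega(y)$ exists, the correction term becomes $\dsupplus\Omega(y; x-y) = \langle \deltaC\Omega(y),\, x-y\rangle$, which is affine in $x$, while $\Omega(y)$ is constant in $x$. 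Taking the directional derivative in an admissible direction $v$ then gives $\dsupplus D_\Omega(\cdot\Vert y)(x; v) = \langle\deltaC\Omega(x),v\rangle - \langle\deltaC\Omega(y),v\rangle$, so the affine correction contributes only the fixed dual element $-\deltaC\Omega(y)$, independent of $x$, yielding the claimed identity as an element of $\cM^\ast$ via \cref{def:firstvar}.

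Next I would instantiate this identity at the level of the outer potential. Setting $\Phi_i(\cdot)\coloneqq D_\Omega(\cdot\Vert y_i)$, the first step yields $\deltaC\Phi_i(z) = \deltaC\Omega(z) - \deltaC\Omega(y_i)$ for every $z\in\dom(\Omega)$; in particular $\Phi_i$ is convex and admits a first variation everywhere, which is exactly what is needed to form the second-level divergence $D_{\Phi_i}(\cdot\Vert y)$. Applying the single-divergence formula once more, now with $\Phi_i$ in place of $\Omega$, gives $\deltaC D_{\Phi_i}(\cdot\Vert y)(x) = \deltaC\Phi_i(x) - \deltaC\Phi_i(y)$. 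Substituting $\deltaC\Phi_i(z) = \deltaC\Omega(z) - \deltaC\Omega(y_i)$ for both $z=x$ and $z=y$, the constant shift $\deltaC\Omega(y_i)$ appears with opposite signs and cancels, leaving $\deltaC D_{\Phi_i}(\cdot\Vert y)(x) = \deltaC\Omega(x) - \deltaC\Omega(y)$. This coincides with $\deltaC D_\Omega(\cdot\Vert y)(x)$ and is manifestly independent of the choice of $y_i$, so taking $i=1$ and $i=2$ establishes the full triple equality stated in the lemma.

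I do not expect a deep obstacle: once the single-divergence first-variation formula is in hand the argument is bookkeeping, and the care required is mostly in keeping track of which argument is being varied. The one genuine point to justify is that the first variation operator acts linearly, splitting the directional derivative of a sum and treating the affine correction as a constant dual element; this is not automatic for a general one-sided directional derivative, but it is granted here precisely by the hypothesis that $\deltaC\Omega(z)$ exists for all $z$, i.e.\ that $\dsupplus\Omega(z;\cdot)$ agrees with a continuous linear functional on admissible directions. The only mild subtlety worth flagging is that the test directions $v=y-x$ range over the convex admissible set, so all equalities above should be read as identities in $\cM^\ast$ tested against such $v$, consistent with \cref{def:firstvar}.
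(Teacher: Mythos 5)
Your proposal is correct and follows essentially the same route as the paper's proof: both reduce to the identity $\deltaC D_G(x\Vert y)=\deltaC G(x)-\deltaC G(y)$ (using that the affine correction term contributes only a constant dual element), and both observe that applying it to the second-level divergence makes the $\deltaC\Omega(y_i)$ shifts cancel. Your version merely packages the computation as "apply the one-level formula twice" where the paper expands $D_{D_\Omega(\cdot\vert z)}(x\vert y)$ explicitly and differentiates term by term, but the underlying argument is identical.
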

\begin{proof}[Proof of \cref{lem:equivf}] First, it can be analytically driven $\deltaC D_\Omega(x\vert y) = \deltaC \Omega(x) - \deltaC \Omega(y)$. Next, by definition, taking the first variation of $D_{D_\Omega(\cdot|z)}(x|y)$ with respect to $x$ for arbitrary $z\in\dom(\Omega)$ yields $\deltaC D_\Omega(x\Vert z) - \deltaC \langle \Omega(y) - \Omega(z), x -y \rangle$. Knowing that the second term $\deltaC \langle \Omega(y) - \Omega(z), x -y \rangle$ is linear, we achieve $\deltaC D_{D_\Omega(\cdot|z)}(x|y) = \deltaC \Omega(x) - \deltaC \Omega(z) - (\deltaC\Omega(y) - \deltaC \Omega(z)) = \deltaC \Omega(x) - \deltaC \Omega(y)$, which completes the proof.
\end{proof}
By an inductive reasoning, we arrive at the basic property of family of \Bregman{} divergences, that all divergence recursively defined by the \Bregman{} potential $\Omega$, has the (local) idempotence and the (global) equivalence of first variation. To address characteristics for particular \Bregman{} potential $\Omega$, we apply the notions of relative smoothness and convexity with respect to $\Omega$, which was first introduced by \citet{birnbaum2011distributed}.
\begin{definition}[Relative smoothness and convexity] \label{def:relsmooth}
  Let $G: \cM(\cX) \to \bR \cup \{+ \infty \}$ be a proper convex functional. Given scalar $l, L \ge 0$, we define that $G$ is $L$-smooth and $l$-strongly-convex relative to $\Omega$ over $\cC$ if for every $x,y \in \dom(G) \cap \dom(\Omega)\cap\cC$, we have
  \[
    D_{\mspace{-1mu}G}(x \Vert y) \le L D_\Omega(x \Vert y),\mspace{50mu}D_{\mspace{-1mu}G}(x \Vert y) \ge l D_\Omega(x \Vert y),
  \]
  respectively, where $D_{\mspace{-1mu}G}$ and $D_{\mspace{-1mu}G}$ are \Bregman{} divergences associated with $G$ defined in \cref{def:breg}.
\end{definition}
Applying the idempotence lemma \cref{lem:idem}, we immediately recognize that the \Bregman{} divergence $D_\Omega$ is relatively $1$-smooth and $1$-strongly-convex for $\Omega$. To start our analysis, we reintroduce the well-known three-point identity for a \Bregman{} divergence.
\begin{lemma}[Three-point identity] \label{lem:threeid}
  For all $\pi_a, \pi_b, \pi_c \in \cC \cap\dom(\Omega)$, we have the following identity
  \[ 
    \bigl\langle\deltaC\Omega(\pi_a)-\deltaC\Omega(\pi_b),\pi_c-\pi_b\bigr\rangle = D_\Omega(\pi_c\Vert\pi_b)-D_\Omega(\pi_c\Vert\pi_a)+D_\Omega(\pi_b\Vert\pi_a)
  \]
  when $D_\Omega$ is the \Bregman{} divergence defined in \cref{def:breg}.
\end{lemma}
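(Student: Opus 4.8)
The plan is to prove the identity by a direct expansion of each of the three Bregman divergences on the right-hand side, using Definition~\ref{def:breg} together with the first-variation representation of the directional derivative supplied by Definition~\ref{def:firstvar}. Concretely, since $\dsupplus\Omega(y; x-y) = \langle \deltaC\Omega(y), x-y\rangle$ for any admissible pair $x,y\in\cC\cap\dom(\Omega)$, every divergence can be rewritten as $D_\Omega(x\Vert y) = \Omega(x) - \Omega(y) - \langle \deltaC\Omega(y), x-y\rangle$. First I would substitute this representation into $D_\Omega(\pi_c\Vert\pi_b)$, $D_\Omega(\pi_c\Vert\pi_a)$, and $D_\Omega(\pi_b\Vert\pi_a)$, and assemble the combination $D_\Omega(\pi_c\Vert\pi_b) - D_\Omega(\pi_c\Vert\pi_a) + D_\Omega(\pi_b\Vert\pi_a)$.

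The second step is bookkeeping of the scalar $\Omega$-terms. Collecting the six evaluations of $\Omega$, each of $\Omega(\pi_c)$, $\Omega(\pi_b)$, and $\Omega(\pi_a)$ appears with coefficients summing to zero, so all the potential values cancel identically and only the three duality pairings survive. The third step treats those pairings. Using bilinearity of the duality product $\langle\cdot,\cdot\rangle$ between $\cM$ and $\cM^\ast$, I would combine the two terms carrying $\deltaC\Omega(\pi_a)$, namely $\langle \deltaC\Omega(\pi_a), \pi_c - \pi_a\rangle - \langle \deltaC\Omega(\pi_a), \pi_b - \pi_a\rangle = \langle \deltaC\Omega(\pi_a), \pi_c - \pi_b\rangle$; together with the remaining term $-\langle \deltaC\Omega(\pi_b), \pi_c - \pi_b\rangle$ this yields exactly $\langle \deltaC\Omega(\pi_a) - \deltaC\Omega(\pi_b), \pi_c - \pi_b\rangle$, which is the left-hand side.

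The computation is elementary, and the only point needing care — the would-be obstacle — is the infinite-dimensional setting: one must ensure that each directional derivative genuinely coincides with a \emph{linear} duality pairing rather than merely a one-sided limit that might fail additivity in the direction argument. This is guaranteed because the hypothesis asserts existence of the first variations $\deltaC\Omega(\pi_a), \deltaC\Omega(\pi_b)\in\cM^\ast$, which are by construction elements of the topological dual and hence linear functionals. The directions $\pi_c-\pi_b$, $\pi_c-\pi_a$, and $\pi_b-\pi_a$ are all admissible differences of points in $\cC\cap\dom(\Omega)$, so Definition~\ref{def:firstvar} applies termwise and the pairing is linear in its second slot, which is precisely what the cancellation in the second and third steps requires.

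\begin{proof}
Using Definition~\ref{def:breg} and the identity $\dsupplus\Omega(y;x-y)=\langle\deltaC\Omega(y),x-y\rangle$ from Definition~\ref{def:firstvar}, write $D_\Omega(x\Vert y)=\Omega(x)-\Omega(y)-\langle\deltaC\Omega(y),x-y\rangle$ for each of the three divergences. Forming $D_\Omega(\pi_c\Vert\pi_b)-D_\Omega(\pi_c\Vert\pi_a)+D_\Omega(\pi_b\Vert\pi_a)$, the terms $\Omega(\pi_a),\Omega(\pi_b),\Omega(\pi_c)$ cancel. The surviving pairings are $-\langle\deltaC\Omega(\pi_b),\pi_c-\pi_b\rangle+\langle\deltaC\Omega(\pi_a),\pi_c-\pi_a\rangle-\langle\deltaC\Omega(\pi_a),\pi_b-\pi_a\rangle$, and by linearity of $\langle\deltaC\Omega(\pi_a),\cdot\rangle$ the last two collapse to $\langle\deltaC\Omega(\pi_a),\pi_c-\pi_b\rangle$. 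Hence the sum equals $\langle\deltaC\Omega(\pi_a)-\deltaC\Omega(\pi_b),\pi_c-\pi_b\rangle$, as claimed.
\end{proof}
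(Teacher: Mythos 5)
Your proof is correct and follows essentially the same route as the paper's: expand each of the three Bregman divergences via \cref{def:breg}, cancel the $\Omega$-terms, and regroup the duality pairings by linearity. Your additional remark on why the directional derivative can be replaced by a genuinely linear pairing (existence of $\deltaC\Omega\in\cM^\ast$) is a sensible clarification that the paper leaves implicit.
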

\begin{proof}[Proof of \cref{lem:threeid}] By the definition of \Bregman{} divergence, we have
  \begin{equation*}
  \begin{aligned}
    D_\Omega(\pi_c\Vert\pi_b)-D_\Omega(\pi_c\Vert\pi_a)+D_\Omega(\pi_b\Vert\pi_a)
    &=\Omega(\pi_c)-\Omega(\pi_b)-\bigl\langle \deltaC\Omega(\pi_b),\pi_c-\pi_b\bigr\rangle\\
    &-\Omega(\pi_c)+\Omega(\pi_a)+\bigl\langle \deltaC\Omega(\pi_a),\pi_c-\pi_a\bigr\rangle\\
    &+\Omega(\pi_b)-\Omega(\pi_a)-\bigl\langle \deltaC\Omega(\pi_a),\pi_b-\pi_a\bigr\rangle\\
    &=\bigl\langle\deltaC\Omega(\pi_a)-\deltaC\Omega(\pi_b),\pi_c-\pi_b\bigr\rangle.
  \end{aligned}
  \end{equation*}
  Therefore, the proof is complete.
\end{proof}
Utilizing the three-point identity, we present the following useful lemmas for dealing inequalities regarding improvements by \citet{mdirl}, which we call left and right \Bregman{} differences.
\begin{lemma}[Left \Bregman{} difference] \label{lem:left_breg_diff}
  For all $\pi_a, \pi_b, \pi_c \in \cC \cap\dom(\Omega)$, the following identity holds. 
  \begin{equation}
    D_\Omega(\pi_b\Vert\pi_a\bigr) - D_\Omega(\pi_c\Vert\pi_a) =  - \bigl\langle \deltaC \Omega(\pi_c)-\deltaC \Omega(\pi_a),\pi_c-\pi_b\bigr\rangle + D_\Omega(\pi_{b}\Vert\pi_c).
  \end{equation}
\end{lemma}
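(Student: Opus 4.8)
The plan is to prove this identity by direct algebraic expansion of every Bregman divergence through its defining formula, followed by cancellation of the shared terms. Writing the generalized Bregman divergence of \cref{def:breg} in terms of the first variation via \cref{def:firstvar} (so that $\dsupplus\Omega(y; x-y) = \langle \deltaC\Omega(y), x - y\rangle$), each of the four divergences in the claim has the form $D_\Omega(x\Vert y) = \Omega(x) - \Omega(y) - \langle \deltaC\Omega(y), x - y\rangle$. The whole argument is then bookkeeping on linear functionals.

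First I would expand the left-hand side. In $D_\Omega(\pi_b\Vert\pi_a) - D_\Omega(\pi_c\Vert\pi_a)$ the two divergences share the base point $\pi_a$, so the term $\Omega(\pi_a)$ cancels and, by bilinearity of the duality pairing, the two linear terms combine as $\langle \deltaC\Omega(\pi_a),\, (\pi_c - \pi_a) - (\pi_b - \pi_a)\rangle = \langle \deltaC\Omega(\pi_a), \pi_c - \pi_b\rangle$. This reduces the left-hand side to the compact expression $\Omega(\pi_b) - \Omega(\pi_c) + \langle \deltaC\Omega(\pi_a), \pi_c - \pi_b\rangle$.

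Next I would expand the right-hand side. Writing $D_\Omega(\pi_b\Vert\pi_c) = \Omega(\pi_b) - \Omega(\pi_c) - \langle\deltaC\Omega(\pi_c), \pi_b - \pi_c\rangle$ and combining it with $-\langle \deltaC\Omega(\pi_c) - \deltaC\Omega(\pi_a), \pi_c - \pi_b\rangle$, the two contributions carrying $\deltaC\Omega(\pi_c)$ are $-\langle\deltaC\Omega(\pi_c), \pi_b - \pi_c\rangle = \langle\deltaC\Omega(\pi_c), \pi_c - \pi_b\rangle$ and $-\langle\deltaC\Omega(\pi_c), \pi_c-\pi_b\rangle$, which cancel against each other. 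What survives is exactly $\Omega(\pi_b) - \Omega(\pi_c) + \langle\deltaC\Omega(\pi_a), \pi_c - \pi_b\rangle$, matching the reduced left-hand side and closing the argument.

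There is essentially no hard step; the statement is a purely algebraic identity. The only point requiring care is that the directional derivative may legitimately be replaced by the pairing $\langle\deltaC\Omega(\cdot), \cdot\rangle$ and that this pairing is bilinear in the direction — which is precisely what \cref{def:firstvar} supplies once all points lie in $\cC\cap\dom(\Omega)$ and the relevant first variations exist, as assumed. As a shortcut alternative, one may bypass the expansion entirely and invoke the three-point identity of \cref{lem:threeid}: applying it under the relabeling $(\pi_a, \pi_b, \pi_c) \mapsto (\pi_a, \pi_c, \pi_b)$ and negating both the functional difference and the direction yields $\langle \deltaC\Omega(\pi_c) - \deltaC\Omega(\pi_a), \pi_c - \pi_b\rangle = D_\Omega(\pi_c\Vert\pi_a) + D_\Omega(\pi_b\Vert\pi_c) - D_\Omega(\pi_b\Vert\pi_a)$, which is merely a rearrangement of the claimed equality.
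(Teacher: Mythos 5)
Your proof is correct, and your primary route differs from the paper's. The paper derives this lemma from the three-point identity (\cref{lem:threeid}): it first writes $D_\Omega(\pi_b\Vert\pi_a) - D_\Omega(\pi_c\Vert\pi_a) = -D_\Omega(\pi_c\Vert\pi_b) + \bigl\langle \deltaC\Omega(\pi_a)-\deltaC\Omega(\pi_b), \pi_c-\pi_b\bigr\rangle$, then splits $\deltaC\Omega(\pi_a)-\deltaC\Omega(\pi_b)$ into $\deltaC\Omega(\pi_a)-\deltaC\Omega(\pi_c)$ plus $\deltaC\Omega(\pi_c)-\deltaC\Omega(\pi_b)$, and absorbs the latter piece together with $-D_\Omega(\pi_c\Vert\pi_b)$ using the symmetrized two-divergence identity $D_\Omega(\rho\Vert\bar\rho)+D_\Omega(\bar\rho\Vert\rho)=\langle\deltaC\Omega(\rho)-\deltaC\Omega(\bar\rho),\rho-\bar\rho\rangle$. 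You instead expand all four divergences from \cref{def:breg} and cancel; I verified that both sides reduce to $\Omega(\pi_b)-\Omega(\pi_c)+\langle\deltaC\Omega(\pi_a),\pi_c-\pi_b\rangle$, so the bookkeeping is right. Your approach is more elementary and avoids the two-divergence identity entirely; its only prerequisites are the replacement $\dsupplus\Omega(y;x-y)=\langle\deltaC\Omega(y),x-y\rangle$ (available from \cref{def:firstvar} under the paper's standing assumption that the first variations exist) and linearity of the duality pairing in the direction, both of which you correctly flag. Your alternative shortcut --- applying \cref{lem:threeid} with $\pi_b$ and $\pi_c$ swapped and negating both factors of the pairing --- is in fact a cleaner version of the paper's own strategy: it reaches the identity in one step from the three-point identity without the intermediate splitting, and I checked that the relabeled identity rearranges exactly to the claim. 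The paper's route has the minor virtue of exhibiting the two-divergence identity, which it reuses elsewhere, but nothing in your argument is missing or incorrect.
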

\begin{proof}[Proof of \cref{lem:left_breg_diff}]
  Using Lemma~\ref{lem:threeid}, we have
  \begin{equation*}
    D_\Omega(\pi_b\Vert\pi_a) - D_\Omega(\pi_c\Vert\pi_a) = - D_\Omega(\pi_c\Vert\pi_b) + \bigl\langle \deltaC\Omega(\pi_a)-\deltaC \Omega(\pi_b), \pi_c-\pi_b\bigr\rangle.
  \end{equation*}
  Utilizing an identity of two \Bregman{} divergences for arbitrary $(\rho,\bar{\rho})$:
  \begin{equation} \label{eq:breg_two}
    D_\Omega(\rho\Vert\bar{\rho})+
    D_\Omega(\bar{\rho}\Vert\rho)=\bigl\langle \deltaC\Omega(\rho) - \deltaC\Omega(\bar{\rho}), \rho-\bar{\rho}\bigr\rangle.
  \end{equation}
  We separate $\deltaC\Omega(\pi_a)-\deltaC\Omega(\pi_b)$ into $\deltaC\Omega(\pi_a) - \deltaC\Omega(\pi_c)$ and $\deltaC\Omega(\pi_c)-\deltaC \Omega(\pi_b)$ and write the rest of the derivation as follows.
  \begin{equation*}
  \begin{aligned}
    &D_\Omega(\pi_b\Vert\pi_a) - D_\Omega(\pi_c\Vert\pi_a)\\
    &\qquad =\underbrace{-D_\Omega(\pi_c\Vert\pi_b) + \bigl\langle \deltaC \Omega(\pi_c) - \deltaC \Omega(\pi_b), \pi_c-\pi_b\bigr\rangle}_\textrm{\cref{eq:breg_two}} + \bigl\langle \deltaC \Omega(\pi_a) - \deltaC \Omega(\pi_c),\pi_c-\pi_b\bigr\rangle\\
    &\qquad = D_\Omega(\pi_b\Vert\pi_c) + \bigl\langle \deltaC\Omega(\pi_a)- \deltaC \Omega(\pi_c),\pi_c-\pi_b\bigr\rangle
  \end{aligned}
  \end{equation*}
  Therefore, we achieve the desired identity.
\end{proof}
\begin{lemma}[Right \Bregman{} difference] \label{lem:right_breg_diff}
  For all $\pi_a, \pi_b, \pi_c$, the following identity holds.
  \begin{equation} \label{eq:identity1}
    D_\Omega(\pi_c\Vert\pi_b)-D_\Omega(\pi_c\Vert\pi_a)=D_\Omega(\pi_a\Vert\pi_b)+\bigl\langle \deltaC \Omega(\pi_a)- \deltaC\Omega(\pi_b), \pi_c-\pi_a\bigr\rangle
  \end{equation}
\end{lemma}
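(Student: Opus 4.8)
The plan is to recognize that this identity is not a fresh computation but a relabeling of the three-point identity already established in \cref{lem:threeid}. First I would write out \cref{lem:threeid} with its three arguments permuted so that the roles of $\pi_a$ and $\pi_b$ are swapped while $\pi_c$ stays in the evaluation slot. Concretely, substituting the points $(\pi_b,\pi_a,\pi_c)$ into the three generic slots of \cref{lem:threeid} yields
\[
  \bigl\langle\deltaC\Omega(\pi_b)-\deltaC\Omega(\pi_a),\pi_c-\pi_a\bigr\rangle = D_\Omega(\pi_c\Vert\pi_a)-D_\Omega(\pi_c\Vert\pi_b)+D_\Omega(\pi_a\Vert\pi_b).
\]
Negating both sides and moving $D_\Omega(\pi_a\Vert\pi_b)$ across gives precisely \cref{eq:identity1}, so the lemma follows in one line once the permutation is in place.

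Alternatively --- and this is the route I would present if a self-contained derivation is preferred --- I would expand all four \Bregman{} divergences directly from \cref{def:breg}, using the representation $D_\Omega(x\Vert y)=\Omega(x)-\Omega(y)-\langle\deltaC\Omega(y),x-y\rangle$ (valid because \cref{def:firstvar} identifies $\langle\deltaC\Omega(y),v\rangle$ with $\dsupplus\Omega(y;v)$). Expanding $D_\Omega(\pi_c\Vert\pi_b)$ and $D_\Omega(\pi_c\Vert\pi_a)$ and subtracting cancels the $\Omega(\pi_c)$ terms, leaving $\Omega(\pi_a)-\Omega(\pi_b)-\langle\deltaC\Omega(\pi_b),\pi_c-\pi_b\rangle+\langle\deltaC\Omega(\pi_a),\pi_c-\pi_a\rangle$. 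On the right-hand side I would expand $D_\Omega(\pi_a\Vert\pi_b)$ and then combine the two terms carrying $\deltaC\Omega(\pi_b)$ via bilinearity of the duality pairing, using $(\pi_a-\pi_b)+(\pi_c-\pi_a)=\pi_c-\pi_b$ to merge them into $-\langle\deltaC\Omega(\pi_b),\pi_c-\pi_b\rangle$; the two sides then coincide term by term.

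The computation carries no real obstacle: it is an exact algebraic identity that holds whenever $\deltaC\Omega$ exists at the relevant points, which is exactly the standing hypothesis inherited from \cref{lem:threeid} and \cref{def:firstvar}. The only point deserving care is that, because $\Omega$ need not be \Gateaux{} differentiable on a domain with empty interior, every ``inner product'' must be read as the duality pairing $\langle\cdot,\cdot\rangle$ between $\cM$ and $\cM^\ast$ with $\deltaC\Omega(\cdot)\in\cM^\ast$; linearity of this pairing in its measure argument is what licenses the merging step above, and that linearity is built into \cref{def:firstvar}. Hence no regularity beyond the existence of the first variations is required, and the relabeling route is preferable since it reuses a result already proved rather than reopening \cref{def:breg}.
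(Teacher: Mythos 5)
Your proof is correct, and your primary route is a genuinely tighter argument than the paper's. The paper also starts from the three-point identity (\cref{lem:threeid}), but applies it with the arguments in their original order, obtaining $D_\Omega(\pi_c\Vert\pi_b)-D_\Omega(\pi_c\Vert\pi_a)=-D_\Omega(\pi_b\Vert\pi_a)+\langle\deltaC\Omega(\pi_a)-\deltaC\Omega(\pi_b),\pi_c-\pi_b\rangle$; it then splits $\pi_c-\pi_b$ into $(\pi_a-\pi_b)+(\pi_c-\pi_a)$ and invokes the symmetrized two-divergence identity $D_\Omega(\rho\Vert\bar\rho)+D_\Omega(\bar\rho\Vert\rho)=\langle\deltaC\Omega(\rho)-\deltaC\Omega(\bar\rho),\rho-\bar\rho\rangle$ to convert $-D_\Omega(\pi_b\Vert\pi_a)+\langle\deltaC\Omega(\pi_a)-\deltaC\Omega(\pi_b),\pi_a-\pi_b\rangle$ into $D_\Omega(\pi_a\Vert\pi_b)$. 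You instead observe that permuting $(\pi_a,\pi_b)\mapsto(\pi_b,\pi_a)$ in \cref{lem:threeid} and negating yields \cref{eq:identity1} in one line, dispensing with both the splitting step and the auxiliary identity; your fallback direct expansion from \cref{def:breg} is also valid and fully self-contained. Your remark that the pairing must be read as the $\cM$--$\cM^\ast$ duality product and that only existence of the first variations is needed matches the paper's standing hypotheses exactly. The only cosmetic difference is that the paper's proof of \cref{lem:left_breg_diff} is structured the same way, so the authors presumably preferred symmetry of presentation over brevity.
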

\begin{proof}[Proof of Lemma~\ref{lem:right_breg_diff}]
  By Lemma~\ref{lem:threeid}, we have
  \begin{equation*}
    D_\Omega(\pi_c\Vert\pi_b)-D_\Omega(\pi_c\Vert\pi_a)=-D_\Omega(\pi_b\Vert\pi_a) + \bigl\langle \deltaC \Omega(\pi_a) - \deltaC\Omega(\pi_b),\pi_c-\pi_b\bigr\rangle.
  \end{equation*}
  We separate $\pi_c\!-\pi_b$ into $\pi_c\!-\pi_a$ and $\pi_a\!-\pi_b$ and write the rest of the derivation as follows.
  \begin{equation*}
  \begin{aligned}
    &D_\Omega(\pi_c\Vert\pi_b)-D_\Omega(\pi_c\Vert\pi_a)\\
    &\qquad=\underbrace{-D_\Omega(\pi_b\Vert\pi_a) + \bigl\langle \deltaC \Omega(\pi_a) -\deltaC\Omega(\pi_b),\,\pi_a\!-\pi_b\bigr\rangle}_\textrm{\cref{eq:breg_two}}+\bigl\langle \deltaC\Omega(\pi_a)- \deltaC\Omega(\pi_b),\pi_c-\pi_a\bigr\rangle\\
    &\qquad=D_\Omega(\pi_a\Vert\pi_b)+\bigl\langle \deltaC\Omega(\pi_a)-\deltaC\Omega(\pi_b),\pi_c -\pi_a\big\rangle
  \end{aligned}
  \end{equation*}
  Therefore, we achieve the desired identity.
\end{proof}

Additionally, we introduce the three-point inequality \citep{chen1993convergence}, which has been a key statement for proving MD convergence for a static cost functional \citep{mdsem}, and OMD improvement for temporal costs. The proof mostly follows \citet{mdsem} with a slight change of notation.
\begin{lemma}[Three-point inequality] \label{lem:threeineq}
  Given $\pi \in \cM(\cX)$ and some proper convex functional $\Psi: \cM(\cX) \to \bR\cup\{+\infty\}$, if $\deltaC \Omega$ exists, as well as $\bar{\rho} = \argmin_{\rho\in\cC}\{\Psi(\rho) + D_\Omega(\rho \Vert \pi)\}$, then for all $\rho\in\cC\cap\dom(\Omega)\cap\dom(\Psi)$: $\Psi(\rho) + \mspace{-2mu} D_\Omega(\rho\Vert\pi) \ge \Psi(\bar{\rho})  + \mspace{-2mu}  D_\Omega(\bar{\rho} \Vert \pi) + \mspace{-2mu} D_\Omega(\rho \Vert \bar{\rho}).$
\end{lemma}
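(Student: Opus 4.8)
The plan is to invoke the first-order optimality condition for the minimizer $\bar{\rho}$ over the convex set $\cC$, expressed through directional derivatives rather than gradients (since $\dom(\Omega)$ may have empty interior, per \cref{def:firstvar}), and then convert the resulting linear inequality into \Bregman{} divergences via the three-point identity (\cref{lem:threeid}).

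First I would record the first variation of the proximal term. Writing $J(\rho) := \Psi(\rho) + D_\Omega(\rho\Vert\pi)$ and recalling that $D_\Omega(\rho\Vert\pi) = \Omega(\rho) - \Omega(\pi) - \dsupplus\Omega(\pi;\rho-\pi)$ differs from $\Omega(\rho)$ only by an affine term in $\rho$, its first variation at $\bar{\rho}$ equals $\deltaC\Omega(\bar{\rho}) - \deltaC\Omega(\pi)$, consistent with the identity $\deltaC D_\Omega(x\Vert y) = \deltaC\Omega(x) - \deltaC\Omega(y)$ used in \cref{lem:equivf}. Since $\cC$ is convex, for any feasible $\rho$ the map $h\mapsto J(\bar{\rho} + h(\rho-\bar{\rho}))$ on $[0,1]$ attains its minimum at $h=0$, so its right derivative there is nonnegative; by additivity of the one-sided directional derivative of the two convex summands this reads
\[
  \dsupplus\Psi(\bar{\rho};\rho-\bar{\rho}) + \bigl\langle \deltaC\Omega(\bar{\rho}) - \deltaC\Omega(\pi),\, \rho-\bar{\rho}\bigr\rangle \ge 0.
\]

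Next I would use convexity of $\Psi$ in the subgradient-type form $\Psi(\rho) \ge \Psi(\bar{\rho}) + \dsupplus\Psi(\bar{\rho};\rho-\bar{\rho})$, which holds because the difference quotient defining the directional derivative is monotone in the step for convex $\Psi$. Substituting the optimality inequality above to lower-bound $\dsupplus\Psi(\bar{\rho};\rho-\bar{\rho})$ gives
\[
  \Psi(\rho) \ge \Psi(\bar{\rho}) + \bigl\langle \deltaC\Omega(\pi) - \deltaC\Omega(\bar{\rho}),\, \rho-\bar{\rho}\bigr\rangle.
\]
Finally I would apply \cref{lem:threeid} with $\pi_a=\pi$, $\pi_b=\bar{\rho}$, $\pi_c=\rho$, which rewrites the inner product as $D_\Omega(\rho\Vert\bar{\rho}) - D_\Omega(\rho\Vert\pi) + D_\Omega(\bar{\rho}\Vert\pi)$; adding $D_\Omega(\rho\Vert\pi)$ to both sides yields exactly the claimed inequality.

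The main obstacle I anticipate is purely the care needed in the infinite-dimensional, empty-interior setting: I must justify that the one-sided directional derivative of $J$ exists along every feasible direction $\rho-\bar{\rho}$, that it splits additively across $\Psi$ and $D_\Omega(\cdot\Vert\pi)$, and that optimality of $\bar{\rho}$ over the convex set $\cC$ forces this derivative to be nonnegative (rather than a two-sided stationarity, which would fail at the relative boundary). These are precisely the points the first-variation formalism of \cref{def:firstvar} is designed to handle, so once the optimality condition is phrased through $\dsupplus$, the remaining manipulation is routine.
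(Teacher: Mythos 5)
Your proof is correct and follows essentially the same route as the paper's: both arguments rest on the one-sided optimality condition $\dsupplus\bigl(\Psi + D_\Omega(\cdot\Vert\pi)\bigr)(\bar{\rho};\,\rho-\bar{\rho})\ge 0$ combined with convexity of $\Psi$. The only difference is bookkeeping — the paper packages the algebra into the Bregman divergence of the combined objective $G=\Psi+D_\Omega(\cdot\Vert\pi)$ and then discards $D_\Psi(\rho\Vert\bar{\rho})\ge 0$, whereas you expand the linear term explicitly via the three-point identity of \cref{lem:threeid}; the two computations are identical.
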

\begin{proof}[Proof of \cref{lem:threeineq}]
  The existence of $\deltaC \Omega$ implies $\cC \cap \dom(D_\Omega(\cdot | y)) = \cC \cap \dom(\Omega) \cap \dom(\Psi)$. Set $G(\cdot) = \Psi(\cdot) + D_\Omega(\cdot \Vert y)$. By linearity and idempotence, we have for any $\rho\in\cC \cap \dom(\Omega) \cap \dom(\Psi)$
  \begin{equation} \label{eq:glower}
    D_{\mspace{-1mu}G}(\rho\Vert\bar{\rho})  = D_\Psi\mspace{-1mu}(\rho \Vert \bar{\rho}) + D_\Omega(\rho \Vert \bar{\rho}) \ge D_\Omega(\rho \Vert \bar{\rho}).
  \end{equation}
  By $\bar{\rho}$ being the optimality for $G$, for all $x\in\cC$,
  \[
   \dsupplus\mspace{-1.8mu}G(\bar{\rho}; \mspace{1mu}\rho - \bar{\rho}) = \!\lim_{h\to0^+} \frac{G((1-h) \bar{\rho} + h\rho) - G(\bar{\rho})}{h} \ge 0,
  \]
  which suggests $G(\rho) \ge G(\bar{\rho}) + D_{\mspace{-1mu}G}(\rho \Vert \bar{\rho})$. Applying \eqref{eq:glower} to this inequality complete the proof.
\end{proof} 
The following argument is from the convergence rate of mirror descent for relatively smooth and convex pairs of functionals, and extend to infinite dimensional convergence results of \citet{lu2018relatively} and  \citet{mdsem}. We aim to reformulate the statements in online learning, addressing one-step improvement of OMD.
\begin{lemma}[OMD improvement] \label{lem:improve}
  Suppose a temporal cost $F_t: \cM(\cX) \to \bR$ which is $L$-smooth and $l$-strongly-convex relative to $\Omega$ and $\eta_t \le \frac{1}{L}$. Then, OMD improves for current cost $F_t(\pi_{t+1}) \le F_t(\pi_t)$.
\end{lemma}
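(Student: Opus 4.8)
The plan is to recognize Lemma~\ref{lem:improve} as the infinite-dimensional analogue of the classical descent lemma for relatively smooth mirror descent, and to prove it from two ingredients: the relative smoothness of $F_t$ and the optimality characterization of the proximal OMD step. The target will be the single-step estimate
\[
  F_t(\pi_{t+1}) \le F_t(\pi_t) + \Bigl(L - \tfrac{1}{\eta_t}\Bigr) D_\Omega(\pi_{t+1}\Vert\pi_t),
\]
after which the hypothesis $\eta_t \le 1/L$ together with nonnegativity of the Bregman divergence (from convexity of $\Omega$) closes the argument. Note that only the $L$-smoothness half of \cref{def:relsmooth} is needed here; the $l$-strong-convexity plays no role in this particular descent claim.

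First I would invoke relative smoothness. Applying \cref{def:relsmooth} with $x = \pi_{t+1}$ and $y = \pi_t$, and unfolding $D_{F_t}$ through \cref{def:breg} with the identification $\dsupplus F_t(\pi_t; \pi_{t+1}-\pi_t) = \langle \deltaC F_t(\pi_t),\, \pi_{t+1}-\pi_t\rangle$ from \cref{def:firstvar}, yields
\[
  F_t(\pi_{t+1}) - F_t(\pi_t) - \bigl\langle \deltaC F_t(\pi_t),\, \pi_{t+1}-\pi_t \bigr\rangle \le L\, D_\Omega(\pi_{t+1}\Vert\pi_t).
\]
Next I would control the linear term using the optimality of $\pi_{t+1}$. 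Rescaling the OMD objective \eqref{eq:omd} by $\eta_t > 0$ leaves the minimizer unchanged, so $\pi_{t+1}$ minimizes $\Psi(\cdot) + D_\Omega(\cdot\Vert\pi_t)$ for the linear (hence proper convex) functional $\Psi(\rho)\coloneqq \eta_t\langle \deltaC F_t(\pi_t),\, \rho-\pi_t\rangle$. The three-point inequality (\cref{lem:threeineq}), applied with reference $\pi_t$, minimizer $\bar{\rho} = \pi_{t+1}$, and test point $\rho = \pi_t$, gives (using $\Psi(\pi_t)=0$ and $D_\Omega(\pi_t\Vert\pi_t)=0$)
\[
  0 \ge \eta_t\bigl\langle \deltaC F_t(\pi_t),\, \pi_{t+1}-\pi_t \bigr\rangle + D_\Omega(\pi_{t+1}\Vert\pi_t) + D_\Omega(\pi_t\Vert\pi_{t+1}).
\]
Discarding the last nonnegative term and dividing by $\eta_t$ yields $\langle \deltaC F_t(\pi_t),\, \pi_{t+1}-\pi_t\rangle \le -\tfrac{1}{\eta_t}\, D_\Omega(\pi_{t+1}\Vert\pi_t)$.

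Substituting this into the smoothness inequality produces the displayed single-step estimate, and since $\eta_t \le 1/L$ forces $L - 1/\eta_t \le 0$ while $D_\Omega(\pi_{t+1}\Vert\pi_t)\ge 0$, the improvement $F_t(\pi_{t+1}) \le F_t(\pi_t)$ follows. The main obstacle is not any individual estimate but the bookkeeping in the measure-space setting: making sure relative smoothness is invoked in the correct direction, that the directional derivative $\dsupplus$ genuinely coincides with the duality pairing against $\deltaC F_t(\pi_t)$, and that \cref{lem:threeineq} legitimately applies to the linear $\Psi$ with $\pi_{t+1}$ as an \emph{exact} minimizer (whose existence is part of the standing assumptions). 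Once these identifications are secured, no regularity of $F_t$ or $\Omega$ beyond the relative smoothness bound is required.
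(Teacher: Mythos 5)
Your proof is correct and takes essentially the same route as the paper's: relative $L$-smoothness of $F_t$ combined with the three-point inequality (\cref{lem:threeineq}) applied to the rescaled OMD objective, after which $\eta_t \le 1/L$ and nonnegativity of $D_\Omega$ close the bound. The only difference is cosmetic — you specialize the test point to $\pi_t$ inside the three-point inequality immediately, which lets you drop the relative strong-convexity step that the paper carries along with a general $\rho$ before making the same substitution $\rho = \pi_t$ at the end; your observation that $l$-strong convexity is not actually needed for this descent claim is accurate.
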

\begin{proof}[Proof of \cref{lem:improve}]
  Since $F$ is $L$ relatively smooth, we initially have the inequality
  \begin{equation} \label{eq:updateineq}
    F_t(\pi_{t+1}) \le F_t(\pi_t) + \dsupplus\mspace{-2mu} F(\pi_t; \pi_{t+1} - \pi_t) + L D_\Omega(\pi_{t+1} \vert \pi_t)       
  \end{equation}
  Applying the three-point inequality (\cref{lem:threeineq}) to \cref{eq:updateineq}, and setting a linear functional $\Psi(\rho) = \eta_t \dsupplus\mspace{-1mu}F_t(\pi_t;\pi - \pi_t)$, $\rho = \pi_t$ and $\bar{\rho}=\pi_{t+1}$ yields
  \[
    \dsupplus\mspace{-1.8mu}F_t(\pi_{t}; \pi_{t+1} - \pi_t) + \tfrac{1}{\eta_t} D_\Omega(\pi_{t+1} \vert \pi_t) \le \dsupplus\mspace{-1.8mu} F_t(\pi_t; \rho -\pi_t) + \tfrac{1}{\eta_t} D_\Omega(\rho \vert \pi_t) - \tfrac{1}{\eta_t} D_\Omega(\rho\Vert \pi_{t+1}).
  \]
  Since $F_t$ is assumed to be $l$-strongly convex relative to $\Omega$, we also have
  \begin{equation} \label{eq:omd_improve_strong}
    \dsupplus\mspace{-1.8mu}F(\pi_t; \rho - \pi_t) \le F_t(\rho) - F_t(\pi_t) - l D_\Omega(\rho\vert \pi_{t}),
  \end{equation}
  Then, by using \eqref{eq:omd_improve_strong}, \cref{eq:updateineq} becomes
  \begin{equation}
    F_t(\pi_{t+1}) \le F_t(\rho) + (\tfrac{1}{\eta_t} - l) D_\Omega(\rho \vert \pi_t) - \tfrac{1}{\eta_t} D_\Omega(\rho\vert \pi_{t+1})  + (L - \tfrac{1}{\eta_t}) D_\Omega(\pi_{t+1} \Vert \pi_t).
  \end{equation}
  By substituting $\rho = \pi_t$, since $D_\Omega(\rho \vert \pi_{t+1}) \ge 0$ and $L - \frac{1}{\eta_t} \le 0$, this shows $F_t(\pi_{t+1}) \le F_t(\pi_t)$, \ie{}, $F_t$ is decreasing at each iteration. This completes the proof.
\end{proof}

A fundamental property with the dual space $\cD$ induced by the first variation $\deltaC$ holds in our OMD setting. The existence of such learning sequence--particularly in \Sinkhorn{}--is well discussed by \citet{introeot} and \cite{mdsem}. Focusing on the dual geometry, we explicitly call this relationship with arbitrary step size $\eta_t$ as ``dual iteration.''
\begin{lemma}[Dual iteration] \label{lem:duali}
  Suppose that first variations $\deltaC F_t(\pi_t)$ and $\deltaC \Omega(\pi_t)$ exists for $t\ge0$. Then, online mirror descent updates \cref{eq:omd} is equivalent to $\deltaC \Omega (\pi_{t+1}) - \deltaC\Omega (\pi_t) = - \eta_t \deltaC F_t(\pi_t)$, for all $\pi_t \in \cC, t \in \bN$.
\end{lemma}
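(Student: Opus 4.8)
The plan is to treat the update \eqref{eq:omd} as the minimization of a strictly convex functional and to read off the dual relation from its first-order optimality condition, expressed through the directional derivatives of \cref{def:directional,def:firstvar}. Concretely, I would set $G_t(\pi) \coloneqq \langle \deltaC F_t(\pi_t), \pi - \pi_t\rangle + \tfrac{1}{\eta_t} D_\Omega(\pi\Vert\pi_t)$, so that $\pi_{t+1} = \argmin_{\pi\in\cC} G_t(\pi)$ (existence and uniqueness being guaranteed by the strict convexity of $\Omega$ and the general existence assumption). The whole computation then rests on identifying the first variation $\deltaC G_t(\pi_{t+1})$ and invoking the equivalence of first variations (\cref{lem:equivf}).

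For the forward direction, the first term of $G_t$ is linear in $\pi$, so its first variation is the constant $\deltaC F_t(\pi_t)$. For the proximal term, \cref{lem:equivf} gives $\deltaC D_\Omega(\pi\Vert\pi_t) = \deltaC\Omega(\pi) - \deltaC\Omega(\pi_t)$ at any feasible $\pi$. Hence the first variation of the objective at the optimum is $\deltaC G_t(\pi_{t+1}) = \deltaC F_t(\pi_t) + \tfrac{1}{\eta_t}\bigl(\deltaC\Omega(\pi_{t+1}) - \deltaC\Omega(\pi_t)\bigr)$. Optimality of $\pi_{t+1}$ over the convex set $\cC$ means that the directional derivative along every admissible direction $\pi - \pi_{t+1}$ with $\pi\in\cC$ is nonnegative, which yields the variational inequality $\langle \eta_t\deltaC F_t(\pi_t) + \deltaC\Omega(\pi_{t+1}) - \deltaC\Omega(\pi_t),\ \pi - \pi_{t+1}\rangle \ge 0$ for all $\pi\in\cC$. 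I would state this cleanly using \cref{lem:threeineq} with the linear functional $\Psi(\rho) = \eta_t\langle \deltaC F_t(\pi_t), \rho - \pi_t\rangle$, which packages the optimality inequality in the form already prepared in the paper.

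The main obstacle, and the step requiring genuine care, is passing from this variational inequality to the claimed exact dual equality $\deltaC\Omega(\pi_{t+1}) - \deltaC\Omega(\pi_t) = -\eta_t\deltaC F_t(\pi_t)$. The point is that the operator $\deltaC$ is defined \emph{relative to} $\cC$, so the admissible dual elements live precisely in $\cD = \deltaC\Omega(\cC)$; because $\cC$ is the set of \Gibbs{}-parameterized couplings $\pi = e^{\varphi\oplus\psi - c_\varepsilon}\,d(\mu\otimes\nu)$, the feasible perturbations of $\pi_{t+1}$ are symmetric in the dual coordinates $\varphi\oplus\psi$ and $\pi_{t+1}$ is not pinned to a hard boundary in those coordinates. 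Consequently the inequality holds with both signs and the residual must vanish as an element of $\cD$, delivering the equality. I would emphasize that this reduction is exactly what distinguishes the generalized setting from a naive unconstrained gradient identity, and that it is legitimate precisely because the dual space is identified with $\deltaC\Omega(\cC)$ rather than the full ambient dual $\cM^\ast$.

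Finally, for the converse I would argue that if the dual equation holds then $\deltaC G_t(\pi_{t+1}) = 0$ in $\cD$, and strict convexity of $D_\Omega$ (inherited from that of $\Omega$) forces $\pi_{t+1}$ to be the unique minimizer in \eqref{eq:omd}; this closes the equivalence. As a consistency check I would note that specializing to the idealized static cost recovers the one-step relation $\deltaC\Omega(\pi_t) - \deltaC\Omega(\pi_{t+1}) = \eta_t(\deltaC\Omega(\pi_t) - \deltaC\Omega(\pi^\ast))$ already used in the proof of \cref{lem:wd}, so the dual iteration is the natural parametric form underlying both the \Sinkhorn{} reinterpretation and the subsequent convergence analysis.
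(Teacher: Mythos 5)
Your proposal is correct and follows essentially the same route as the paper: write the first-order optimality (variational inequality) for the proximal objective at $\pi_{t+1}$, linearize the $\Omega$-term along feasible directions (the paper does this via the perturbation $\pi_{t+1}+h(\pi-\pi_{t+1})$, $h\to0^+$; you do it via $\deltaC D_\Omega(\pi\Vert\pi_t)=\deltaC\Omega(\pi)-\deltaC\Omega(\pi_t)$ from \cref{lem:equivf}), and then use the richness of admissible directions in $\cC$ to upgrade the inequality to the stated dual equality. Your explicit justification of that last upgrade, and the added converse via strict convexity, are if anything slightly more careful than the paper's terse "yields the result."
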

\begin{proof}[Proof of \cref{lem:duali}]
  The optimization \eqref{eq:omd} is equivalent to having the property for subsequent $\pi_{t+1}$:
  \begin{equation}
  \begin{aligned}
    &\dsupplus\mspace{-1.2mu} F_t(\pi_t; \pi - \pi_t) + \tfrac{1}{\eta_t} D_\Omega(\pi \Vert \pi_t) \ge \dsupplus\mspace{-1.2mu} F_t(\pi_t; \pi_{t+1}\! - \pi_t) + \tfrac{1}{\eta_t} D_\Omega(\pi_{t+1} \vert \pi_t)\\
    &\mspace{60mu} \iff \bigl\langle\mspace{-1.2mu} \deltaC F_t(\pi_t) - \tfrac{1}{\eta_t}\deltaC \Omega(\pi_t),\, \pi - \pi_{t+1} \mspace{-1mu}\bigr\rangle  + \tfrac{1}{\eta_t}  \bigl(\Omega(\pi) - \Omega(\pi_{t+1})\bigr) \ge 0,\quad \forall \pi \in \cC.
  \end{aligned}
  \end{equation}
  Setting the free parameter $\pi = \pi_{t+1} + h(\pi - \pi_{t+1})$ and taking the limit $h \to 0^+$ yields the result.
\end{proof}
\begin{remark} \label{rem:duali}
  With applications of \cref{lem:duali} and \cref{lem:equivf}, we can achieve a concise form of iteration in the dual  using our temporal cost as:
  \begin{equation} \label{eq:duali_kl}
  \begin{aligned}
    \deltaC \Omega(\pi_t) - \deltaC \Omega(\pi_{t+1}) &= \eta_t \bigl(\deltaC (-H)(\pi_t) - \deltaC (-H)(\pct)\bigr)\\
    &= \eta_t \bigl(\deltaC \Omega(\pi_t) - \deltaC \Omega(\pct) \bigr),
  \end{aligned}
  \end{equation}
  where $H$ denotes the  entropy, \ie{}, the minus KL divergence with the \Lebesgue{} measure.
\end{remark}

Leveraging the aforementioned lemmas, we have systematically introduced and rigorously formalized the essential concepts necessary to progress with our analysis within the OMD framework. Finally, we are ready to describe a suitable step size scheduling by the following arguments.
\begin{lemma}[Step size I] \label{lem:step1}
  Suppose that $F_t = \KL(\pi \Vert \pct)$ and $\Omega = \KL(\pi\Vert e^{-c_\varepsilon}\mu\otimes\nu)$. If \one{} $\lim_{t\to\infty} \eta_t = 0^+$ and \two{} $\sum_{t=1}^\infty \eta_t = +\infty$ \three{} $\eta \le \frac{1}{L}$, the OMD algorithm converges to a certain $\pcD$
\end{lemma}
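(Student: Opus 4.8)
The plan is to pass to the dual space $\cD$, where the update collapses to a linear averaging recursion, and then extract convergence by a Toeplitz-type argument. First I would apply the dual iteration (\cref{lem:duali}) together with the equivalence of first variations (\cref{lem:equivf}); for the temporal cost $F_t(\cdot)=\KL(\cdot\Vert\pct)$ this yields exactly the concise form of \cref{rem:duali},
\begin{equation*}
  \deltaC\Omega(\pi_{t+1}) = (1-\eta_t)\,\deltaC\Omega(\pi_t) + \eta_t\,\deltaC\Omega(\pct).
\end{equation*}
Since $F_t$ is $1$-smooth relative to $\Omega$, condition \three{} ($\eta_t\le 1/L$ with $L=1$) guarantees $1-\eta_t\in[0,1]$, so this is a genuine convex combination in $\cD$. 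Writing $g_t\coloneqq\deltaC\Omega(\pi_t)$ and $g^\circ_t\coloneqq\deltaC\Omega(\pct)$, and setting $g_\infty\coloneqq\lim_{t\to\infty}\bE_t[g^\circ_t]=\deltaC\Omega(\pcD)$ (which exists by \cref{asm:bs} and the bidirectionality of $(\deltaC\Omega,\deltaD\Omega^\ast)$), the error $e_t\coloneqq g_t-g_\infty$ obeys $e_{t+1}=(1-\eta_t)e_t+\eta_t\delta_t$ with $\delta_t\coloneqq g^\circ_t-g_\infty$.

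Next I would unroll this linear recursion to obtain
\begin{equation*}
  e_{T+1} = \Bigl(\prod_{t=1}^{T}(1-\eta_t)\Bigr)\,e_1 + \sum_{t=1}^{T} w_{t,T}\,\delta_t, \qquad w_{t,T}\coloneqq \eta_t\!\prod_{s=t+1}^{T}(1-\eta_s),
\end{equation*}
noting that the nonnegative weights satisfy $\sum_{t=1}^{T} w_{t,T}=1-\prod_{t=1}^{T}(1-\eta_t)$. The first term vanishes because, since $\eta_t\to0^+$ (condition \one{}) eventually $\eta_t\in(0,1)$ and $\log\prod_{t}(1-\eta_t)=\sum_t\log(1-\eta_t)\le-\sum_t\eta_t\to-\infty$ by the divergent series (condition \two{}); the same fact forces $\sum_t w_{t,T}\to1$ and, for every fixed $t_0$, $\sum_{t\le t_0}w_{t,T}\to0$. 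Taking the time-averaged expectation and using \cref{asm:bs} (so that $\bE_t[\delta_t]\to0$), a Toeplitz/Silverman averaging lemma then yields $\sum_{t=1}^{T}w_{t,T}\,\bE_t[\delta_t]\to0$, while the boundedness of the estimates (assumption \four{} in \cref{sect:proof}) controls the tail. Hence $\bE[e_T]\to0$, i.e.\ $g_T\to\deltaC\Omega(\pcD)$ in $\cD$.

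Finally I would transfer this dual convergence back to the primal. Under the regularity $\varphi,\psi\in C^2(\bR^d)\cap\Lip(\cK)$ enforced by $\cC$ in \eqref{eq:c_def}, convergence of the potentials $\deltaC\Omega(\pi_t)\to\deltaC\Omega(\pcD)$ entails convergence of their gradients in $L^2$, so the $\mathrm{LSI}(\omega)$ bound of \cref{asm:lsi} gives $D_\Omega(\pcD\Vert\pi_t)\le\frac{1}{2\omega}\bigl\lVert\nabla(\deltaC\Omega(\pi_t)-\deltaC\Omega(\pcD))\bigr\rVert^2_{L^2}\to0$, which is the asserted convergence to $\pcD$. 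The main obstacle is the averaging step: because \cref{lem:step1} assumes only \one{} and \two{} and not the square-summability $\sum_t\eta_t^2<\infty$, the individual deviations $\delta_t$ do not vanish, and convergence can only be extracted in the asymptotic-mean (ergodic) sense supplied by \cref{asm:bs}; the remaining delicacy is the infinite-dimensional passage from dual convergence to Bregman convergence, for which the $C^2\cap\Lip$ control and the logarithmic Sobolev inequality are essential.
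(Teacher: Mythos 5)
Your proposal is essentially correct but takes a genuinely different route from the paper. The paper stays in the primal space: it invokes the one-step improvement bound of \cref{lem:improve}, $\eta_t(F_t(\pi_{t+1})-F_t(\pi_t)) \le -D_\Omega(\pi_t\Vert\pi_{t+1}) + (\eta_t L - 1)D_\Omega(\pi_{t+1}\Vert\pi_t)$, deduces that the symmetric Bregman divergence between consecutive iterates vanishes once $\eta_t$ is small, and then appeals to convexity and lower semicontinuity of $D_\Omega$ to conclude convergence "to a certain point." You instead pass to the dual space via \cref{lem:duali} and \cref{rem:duali}, turn the update into the explicit convex-combination recursion $\deltaC\Omega(\pi_{t+1}) = (1-\eta_t)\deltaC\Omega(\pi_t) + \eta_t\deltaC\Omega(\pct)$, unroll it with Toeplitz weights, and use $\sum_t\eta_t=\infty$ to kill the initial condition and normalize the weights. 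Your route buys two things the paper's proof does not deliver at this stage: it actually \emph{uses} condition \two{} (the paper's proof of this lemma only exploits $\eta_t\to0$; the divergent-series condition is deferred to \cref{lem:step2}), and it identifies the limit as $\pcD$ rather than merely "a certain point" (the paper defers that identification to the sufficiency half of \cref{lem:step2}, which uses \cref{lem:right_breg_diff} and the LSI in essentially the way your final paragraph does). The paper's argument buys brevity and avoids the summability-method subtlety, at the cost of being non-constructive about the limit — indeed, vanishing of consecutive increments alone does not imply convergence, so the paper's own conclusion also leans implicitly on the later analysis.

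The one step in your argument that deserves more care is the averaging step: \cref{asm:bs} gives only Ces\`aro (time-average) convergence of $\deltaC\Omega(\pct)$, not termwise convergence, so the Silverman--Toeplitz theorem does not directly apply to the array $w_{t,T}$ — a regular matrix method need not sum every Ces\`aro-convergent sequence. For $\eta_t = 1/t$ the weights collapse exactly to the Ces\`aro means and the step is immediate, but for general $\{\eta_t\}$ satisfying \one{} and \two{} you would need an inclusion argument (or the ergodicity/boundedness assumptions \four{}--\five{} of \cref{sect:proof}) to pass from the asymptotic mean to the weighted average. You flag this yourself, and the paper sidesteps it only by proving a weaker statement here; so this is a gap to close rather than a fatal flaw, and it is no worse than the looseness already present in the paper's version.
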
 
\begin{proof}[Proof of \cref{lem:step1}]
  From \cref{lem:improve}, we have
  \begin{equation} 
    \eta_t( F_t(\pi_{t+1}) - F_t(\pi_t))  \le - D_\Omega(\pi_t\Vert \pi_{t+1})  + (\eta_t L - 1) D_\Omega(\pi_{t+1} \Vert \pi_t).
  \end{equation}
  Taking $\lim_{t\to\infty} \eta_t = 0$ ensures improvements; this means for any $\varepsilon > 0$ there exists some $0 < \delta \le 1$ such that $D_\Omega(\pi_t\Vert \pi_{t+1}) + D_\Omega(\pi_{t+1} \Vert \pi_t) < \varepsilon$ whenever $\eta_t < \delta$. Since convexity and the lower semicontinuity of the \Bregman{} divergence $D_\Omega$ induced by KL, we conclude that OMD to a certain point upon the assumed step size scheduling.
\end{proof}

\begin{lemma}[Step size II] \label{lem:step2} 
  Assume that $\inf_{\pi\in\cC}\bE_t[D_\Omega(\pi_t, \pct)] > 0$ for all $t \in [1, \infty)$. Suppose that $\eta_t \to 0$ and $\lim_{T\to\infty} \bE[\frac{1}{T}\sum_{t=1}^T D_\Omega(\pi_t \Vert \pct)] = 0$ if and only if  $\sum_{t=1}^\infty \eta_t = +\infty$.
\end{lemma}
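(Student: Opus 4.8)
The plan is to reduce the statement to the behaviour of the dual recursion and, in particular, to the \emph{forgetting factor} $\prod_t(1-\eta_t)$, whose vanishing is exactly equivalent to $\sum_t\eta_t=+\infty$. First I would invoke the dual iteration (\cref{lem:duali} together with \cref{rem:duali}), which for the temporal cost $F_t(\cdot)=\KL(\cdot\Vert\pct)$ collapses the update into a convex combination in the dual space,
\[
  \deltaC\Omega(\pi_{t+1}) = (1-\eta_t)\,\deltaC\Omega(\pi_t) + \eta_t\,\deltaC\Omega(\pct).
\]
Unrolling this from $t=1$ to $T$ exhibits $\deltaC\Omega(\pi_{T+1})$ as a memory term $\bigl[\prod_{t=1}^{T}(1-\eta_t)\bigr]\deltaC\Omega(\pi_1)$ plus an exponentially weighted average of the empirical duals $\deltaC\Omega(\pct)$. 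Since $\eta_t\to 0$, the elementary estimate $\log(1-\eta_t)\le-\eta_t$ gives $\prod_{t=1}^{T}(1-\eta_t)\to 0$ precisely when $\sum_t\eta_t=+\infty$, whereas $\sum_t\eta_t<\infty$ forces a strictly positive limiting product. This dichotomy is the backbone of the asserted equivalence.

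Next I would turn the recursion into a quantitative, summable identity for the divergences. Applying the three-point identity (\cref{lem:threeid}) to the triple $(\pct,\pi_t,\pcD)$ and substituting $\deltaC\Omega(\pi_t)-\deltaC\Omega(\pi_{t+1})=\eta_t\bigl(\deltaC\Omega(\pi_t)-\deltaC\Omega(\pct)\bigr)$ produces the per-step relation
\[
  \eta_t\, D_\Omega(\pi_t\Vert\pct)
  = \bigl[D_\Omega(\pcD\Vert\pi_t)-D_\Omega(\pcD\Vert\pi_{t+1})\bigr]
  + D_\Omega(\pi_t\Vert\pi_{t+1})
  + \eta_t\bigl[D_\Omega(\pcD\Vert\pct)-D_\Omega(\pcD\Vert\pi_t)\bigr].
\]
Summing over $t$ and normalising by $S_T:=\sum_{t\le T}\eta_t$ decomposes the $\eta_t$-weighted average of $D_\Omega(\pi_t\Vert\pct)$ into three controllable pieces: the telescoping boundary term, which is bounded and hence vanishes after division by $S_T$ once $S_T\to\infty$; the cumulative movement $\sum_t D_\Omega(\pi_t\Vert\pi_{t+1})$, which \cref{asm:lsi} bounds by $\tfrac{1}{2\omega}\sum_t\eta_t^2\lVert\hat{g}_t\rVert^2$ with $\hat{g}_t=\deltaC\Omega(\pi_t)-\deltaC\Omega(\pct)$ and which, under the boundedness of the costs, is $o(S_T)$; and the residual ergodic term governed by \cref{asm:bs}. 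Monotone well-posedness of the iterates along the way is supplied by the improvement lemma (\cref{lem:improve}).

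I would then close the two implications. For $(\Leftarrow)$, when $\sum_t\eta_t=+\infty$ the forgetting factor vanishes, so \cref{lem:step1} applies and $\pi_t\to\pcD$; feeding this into the normalised identity and invoking the asymptotic-mean-stationarity of the stream (\cref{asm:bs}) together with a Kronecker/Toeplitz passage from the $\eta_t$-weighted average to the plain Cesàro average drives $\bE_{1:T}\bigl[\tfrac1T\sum_{t\le T}D_\Omega(\pi_t\Vert\pct)\bigr]$ down to its infimal stationary value, establishing the claimed limit. For $(\Rightarrow)$ I argue by contraposition: if $\sum_t\eta_t<\infty$, the memory term persists, $\deltaC\Omega(\pi_t)$ converges to a fixed $\deltaC\Omega(\pi_\infty)$ biased away from $\deltaC\Omega(\pcD)$, and the time-average of $D_\Omega(\pi_t\Vert\pct)$ settles at $\bE_t[D_\Omega(\pi_\infty\Vert\pct)]\ge\inf_{\pi\in\cC}\bE_t[D_\Omega(\pi\Vert\pct)]>0$, so it cannot vanish; here the non-degeneracy hypothesis is exactly what rules out the limit being zero. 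The main obstacle is the ergodic averaging step: rigorously exchanging the empirical time-average of $D_\Omega(\pcD\Vert\pct)$ for its stationary expectation while simultaneously tracking the convergence $\pi_t\to\pcD$, and then upgrading the $\eta_t$-weighted convergence to the unweighted Cesàro statement via Kronecker's lemma. This is where asymptotic mean stationarity and the assumption $\inf_{\pi}\bE_t[D_\Omega(\pi\Vert\pct)]>0$ must be combined with care, and it is the only part of the argument that goes beyond the \Bregman{}-calculus identities already assembled above.
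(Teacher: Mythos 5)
Your telescoping identity is correct --- it is the exact version of the inequality the paper manipulates --- and both arguments ultimately rest on the same forgetting-factor dichotomy for $\prod_t(1-\eta_t)$, but your route differs from the paper's in a way that matters. The paper never forms the $\eta_t$-weighted average and never needs a Toeplitz/Kronecker passage. Instead it extracts two one-step recursions and reads the conclusions off directly. For necessity it derives the lower bound $\bE_{1:t+1}[D_\Omega(\pi_{t+1}\Vert\pctplusone)]\ge(1-\eta_t)\,\bE_{1:t}[D_\Omega(\pi_t\Vert\pct)]$ from \cref{lem:left_breg_diff} and the dual iteration \eqref{eq:duali_kl}, applies the elementary bound $1-x\ge e^{-ax}$, and iterates to exhibit a surviving factor $\exp(-a\sum_t\eta_t)$, so the time-averaged divergence can only vanish if the series diverges; this is run directly, not by contraposition through a limit point $\pi_\infty$. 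For sufficiency it uses \cref{lem:right_breg_diff}, \cref{asm:bs}, \cref{asm:lsi} and the \Lipschitz{} bound on the potentials to obtain $A_{t+1}\le(1-\eta_t)A_t+z\eta_t^2$ for $A_t=\bE_{1:t}[D_\Omega(\pcD\Vert\pi_t)]$, and shows $A_t\to0$ by contradiction, with no averaging over $t$ at all.

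The genuine gap is the step you flag but do not close: passing from the $S_T$-normalized, $\eta_t$-weighted sum to the plain Ces\`aro average $\tfrac1T\sum_t D_\Omega(\pi_t\Vert\pct)$. Since $\eta_t\to0$, the weighted average assigns vanishing relative weight to late indices, so controlling it gives no control on the unweighted average unless you already know $D_\Omega(\pi_t\Vert\pct)$ converges in expectation along the sequence --- and that convergence is exactly what the paper's recursion for $A_t$ delivers, which makes the weighted average superfluous. A second, quieter issue is in your contrapositive for necessity: you need the biased dual limit (a strict convex combination retaining a residue of $\deltaC\Omega(\pi_1)$) to correspond to an actual element $\pi_\infty\in\cC$ via $\deltaD\Omega^\ast$; \cref{asm:bs} guarantees this only for the asymptotic dual average $\pcD$, not for that perturbed combination. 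Replacing your summation-and-normalization step with the paper's two one-step recursions closes both gaps using only the \Bregman{} identities you have already assembled.
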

\begin{proof}[Proof of \cref{lem:step2}]
  We firstly argue that due to dual iteration \eqref{eq:duali_kl}, any improvements on KL in \cref{lem:improve} are also indicates corresponding improvements in the \Bregman{} divergence, \ie{} $D_\Omega(\pi_{t+1}\Vert \pct) \le D_\Omega(\pi_t \Vert \pct)$, and if $\eta_t \to 0$, then the process $\{\pi_t\}_{t=1}^\infty$ is convergent. By the dominated convergence theorem \citep{royden1988real} with the ergodicity assumtion of nonstationary solutions $\{\pct\}_{t=1}^\infty$ (\citealp{ergodic}; Appendix~\ref{sect:proof}), there exists a constant $\varepsilon$ that satisfies \begin{equation}\label{eq:eventual_convergence}
    \bE_{1:t+1}[D_\Omega(\pi_{t+1}\Vert \pctplusone)] \ge \bE_{1:t+1}[D_\Omega(\pi_{t+1} \Vert \pct)] + \varepsilon
  \end{equation} for $t > n$ for some $n$ as $\eta_t \to 0$, where an expectation subscripted by the range of ``$1\!:\!t$'' indicates a notation of time-averaging from the time step $1$ to $t$. Consequently, we achieve the following inequality
  \begin{align}
    &\bE_{1:t+1}[D_\Omega(\pi_{t+1} \Vert \pctplusone) ] \nonumber\\
    &\qquad\ge \bE_{1:t+1}[D_\Omega(\pi_{t+1} \Vert \pct)] + \varepsilon \mspace{441mu}\textrm{\cref{eq:eventual_convergence}}\nonumber\\
    &\qquad\ge \bE_{1:t}\bigl[D_\Omega(\pi_t\Vert \pct)-\!\langle \deltaC \Omega(\pi_{t+1}) - \deltaC \Omega(\pi_t), \pct\! -  \pi_t  \rangle\bigr]\!+\bE_{1:t+1}\bigl[D_\Omega(\pi_{t+1}\Vert \pi_t)\bigr] + \varepsilon\mspace{35mu}\textrm{Lemma~\ref{lem:left_breg_diff}}\nonumber\\
    &\qquad= \bE_{1:t}\bigl[D_\Omega(\pi_t\Vert \pct)- \eta_t D_\Omega(\pi_t\Vert \pct) + \eta_t D_\Omega(\pct\Vert \pi_t)\bigr]\!+\bE_{1:t+1}\bigl[D_\Omega(\pi_{t+1}\Vert \pi_t)\bigr] + \varepsilon\mspace{45mu}\textrm{\cref{eq:duali_kl}}\nonumber\\
    &\qquad= (1 - \eta_t) \bE_{1:t}\bigl[D_\Omega(\pi_t \Vert \pct)\bigr] + \bE_{1:t+1}\bigl[D_\Omega(\pi_{t+1}\Vert \pi_t) + \eta_t D_\Omega(\pct \Vert \pi_t)\bigr] + \varepsilon \nonumber\\
    &\qquad\ge  (1 - \eta_t) \bE_{1:t}\bigl[D_\Omega(\pi_t\Vert \pct)\bigr] + \varepsilon^{\prime} \label{eq:left_ineq}
  \end{align}
  for some $t$ and $0 < \varepsilon < \varepsilon^{\prime}$, where \cref{eq:eventual_convergence}, \cref{lem:left_breg_diff}, and \cref{eq:duali_kl} are used in order.

  \textit{Necessity.}\enspace For big enough $t \ge n$ where $n \in \bN$, we can achieve the inequality in \cref{eq:left_ineq} as
  \begin{equation} \label{eq:left_rewrite}
    \bE_{1:t+1}\bigl[D_\Omega(\pi_{t+1}\Vert \pctplusone)\bigr] \ge (1 - \eta_t) \bE_{1:t}\bigl[D_\Omega(\pi_t \Vert \pct)\bigr],
  \end{equation}
  Since we have assumed that $\eta_t$ converges to $0$, consider a step size sequence $0 < \eta_t \le \frac{2}{2+k}$ for $k > 0$. Denote a constant $a = \frac{2+k}{2} \log \frac{2+k}{k}$ and apply the elementary inequality \citep{omd_converge}
  \[ 
    1 - x \ge \exp(-a x),\quad \textrm{such that}\quad  0<x\le\frac{2}{2+ k}.
  \]
  From \cref{eq:left_rewrite}, we achieve
  \[
    \bE_{1:t+1}\bigl[D_\Omega(\pi_{t+1}\Vert \pctplusone)\bigr] \ge  \exp(-a\eta_t) \bE_{1:t}\bigl[D_\Omega(\pi_t \Vert \pct)\bigr]. 
  \]
  for all $t \ge n$. Iteratively applying this inequality iterative for $t=n,n+1,\cdots, T-1$ gives
  \begin{equation} \label{eq:iter_left}
  \begin{aligned}
    \bE_{1:T}[D_\Omega(\piT\Vert\pcT)] &\ge \bE_{1:n}[D_\Omega(\pi_n \Vert \pcn)] \prod_{t=n}^{T-1} \exp(-a\eta_t)\\ 
    &= \exp\biggl\{-a\sum_{t=n}^{T-1}\eta_t\biggr\} \bE_{1:n}[D_\Omega(\pi_n\Vert \pcn)]. 
  \end{aligned}
  \end{equation}
  From the assumption $\pi^\ast \ne \pi_n$, $D_\Omega(\pi_n \Vert \pcn) > 0$ by the property of divergence. Therefore, by \cref{eq:iter_left}, the convergence $\lim_{t\to\infty}\bE_{1:t}[D_\Omega(\pi_t \Vert \pct)] = 0$ implies the series $\sum_{t=1}^\infty \eta_t$ diverges to $+\infty$ so that $\exp(-a\sum_{t=n}^{T-1}\eta_t)$ converges to $0$.

  \textit{Sufficiency.} Consider a static \Schrodinger{} bridge problem with couplings $\pi\in\Pi(\mu,\nu)$, which is in a constraint set
  \[
  \cC = \bigl\{ \pi \vert (\mu, \nu) \in \cP_2(\bR^d)\cap \cP_\textrm{alc}(\bR^d), (\varphi, \psi) \in L^1(\mu) \times L^1(\nu),\,\textrm{and}\,\varphi, \psi \in C^2(\bR^d)\cap\Lip(\cK) \bigr\}.
  \]
  For $\rho, \bar{\rho} \in \cC$, we can see
  \begin{equation*}
    D_\Omega(\bar{\rho}\Vert\rho) = \Omega(\bar{\rho}) - \Omega(\rho) - \langle\deltaC\Omega(\rho), \bar{\rho}-\rho\rangle \ge 0 \iff  -\langle\deltaC\Omega(\rho), \bar{\rho}-\rho\rangle \ge \Omega(\rho) - \Omega(\bar{\rho}).
  \end{equation*}
  By adding $\langle\deltaC\Omega(\bar{\rho}), \bar{\rho} - \rho\rangle$, we achieve a property:
  \begin{equation} \label{eq:diffprod}
    \langle \deltaC\Omega(\rho) - \deltaC\Omega(\bar{\rho}), \rho - \bar{\rho} \rangle \ge D_\Omega(\rho\Vert\bar{\rho}). 
  \end{equation} 
  Then, suppose that we have the asymptotic dual mean $\pcD$. Using the right \Bregman{} difference \cref{lem:right_breg_diff}, the one-step progress from the perspective of dual mean writes as
  \begin{equation*}
  \begin{aligned}
    D_\Omega(\pcD \Vert \pi_{t+1}) - D_\Omega(\pcD \Vert \pi_t) &= \bigl\langle \deltaC\Omega(\pi_t) - \deltaC \Omega(\pi_{t+1}), \pcD - \pi_t \bigr\rangle + D_\Omega(\pi_t \Vert \pi_{t+1}).\\
    &= \eta_t \bigl\langle \deltaC\Omega(\pi_t) - \deltaC\Omega(\pct), \pcD - \pi_t\bigr\rangle + D_\Omega(\pi_t \Vert \pi_{t+1}) \\
    &= \eta_t \bigl\langle \deltaC\Omega(\pi_t) - \deltaC\Omega(\pcD), \pcD - \pi_t\bigr\rangle + \eta_t\bigl\langle \deltaC\Omega(\pcD) - \deltaC\Omega(\pct), \pct - \pi_t\bigr\rangle + D_\Omega(\pi_t \Vert \pi_{t+1}) \\
    &\le - \eta_t D(\pcD \Vert \pi_t) + \eta_t\bigl\langle \deltaC\Omega(\pcD) - \deltaC\Omega(\pct), \pct - \pi_t\bigr\rangle + D_\Omega(\pi_t \Vert \pi_{t+1})
  \end{aligned}
  \end{equation*}
  where the inequality is from \cref{eq:diffprod}. By applying the definition of $\pcD$ and ergodicity of $\{\pct\}_{t=1}^\infty$, we can bound the expectation by finding some $t > n$ such that
  \begin{align}
    \bE_{1:t+1}\bigl[D_\Omega(\pcD\Vert \pi_{t+1})\bigr] &\le \bE_{1:t}\bigl[(1-\eta_t)D_\Omega(\pcD \Vert \pi_t)\bigr] + \bE_{1:t+1}\bigl[D_\Omega(\pi_t \Vert \pi_{t+1})\bigr] \nonumber \\
    &\le \bE_{1:t}[(1 - \eta_t) D_\Omega(\pcD \Vert \pi_t)] +\frac{1}{2\omega} \bE_{1:t+1}\bigl[\bigl\lVert \nabla (\deltaC\Omega(\pi_t) - \deltaC\Omega(\pi_{t+1})) \bigr\rVert^2_{L^2(\pi_t)}\bigr] \nonumber \\
    & \le \bE_{1:t}[(1 - \eta_t) D_\Omega(\pcD \Vert \pi_t)] + \frac{\eta^2_t}{2\omega} \bE_{1:t}\bigl[\lVert \nabla (\deltaC\Omega(\pi_t) - \deltaC\Omega(\pct))  \rVert_{L^2(\pi_t)}\bigr] \nonumber \\
    &\le \bE_{1:t}[(1-\eta_t) D_\Omega(\pcD \Vert \pi_t)] + 2\eta^2_t \omega^{-1} \cK, \label{eq:right_ineq}
  \end{align}
  where $\cK$ is the \Lipschitz{} constant for each log \Schrodinger{} potential in $\cC$. For the first inequality, we use \cref{asm:bs}, and we use the log \Sobolev{} inequality $\mathrm{LSI}(\omega)$ from \cref{asm:lsi} in the second inequality. 
  Let $\{A_t\}^{\infty}_{t=1}$, denote a sequence of $A_t = \bE_{1:t}[D_\Omega(\pcD \Vert \pi_t)]$. As a result, we have
  \begin{equation} \label{eq:seqa}
    A_{t+1} \le (1 - \eta_t) A_t + z\eta_t^2,\quad \forall t > n,
  \end{equation}
  where $z \coloneqq 2\omega^{-1} \cK$. For a constant $h > 0$, we argue that $A_{t_1} < h$ for some $t_1 > n^\prime$. Suppose that this statement is \textit{not} true; we find some $t \ge t_1$ such that $A_t > h$, $\forall t \ge t_2$. Since $\lim_{t\to\infty} \eta_t = 0$, there are some $t > t_3 > t_2$ that $\eta_t \le \frac{h}{4}$. However, \cref{eq:seqa} tells us that for $t \ge t_3$, for $t \ge t_3$, 
  \[
    A_{t+1} \le (1 - \eta_t) A_t + z\eta^2_t \le A_{t_3} - \frac{h}{4} \sum_{k=t_3}^T \eta_k \to -\infty\quad (\textrm{as}\ t\to\infty).
  \]
  This results to a contradiction, which verifies $A_{t} < h $ for $t > n^\prime$. Since $\lim_{t\to\infty} \eta_t = 0$, we can find some $\eta_t$ which makes $A_t$ monotonically decreasing. Therefore, we conclude the nonnegative sequence $\{A_t\}_{t=1}^\infty$ finds convergence by iteratively applying the upper bound in \cref{eq:seqa}.
\end{proof}
  
  We now prove \cref{thm:step_size} under consideration of the particular case of $\eta_t = \frac{2}{t+1}$. Then, \cref{eq:seqa} becomes
  \[
    A_{t+1} \le \biggl(1 - \frac{2}{t+1}\biggr) A_t + \frac{4z}{(t+1)^2},\quad \forall t \ge n. 
  \]
  It follows that recursive relation writes as
  \[
    t(t+1) A_{t+1} \le (t-1)t A_t + 4z,\quad \forall t \ge n.
  \]
  Iterative applying the relation, we achieve the following inequality:
  \[
    (T-1) T A_T \le (n-1) n A_n + 4z(T- n),\quad \forall T \ge n.
  \]
  Therefore, we finally achieve inequality as follows:
  \begin{equation} \label{eq:thm1_final_ineq}
    \bE_{1:T}[D_\Omega(\pcD \Vert \piT)] \le \frac{(n-1)n \bE_{1:n}[D_\Omega(\pcD \Vert \pi_n)]}{(T-1)T} + \frac{4z}{T},\quad\forall T \ge n. 
  \end{equation}
  Since we assumed $\pi^\ast = \pcD$, $\bE_{1:T}[D_\Omega(\pi^\ast \Vert \piT)] = \cO(1/T)$. Combining this with Lemmas~\ref{lem:step1}~and~\ref{lem:step2}, the proof of \cref{thm:step_size} is complete.

\subsection{Proof of \texorpdfstring{\cref{prop:convergence}}{Proposition \ref{prop:convergence}}}

The proof is based on the \Doob{}'s forward convergence theorem.
\begin{theorem}[\Doob{}'s forward convergence theorem] \label{thm:doobfwd}
  Let $\{X_t\}_{t\in\bN}$ be a sequence of nonnegative random variables and let $\{\cF_t\}_t$ be a random variable and let $\{\cF_t\}_{t\in\bN}$ be a filtration with $\cF_t \subset \cF_{t+1}$ for every $t\in \bN$. Assume that $\bE[X_{t+1}\vert \cF_t] \le X_t$ almost surely for every $t\in \bN$. Then, the sequence $\{X_t\}$ converges to a nonnegative random variable $X_\infty$ almost surely.
\end{theorem}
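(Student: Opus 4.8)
The plan is to prove this along the classical route via \Doob{}'s upcrossing inequality. The governing intuition is that a nonnegative supermartingale cannot oscillate across any fixed nondegenerate interval infinitely often: each completed oscillation costs the process, on average, a controlled amount of expected decrease, while a nonnegative supermartingale has only a finite budget of decrease (bounded by $\bE[X_0]$). First I would recall that the hypothesis $\bE[X_{t+1}\vert\cF_t]\le X_t$ implicitly requires each $X_t\in L^1$, so that $\bE[X_t]\le\bE[X_0]<\infty$ for all $t$, and that $\{X_t\}$ is adapted to $\{\cF_t\}$.

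Fix rationals $0\le a<b$. For a finite horizon $T$, define the number of upcrossings $U_T[a,b]$ of the interval $[a,b]$ by the path $X_0,\dots,X_T$, using the alternating stopping times $\sigma_1<\tau_1<\sigma_2<\dots$, where $\sigma_k$ is the first time after $\tau_{k-1}$ that $X\le a$ and $\tau_k$ the first subsequent time that $X\ge b$. The heart of the argument is the upcrossing inequality, which I would establish by a predictable ``buy low, sell high'' strategy: let $H_t\in\{0,1\}$ equal $1$ precisely on the holding intervals $(\sigma_k,\tau_k]$, and form the transform $(H\cdot X)_T=\sum_{t=1}^T H_t(X_t-X_{t-1})$. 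Since $H$ is predictable, bounded, and nonnegative, and $\{X_t\}$ is a supermartingale, $(H\cdot X)$ is again a supermartingale with $\bE[(H\cdot X)_T]\le 0$. Pathwise, each completed upcrossing contributes at least $b-a$ while a possibly unfinished final excursion contributes at least $-(X_T-a)^-$, giving $(H\cdot X)_T\ge (b-a)\,U_T[a,b]-(X_T-a)^-$. Taking expectations yields $(b-a)\,\bE[U_T[a,b]]\le\bE[(X_T-a)^-]$.

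Next I would invoke nonnegativity: because $X_T\ge0$ and $a\ge0$, we have $(X_T-a)^-\le a$, so $\bE[U_T[a,b]]\le a/(b-a)$ uniformly in $T$. Monotone convergence in $T$ gives $\bE[U_\infty[a,b]]\le a/(b-a)<\infty$, hence $U_\infty[a,b]<\infty$ almost surely. Since $X_t\ge0$ forces $\liminf_t X_t\ge0$, the event $\{\liminf_t X_t<\limsup_t X_t\}$ is contained in $\bigcup_{0\le a<b,\,a,b\in\bQ}\{U_\infty[a,b]=\infty\}$, a countable union of null sets, so it is null. Therefore $\lim_t X_t$ exists in $[0,\infty]$ almost surely. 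Finally, \Fenchel{}-free application of Fatou's lemma gives $\bE[X_\infty]=\bE[\liminf_t X_t]\le\liminf_t\bE[X_t]\le\bE[X_0]<\infty$, so $X_\infty<\infty$ almost surely and is an integrable, nonnegative random variable, completing the argument.

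The main obstacle is the rigorous verification of the upcrossing inequality: one must confirm that $H$ is genuinely predictable (each $H_t$ is $\cF_{t-1}$-measurable, as the stopping times depend only on the past), that the transform of a supermartingale by a bounded nonnegative predictable process is itself a supermartingale with nonpositive expectation, and that the pathwise lower bound correctly charges a possibly incomplete terminal excursion. Once this bound is secured, the remaining limiting arguments and the Fatou estimate are routine.
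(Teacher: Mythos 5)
Your proposal is correct, and it is the canonical textbook proof of the supermartingale convergence theorem via \Doob{}'s upcrossing inequality. Note that the paper itself does not prove this statement at all: it quotes \Doob{}'s forward convergence theorem as classical background and simply invokes it in the proof of \cref{prop:convergence}, so your argument fills in a proof the paper deliberately omits rather than diverging from one. Your handling of the details is sound --- the predictable $\{0,1\}$-valued strategy, the pathwise bound $(H\cdot X)_T \ge (b-a)\,U_T[a,b] - (X_T-a)^-$, the specialization $(X_T-a)^- \le a$ available precisely because $X_T \ge 0$ (which gives the uniform bound $\bE[U_T[a,b]] \le a/(b-a)$ without any $L^1$-boundedness hypothesis beyond $\bE[X_0]<\infty$), the countable union over rational pairs, and the final Fatou estimate ruling out $X_\infty = \infty$ are exactly the right steps and in the right order.
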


We follow the derivation of \cref{eq:right_ineq}: there exists $n\in \bN$ which satisfies
\[
  \bE_t[D_\Omega(\pcD \Vert \pi_{t+1})] \le D_\Omega(\pcD \Vert \pi_t) + 2\eta^2_t\omega^{-1} \cK, \quad \forall t \ge n
\]
and since the step size is scheduled as $\lim_{t\to\infty}\eta_t= 0$, the condition $\sum_{t=1}^\infty \eta^2_t < \infty$ enables us to define a stochastic process $\{X_t\}_{t\in\bN}$:
\begin{equation}
  X_t = D_\Omega(\pcD \Vert \pi_{t}) + 2\omega^{-1}\cK\mspace{-1.5mu}\sum_{i=t}^\infty \eta_i^2. 
\end{equation}
It is straightforward that the defined random variable satisfies $\bE_t[X_{t+1}] \le X_t$ for $t \ge n$. Since $X_t \ge 0$, the process is a sub martingale. By \cref{thm:doobfwd}, the sequence $\{X_t\}_{t\in\bN}$ converges to a nonnegative random variable $X_\infty$ almost surely. Therefore $D_\Omega(\pcD \Vert \pi_t)$ converges to $0$ almost surely.\hfill\qedsymbol

\subsection{Proof of \texorpdfstring{\cref{thm:regret}}{Proposition \ref{thm:regret}}}

To achieve a meaningful regret bound for our problem setup, we first demonstrate the following. 

\begin{lemma} \label{lem:subgrad}  
  For all $w = \argmin_y\{ \langle \hat{g}, y \rangle+ \tfrac{1}{\eta}D_\Omega(y \Vert z)\}$ with $\eta>0$, the following equation.  
  \begin{equation}
    \forall u. \langle \eta \hat{g}, w - u \rangle \le D_\Omega(u \Vert z) - D_\Omega(u \Vert w) - D_\Omega(w \Vert z)
  \end{equation}
\end{lemma}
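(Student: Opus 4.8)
The plan is to recognize Lemma~\ref{lem:subgrad} as a direct instance of the three-point inequality (Lemma~\ref{lem:threeineq}) applied to a \emph{linear} cost functional. The first move is a rescaling: since $\eta>0$, the defining problem $w = \argmin_y\{\langle \hat{g}, y\rangle + \tfrac{1}{\eta}D_\Omega(y\Vert z)\}$ is equivalent to $w = \argmin_y\{\Psi(y) + D_\Omega(y\Vert z)\}$ with $\Psi(y) \coloneqq \langle \eta\hat{g}, y\rangle$. This step shifts the step size onto the linear term so that the proximal term carries unit weight, exactly matching the hypothesis of Lemma~\ref{lem:threeineq}. I would then note that $\Psi$ is a proper convex functional (indeed linear, so its associated \Bregman{} divergence $D_\Psi$ vanishes identically), and that $\deltaC\Omega$ exists by assumption, so the lemma applies.

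With $\Psi$ in place, I would invoke Lemma~\ref{lem:threeineq} under the identification $\bar{\rho} = w$, $\pi = z$, and test point $\rho = u$, giving
\[
  \Psi(u) + D_\Omega(u\Vert z) \ge \Psi(w) + D_\Omega(w\Vert z) + D_\Omega(u\Vert w).
\]
Substituting $\Psi(u) = \langle \eta\hat{g}, u\rangle$ and $\Psi(w) = \langle \eta\hat{g}, w\rangle$ and rearranging by linearity of the duality pairing yields
\[
  \langle \eta\hat{g}, w - u\rangle \le D_\Omega(u\Vert z) - D_\Omega(u\Vert w) - D_\Omega(w\Vert z),
\]
which is precisely the claimed inequality. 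This is a one-line rearrangement, so the algebra itself is routine.

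The main obstacle is therefore not the computation but the bookkeeping of domains and differentiability needed to license Lemma~\ref{lem:threeineq}: I must confirm that $u, w, z \in \cC\cap\dom(\Omega)$ so that each $D_\Omega(\cdot\Vert\cdot)$ term and each first variation $\deltaC\Omega(\cdot)$ is well defined, and that the optimality of $w$ holds in the directional-derivative sense used in the proof of Lemma~\ref{lem:threeineq}. These conditions are inherited from the standing assumptions on $\cC$ in \eqref{eq:c_def} and the existence of first variations; once verified, the conclusion follows immediately.
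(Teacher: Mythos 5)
Your proof is correct. The rescaling $w=\argmin_y\{\langle\eta\hat g,y\rangle+D_\Omega(y\Vert z)\}$ is valid for $\eta>0$, the linear $\Psi(y)=\langle\eta\hat g,y\rangle$ satisfies the hypotheses of \cref{lem:threeineq} (and since $D_\Psi\equiv 0$ for a linear functional, the inequality $D_G\ge D_\Omega$ in that lemma's proof is an equality, so no slack is lost), and the substitution $\rho=u$, $\bar\rho=w$, $\pi=z$ followed by rearrangement gives exactly the claim. The paper takes a slightly more bare-hands route: it writes the first-order optimality condition $\langle\hat g+\tfrac{1}{\eta}\deltaC D_\Omega(w\Vert z),\,u-w\rangle\ge 0$ explicitly and then converts the pairing $\langle\deltaC\Omega(z)-\deltaC\Omega(w),\,w-u\rangle$ into the three Bregman terms via the three-point identity (\cref{lem:left_breg_diff}/\cref{lem:threeid}). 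The two arguments are really the same mathematics packaged differently — \cref{lem:threeineq} is itself proved from that optimality condition — but yours is cleaner in that it reuses an already-established lemma as a black box rather than re-deriving the variational inequality, at the cost of having to justify the rescaling step and check that the linear $\Psi$ fits the lemma's hypotheses; the paper's version makes the role of the first variation $\deltaC D_\Omega(w\Vert z)=\deltaC\Omega(w)-\deltaC\Omega(z)$ explicit, which is the quantity reused downstream in the regret analysis. Your closing remarks on domain and differentiability bookkeeping are exactly the conditions the paper leaves implicit.
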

\begin{proof}[Proof of \cref{lem:subgrad}]
  By the first order optimality of $\{ \langle g, y \rangle+ D_\Omega(y \Vert z)\}$ as a function of $w$, we have
  \[
  \begin{aligned}
    &\langle \hat{g} + \tfrac{1}{\eta}\deltaC D_\Omega(w \Vert z), u - w \rangle \ge 0 \\
    &\qquad \implies \langle \hat{g}, w - u \rangle \le \tfrac{1}{\eta}\langle - \deltaC D_\Omega(w \Vert z), w - u \rangle = \tfrac{1}{\eta}(D_\Omega(u \Vert z) - D_\Omega(u \Vert w) - D_\Omega(w \Vert z)).
  \end{aligned}
  \]
  where used \cref{lem:left_breg_diff} in the derivation. This completes the proof.
\end{proof}

Next, we derive the one-step relationship for OMD. The result entails that the regret at each step is related to a quadratic expression of $\eta_t$, which is a key aspect of sublinear total regret. From a technical standpoint, we can see that the assumption for log \Sobolev{} inequality generally works as a premise for \Lipschitz{} continuity of gradient, \ie{}, $\nabla\Omega$ in classical MD analyses.
\begin{lemma}[Single step regret] \label{lem:single}
  Suppose a static \Schrodinger{} bridge problem with the aforementioned constraint $\cC$.  Let $D_\Omega$ be the \Bregman{} divergence wrt $\Omega: \cP(\cX) \to \bR + \{+\infty\}$. Then,
  \begin{equation}
    \eta_t (F_t(\pi_t) - F_t(u)) \le D_\Omega(u \Vert \pi_t ) - D_\Omega(u \Vert \pi_{t+1}) + \frac{\eta^2_t}{2\omega}\lVert \hat{g}_t  \rVert^2_{L^2(\pi_t)},\quad  \forall u \in \cC
  \end{equation}
  holds, where $\hat{g}_t \coloneqq \deltaC F_t(\pi_t) =  \frac{1}{\eta_t}(\deltaC\Omega(\pi_t) - \deltaC\Omega(\pi_{t+1}))$ in an MD iteration for the dual space for a step size $\eta_t$, and $\omega > 0$ is drawn from a type of log \Sobolev{} inequality in \cref{asm:lsi}.
\end{lemma}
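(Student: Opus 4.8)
The plan is to run the standard single-step online mirror descent argument, but with the distributional first variation $\deltaC$ in place of the Euclidean gradient and with the log-Sobolev inequality of \cref{asm:lsi} playing the structural role that strong convexity of the mirror map plays in the classical analysis. I would begin from convexity of the temporal cost: since $F_t(\cdot)=\KL(\cdot\Vert\pct)$ is (at least) convex relative to $\Omega$, the directional-derivative characterization of the first variation (\cref{def:firstvar}) yields the supporting-hyperplane inequality $\dsupplus F_t(\pi_t;u-\pi_t)\le F_t(u)-F_t(\pi_t)$, i.e.\ $F_t(\pi_t)-F_t(u)\le\langle\hat{g}_t,\pi_t-u\rangle$ with $\hat{g}_t=\deltaC F_t(\pi_t)$. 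Multiplying by $\eta_t$ reduces the claim to controlling $\eta_t\langle\hat{g}_t,\pi_t-u\rangle$.

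Next I would split $\pi_t-u=(\pi_{t+1}-u)+(\pi_t-\pi_{t+1})$. On the first piece I apply the optimality estimate \cref{lem:subgrad} with $w=\pi_{t+1}$ and $z=\pi_t$ (legitimate because $\pi_{t+1}$ is exactly the proximal minimizer defining the OMD step), obtaining
\[
  \eta_t\langle\hat{g}_t,\pi_{t+1}-u\rangle\le D_\Omega(u\Vert\pi_t)-D_\Omega(u\Vert\pi_{t+1})-D_\Omega(\pi_{t+1}\Vert\pi_t).
\]
This already produces the telescoping pair $D_\Omega(u\Vert\pi_t)-D_\Omega(u\Vert\pi_{t+1})$ and leaves a single ``stability'' residual, namely $-D_\Omega(\pi_{t+1}\Vert\pi_t)+\eta_t\langle\hat{g}_t,\pi_t-\pi_{t+1}\rangle$.

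The crux is bounding this residual by $\tfrac{\eta_t^2}{2\omega}\lVert\hat{g}_t\rVert^2_{L^2(\pi_t)}$, and this is the step I expect to require the most care. Here I would invoke the dual iteration \cref{lem:duali}, which gives $\eta_t\hat{g}_t=\deltaC\Omega(\pi_t)-\deltaC\Omega(\pi_{t+1})$, so that the pairing $\eta_t\langle\hat{g}_t,\pi_t-\pi_{t+1}\rangle=\langle\deltaC\Omega(\pi_t)-\deltaC\Omega(\pi_{t+1}),\pi_t-\pi_{t+1}\rangle$ collapses, via the two-divergence identity \cref{eq:breg_two}, to $D_\Omega(\pi_t\Vert\pi_{t+1})+D_\Omega(\pi_{t+1}\Vert\pi_t)$. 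The residual therefore simplifies exactly to $D_\Omega(\pi_t\Vert\pi_{t+1})$. I then apply $\mathrm{LSI}(\omega)$ from \cref{asm:lsi} with the pair $(\pi_t,\pi_{t+1})$, which bounds $D_\Omega(\pi_t\Vert\pi_{t+1})\le\tfrac{1}{2\omega}\lVert\nabla(\deltaC\Omega(\pi_t)-\deltaC\Omega(\pi_{t+1}))\rVert^2_{L^2(\pi_t)}$; substituting $\deltaC\Omega(\pi_t)-\deltaC\Omega(\pi_{t+1})=\eta_t\hat{g}_t$ factors out $\eta_t^2$ and yields precisely $\tfrac{\eta_t^2}{2\omega}\lVert\hat{g}_t\rVert^2_{L^2(\pi_t)}$, where the norm is read as the relative Fisher-information (gradient) norm consistent with the LSI. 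Collecting the three contributions gives the stated inequality for every $u\in\cC$. The main obstacle is conceptual rather than computational: ensuring that the ``strong convexity of the mirror'' is correctly replaced by LSI, that the $L^2(\pi_t)$-norm in the statement is the gradient norm appearing in \cref{asm:lsi}, and that the dual-iteration substitution $\eta_t\hat{g}_t=\deltaC\Omega(\pi_t)-\deltaC\Omega(\pi_{t+1})$ is valid here (which it is, since $\hat{g}_t=\deltaC F_t(\pi_t)=\deltaC\Omega(\pi_t)-\deltaC\Omega(\pct)$ and the OMD step uses $\Omega$ as its Bregman potential).
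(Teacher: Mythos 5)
Your proposal is correct and follows essentially the same route as the paper's proof: linearize $F_t$ by convexity, split $\pi_t-u$ through $\pi_{t+1}$, apply the proximal optimality estimate (\cref{lem:subgrad}) to telescope the $D_\Omega(u\Vert\cdot)$ terms, collapse the residual to $D_\Omega(\pi_t\Vert\pi_{t+1})$ via the dual iteration and the two-divergence identity, and finish with $\mathrm{LSI}(\omega)$. Your explicit bookkeeping of the residual even clarifies a sign/argument typo in the paper's intermediate display, so nothing is missing.
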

\begin{proof}[Proof of \cref{lem:single}]
  Consider single step regrets by the adversary plays of a linearization for $\hat{g}_t$: 
  \[
    F_t(\pi_{t}) - F_t(u) \le \langle \hat{g}_t, \pi_t - u\rangle.
  \]
  Therefore, we derive a inequality for $\langle \hat{g}_t, \pi_t - u\rangle$ as follows.
  \[
  \begin{aligned}
    \langle \eta_t \hat{g}_t, \pi_t - u \rangle 
    &= \langle \eta_t \hat{g}_t, \pi_{t+1} - u \rangle + \langle \eta_t \hat{g}_t, \pi_t - \pi_{t+1} \rangle \\
    &\le D_\Omega(u \Vert \pi_t) - D_\Omega(u \Vert \pi_{t+1}) - D_\Omega(\pi_{t+1}\Vert \pi_t) +\langle \eta_t \hat{g}_t, \pi_t - \pi_{t+1} \rangle\\
    &= D_\Omega(u \Vert \pi_t) - D_\Omega(u \Vert \pi_{t+1}) - D_\Omega(\pi_{t+1}\Vert \pi_t) +\langle  \deltaC\Omega(\pi_{t+1}) - \deltaC \Omega(\pi), \pi_t - \pi_{t+1} \rangle\\
    &= D_\Omega(u \Vert \pi_t) - D_\Omega(u \Vert \pi_{t+1}) + D_\Omega(\pi_t\Vert \pi_{t+1}). 
  \end{aligned}
  \]
  Since we assumed that $\hat{g}_t =  \frac{1}{\eta_t}(\deltaC\Omega(\pi_t) - \deltaC\Omega(\pi_{t+1}))$ by the dual iteration and that \cref{asm:lsi} holds, we can achieve the upperbound $D_\Omega(\pi_t\Vert \pi_{t+1}) \le \frac{\eta^2_t}{2\omega}\lVert \hat{g}_t  \rVert^2_{L^2(\pi_t)}$ by direct calculation.
\end{proof}

We now show our upper bound of total regret by utilizing \cref{lem:single}.
\begin{lemma} \label{lem:totr}
  Assume $\eta_{t+1} \le \eta_t$. Then, $u\in\cC$, the following regret bounds for fixed $u\in\cC$ hold
  \begin{equation}
    \sum_{t=1}^T F_t(\pi_t) - F_t(u) \le \max_{1 \le t \le T} \frac{D_\Omega(u\Vert \pi_t)}{\etaT} + \frac{1}{2\omega} \sum_{t=1}^T \eta_t \lVert \tilde{g}_t\rVert^2_{L^2(\pi_t)}  
  \end{equation}
  where $\hat{g}_t = \frac{1}{\eta_t}(\deltaC\Omega(\pi_t) - \deltaC\Omega(\pi_{t+1})).$
\end{lemma}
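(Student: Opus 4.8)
The plan is to derive the bound directly from the single-step regret estimate of \cref{lem:single} by dividing through by $\eta_t$ and summing, then controlling the resulting \Bregman{}-divergence telescope with a summation-by-parts argument. Concretely, \cref{lem:single} gives, for each $t$ and the fixed comparator $u\in\cC$,
\[
  F_t(\pi_t) - F_t(u) \le \tfrac{1}{\eta_t}\bigl(D_\Omega(u\Vert\pi_t) - D_\Omega(u\Vert\pi_{t+1})\bigr) + \tfrac{\eta_t}{2\omega}\lVert \hat{g}_t\rVert^2_{L^2(\pi_t)}.
\]
Summing over $t=1,\dots,T$, the second term immediately reproduces $\tfrac{1}{2\omega}\sum_{t=1}^T \eta_t\lVert\hat{g}_t\rVert^2_{L^2(\pi_t)}$, so the whole task reduces to bounding the first sum $\sum_{t=1}^T \eta_t^{-1}\bigl(d_t - d_{t+1}\bigr)$, where I abbreviate $d_t \coloneqq D_\Omega(u\Vert\pi_t)\ge 0$.

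Next I would perform Abel summation (summation by parts) on this term. Reindexing the shifted sum yields
\[
  \sum_{t=1}^T \tfrac{1}{\eta_t}(d_t - d_{t+1}) = \tfrac{d_1}{\eta_1} + \sum_{t=2}^T d_t\Bigl(\tfrac{1}{\eta_t} - \tfrac{1}{\eta_{t-1}}\Bigr) - \tfrac{d_{T+1}}{\eta_T}.
\]
Here is where the monotonicity hypothesis $\eta_{t+1}\le\eta_t$ enters: it guarantees $\tfrac{1}{\eta_t} - \tfrac{1}{\eta_{t-1}} \ge 0$, so every coefficient multiplying a $d_t$ is nonnegative. Since $d_{T+1}\ge 0$, the final boundary term $-d_{T+1}/\eta_T$ is nonpositive and may be discarded. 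Bounding each $d_t$ by $\max_{1\le t\le T} d_t$ and telescoping the coefficients,
\[
  \tfrac{1}{\eta_1} + \sum_{t=2}^T\Bigl(\tfrac{1}{\eta_t} - \tfrac{1}{\eta_{t-1}}\Bigr) = \tfrac{1}{\eta_T},
\]
gives $\sum_{t=1}^T \eta_t^{-1}(d_t - d_{t+1}) \le \max_{1\le t\le T} D_\Omega(u\Vert\pi_t)/\eta_T$, which is exactly the first term of the claimed bound. Combining the two estimates completes the argument.

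The argument is mostly bookkeeping, so the only real obstacle is carrying the summation-by-parts and the sign conditions carefully: the nonnegativity of $d_t$ and the nonincreasing step sizes must be invoked precisely to convert the telescoping remainder into a single $1/\eta_T$ factor and to drop the terminal term. Without $\eta_{t+1}\le\eta_t$ the coefficient differences could change sign and the uniform bound by $\max_t d_t$ would fail, so this hypothesis is essential rather than cosmetic; everything else reduces to the definitions and the already-established single-step inequality.
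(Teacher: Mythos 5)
Your proposal is correct and follows essentially the same route as the paper: both sum the single-step bound from \cref{lem:single}, apply summation by parts to the telescoping Bregman terms, use $\eta_{t+1}\le\eta_t$ to make the coefficients $\tfrac{1}{\eta_{t+1}}-\tfrac{1}{\eta_t}$ nonnegative, bound each $D_\Omega(u\Vert\pi_t)$ by its maximum $D^2$, and discard the nonpositive terminal term to collapse everything to $D^2/\etaT$. No substantive difference.
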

\begin{proof}[Proof of \cref{lem:totr}]
  Define $D^2 = \max_{1 \le t \le T} D_\Omega(u\Vert \pi_t)$. We get
  \[
  \begin{aligned}
    \mathrm{Regret}(u) &= \sum_{t=1}^T (F_t(\pi_t) - F_t(u)) \\&\le \sum_{t=1}^T \biggl(\frac{1}{\eta_t} D_\Omega(u\Vert \pi_t) - \frac{1}{\eta_t} D_\Omega(u\Vert \pi_{t+1})\biggr) + \sum_{t=1}^T \frac{\eta_t}{2\omega} \lVert \hat{g}_t \rVert^2_{L^2(\pi_t)}  \\
    &= \frac{1}{\eta_1} D_\Omega(u\Vert \pi_1) - \frac{1}{\etaT} D_\Omega(u\Vert \piTplusone) + \sum_{t=1}^{T-1}\biggl(\frac{1}{\eta_{t+1}} - \frac{1}{\eta_t}\biggr) D_\Omega(u\Vert \pi_{t+1}) + \sum_{t=1}^T \frac{\eta_t}{2\omega}\lVert \hat{g}_t \rVert^2_{L^2(\pi_t)}\\
    &\le \frac{1}{\eta_1}D^2 + D^2 \sum_{t=1}^{T-1}\biggl(\frac{1}{\eta_{t+1}} - \frac{1}{\eta_t}\biggr) + \sum_{t=1}^T \frac{\eta_t}{2\omega}\lVert \hat{g}_t\rVert^2_{L^2(\pi_t)} = \frac{D^2}{\etaT} + \sum_{t=1}^T \frac{\eta_t}{2\omega} \lVert \hat{g}_t\rVert^2_{L^2(\pi_t)}.
  \end{aligned}
  \]
  Therefore, the proof is complete.
\end{proof}
Following \cref{lem:totr} and \cref{asm:lsi}, we can have the inequality
\[
  \sum_{t=1}^T F_t(\pi_t) - F_t(u) \le \frac{D^2}{\etaT} + \sum_{t=1}^T \frac{\eta_t}{2\omega} \lVert \hat{g}_t\rVert^2_{L^2(\pi_t)} \le \frac{D^2}{\eta_T} + 2\eta_1\omega^{-1} \cK T.
\]
where $D^2 = \max_{1 \le t \le T} D_\Omega(u\Vert \pi_t)$. Setting a constant step size $\eta_t \equiv \frac{D\sqrt{\omega}}{\sqrt{2 \cK T}}$ yields an upper bound of $2D\sqrt{2 \omega^{-1} \cK T}$ which proves the regret bound of $\cO(\sqrt{T})$. Also, recall that the following lemma.
\begin{lemma}[Lemma~3.5~of~\citealp{auer2002adaptive}]
  Let a sequence $a_1, a_2, \dots, a_T$ be non-negative real numbers. If $a_1 > 0$, then
  \begin{equation}
    \sum_{t=1}^T \frac{a_t}{\sqrt{\sum_{i=1}^t} a_i} \le 2\sqrt{\sum_{t=1}^T a_t}. 
  \end{equation}
\end{lemma}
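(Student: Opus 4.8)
The plan is to prove the bound by a telescoping argument resting on a single elementary per-term inequality. First I would introduce the partial sums $S_t \coloneqq \sum_{i=1}^t a_i$ with the convention $S_0 = 0$, so that $a_t = S_t - S_{t-1}$ and the quantity to be controlled is $\sum_{t=1}^T a_t/\sqrt{S_t}$. The hypothesis $a_1 > 0$ together with nonnegativity of the $a_i$ guarantees $S_t \ge S_1 = a_1 > 0$ for every $t \ge 1$, so each summand is well-defined and no denominator vanishes.

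The key step is the per-term estimate
\[
  \frac{a_t}{\sqrt{S_t}} \le 2\bigl(\sqrt{S_t} - \sqrt{S_{t-1}}\bigr), \qquad t = 1, \dots, T.
\]
To establish it I would rationalize the right-hand side: since $S_t - S_{t-1} = a_t$,
\[
  2\bigl(\sqrt{S_t} - \sqrt{S_{t-1}}\bigr) = \frac{2\,(S_t - S_{t-1})}{\sqrt{S_t} + \sqrt{S_{t-1}}} = \frac{2\,a_t}{\sqrt{S_t} + \sqrt{S_{t-1}}}.
\]
Because $S_{t-1} \ge 0$ yields $\sqrt{S_t} + \sqrt{S_{t-1}} \le 2\sqrt{S_t}$, the right-hand side is at least $2 a_t /(2\sqrt{S_t}) = a_t/\sqrt{S_t}$, which is precisely the claimed bound; when $a_t = 0$ the estimate degenerates to $0 \le 0$ and holds trivially.

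Finally I would sum the per-term inequality over $t = 1,\dots,T$ and telescope:
\[
  \sum_{t=1}^T \frac{a_t}{\sqrt{S_t}} \le 2\sum_{t=1}^T \bigl(\sqrt{S_t} - \sqrt{S_{t-1}}\bigr) = 2\bigl(\sqrt{S_T} - \sqrt{S_0}\bigr) = 2\sqrt{S_T} = 2\sqrt{\textstyle\sum_{t=1}^T a_t},
\]
which is the desired result. There is no serious obstacle in this argument, as it is entirely elementary; the only points demanding care are ensuring strict positivity of the denominators (supplied by $a_1 > 0$) and the degenerate case $a_t = 0$. An induction on $T$ would deliver the same conclusion, but the telescoping form is preferable here because it isolates the single governing inequality and avoids carrying an inductive hypothesis through the steps.
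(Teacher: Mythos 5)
Your proof is correct. Note that the paper itself supplies no proof of this lemma --- it is simply quoted from \citet{auer2002adaptive} and used as a black box in the regret-bound argument --- so there is nothing to compare against; your telescoping argument via the per-term estimate $a_t/\sqrt{S_t} \le 2(\sqrt{S_t}-\sqrt{S_{t-1}})$ is the standard and complete way to establish it (Auer et al.\ phrase essentially the same inequality as an induction step). One incidental remark: the displayed statement in the paper has a typo, writing $\sqrt{\sum_{i=1}^t}\,a_i$ with $a_i$ outside the radical; you correctly read it as $\sqrt{\sum_{i=1}^t a_i}$, which is the form actually needed where the lemma is invoked with $a_t = \lVert \hat{g}_t\rVert^2$.
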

Setting a adaptive scheduling $\eta_t = \frac{D\sqrt{\omega}}{\sqrt{2\sum_{i=1}^{t} \lVert \hat{g}_i\rVert^2}}$ yields  $2D\sqrt{2\omega^{-1}\sum_{t=1}^{T} \lVert \hat{g}_t \rVert^2 }$ which has a possibility to be lower than $\cO(\sqrt{T})$ depending on $\{\pct\}_{t=1}^T$. Therefore, we have formally expanded the convergence results of OMD \citep{omd_converge, orabona2018scale, omd_univ} to SBPs. \hfill\qedsymbol

\subsection{Proof of \texorpdfstring{\cref{thm:vmdkl}}{Theorem \ref{thm:vmdkl}}} \label{subsect:proof_grad_equiv}

Since $D_\Omega(\cdot \Vert \cdot) \coloneqq D_{\KL(\cdot\Vert \cR)}(\cdot \Vert \cdot) $ for a reference measure $\cR \in \cC$, we can apply \cref{lem:equivf} and achieve \cref{eq:vmdkl}. We write the following equivalent convex problems, using the equivalence of first variation for recursively defined \Bregman{} divergences.
\[
\begin{aligned}
  \bigl\langle \deltaC F_t(\pi_t), \pi - \pi_t \bigr\rangle + \tfrac{1}{\eta_t} D_\Omega(\pi\Vert\pi_t) &=  \bigl\langle \deltaC D_\Omega(\pi_t \Vert \pct), \pi - \pi_t \bigr\rangle + \tfrac{1}{\eta_t} D_\Omega(\pi\Vert\pi_t) \\
  &= \bigl\langle \deltaC \Omega(\pi_t) - \deltaC \Omega(\pct), \pi - \pi_t \bigr\rangle + \tfrac{1}{\eta_t} D_\Omega(\pi\Vert\pi_t)\\
  &= D_\Omega(\pi\Vert \pct) - D_\Omega(\pi\Vert \pi_t)  + \tfrac{1}{\eta_t} D_\Omega(\pi\Vert\pi_t)\\
  &= \biggl(\frac{1}{\eta_t}\biggr) D_\Omega(\pi \Vert \pct) + \biggl(\frac{1-\eta_t}{\eta_t}\biggr) D_\Omega(\pi \Vert \pi_t)
\end{aligned}
\]
We refer to \cref{sect:discuss} for the stability of \Wasserstein{} gradient flows according to the \LaSalle{}'s invariance principle. We can now interpret $\deltaC\cE_t$ as a dynamics that reaches an equilibrium solution
\[
  \minimize_{\pi\in\cC}\, \bigl\langle \deltaC F_t(\pi_t), \pi - \pi_t \bigr\rangle + \tfrac{1}{\eta_t} D_\Omega(\pi\Vert\pi_t)\enspace \Leftrightarrow\enspace \minimize_{\pi\in\cC}\,\eta_t\! \underbrace{D_\Omega(\pi \Vert \pct)}_\textrm{empirical estimates}\!+ (1 - \eta_t) \underbrace{D_\Omega(\pi \Vert \pi_t)}_\textrm{proximity}, 
\]
At a glance, the above equation appears analogous to the interpolation search between two points, where the influence of $\pct$ is controlled by $\eta_t$. \hfill\qedsymbol

\subsection{Proof of \texorpdfstring{\cref{prop:wfr}}{Proposition~\ref{prop:wfr}}} \label{subsect:proof_wfr}

The proof is closely related to the work of \citet{viwgf} where the difference lies in we correct the \Wasserstein{} gradient term $\dot{\alpha}_{k,\tau}$ for suitable for generally unbalanced weight. Suppose take parameterization $\theta\in (\cP_2(\BW(\bR^d)), \WFR)$, the space of \Gaussian{} mixtures equipped with the \WassersteinFisherRao{} metric, over the measure space of \Gaussian{} particles. Following the arguments from \cref{subsect:backgeo} and the studies for this particular GMM problem \citep{lu2019accelerating, viwgf} of the \WassersteinFisherRao{} of the KL functional is derived as 
\begin{equation} \label{eq:wfr_derive}
  \nabla_{\mspace{-2mu}\WFR}\mspace{1mu}\KL(\rho_\theta\Vert \rho^\ast) = \Bigl( \nabla_{\mspace{-2mu}\BW}\mspace{1mu} \delta \KL(\rho \Vert \rho^\ast), \frac{1}{2}\biggl(\delta \KL(\rho_\theta\Vert\rho^\ast)  - \int \delta \KL(\rho \Vert \rho^\ast) \rd \rho\biggr) \Bigr),
\end{equation}
where we can consider the WFR gradient is taken with respect to $\theta$ of its first argument. By \cref{eq:wfr_derive}, we separately consider \Wasserstein{} gradient in the \BuresWasserstein{} space and the space of lighting that controls the amount of each \Gaussian{} particle.

Given a functional $F: \cP_2(\cX) \to \bR \cup \{+\infty\}$, the \Wasserstein{}  gradient $\nablaW F \cap T_\rho \cP_2(\cX)$ such that all $\{\rho_t\}_{t\in\bR^+}$ satisfy the continuity eqatuion starting from $\rho_0$ \citep{jko, villani2021topics}. If the functional is the KL divergence $\KL(\rho \Vert \pi)$ we can compute the \BuresWasserstein{} gradient for the \Gaussian{} distribution with respect to $(m,\Sigma)$ using \cref{eq:grad_bw}
\[
\begin{aligned}
   \nabla_\BW F(m, \Sigma) &= (\nabla_m F(m, \Sigma), 2\nabla_\Sigma F(m, \Sigma))\\ 
   &= \biggl( \int \nabla_m \rho_{m, \Sigma}\log\frac{\rho_{m,\Sigma}}{\pi}, 2 \int\nabla_{\Sigma} \rho_{m, \Sigma} \log \frac{\rho_{m,\Sigma}}{\pi} \biggr),
\end{aligned}
\]
with some abuse of notation for $\rho$. Using the following closed-form identities for the \Gaussian{} distributions 
\[
  \forall x.\quad \nabla_m \rho_{m, \Sigma}(x) = -\nabla_x \rho_{m, \Sigma}(x) \quad\textrm{and}\quad\nabla_{\Sigma}\rho_{m,\Sigma}(x) = \frac{1}{2} \nabla^2_x \rho_{m, \Sigma}(x).
\]
and the equivalence between the \Hessian{} and \Fisher{} information, we achieve the following form:
\[
  \nabla_\BW F(m, \Sigma) = \biggl(\bE_\rho\Bigl[\nabla \frac{\rho}{\pi}\Bigr], \bE_\rho\Bigl[\nabla^2\log\frac{\rho}{\pi}\Bigr]\biggr).
\]
Define $r_{k,\tau} = \sqrt{\alpha_{k, \tau}}$. Since $r_t$ follows the \FisherRao{} metric in \cref{def:fr}, by the Proposition A.1 from \citet{lu2019accelerating} and specialization of \citet{viwgf}, we can think of dynamics of $K$ \Gaussian{} particles $\{\alpha_{k,\tau}, m_{k,\tau}, \Sigma_{k,\tau}\}_{k=1}^K$ such that 
\begin{gather*}
  \dot{r}_{k,\tau}\mspace{-1.5mu}= -\frac{1}{2}\biggl(\bE\mspace{-1mu}\biggl[\log\frac{\rho_{\theta_\tau}}{\mspace{1.5mu}\rho^\ast}(y_{k,\tau})\mspace{-1mu}\biggr] - \frac{1}{z_\tau}\sum_{\ell=1}^K \mspace{-1mu}\alpha_\ell \mspace{1mu} \bE\mspace{-1mu}\biggl[ \log\frac{\rho_{\theta_\tau}}{\mspace{1.5mu}\rho^\ast}(y_{\ell,\tau})\mspace{-1mu}\biggr] \mspace{-1mu} \biggr) r_{k,\tau},\\ 
  \dot{m}_{k,\tau}\mspace{-1.5mu}= - \bE\mspace{-1mu}\biggl[\nabla\log\frac{\rho_{\theta_\tau}}{\mspace{1.5mu}\rho^\ast}(y_{k,\tau})\biggr]\mspace{-1.5mu}, 
  \ \dot{\Sigma}_{k,\tau}\mspace{-1.5mu}= - \bE\mspace{-1mu}\biggl[\nabla^2\mspace{-1mu} \log\frac{\rho_{\theta_\tau}}{\mspace{1.5mu}\rho^\ast}(y_{k,\tau})\mspace{-1mu}\biggr]\mspace{-1mu}\Sigma_{k,\tau}\mspace{-1mu} - \Sigma_{k,\tau}\bE\mspace{-1mu}\biggl[\nabla^2\mspace{-1mu} \log\frac{\rho_{\theta_\tau}}{\mspace{1.5mu}\rho^\ast}(y_{k,\tau})\mspace{-1mu}\biggr]\mspace{-1.5mu},
  \end{gather*}
Since $\alpha_{k,\tau} = \sqrt{r_{k,\tau}}$ by previous definition, it is straightforward that  
\[
  \dot{\alpha}_{k,\tau}\mspace{-1.5mu}= -\biggl(\bE\mspace{-1mu}\biggl[\log\frac{\rho_{\theta_\tau}}{\mspace{1.5mu}\rho^\ast}(y_{k,\tau})\mspace{-1mu}\biggr] - \frac{1}{z_\tau}\sum_{\ell=1}^K \mspace{-1mu}\alpha_\ell \mspace{1mu} \bE\mspace{-1mu}\biggl[ \log\frac{\rho_{\theta_\tau}}{\mspace{1.5mu}\rho^\ast}(y_{\ell,\tau})\mspace{-1mu}\biggr] \mspace{-1mu} \biggr) \alpha_{k,\tau}.
\]
For $\alpha_k > 0$. This completes the proof. \hfill\qedsymbol

\section{A \Riemannian{} Perspective on \Wasserstein{} Geometries} \label{sect:discuss}

\subsection{An introduction to \Otto{} calculus and the \LaSalle{} invariance principle}\label{subsect:otto_intro}

In this appendix, we introduce a basic notion of \Wasserstein{} gradient flows in the space of continuous probability measures. We focus on describing the particular example, the KL cost, initially studied by JKO \citep{jko} and formally generalized by \citet{otto} in the context of \Riemannian{} geometry. For more details and mathematical rigor, we refer the reader to \citep{gf, ipgf}. For $\cX \subset \bR^d$, and functions $U: \bR_{\ge0}\to\bR$; $V,W:\cX\to\bR$. We first consider an energy function $\cE\!: \cP_2(\cX) \to \bR$:
\begin{equation} \label{eq:energy}
  \cE(\rho) = \mspace{-3mu}\underbrace{\int_\cX\mspace{-1mu}U\bigl(\rho(x)\bigr)\,\rd x}_\textrm{internal potential $\cU$} + \mspace{-1mu}\underbrace{\int_\cX\mspace{-1mu}V(x)\; \rd\rho (x)}_\textrm{external potential $\cE_V$} + \underbrace{\frac{1}{2}\mspace{-1mu}\int_\cX\mspace{-1mu}(W \mspace{-1.6mu}\ast \rho) (x)\;\rd \rho (x)}_\textrm{interaction energy $\cW$}, \enspace \rho\in\cP_2(\cX).
\end{equation}
For this function, we refer to the solution of the following PDE:
\begin{equation} \label{eq:pde}
  \partial_t \rho_t = \nabla\mspace{-1mu}\vcdot\mspace{-1mu}\bigl[\mspace{1mu}\rho\, \nabla (U^\prime + V + W\mspace{-1.6mu}\ast\rho)\mspace{1mu}\bigr],\qquad t \ge 0
\end{equation}
as the \Wasserstein{} gradient flow of $\cE$. Following \Otto{}'s formalization of \Riemannian{} calculus on the continuous probability space equipped with the \Wasserstein{} metric $(\cP_2(\cX), W_2)$, the PDE \eqref{eq:pde} can be interpreted close to an ODE of \Riemannian{} gradient flow:
\begin{equation} \label{eq:pde_gf}
  \partial_t \rho_t  = - \nablaW\mspace{1mu}\cE\mspace{-.3mu}(\rho), 
\end{equation}
where $\nablaW$ denotes the \Wasserstein{}-2 gradient operator $\nablaW \coloneqq \nabla\!\vcdot\!\bigl(\rho\,\nabla \frac{\delta}{\delta \rho}\bigr)$. Considering the \Otto{}'s \Wasserstein{}-2 \Riemannian{} metric $\mathfrak{g}$ \citep{otto, lott2008some}, under the absolute continuity, we see that
\begin{equation}
  \frac{\partial}{\partial t}\mspace{1mu}\cE(\rho_t) = -\mathfrak{g}_{\mspace{-1mu}\rho}\biggl(\frac{\partial \rho}{\partial t}, \frac{\partial \rho}{\partial t}  \biggr) = - \int_\cX \bigl\lvert \nabla (U^\prime +  V + W \ast \rho) \bigr\rvert^{\mspace{-1mu}2}\mspace{1mu}\rd \rho(x) \le 0,
\end{equation}
which is closely related to the strict \Lyapunov{} condition. As a result, dynamical systems following the PDE are guaranteed to reach an equilibrium solution, under the \LaSalle{} invariance principle for probability measures \citep{ipgf}.

For a representative example, we identify \cref{eq:energy} for the relative entropy (the KL functional) for a target density $\rho^\ast \in \cP_2(\cX)$ writes
\[
   \cE(\rho) = \KL(\rho\Vert \rho^{\mspace{-1mu}\ast}) = \mspace{-3mu}\underbrace{\int_\cX \mspace{-1mu}U\bigl(\rho(x)\bigr)\,\rd x}_{\cU} + \mspace{-1mu}\underbrace{\int_\cX V(x)\ \rd\rho(x)}_{\cE_V} -\mspace{3mu} C,
\]
where $U(s) = s\log s$, $V(x) = -\log \mspace{-0.5mu}\rho^{\mspace{-1mu}\ast}\mspace{-1mu}(x)$, and $C = \cU(\rho^{\mspace{-1mu}\ast}) + \cE_V\mspace{-.5mu}(\rho^{\mspace{-1mu}\ast})$. Recall that $\delta \cE(\rho) = \log \frac{\rho(x)}{\rho^\ast}$, then we have
\begin{equation} \label{eq:kl_wgf}
  \nablaW \cE(\rho) = \mathfrak{G}^{-1}_\rho \delta E(\rho) = -\nabla\mspace{-1mu}\vcdot\mspace{-1mu}[\rho \nabla \delta E(\rho)] = \nabla\mspace{-1mu}\vcdot\mspace{-2mu}\biggl[\rho\nabla \log \frac{\rho}{\rho_\ast} \biggr]
\end{equation}
where $\mathfrak{G}$ denotes the metric tensor in matrix form. We can derive the the \FokkerPlanck{} equation
\[
  \partial_t \rho_t =  -\nabla\mspace{-2mu}\vcdot\mspace{-1mu}(\rho \nabla \log \rho^\ast) + \Delta\rho_t,
\]
describing the time evolution of the probability density. Combining the convexity of KL and the \LaSalle{} invariance principle \Wasserstein{} gradient flows, the PDE reaches a unique stationary solution of $\frac{e^{-V(x)}}{\int_\cX e^{-V(y)} \rd y}$.

\subsection{Background on \WassersteinFisherRao{} and other related geometries} \label{subsect:backgeo}

The \WassersteinFisherRao{} geometry is also known as \textit{Hellinger--Kantorovich} in some of papers \citep{liero2016, liero2018}. In this section, we provide an overview of the geometry tailored to meet our technical needs. Along the way, we also briefly describe various metrics and geometries related to the \Wasserstein{} space.

\textbf{The \Wasserstein{} space.}\hspace*{6pt} Let $\mu, \nu \in \cP_2(\bR^d)$ be marginal probability densities with respect to the \Lebesgue{} measure. We define the squared \Wasserstein{} distance by a problem of couplings \citep{villani2009oldnew}
\begin{equation}
  \tfrac{1}{2}W^2_2(\mu, \nu) = \inf_{\pi \in \Pi(\mu, \nu)} \int_{\bR^d\times\bR^d} \frac{1}{2}\lVert x- y\rVert^2 d \pi(x, y).
\end{equation}
The Wasserstein distance offers a principled metric for quantifying the discrepancy between the probability distributions of random variables $X$ and $Y$. Moreover, the \Wasserstein{} space admits a fluid-dynamical formulation, where optimal transport is represented by space-time velocity fields that satisfy the continuity equation. The \Brenier{} theorem \citep{villani2021topics} states that there exists an optimal mapping function that pushes forward $\mu$ to $\nu$, \ie{} $\nu = \nabla\zeta_\# \mu$,  where $\zeta: \bR^d \to \bR^d \cup \{+\infty\}$ is a convex and lower semicontinuous function. The property is formally referred to as an instance of the Monge--Amp\`ere equation \citep{villani2009oldnew}
\begin{equation}
  g\bigl(\mkern-0.5mu\nabla\mkern-0.6mu\zeta(x)\mkern-0.5mu\bigr)\det\bigl(\mkern-0.5mu\nabla^2\zeta(x)\mkern-0.5mu\bigr) = f(x)\quad x\in\bR^d,
\end{equation}
after identifying the source and target densities with $\mu(x) = f(x)dx$ and $\nu(y) = g(y)dy$, respectively. In terms of fluid dynamics, the \Brenier{} map $T = \nabla\zeta$ internally yields a constant-speed geodesic $\{\rho_t\}_{t\in[0,1]}$, time-dependent density evolving from $\rho_0$ to $\rho_1$, described by the following differential equation
\begin{equation} \label{eq:dynabernier}
  \rho_t = (\nabla\zeta_t)_\# \rho_0,\qquad \nabla \zeta_t \coloneqq (1 - t) \mathrm{id} + t \nabla \zeta, 
\end{equation}
where $\rho_0 = \mu$ and $\rho_1 = \nu$. Assuming the existence of such geodesic, we can also understand finding the optimality of $\{\rho_t\}_{t\in[0,1]}$ with the \BenamouBrenier{} formulation \citep{brenier}, which involves a velocity field $v_t$ for minimizing the total $L^2$ cost of transportation
\begin{equation} \label{eq:bb}
  W_2^2(\mu, \nu) = \min_{\rho, v} \biggl\{\int_{0}^1\mspace{-10mu}\int_{\bR^d} \frac{1}{2}\lVert v_t(x)  \rVert^2  \rd \rho_t(x)  \, d t \mspace{5mu}\Big\vert\mspace{5mu} \rho_0 = \mu,\ \rho_1 = \nu,\ \partial_t\rho_t = - \nabla\vcdot(v_t\rho_t) \biggr\}.
\end{equation}
The equation dictates \textit{how} the fluid should be transported (which shall be controlled by speed $v_t$) while satisfying the continuity equation of path measure on the right hand side. In the \Otto{} calculus \citep{otto}, we can understand the \BenamouBrenier{} formula \eqref{eq:bb} as a \Riemannian{} geometry with respect to the $W_2$ metric. In this geometric interpretation, the tangent space at $\rho\in\cP_2(\cX)$ are measures of the form $\delta \rho = - \nabla\vcdot (v\rho) $ with a velocity field $v\in L^2(\rho, \bR^d)$ and the metric is given by
\begin{equation}
 \lVert \rho \rVert^2_\rho  = \inf_{v\in L^2(\rho, \bR^d)}\biggl\{\int \lVert v \rVert^2 \,d \rho \mspace{5mu}\Big\vert\mspace{5mu} \delta \rho = -\nabla \vcdot (v\rho) \biggr\}.
\end{equation}
The \BenamouBrenier{} formula exhibits dynamics in the \Wasserstein{} space of probability densities where the metric generally governed by dynamical mass transportation costs with the continuity equation, implying the mass of probability is preserved. 

\textbf{\FisherRao{} metric.}\hspace*{6pt} The \FisherRao{} metric is a metric on the space of positive measures $\cP_+$ with possibly different total masses. We are interested in the simple case where such measure are represented with a fininte number of parameters such as exponential families. We use the following definition throughout the paper.
\begin{definition}[\FisherRao{} metric] \label{def:fr}
  The \FisherRao{} distance between measures $\rho_0,\rho_1 \in \cM_+$ is given by
  \[
    d^2_\FR(\rho_0, \rho_1) \coloneqq\! \inf_{\rho, v \in \cA[\rho_0, \rho_1]} \int_{0}^1\mspace{-10mu}\int_{\bR^d} \frac{1}{2}\omega^2_t(x)    \,d \rho_t(x)   d t   =  2 \int_{\bR^d} \biggl\lvert \sqrt{\frac{\rd \rho_0}{\rd \lambda}}  - \sqrt{\frac{\rd \rho_1}{\rd \lambda }}\biggr\rvert^2 d \lambda 
  \]
  where $\cA$ is an admissible set for a scalar field on positive measures; $\lambda$ is any reference measure such that $\rho$ and $\rho^\prime$ are both absolutely continuous with respect to $\lambda$, with \RadonNikodym{} derivatives $\frac{\rd \rho_i}{\rd \lambda}$.  
\end{definition}
The equivalence between the square \FisherRao{} distance and squared \Hellinger{} distance \citep{liero2016, liero2018} quantifies the similarity between two probability distributions ranging from 0 to 1. The total variation bounds the squared form and is well-studied in the information geometry \citep{infogeo}. The partial differential equations of the form $\partial_t\rho_t = \alpha_t \rho_t$ are called reaction equations of $\alpha_t$, which describes dynamics regarding concentration.

\textbf{\WassersteinFisherRao{}.}\hspace*{6pt} The \WassersteinFisherRao{} geometry, or equivalently, spherical \HellingerKantorovich{} distance, considers liftings of positive, complete, and separable measures while preserving the total mass. This can be expresses as  combining the \FisherRao{} and \Wasserstein{} geometries characterized by PDE such as  \citep{liero2016}:
\begin{equation} \label{eq:wfrpde1}
  \partial_t \rho_t + \nabla\vcdot(v_t\rho_t) = \frac{\omega_t}{2} \rho_t.
\end{equation}
One problem, is that the PDE \eqref{eq:wfrpde1} In order to stay the dynamics on the space of probability measures, which is our interest, we adopt the definition from \citep{lu2019accelerating, viwgf} the equation becomes
\begin{equation} \label{eq:wfrpde2}
  \partial_t \rho_t + \nabla\vcdot(\rho_t  v_t) = \frac{1}{2} \biggl(\beta_t - \!\int_{\bR^d}\beta_t \,d \rho_t\biggr)\rho_t,
\end{equation}
which satisfies mass conservation. For the geometry, the norm on tangent space is given by 
\begin{equation}
  \lVert (\beta_t, \rho)\rVert^2_\rho \coloneqq \mspace{-1.5mu} \int\biggl\{\Bigl(\omega - \!\int_{\bR^d}\beta_t\,\rd \rho \Bigr)^{\mspace{-4mu}2} + \lVert v\rVert^2 \biggr\} \rd \rho.
\end{equation}
and we define the WFR distance as 
\begin{equation}
  d^2_\WFR(\rho_0, \rho_1) \coloneqq\! \inf_{\rho,\beta_t,v} \biggl\{ \int_0^1 \lVert (\beta_t, v_t) \rVert^2_{\rho_t} \rd t \mspace{5mu}\Big\vert\,\{\rho_t, \beta_t, v_t\}_{t\in[0,1]}\enspace\text{satisfies \eqref{eq:wfrpde2}}\biggr\}.
\end{equation}
\citet{lu2019accelerating} demonstrated that \WassersteinFisherRao{} gradient dynamics over the \BuresWasserstein{} space can be analytically derived with closed form expressions. In this work, we were able to design a computational method for OMD iterates in the WFR geometry. Using \cref{prop:wfr}, this geometry allowed the VMSB algorithm to perform tractable gradient computation within \Wasserstein{} space. 

\subsection{The \BuresWasserstein{} space and a mixture of \Gaussian{}s}
The space of \Gaussian{} distribution in the \Wasserstein{} space is known as \BuresWasserstein{} space, denoted as $\BW(\bR^d)$. Given $\theta_0, \theta_1 \in \BW(\bR^d)$, we can identify the space with the manifold $\bR^d \times \vSppd$, where $\vSppd$ denotes the space of symmetric positive definite matrices. For $\theta_0 = (m_0, \Sigma_0)$ and $\theta_1 = (m_1, \Sigma_1)$ an affine map from $p_{\theta_0}$ to $p_{\theta_1}$ is given as a closed-form expression:
\[
  \nabla \zeta (x) = m_1 + \Sigma_0^{-1/2}\bigl(\Sigma_0^{1/2} \Sigma_1 \Sigma_0^{1/2}\bigr)^{1/2}\Sigma^{-1/2} (x - m_0).
\]
Note that the constant-speed geodesic also lies in $\BW(\bR^d)$, as pushforward of a \Gaussian{} with an affine map is also a \Gaussian{}. Therefore, it can be said that $\BW(\bR^d)$ is a geodesically convex subset of $\cP_2(\bR^d)$. For the \Brenier{} map, a constant-speed geodesic in $\BW(\bR^d)$, for the tangent vector to the geodesic $(r, S)$
\begin{equation}
  p_{\theta_t} = \exp_{p_{\theta_0}}\mspace{-5mu}\bigl(t\cdot(r, S)\bigr) = \cN\bigl(m_0 + tr, (tS+I_d)\Sigma_0 (tS + I_d) \bigr), 
\end{equation}
and the dynamics at its current position at time $t=0$ is represented as 
\begin{align}
  \dot{m}_0 &= r,\\
  \dot{\Sigma}_0 &= S\Sigma_0 + \Sigma_0 S. 
\end{align}
Generalizing this geodesic dynamics, the \BuresWasserstein{} gradient $\nabla_\BW\,f$ of a function $f:\bR^d\times\vSppd \to \bR$ for a tangent vector $(r, S)$ at time $0$ \citet{altschuler2021averaging}
\[
  \bigl\langle\nabla_\BW f(m_0, \Sigma_0),  (r, S)\bigr\rangle_\BW = \partial_t f(m_t, \Sigma_t)\bigg\vert_{t=0}
\]
Identifying each component, we achieve the following result of \Wasserstein{} gradient flow in \BuresWasserstein{} space as
\begin{equation} \label{eq:grad_bw}
  \nabla_\BW f = (\nabla_m f, 2\nabla_\Sigma f), 
\end{equation}
where $\nabla_m$ and $\nabla_\Sigma$ denote \Euclidean{} gradient. We refer readers to Appendix~A of \citet{altschuler2021averaging} and Appendix~B of \citet{viwgf} for further useful geometric properties of \Wasserstein{} spaces and dedicated discussions for the \BuresWasserstein{} space.

\section{Background on the Dynamic \Schrodinger{} Bridge Problem}\label{sect:back_dsbp}

The equivalence between static and dynamic SBPs \citep{onfree, sbp2mkp} has been studied, allowing us to consider the both problems interchangeably. This appendix introduces a general control dynamic formulation  for describing SB and SB-FBSDE theory \citep{bbrcd, likesb}. These formulations establish fundamental links to optimal control theory and diffusion models. 

For variable drift and diffusion coefficient, a stochastic process, and its time reversal respectively follow the forward and backward \Kolmogorov{} (or \FokkerPlanck{}) equations \citep{fa2011solution, bbrcd}:
\begin{subequations}\label{eq:fp3}
\begin{align}
  -\frac{\partial \rho}{\partial t} + \nabla\vcdot\bigl[ (f_t + \nabla \varphi_t)\rho_t\bigr] = \frac{1}{2}\,\divsq\bigl(g_t\gT_t \rho_t\bigr),\label{eq:fp3a}\\
  \frac{\partial \rho_t}{\partial t} + \nabla\vcdot\bigl[(-f_t + \nabla \psi_t)\rho_t)\bigr]  = \frac{1}{2}\,\divsq\bigl(g_t\gT_t\rho_t\bigr),\label{eq:fp3b}
\end{align}
\end{subequations}
where $f_t$ and $g_t$ are time-varying base drift and diffusion coefficients which together determine a reference measure $R$;\footnote{In \eqref{eq:pde4} of \S~\ref{sect:prelim}, base drift is zero and diffusion is $\sqrt{\varepsilon}$.} $\div$ denotes divergence; $\divsq$ denotes squared divergence. By subtracting \cref{eq:fp3b} from \cref{eq:fp3a}, the continuity equation
\begin{equation}\label{eq:derivecontinuity}
  \frac{\partial\rho}{\partial t} + \nabla\vcdot(v\rho) = 0
\end{equation}
is achieved by $v = f + \frac{1}{2}[\nabla\varphi - \nabla\psi] $.    Adding and scaling of \cref{eq:fp3}, also derives another identity
\begin{equation}\label{eq:fwdbwddiff}
  (\nabla\varphi+\nabla\psi)\rho = \div\bigl(g\gT\rho\bigr),
\end{equation}
where we can derive an explicit form of the score function
\begin{equation}\label{eq:fluxx}
\nabla\varphi+\nabla\psi = g\gT \nabla \log \rho + \underbrace{\div g\gT}_{\textrm{heat flux}}.\\
\end{equation}
The rightmost term of \eqref{eq:fluxx} is a heat flux, indicating the amount of diffusion per unit time.

The time-symmetry relation can be considered as a multivariate derivation of the  the stochastic mechanics. Generalizing the Nelson's notation \citep{nelson}, let us define the SB drifts and \textit{current} drifts for continuous-time path measures $P$ and $Q$ receptively satisfying \cref{eq:fp3} with the shared diffusion coefficient:
\begin{gather}\label{eq:nelsongeneral}  
  \fsupplus_{P,t} \coloneqq f_{P,t} + \nabla \varphi_{P,t},\quad  \fsupminus_{P,t} \coloneqq -f_{P,t} + \nabla \psi_{P,t}, \quad   \fsupplus_{Q, t} \coloneqq f_{Q,t} + \nabla \varphi_{Q,t},\quad  \fsupminus_{Q,t} \coloneqq -f_{Q,t} + \nabla \psi_{Q,t}  \\
  v_{P} = \frac{\fsupplus_{P} - \fsupminus_{P}}{2},\qquad
  v_{Q} = \frac{\fsupplus_{Q} - \fsupminus_{Q}}{2}.
\end{gather}
Then, we consider the \Girsanov{} theorem \citep{sde} for the drifts. The theorem formulate \RadonNikodym{} derivatives between path measures of stochastic processes by the fact that potential gradients reconstruct the score function $\nabla\varphi_t + \nabla\psi_t = \varepsilon\nabla\log\rho_t$. 
\begin{lemma}[\Girsanov{} theorem; Theorem 8.6.3 of \citealp{sde}]\label{lem:girsanov}
For adapted processes with a given time interval $[0, T]$, let $\widehat{B}_s$ be an It\^o process solving SDE
\begin{equation}
 \rd \widehat{B}_s = -\alpha(\omega, s) + \rd B^\prime_s
\end{equation}
for $\omega\in\cX$, $0\le s\le T$ and $\widehat{B}_{\scriptscriptstyle0} = 0$, where $\alpha$ satisfies the Novikov’s condition. Then, $\widehat{B}_s$ is a \Brownian{} motion with respect to the path measure $Q$, satisfying the \RadonNikodym{} derivative
\begin{equation}
 \frac{\rd P}{\rd Q}(\omega) \coloneqq \exp\biggl(\int_0^T\alpha(\omega, s)\rd B^\prime_s -\int_0^T \frac{1}{2} \lVert \alpha(\omega, s) \rVert^2 \rd s \biggr)
\end{equation}

\end{lemma}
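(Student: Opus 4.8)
\hspace*{6pt} The statement is the classical \Girsanov{} theorem, so the plan is to reproduce the standard change-of-measure argument built on the stochastic (Dol\'eans--Dade) exponential and L\'evy's characterization of \Brownian{} motion, adapted to the present notation. Let $B'$ be the driving motion under the original measure $P$, with augmented filtration $\{\cF_s\}_{0\le s\le T}$, and introduce the candidate density process
\begin{equation*}
  Z_s \coloneqq \exp\biggl(\int_0^s \alpha(\omega,u)\,\rd B'_u - \tfrac{1}{2}\int_0^s \lVert \alpha(\omega,u)\rVert^2\,\rd u\biggr).
\end{equation*}
By \Ito{}'s formula $Z$ solves $\rd Z_s = Z_s\,\alpha(\omega,s)\,\rd B'_s$, so it is a strictly positive local martingale, and hence a $P$-supermartingale with $Z_0=1$.

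The first genuine step is to promote $Z$ to a true martingale. A positive local martingale need only satisfy $\bE_P[Z_T]\le 1$, whereas the identity $\bE_P[Z_T]=1$ is precisely what makes the associated $Q$ a probability measure. This is where the hypothesis that $\alpha$ obeys Novikov's condition, $\bE_P\bigl[\exp(\tfrac{1}{2}\int_0^T \lVert\alpha\rVert^2\,\rd u)\bigr]<\infty$, is used: it guarantees that $\{Z_s\}_{0\le s\le T}$ is uniformly integrable and therefore a true martingale with $\bE_P[Z_T]=1$. I expect this to be the main obstacle, since uniform integrability does not follow from the local-martingale property alone; the standard resolution is the stopping-time localization together with the exponential moment bound supplied by Novikov, as carried out in \citet{sde}.

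Given the martingale property, I would define $Q$ on $\cF_T$ through the density $Z_T$ appearing in the stated likelihood ratio, which yields a probability measure equivalent to $P$ because $Z_T>0$ almost surely. It then remains to show $\widehat{B}$ is a $Q$-\Brownian{} motion, which I would establish by L\'evy's characterization. Continuity and $\widehat{B}_0=0$ are immediate from the defining SDE, and the quadratic covariation $\langle \widehat{B}^i,\widehat{B}^j\rangle_s=\delta_{ij}\,s$ is a pathwise quantity invariant under the equivalent change of measure, since $\widehat{B}$ differs from $B'$ only by the finite-variation drift $-\!\int_0^s\alpha\,\rd u$. The one substantive verification is that each component of $\widehat{B}$ is a $Q$-local martingale; by the abstract Bayes rule this is equivalent to $\{Z_s\widehat{B}_s\}$ being a $P$-local martingale, and applying the \Ito{} product rule gives
\begin{equation*}
  \rd\bigl(Z_s\widehat{B}^i_s\bigr) = Z_s\,\rd {B'}^{i}_{s} + \widehat{B}^i_s\,\rd Z_s - Z_s\alpha^i(\omega,s)\,\rd s + \rd\langle Z,\widehat{B}^i\rangle_s,
\end{equation*}
in which the two $\rd s$ drift contributions cancel because $\rd\langle Z,\widehat{B}^i\rangle_s = Z_s\alpha^i(\omega,s)\,\rd s$. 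Hence $Z\widehat{B}^i$ has no drift, $\widehat{B}$ is a continuous $Q$-local martingale carrying the \Brownian{} covariation structure, and L\'evy's theorem identifies it as a $Q$-\Brownian{} motion, completing the argument.
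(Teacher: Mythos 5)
The paper does not prove this lemma at all: it is imported verbatim as a citation (Theorem~8.6.3 of \citealp{sde}) and used as a black box in Appendix~\ref{sect:back_dsbp}, so there is no in-paper argument to compare against. Your proof is the standard and correct one — Dol\'eans--Dade exponential, Novikov's condition to upgrade the positive local martingale $Z$ to a uniformly integrable true martingale with $\bE_P[Z_T]=1$, then L\'evy's characterization with the Bayes-rule reduction to showing $Z\widehat{B}^i$ is a $P$-local martingale, where the drift $-Z_s\alpha^i\,\rd s$ is cancelled by the bracket $\rd\langle Z,\widehat{B}^i\rangle_s = Z_s\alpha^i\,\rd s$. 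One discrepancy worth flagging: you define $Q$ by $\rd Q = Z_T\,\rd P$, so what you prove is $\frac{\rd Q}{\rd P}=Z_T$, whereas the lemma as printed asserts $\frac{\rd P}{\rd Q}=Z_T$. Your orientation is the one in \citet{sde} (after accounting for the sign of the drift $a=-\alpha$), so the mismatch is almost certainly a typo in the paper's restatement rather than an error on your part, but as written your construction does not literally produce the displayed identity; a one-line remark inverting the density (or noting the paper's sign/orientation convention) would close that gap. The paper's SDE is also missing a $\rd s$ on the drift term, which you correctly read as $-\alpha(\omega,s)\,\rd s + \rd B'_s$.
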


Next, we present the disintegration theorem in the context of probability measures \citep{someppm, sbml}, which extends the product rule to measures that do not admit the traditional product rule. Similar to the product rule, these theorems are essential for decomposing and manipulating path measures for dynamic SBPs, eventually connecting various formulation of \Schrodinger{} bridge problems.
\begin{lemma}[Disintegration for continuous probability measures]\label{lem:disint}
  For a probability space $(\cZ, \cF, \cP)$ where $\cZ$ is a product space: $\cZ=\cX\times\cY$ and \one{} $\cX,\cY\in\bR^d$ and $\phi_i: \cZ\to\cZ_i$ is a measurable function known as the canonical projection operator (\ie{}, $\phi_1(x, \cdot) = x$ and $\phi^{-1}_1(x) = \{(x,y) \vert \phi_1(x, y) = x\}$), There exists a measure $P_{y\vert x }(\cdot \vert x)$, such that
  \begin{equation}
    \iint_{\cX\times\cY} f(x,y) \rd P(y) = \iint_{\cX\times\cY} f(x,y) \rd P_{y\vert x }(y \vert x) \rd P(\phi^{-1}_1(x))
  \end{equation}
  where $P_x(\cdot) = P(\phi^{-1}_1(\cdot))$ is a probability measure, referred to as a push-forward measure, and corresponds to the marginal distribution.
\end{lemma}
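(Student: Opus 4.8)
The plan is to realize the claimed decomposition as a regular conditional probability, exploiting the fact that $\cX,\cY\subseteq\bR^d$ are standard Borel (Polish) spaces, so that such a kernel is guaranteed to exist. First I would define the marginal measure on $\cX$ by $P_x(A)\coloneqq P(\phi_1^{-1}(A))$, \ie{} the pushforward $(\phi_1)_{\#}P$ that appears on the right-hand side of the statement. The target is a \Markov{} kernel $x\mapsto P_{y\vert x}(\cdot\vert x)$ from $\cX$ to $\cY$ satisfying $P(A\times B)=\int_A P_{y\vert x}(B\vert x)\,\rd P_x(x)$ for all measurable $A\subseteq\cX$ and $B\subseteq\cY$; the displayed integral identity then follows by the standard approximation ladder in $f$.

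Second, I would construct the kernel values via the \RadonNikodym{} theorem. For each fixed Borel $B\subseteq\cY$ the set function $A\mapsto P(A\times B)$ is a finite measure on $\cX$ dominated by $P_x$, since $P(A\times B)\le P(\phi_1^{-1}(A))=P_x(A)$. Hence there is a density $k_B\in L^1(P_x)$ with $P(A\times B)=\int_A k_B\,\rd P_x$, and these densities are the candidate conditional probabilities $P_{y\vert x}(B\vert x)=k_B(x)$, determined uniquely up to $P_x$-null sets.

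The main obstacle is to select versions of the $k_B$ that, for $P_x$-almost every $x$, assemble into a single countably additive probability measure on $\cY$, coherently across the uncountable family of all Borel $B$. I would resolve this by fixing a countable algebra $\cA_0$ generating the Borel $\sigma$-algebra of $\cY$ (available because $\cY$ is second-countable), and verifying finite additivity, monotonicity, and the normalization $k_\cY=1$ on $\cA_0$ outside a single $P_x$-null set, obtained as the countable union of the exceptional sets for each relation. Countable additivity on $\cA_0$ then reduces to continuity from above at $\emptyset$, which on the standard Borel space $\cY$ follows from an inner-regularity (tightness) argument in the spirit of the Kolmogorov extension. A Carath\'eodory extension upgrades $B\mapsto k_B(x)$ to a genuine Borel probability measure $P_{y\vert x}(\cdot\vert x)$ for each such $x$, while a monotone class argument anchored on $\cA_0$ shows that $x\mapsto P_{y\vert x}(B\vert x)$ remains measurable for every Borel $B$.

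Finally, with the kernel in hand I would verify the integration formula by the customary ladder: it holds for $f=\mathbf{1}_{A\times B}$ by construction, extends to indicators of arbitrary $\cF$-measurable sets by a Dynkin/monotone class argument, to nonnegative simple functions by linearity, to general nonnegative measurable $f$ by monotone convergence, and to integrable $f$ by splitting into positive and negative parts. This recovers the stated identity with $\rd P_x=\rd P(\phi_1^{-1}(x))$, completing the disintegration.
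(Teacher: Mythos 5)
The paper does not actually prove \cref{lem:disint}: it is stated as a known result and attributed to the literature (the citations accompanying the lemma), so there is no in-paper argument to compare yours against. Your proposal is the standard existence proof of a regular conditional probability on a product of standard \Borel{} spaces, and its outline is sound: the domination $P(A\times B)\le P(\phi_1^{-1}(A))=P_x(A)$ justifies the \RadonNikodym{} step; the coherence problem across uncountably many $B$ is correctly identified as the crux and correctly resolved by working on a countable generating algebra, discarding a single $P_x$-null set, and invoking inner regularity (which is exactly where the Polish/standard-\Borel{} hypothesis on $\cY\subseteq\bR^d$ is genuinely used -- without it, regular conditional probabilities can fail to exist even though the \RadonNikodym{} densities $k_B$ always do); and the final monotone-class ladder recovering the integral identity from the rectangle case is routine. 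The one point worth tightening if this were written out in full is the continuity-from-above step: one should exhibit an explicit compact class (e.g., finite unions of closed balls with rational data) inside the countable algebra and verify the approximation $k_B(x)\ge\sup\{k_K(x):K\subseteq B,\ K\ \text{compact in the class}\}-\varepsilon$ off a null set, rather than gesturing at the \Kolmogorov{} extension theorem, but this is a presentational rather than a substantive gap. In short, your proof is correct and supplies an argument the paper deliberately omits.
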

The theorem suggests that one way to achieve KL projection between path measures is by matching drifts with the time reversal drifts of $(\gammasupplus, \gammasupminus)$.
Under the \Girsanov{} theorem, we observe that
\begin{equation}\label{eq:gir1}
  \begin{aligned}
  \KL(P \Vert Q) &= \KL(P_{\scriptscriptstyle 0} \Vert Q_{\scriptscriptstyle 0}) + \bE_{P}\biggl[\int_0^T \frac{1}{2}\lVert\fsupplus_{P,t} -  \fsupplus_{Q,t}  \rVert^2  \dt  \biggr] \\&= \KL(P_{\scriptscriptstyle T} \Vert Q_{\scriptscriptstyle T}) + \bE_{P}\biggl[\int_0^T \frac{1}{2} \lVert \fsupminus_{P,t} -  \fsupminus_{Q,t}  \rVert^2  \dt \biggr]
  \end{aligned}
\end{equation}

Let us consider a reference measure $Q = \cR$ with based drift and diffusion \ie{} $\fsupplus_Q = f$. Knowing the boundary conditions $P_0 = \cR_0 = \mu$ and $\cR_T = Q_T = \nu$, \cref{eq:gir1} can be reduce to the following problems
\begin{subequations}\label{eq:bb2}
\begin{align}
  \KL(P\Vert R) &= \inf_{( \vsupplus, \rhosupplus )}\mspace{-2mu}\int_{\scriptscriptstyle0}^{\scriptscriptstyle T}\mspace{-13mu}\int_{\bR^d}\frac{1}{2}\lvert\vsupplus_t(x)\rvert^2\rhosupplus_t(x)\ \rd x\rd t,\\
  &=\inf_{(\vsupminus, \rhosupminus)}\int_{\scriptscriptstyle0}^{\scriptscriptstyle T}\mspace{-13mu}\int_{\bR^d} \frac{1}{2}\lvert\vsupminus_t(x)\rvert^2 \rhosupminus_t(x)\ \rd x\rd t,
\end{align}
\end{subequations}
and Eqs.~\eqref{eq:sbp}~and~\eqref{eq:pde4} in the main text represent a simplified case when $R = W^\varepsilon$. Furthermore, by combining \cref{eq:gir1}, we get a symmetrized version of relative entropy (KL functional)
\begin{equation}\label{eq:gir2}
  \KL(P\Vert Q) = \frac{1}{2}\KL(P_0 \Vert Q_0)  + \frac{1}{2} \KL(P_T \Vert Q_T) + \bE_P \biggl[\int_{0}^T \frac{1}{4}\lVert \fsupplus_P - \fsupplus_Q \rVert^2 + \frac{1}{4}\lVert \fsupminus_P - \fsupminus_Q \rVert^2 \dt   \biggr]
\end{equation}
Finally, using \cref{eq:fwdbwddiff} we get the constrained problem:
\begin{equation*}
\begin{aligned}
  &\inf_{(\tilde{\rho}, \tilde{v})} \int_{\scriptscriptstyle0}^{\scriptscriptstyle T}\mspace{-13mu}\int_{\bR^d}\!\biggl(\,\frac{1}{2} \lVert \tilde{v}(t, x)  \rVert^2 + \frac{1}{8}\lVert  \nabla\log\!\tilde{\rho}(t, x)\rVert^2_{\ggTscript} + \frac{1}{8} \lVert \nabla\vcdot g\gT \rVert^2 \biggr) \tilde{\rho}(x, t)\ \dt\,\dx,  \\
  &\textrm{such that }\quad  \frac{\partial\tilde{\rho}}{\partial t} + \div [ (f + \tilde{v}) \tilde{\rho}]=0,\enspace \tilde{\rho}(0, \cdot)\equiv \mu,\enspace \tilde{\rho}(T,\cdot) \equiv \nu.
\end{aligned}
\end{equation*}
To solve this problem, we convert the problem to the Lagrangian function:
\begin{equation*}
\begin{aligned}
  \cL(\rho, v) = \int_{\scriptscriptstyle0}^{\scriptscriptstyle T}\mspace{-13mu}\int_{\bR^d}\! &\frac{1}{2} \lVert \tilde{v}(t,x) \rVert^2 \tilde{\rho}(t, x) + \frac{1}{8}\lVert\nabla\log\tilde{\rho}(t, x)\rVert^2_{\ggTscript}\tilde{\rho}(t,x) + \frac{1}{8}\lVert \nabla\vcdot g\gT(t,x) \rVert^2\tilde{\rho}(t,x)\\
  &+  \lambda(t, x)  \biggl(\frac{\partial \tilde{\rho}}{\partial t} + \div((f+\tilde{v})\tilde{\rho})\biggr) \dx\,\dt
\end{aligned}
\end{equation*}
where $\lambda$ is $C^{1,2}$-Lagrangian multiplier. After integration by part, assuming that limits for $x\to\infty$ are zero, and observing that the boundary values are constant over $\Pi(\mu, \nu)$, we resort to the following problem:
\begin{equation}\label{eq:sbprob}
  \inf_{(\tilde{\rho}, \tilde{v}) \in \cP \times \cV} \int_{\bR^d}\int_{0}^T \biggl[\frac{1}{2} \lVert  \tilde{v}(t, x)  \rVert^2 + \frac{1}{8}\lVert \nabla\log\tilde{\rho}(t,x)\rVert^2_{\ggTscript} + \frac{1}{8}\lVert \nabla\vcdot g\gT(t,x) \rVert^2  + \biggl( - \frac{\partial \lambda}{\partial t} - \nabla\lambda \,\vcdot ( f+ \tilde{v} ) \biggr)\biggr] \tilde{\rho}(t, x)\dt \dx .
\end{equation}
Pointwise minimization with respect to $\tilde{v}$ for each fixed flow of probability densities $\tilde{\rho}$ gives $v^\ast(x, t) = \nabla \lambda (x, t)$. Plugging this form of the optimal control into \cref{eq:sbprob}, we get the functional of $\tilde{\rho}\in\cP$:
\begin{equation*}
  \cJ(\tilde{\rho}) = - \int_{\bR^d}\int_{0}^T \biggl[\frac{\partial \lambda}{\partial t} + (f+v) \vcdot \nabla\lambda + \frac{1}{2} \lVert \nabla \lambda \rVert^2 + \frac{1}{8}\lVert \nabla\log\tilde{\rho}(t,x) \rVert^2_\ggTscript  +  \frac{1}{8}\lVert \nabla\vcdot g\gT(t,x) \rVert^2 \biggr]\tilde{\rho}(t, x)  \dt \dx
\end{equation*}
Utilizing the existence and uniqueness of SDE solutions \citep{sde}, the optimality of the problem is uniquely identified by $\nabla\lambda$ being the mean current of SB-FBSDE which is introduced in \citep{likesb, deepgsb}. Therefore, both static and dynamic \Schrodinger{} bridge problems solve identical optimization problem of KL minimization, with different appearence by the problem-specific origin of objectives, entropies, and geometries.

\clearpage
\section{Experimental Details} \label{sect:details}

\subsection{Rationales of the GMM parameterization for VMSB}

Our parameterization choice follows LightSB \citep{lsb} because of the following two key reasons. First, GMMs ensure that the model space satisfies certain measure concentration, which is suitable for analyzing theoretical properties of SB models \citep{conforti2023quantitative}. Firstly, we analyzed the regret under the log \Sobolev{} inequality in \cref{thm:regret}. Enforcing the LightSB parameterization will automatically satisfy \cref{asm:lsi}. Secondly, VMSB requires tractable gradient computation of \Wasserstein{} gradient flow in \S~\ref{subsect:md_wgf}. As shown in \cref{prop:wfr}, we can perform VMSB using the variational inference in the WFR geometry of the GMM parameterization.

\begin{table}[t]
    \centering
    \caption{Hyperparameters.} \label{tab:hyper}
  \begin{adjustbox}{width=0.95\linewidth}
    \begin{tabular}{c | c c c c c c}
      \toprule
        &\textbf{2D}& \textbf{EOT} & \textbf{MSCI} & \textbf{MNIST (Pixel)} & \textbf{MNIST (Latent)} & \textbf{FFHQ} \\
      \midrule
     Dimension $d$ & 2 & $\{2, 16, 64, 128\}$ & $\{50, 100, 1000\}$ & $784$ & $128$ & $512$\\
     Modality $K$ & $\{8, 20, 50\}$ & $[5, 100]$ & 50 & \{256, 1024, 4096\} & \{256, 1024\} & 10 \\
     Volatility $\varepsilon$ & $0.1$ & $\{0.1, 1, 10\}$ & $0.1$ & $10^{-4}$ & $10^{-3}$ & $\{0.1, 0.5, 1.0, 10.0\}$ \\
     Total steps ($\tau$) & 20,000 & 30,000 & 10,000 & 100,000 & 30,000 & 20,000 \\
     OMD steps ($t$) & 400 & 600 & 200 & 1000 & 375 & 400\\ \bottomrule
    \end{tabular}
  \end{adjustbox}
\end{table}

\subsection{Hyperparameters.}

The hyperparameters are displayed in \cref{tab:hyper}. For step size scheduling, we followed the theoretical result in \cref{thm:step_size} and \cref{prop:convergence}, and chose $\eta_1 = 1$ and $\eta_T \in \{ 0.05, 0.1 \}$ with harmonic sequences, as illustrated in \cref{fig:eta_eg}. For high dimensional tasks in MSCI (1000d), MNIST-EMNIST (784d), and latent FFHQ Image-to-Image transfer tasks (512d), the initial \textit{warm up} steps for 10\% of the total learning helped starting a training sequence from a reasonable starting point as this set $\eta_t = 1$ as verified in \cref{fig:md_demo}~\subfigC{}.

\begin{figure}[h]
  \definecolor{plotred}{RGB}{214, 39, 40}
  \tikzset{inner sep=0, outer sep=0}
    \centering
    \begin{tikzpicture}[tight background]
      \begin{scope}[xshift=-3.88cm]
        \node[scale=0.75] at (0.1,1.73) {$\mathsf{x\sim\mu},\enspace\mathsf{y\sim\nu}$};
        \node at (0,0) {\includegraphics[width=92pt]{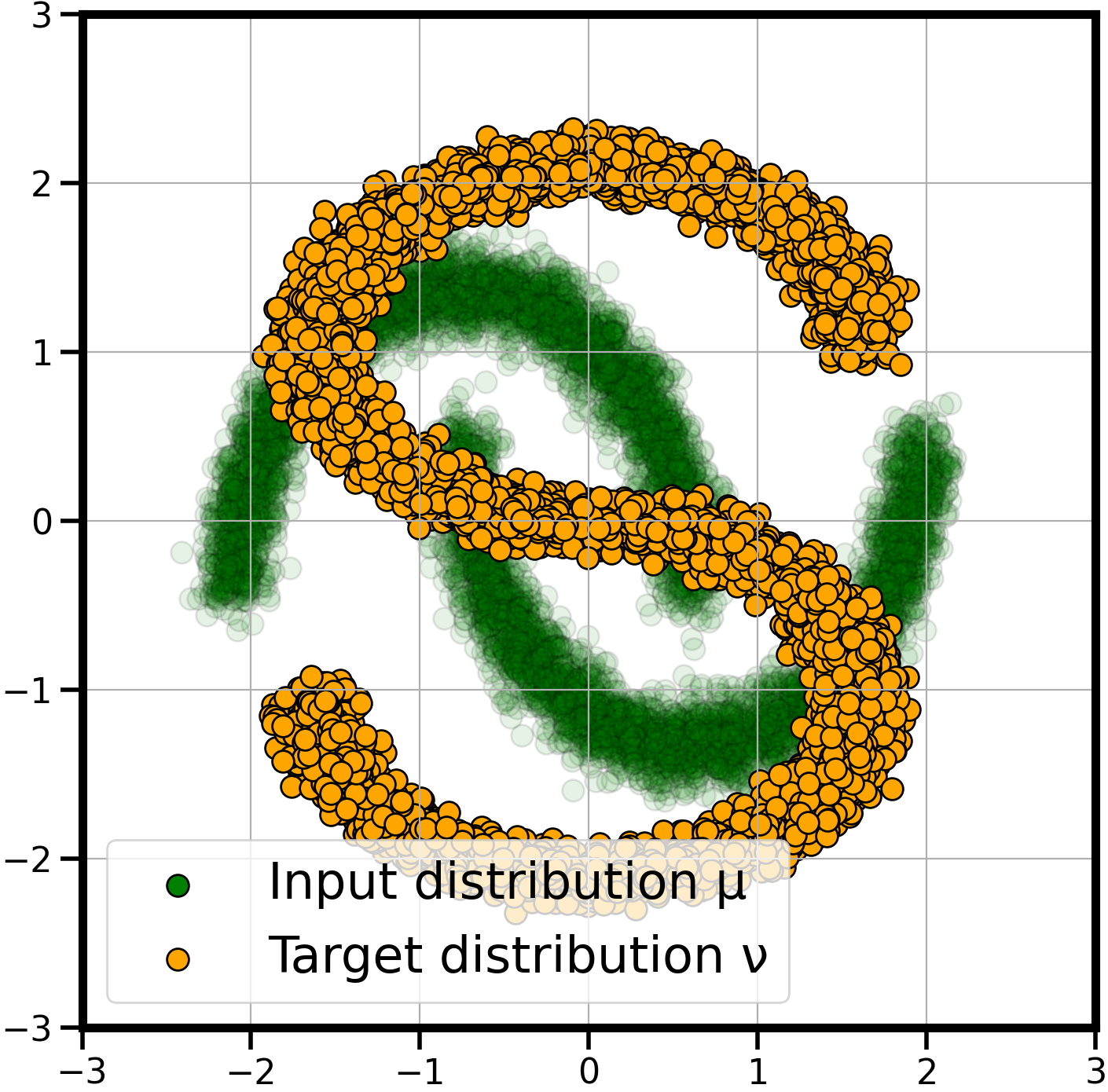}};
      \end{scope}
      \begin{scope}
        \node[scale=0.75] at (0.1,1.73) {$\mathsf\varepsilon = {\scriptstyle\textsf{0.05}}$};
        \node at (0,0) {\includegraphics[width=92pt]{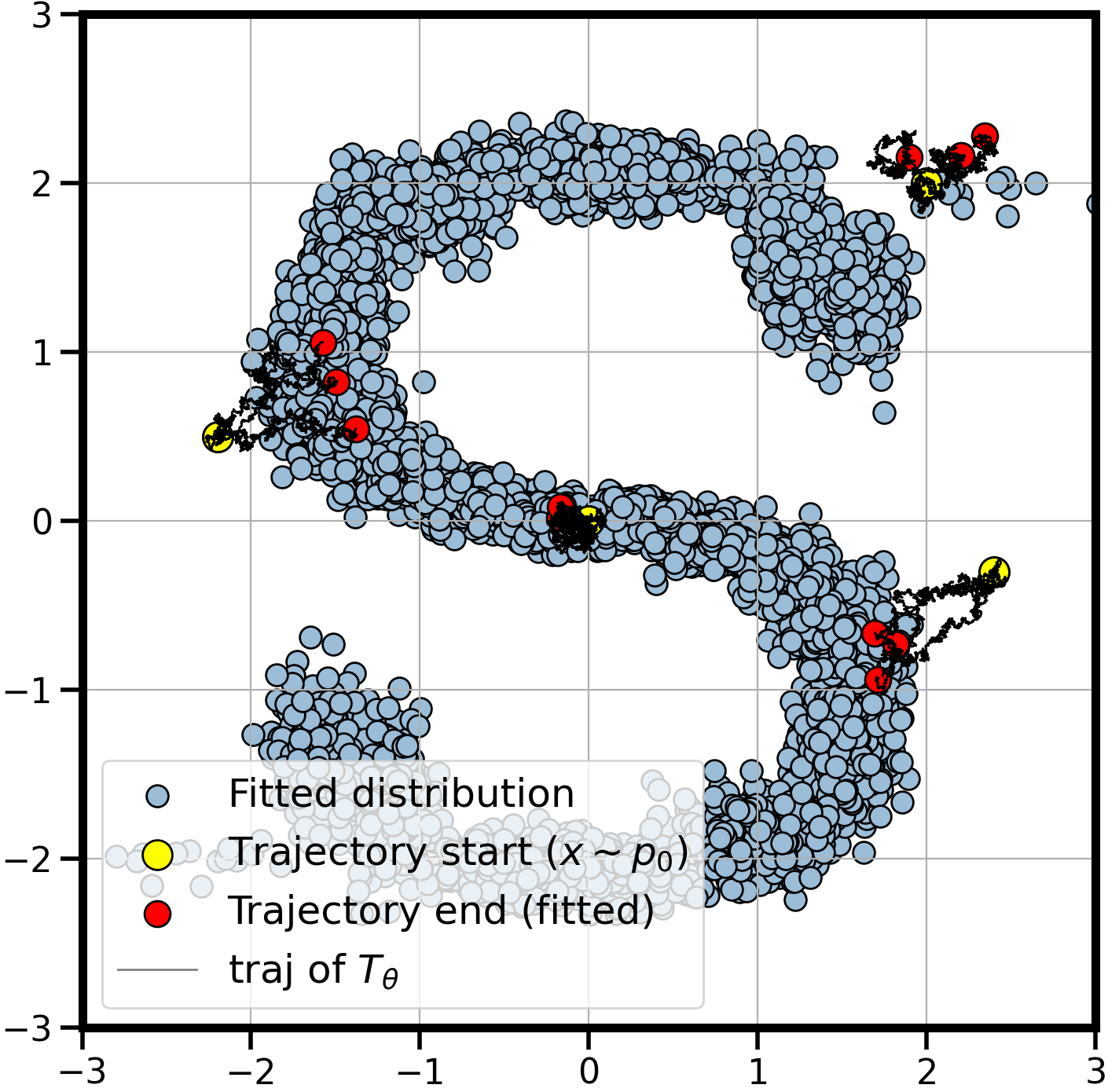}};
        \draw[gray, thick] (-1.263, -1.235) -- (-1.02, -1.235);
      \end{scope}
      \begin{scope}[xshift=3.88cm]
        \node[scale=0.75] at (0.1,1.73) {$\mathsf\varepsilon = {\scriptstyle\textsf{1.0}}$};
        \node at (0,0) {\includegraphics[width=92pt]{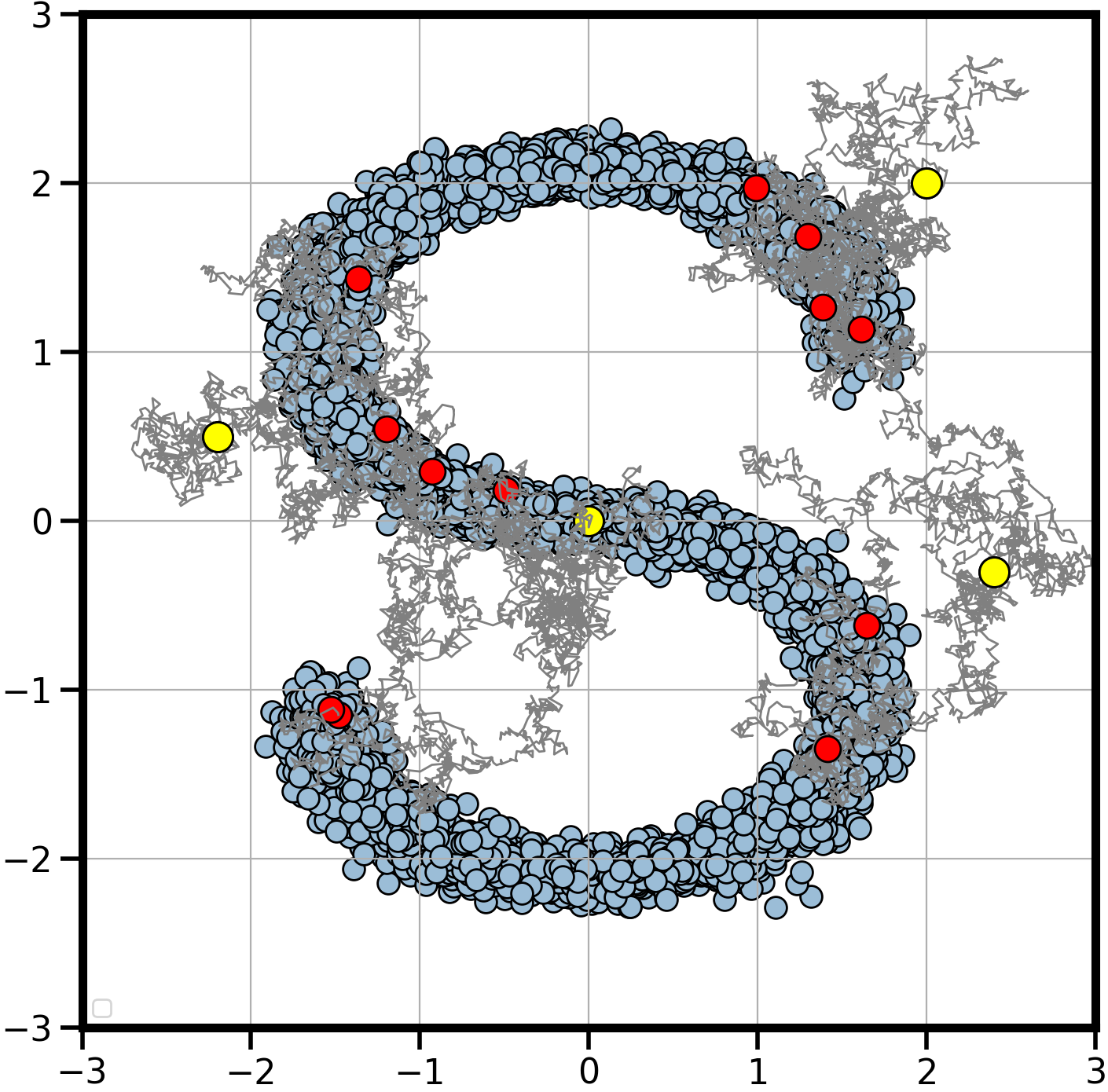}};
      \end{scope}
    \end{tikzpicture}
    \caption{SB in 2D synthetic datasets. SB processes $\cT_\theta$ with different volatility $\varepsilon$.} \label{fig:2d_eps}
\end{figure}

\subsection{2D Synthetic datasets and the online learning setup}

\cref{fig:2d_eps} demonstrates that our method achieved the SB model for the various volatility $\varepsilon$. For various configurations, most of baseline SB algorithms are capable of learning in the 2D space (\ref{fig:2d_eps}). In order to align our theoretical arguments for online learning, we selectively offered with a rotating filter that only $12.5\%$ of the samples to the SB solvers based on the angles measured from the origin. For instance, we provided data for angle of $[0, \pi/4]$ for first $t\in[0, 25)$ steps, and so on. This partial observability is periodically rotated through the data stream, thereby testing the algorithm’s ability to learn robustly under sparse and shifting information. Since this requires $200$ batches for the full rotation of the filter, the problem became substantially more challenging, and LightSB and LightSB-M algorithms oftentimes failed on this online learning setting. 

\subsection{Entropic optimal transport benchmark}

Our hyperparameter for the EOT benchmarks choices mostly follow the official repositories of the LightSB\footnote{\href{https://github.com/ngushchin/LightSB}{https://github.com/ngushchin/LightSB}} and LightSB-M\footnote{\href{https://github.com/SKholkin/LightSB-Matching}{https://github.com/SKholkin/LightSB-Matching}}. Since it is known that initial distribution $\mu$ is the standard \Gaussian{} distribution \citep{eotbench}, we only trained $v_\theta$ using the variational MD algorithm. Due to the huge number of configurations, some hyperparameter settings were not clearly reported. Thus, we conducted our own examination on these cases; we replicated better performance than the reported numbers by carefully dealing each benchmark configuration.

\subsection{SB learning with adversarial networks} \label{subsect:mnist_detail}

Suppose a discriminator network, denoted as $D$, is equipped with useful architectural properties for discriminating images. The discriminator outputs a binary classification regarding authenticity through sigmoidal outputs, \ie{}, $D(x)\in[0,1]\enspace \forall x \in \bR^{28\times 28\times 1}$. For image samples $\mathbf{x} = \lbrace x^1,\dots,x^N\rbrace\sim\mu$, we trained the discriminator $D$ with the logistic regression: 
\begin{equation}
  \maximize_D  \frac{1}{N}\sum_{n=1}^N \log D(y^n) + \frac{1}{B} \sum_{m=1}^M\log (1-D(\hat{y}^m_\phi)),
\end{equation}
where $\hat{y}^m_\phi$ in the right-hand side denotes a sample from an SB model parameterized by $\phi$, generated using an input $x^m$. From our experiment setting, we use the SB distribution $\rho_\phi$ which is generated by $\vec{\pi}_\phi$ from samples of the marginal $\mu$. This makes the objective of adversarial learning of training the law of SB process at time $t=1$. For a completely separable metric space, it is well known that the discriminator converges at $D(x) = \frac{\nu(x)}{\nu(x) + \rho_\phi(x)}$ \citep{gan}.

\begin{wraptable}{r}{0.36\textwidth}
  \vskip-30pt
  \centering 
  \caption{Model hyperparameters for $D$.} \label{tab:disc_arch}
  \vskip-8pt
  \begin{tabular}{ | c | c | }
    \hline
    \textbf{Layer Type} & \textbf{Shape} \\ 
    \hline
    Input Layer & (-1, 28, 28, 1) \\
    \hline
    Conv Layer 1 & (-1, 14, 14, 64) \\
    \hline
    Conv Layer 2 & (-1, 7, 7, 128) \\
    \hline
    Batch Norm & (-1, 7, 7, 128) \\
    \hline
    Flatten & (-1, 6272) \\
    \hline
    Dense & (-1, 1024) \\
    \hline
    Dense & (-1, 1) \\
    \hline
  \end{tabular}
  \vskip-25.8pt
\end{wraptable}
In the adversarial learning technique, retaining a fully differentiable computation path from the input pixels to the discriminator outputs is essential. Therefore, we implemented a differentiable inference function using the categorical reparameterization trick with Gumbel-softmax \citep{jang2016categorical}, as well as the Gaussian reparameterization trick. These reparameterization tricks enabled learning with samples generated through LightSB-adv-$K$, directly by maximizing
\[
 \tilde{\mathcal{J}}(\phi) = \frac{1}{M} \sum_{m=1}^M \log D(y^m_\phi) - \log (1 - D(y^m_\phi)), 
\]
where the term essentially represents the \textit{logit} function $\mathrm{logit}(D(y)) = \log \frac{D(y)}{1-D(y)}$. When $D$ approaches the equilibrium, we can approximate the following KL learning
\[
  \tilde{\mathcal{J}}(\phi) \approx \int \log\frac{\nu(y)}{\rho_\phi(y)} \rho_\phi(y) \mathrm{d} y = \mathrm{KL}(\rho_\phi \Vert \nu),
\]
where the KL functional directly corresponds to the divergence minimization of the SB problems \eqref{eq:sbp}~and~\eqref{eq:sbp2}, under the disintegration theorem of \Schrodinger{} bridge \citep{someppm}.

In the MNIST-EMNIST image transfer tasks, we set one of the baseline as the aforementioned adversarial learning as the baseline for training the SB model for the pixel space. Among our attempts, while the LightSB-adv method successfully generated learning signals to train GMM-based models, the losses proposed by LightSB \citep{lsb} and LightSB-M \citep{lsbm} failed to generate relevant images with high fidelity. For the discriminator, we used the DCGAN \citep{radford2015unsupervised} architecture shown in \cref{tab:disc_arch}, and this can be replaced with more complex architecture for more realistic images with high fidelity. We fixed the covariance after warm-ups in 10,000 steps, and we used the entropy coefficient $\varepsilon=10^{-4}$ based on our hyperparameter search.

\subsection{Latent diffusion experiments}

For the latent space, we pretrained ALAE \citep{alae} model using the both MNIST and EMNIST (first ten letters) datasets. The ALAE is a high-fidelity autoencoder internally use an adversarial learning to generate high-fidelity images. For the encoder network, as well as decoder network, we mostly adopt the DCGAN architecture. Therefore, the encoder is  mostly identical to \cref{tab:disc_arch} except the point the final layer is $128$ dimension instead of $1$, and the decoder is a convolutional neural network with four convolutional layers.

Following the latent SB setting \citep{lsb}, we assessed our method by utilizing the ALAE model \citep{alae} for generating $1024\times1024$ images of the FFHQ dataset \citep{ffhq}. The base generative model has a latent embedding layer which represent 512-dimensional embedding space. The goal is to transport a point latent space to another, performing unpaired image-to-image translation tasks for four distinct cases: \textit{Adult\,$\to$\;Child}, \textit{Child\,$\to$\;Adult}, \textit{Female\,$\to$\;Male}, and \textit{Male\,$\to$\;Female}. We conducted a quantitative analysis using the ED on the predefined ALAE embedding as a metric for evaluation.

\begin{table}[t]
  \centering
  \caption{Training time for the 100-dimension single-cell data problem.} \label{tab:training_time1}
    \begin{adjustbox}{width=0.52\textwidth, center}
  \begin{tabular}{c|c|c}
    \toprule
      {Sinkhorn (IPF)} & {LightSB} & {VMSB} \\ \midrule
      8m (GPU) & 66s (CPU) & 32s (GPU) / 22m (CPU) \\
    \bottomrule
  \end{tabular}
    \end{adjustbox}
\end{table}

\begin{table}
  \centering
  \caption{Generation time for the 784-dimension MNIST pixel data.} \label{tab:training_time2}
    \begin{adjustbox}{width=0.6\textwidth, center}
  \begin{tabular}{c|cccc|c}
    \toprule
      & $K=64$ & $K=256$ & $K=1024$ & $K=4096$ & NN (SDE) \\
    \midrule
      {GPU} & 721$\mu$s & 726$\mu$s & 739$\mu$s & 740$\mu$s & 1.372s \\
      {CPU} & 60.140ms & 133.333ms & 428.433ms & 1.527s & $-$\\
    \bottomrule
  \end{tabular}
    \end{adjustbox}
\end{table}

\section{Discussion on Implementation of VMSB} \label{sect:limit}

\textbf{Limitations.}\hspace*{6pt} GMM-based SB models, due to the lack of deep structural processing, tend to focus on \textit{instance-level} associations of images in EOT couplings rather than the \textit{subinstance-} or \textit{feature-level} associations that are intrinsic to deep generative models. As a result, while VMSB produces statistically valid representations of optimal transportation within the given architectural constraints, these outcomes may be perceived as somewhat ``synthetic.'' Nevertheless, GMM-based models still hold an irreplaceable role in numerous problems such as latent diffusion and variational methods, due to their simplicity and distinctive properties \citep{lsb}. As we successfully demonstrated in two distinct ways of interacting with neural networks for solving unpaired image transfer, we hope our theoretical and empirical findings help novel neural architecture studies. 

\textbf{Computation.}\hspace*{6pt} For fast computation, we utilized the JAX automatic differentiation library \citep{jax2018github} for computing gradients and Hessians in \cref{prop:wfr}. For each input, the computational of VMSB requires quadratic time for computing the \Wasserstein{} gradient flow (asymptotically $\cO(K^2 n_y)$) and the memory footprint for estimating with internal \Gaussian{} particles is linear (asymptotically $\cO(K n_y)$). There are inherent trade-offs between accuracy and computational efficiency when choosing between LightSB and VMSB; nevertheless, VMSB remains significantly more manageable and computationally tractable compared to deep learning methods for moderate settings. For instance, we have presented performance regarding efficiency and scalability up to 1,000 dimensions in the experiments. Driven by parallel nature of Gaussian particles, we observed that the computation of \cref{prop:wfr} favors vectorized instructions, and the expected speed enhancement from using GPUs is much more evident in neural network cases. In \cref{tab:training_time1}, we report the wall-clock time for a 100-dimensional single-cell data problem \cite{sbml,lsb}, where the performance is reported in \cref{tab:msci}. Additionally, training time in the MNIST-EMNIST translation is reported in \cref{tab:mnist_stat} in the ablation study. This property also holds for generation, allowing practitioners to deploy the model much faster on GPUs. In \cref{tab:training_time2}, we also report that generating 100 MNIST samples from 4096 Gaussian particles, equipped with competitive performance, can be done 1,854 times faster under the same hardware. Since VMSB a simulation-free, the GMM generation process does not suffer from discretization errors of SDE.

\textbf{Reproducibility statement.}\hspace*{6pt} Comprehensive justification and theoretical background are presented in Appendices~\ref{sect:proof}~and~\ref{sect:discuss}.  Since the primary contributions of this paper pertain to the learning methodology, we ensured that all architectures and hyperparameters remained consistent across the LightSB variants. All datasets utilized in this study are available for download alongside the training scripts. Please refer to \cref{sect:details} for more information on the experimental setups.

\begin{table}[h]
  \tikzset{inner sep=0pt, outer sep=0pt}    
  \caption{EOT Benchmark scores of \BWUVP{} $\downarrow$ (\%).} \label{tab:app_bw}
  \centering
  \begin{adjustbox}{width=1.1\textwidth, center}
    \begin{tabular}{c c c c c c c c c c c c c c}
      \toprule
      \multirow{2}{*}{\raisebox{-5pt}{\textbf{Type}}}& \multirow{2}{*}{\raisebox{-5pt}{\textbf{Solver}}} & \multicolumn{4}{c}{$\varepsilon=0.1$} & \multicolumn{4}{c}{$\varepsilon=1$} & \multicolumn{4}{c}{$\varepsilon=10$} \\
      \cmidrule(lr){3-6}\cmidrule(lr){7-10}\cmidrule(lr){11-14}  
      & & $d=2$ & $d=16$ & $d=64$ & $d=128$ & $d=2$ & $d=16$ & $d=64$ & $d=128$ & $d=2$ & $d=16$ & $d=64$ & $d=128$ \\
      \midrule
        \multicolumn{2}{c}{Classical solvers (best)$^\dagger$}& 0.016 & 0.05 & 0.25 & 0.22 & 0.005 & 0.09 & 0.56 & 0.12 & 0.01 & 0.02 & 0.15 & 0.23 \\
        {\small Bridge-M}&DSBM (\citeauthor{dsbm})$^\ddagger$& 0.03 & 0.18 & 0.7 & 2.26 & 0.04 & 0.09 & 1.9 & 7.3 & 0.26 &  102 & 3563 & 15000 \\
        {\small Bridge-M}&SF$^2$M-Sink (\citeauthor{tong2023simulation})$^\ddagger$& 0.04 & 0.18 & 0.39 & 1.1 & 0.07 & 0.3 & 4.5 & 17.7 & 0.17 & 4.7 & 316 & 812 \\
      \midrule
        rev. KL& LightSB (\citeauthor{lsb}) & 
          $0.004\pm0.004$ & $0.009\pm0.004$ & $0.023\pm0.003$ & $0.036\pm0.003$ & 
          $0.004\pm0.005$ & $0.009\pm0.003$ & $0.016\pm0.002$ & $0.035\pm0.003$ & 
          $0.009\pm0.004$ & $0.013\pm0.007$ & $0.034\pm0.004$ & $0.066\pm0.008$ \\
        {\small Bridge-M}&LightSB-M (\citeauthor{lsbm}) & 
          $0.005\pm0.003$ & $0.012\pm0.004$ & $0.034\pm0.003$ & $0.063\pm0.002$ & 
          $0.005\pm0.001$ & $0.027\pm0.007$ & $0.057\pm0.010$ & $0.108\pm0.004$ & 
          $0.004\pm0.002$ & $0.017\pm0.007$ & $0.133\pm0.010$ & $0.409\pm0.042$ \\
        EMA& LightSB-EMA & 
          $0.004\pm0.002$ & $0.014\pm0.003$ & $0.021\pm0.003$ & $0.044\pm0.001$ & 
          $0.004\pm0.003$ & $0.009\pm0.004$ & $0.013\pm0.001$ & $0.032\pm0.004$ & 
          $0.004\pm0.001$ & $0.008\pm0.003$ & $0.023\pm0.013$ & $0.010\pm0.002$ \\  
      \midrule
        Var-MD &VMSB (ours) & 
          $\mathbf{0.003\pm0.001}$ & $\mathbf{0.007\pm0.003}$ & $\mathbf{0.018\pm0.002}$ & $\mathbf{0.039\pm0.001}$ & 
          $\mathbf{0.002\pm0.002}$ & $\mathbf{0.004\pm0.001}$ & $\mathbf{0.009\pm0.001}$ & $\mathbf{0.023\pm0.003}$ & 
          $\mathbf{0.005\pm0.007}$ & $\mathbf{0.006\pm0.004}$ & $\mathbf{0.011\pm0.010}$ & $\mathbf{0.011\pm0.004}$ \\
        Var-MD &VMSB-M (ours) & 
          $\mathbf{0.002\pm0.001}$ & $\mathbf{0.010\pm0.067}$ & $\mathbf{0.031\pm0.004}$ & $\mathbf{0.056\pm0.005}$ & 
          $\mathbf{0.003\pm0.004}$ & $\mathbf{0.005\pm0.002}$ & $\mathbf{0.032\pm0.006}$ & $\mathbf{0.077\pm0.018}$ & 
          $\mathbf{0.003\pm0.003}$ & $\mathbf{0.011\pm0.004}$ & $\mathbf{0.117\pm0.012}$ & $0.429\pm0.748$ \\
      \bottomrule
    \end{tabular}
  \end{adjustbox}
\end{table} 

\begin{table}[h]
  \tikzset{inner sep=0pt, outer sep=0pt}    
  \caption{EOT scores of \cBWUVP{} $\downarrow$ (\%), the fully extended version of \cref{tab:eot}.} \label{tab:app_cbw}
  \centering
  \begin{adjustbox}{width=1.1\textwidth, center}
    \begin{tabular}{c c c c c c c c c c c c c c}
      \toprule
      \multirow{2}{*}{\raisebox{-5pt}{\textbf{Type}}}& \multirow{2}{*}{\raisebox{-5pt}{\textbf{Solver}}} & \multicolumn{4}{c}{$\varepsilon=0.1$} & \multicolumn{4}{c}{$\varepsilon=1$} & \multicolumn{4}{c}{$\varepsilon=10$} \\
      \cmidrule(lr){3-6}\cmidrule(lr){7-10}\cmidrule(lr){11-14}
      & & $d=2$ & $d=16$ & $d=64$ & $d=128$ & $d=2$ & $d=16$ & $d=64$ & $d=128$ & $d=2$ & $d=16$ & $d=64$ & $d=128$ \\
      \midrule
        \multicolumn{2}{c}{Classical solvers (best)$^\dagger$}& 1.94 & 13.67 & 11.74 & 11.4 & 1.04 & 9.08 & 18.05 & 15.23 & 1.40 & 1.27 & 2.36 & 1.31 \\ 
        {\small Bridge-M}&DSBM (\citeauthor{dsbm})$^\ddagger$& 5.2 & 10.8 & 37.3 & 35 & 0.3 & 1.1 & 9.7 & 31 & 3.7 & 105 & 3557 & 15000 \\
        {\small Bridge-M}&SF$^2$M-Sink (\citeauthor{tong2023simulation})$^\ddagger$& 0.54 & 3.7 & 9.5 & 10.9 & 0.2 & 1.1 & 9 & 23 & 0.31 & 4.9 & 319 & 819 \\
      \midrule
        rev. KL& LightSB (\citeauthor{lsb}) & 
          $0.007\pm0.005$ & $0.040\pm0.023$ & $0.100\pm0.013$ & $0.140\pm0.003$ &
          $0.014\pm0.003$ & $0.026\pm0.002$ & $0.060\pm0.004$ & $0.140\pm0.003$ &
          $0.019\pm0.005$ & $0.027\pm0.005$ & $0.052\pm0.002$ & $0.092\pm0.001$ \\
        {\small Bridge-M}&LightSB-M (\citeauthor{lsbm}) & 
          $0.017\pm0.004$ & $0.088\pm0.014$ & $0.204\pm0.036$ & $0.346\pm0.036$ &
          $0.020\pm0.007$ & $0.069\pm0.016$ & $0.134\pm0.014$ & $0.294\pm0.017$ &
          $0.014\pm0.001$ & $0.029\pm0.004$ & $0.207\pm0.005$ & $0.747\pm0.028$ \\
        EMA& LightSB-EMA & 
          $0.005\pm0.002$ & $0.040\pm0.014$ & $0.078\pm0.007$ & $0.149\pm0.006$ &
          $0.012\pm0.002$ & $0.022\pm0.003$ & $0.051\pm0.001$ & $0.127\pm0.002$ &
          $0.017\pm0.003$ & $0.021\pm0.003$ & $0.025\pm0.002$ & $0.042\pm0.002$ \\
      \midrule
        Var-MD &VMSB (ours)& 
          $\mathbf{0.004\pm0.001}$ & $\mathbf{0.012\pm0.002}$ & $\mathbf{0.038\pm0.002}$ & $\mathbf{0.101\pm0.002}$ &
          $\mathbf{0.010\pm0.001}$ & $\mathbf{0.018\pm0.001}$ & $\mathbf{0.044\pm0.001}$ & $\mathbf{0.114\pm0.001}$ &
          $\mathbf{0.013\pm0.001}$ & $\mathbf{0.019\pm0.001}$ & $\mathbf{0.021\pm0.008}$ & $\mathbf{0.040\pm0.001}$ \\
        Var-MD &VMSB-M (ours)& 
          $\mathbf{0.015\pm0.016}$ & $\mathbf{0.067\pm0.036}$ & $\mathbf{0.108\pm0.020}$ & $\mathbf{0.253\pm0.107}$ &
          $\mathbf{0.010\pm0.001}$ & $\mathbf{0.019\pm0.001}$ & $\mathbf{0.094\pm0.010}$ & $\mathbf{0.222\pm0.033}$ &
          $\mathbf{0.013\pm0.001}$ & $\mathbf{0.029\pm0.003}$ & $\mathbf{0.193\pm0.015}$ & $0.748\pm0.036$ \\
      \bottomrule
    \end{tabular}
  \end{adjustbox}
\end{table} 

\section{Additional Experimental Results} \label{sect:extra}
 
\subsection{Additional results on the EOT benchmark}
We present the full results of EOT benchmark experiments. Tables~\ref{tab:app_bw}~and~\ref{tab:app_cbw} show comprehensive statistics on the EOT benchmark with more SB solvers. As mentioned in \S~\ref{subsect:eot_bench}, the VMSB and VMSB-M solvers consistently brought better performance with low standard deviations of scores for \cBWUVP{} and \BWUVP{} measures. We note that the experiment was conducted in a highly controlled setting with identical model configurations; with all other aspects controlled and outcomes differing only by learning methods, the consistent performance gains of our work were a well-anticipated result from our theoretical analysis.

\begin{wrapfigure}{r}{0.268\textwidth}
  \definecolor{plotblue}{RGB}{31, 119, 180}
  \definecolor{plotred}{RGB}{214, 39, 40}
  \centering
  \tikzset{inner sep=0pt, outer sep=0pt}
    \begin{tikzpicture}
      \clip (-0.706,-0.806) rectangle (3.265,2.81);
      \begin{scope}[scale=0.5]
        \begin{scope}[xshift=0.2cm,yshift=-0.65cm]
        \begin{axis}[
             grid=both, mark=none, xmin=0, ymin=0, xmax=14, ymax=131.1,
             axis line style={ultra thick},
             xtick={1,5,9,13},
             ytick={0,15.211,100},
             xticklabels={64,256,1024, 4096},
             yticklabels={0,DSBM,100},
             yticklabel style={font=\Large, xshift=-4pt},
             xticklabel style={font=\Large, yshift=-4pt},
             width=7cm,
             height=7.8cm]
          \addlegendentry{VMSB-adv}
          \addplot[
              line width=1mm, color=plotblue, mark=*, mark size=4pt,
              mark options={solid, fill=plotblue}] coordinates {
            (1, 129.4994659)
            (5, 52.63378143)
            (9, 24.02162552)
            (13, 15.471185684204102)
           };
        \end{axis}
        \node[scale=0.4] at (2.8,-0.8) {{\Large\textsf{Number of Gaussian modalities $K$}}};
        \node[scale=0.4, rotate=90] at (-0.7, 3.3) {\huge FID};
        \end{scope}
      \end{scope}
    \end{tikzpicture}
    \caption{FID vs. modality.} \label{fig:fvm}
    \vskip-11pt
\end{wrapfigure}
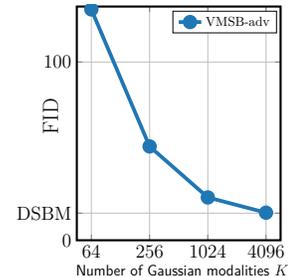

\subsection{Additional image generation results}

In the unpaired EMNIST-to-MNIST translation task for the raw 784 pixel, we measured FID scores for various $K$ for the SB parameterization. We considered $K\in\{64,256,1024,4096\}$ with $\varepsilon=10^{-4}$ for our VMSB algorithm. Our observations, both qualitative and quantitative, indicate that higher modalities yield higher-quality samples. In every case of $K$, VMSB-adv outperformed its counterpart. For instance, \cref{fig:fvm} demonstrates that VMSB generates more diverse samples with high fidelity. Notably, we achieved the competitive FID score of 15.471 using a standard neural network discriminator with relatively low MSD similarity scores. As the latent VMSB model for 128-dimensional embeddings also achieved the considerably low FID score of 9.558 (\cref{tab:mnist_fid}), we concluded that VMSB showed promising quality improvements for the both case, and this supports the generality of our theory.

\begin{figure}[t]
  \def\genmdltxt{0.7}
  \def\gentsktxt{0.7}
  \def\mnistw{66.4pt}
  \tikzset{inner sep=0pt, outer sep=0pt}    
  \centering 
  \begin{tikzpicture}[tight background, scale=0.8]
    \clip (-9.9,-1.463) rectangle (9.9,5.22);
    \begin{scope}[yshift=3.4cm]
      \node[scale=\gentsktxt, rotate=90] at (-9.76,0) {EMNIST-to-MNIST};
      \begin{scope}[xshift=-6.5cm, every node/.style={font=\sffamily}]
        \node[scale=\genmdltxt] at (-1.53, 1.65) {LightSB-adv-256};
        \node[scale=\genmdltxt] at (1.53, 1.65) {\bfseries Ours-256};
        \node at (-1.53, 0) {\includegraphics[width=\mnistw]{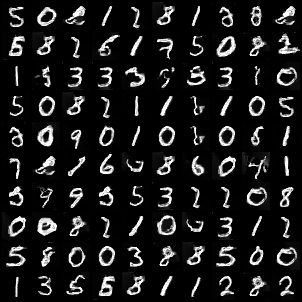}};
        \node at (1.53, 0) {\includegraphics[width=\mnistw]{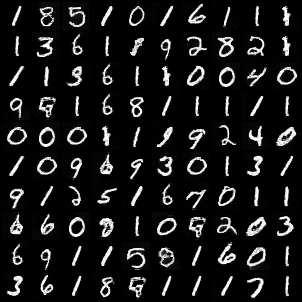}};
      \end{scope}
      \begin{scope}[every node/.style={font=\sffamily}]
        \node[scale=\genmdltxt] at (-1.53, 1.65) {LightSB-adv-1024};
        \node[scale=\genmdltxt] at (1.53, 1.65) {\bfseries Ours-1024};
        \node at (-1.53, 0) {\includegraphics[width=\mnistw]{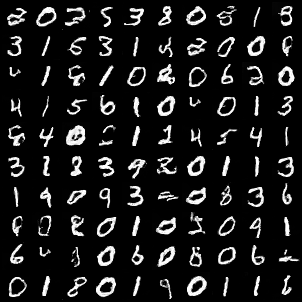}};
        \node at (1.53, 0) {\includegraphics[width=\mnistw]{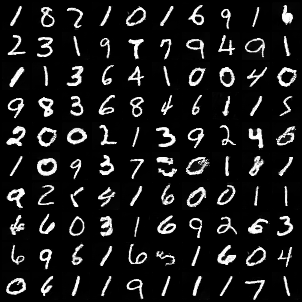}};
      \end{scope}
      \begin{scope}[xshift=6.5cm, every node/.style={font=\sffamily}]
        \node[scale=\genmdltxt] at (-1.53, 1.65) {LightSB-adv-4096};
        \node[scale=\genmdltxt] at (1.53, 1.65) {\bfseries Ours-4096};
        \node at (-1.53, 0) {\includegraphics[width=\mnistw]{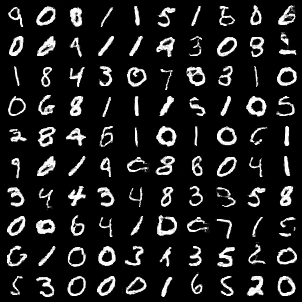}};
        \node at (1.53, 0) {\includegraphics[width=\mnistw]{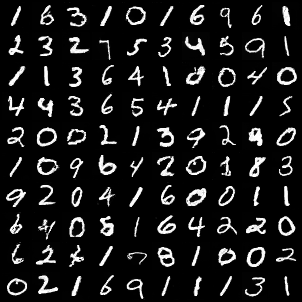}};
      \end{scope}
    \end{scope}
    \begin{scope}
      \node[scale=\gentsktxt, rotate=90] at (-9.76,0) {MNIST-to-EMNIST};
      \begin{scope}[xshift=-6.5cm, every node/.style={font=\sffamily}]
        \node[scale=\genmdltxt] at (-1.53, 1.65) {LightSB-adv-256};
        \node[scale=\genmdltxt] at (1.53, 1.65) {\bfseries Ours-256};
        \node at (-1.53, 0) {\includegraphics[width=\mnistw]{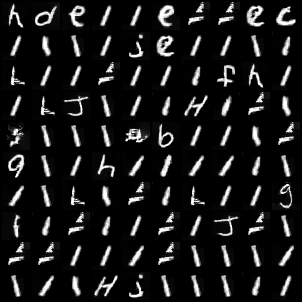}};
        \node at (1.53, 0) {\includegraphics[width=\mnistw]{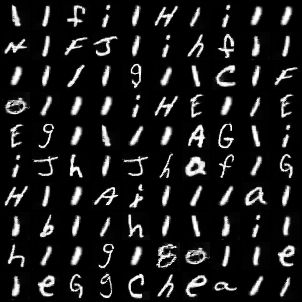}};
      \end{scope}
      \begin{scope}[every node/.style={font=\sffamily}]
        \node[scale=\genmdltxt] at (-1.53, 1.65) {LightSB-adv-1024};
        \node[scale=\genmdltxt] at (1.53, 1.65) {\bfseries Ours-1024};
        \node at (-1.53, 0) {\includegraphics[width=\mnistw]{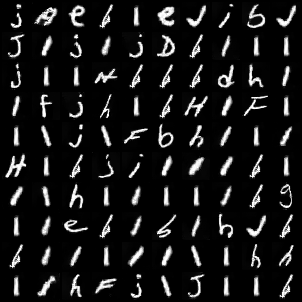}};
        \node at (1.53, 0) {\includegraphics[width=\mnistw]{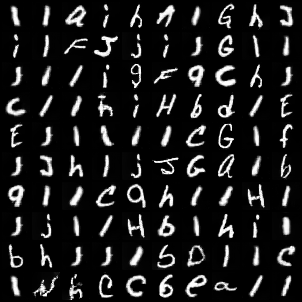}};
      \end{scope}
      \begin{scope}[xshift=6.5cm, every node/.style={font=\sffamily}]
        \node[scale=\genmdltxt] at (-1.53, 1.65) {LightSB-adv-4096};
        \node[scale=\genmdltxt] at (1.53, 1.65) {\bfseries Ours-4096};
        \node at (-1.53, 0) {\includegraphics[width=\mnistw]{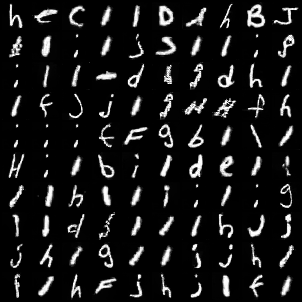}};
        \node at (1.53, 0) {\includegraphics[width=\mnistw]{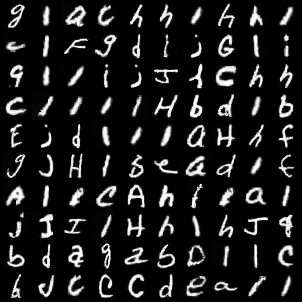}};
      \end{scope}
    \end{scope}  
  \end{tikzpicture}
  \caption{Generation results of unpaired image-to-image translation in the raw pixel space. We considered image data from MNIST and EMNIST (containing the first ten letters), sized as $28\!\times\!28$ pixels. For comparison, we trained GMM-based models with adversarial learning using a simple logistic discriminator (\cref{tab:disc_arch}). This was used as both a benchmark and a tractable target SB model (LightSB-adv-$K$). VMSB in the raw pixel domain demonstrate qualitative improvements in terms of diversity and clarity 
  of image samples.} \label{fig:mnist_qual}
\end{figure}

\begin{table}[b]
  \begin{minipage}[b]{0.36\textwidth}
    \centering 
    \caption{MNIST transfer statistics.} \label{tab:mnist_stat}
    \vskip-4.5pt
    \begin{adjustbox}{width=0.99\textwidth,center}
    \begin{tabular}{ l c c c }
      \toprule
        & \textbf{FID} & \textbf{Time} & \textbf{Parameters} \\
      \midrule
      LightSB-256 & $61.257$ & 30m & 0.4M \\
      LightSB-1024 & $26.487$ & 53m & 1.6M \\
      LightSB-4096 & $20.017$ & 135m & 6.4M \\
      \midrule
      VMSB-256 & $52.634$ & 76m & 0.4M \\
      VMSB-1024 & $24.022$ & 203m &1.6M \\
      VMSB-4096 & $15.471$ & 44h & 6.4M \\
      \midrule
      DSBM-IMF & $11.429$ & 42h & 6.6M \\
      \bottomrule
    \end{tabular}
    \end{adjustbox}
  \end{minipage}
  \hfill
  \begin{minipage}[b]{0.6\textwidth}
    \centering
    \caption{FID scores and differences for generated MNIST.} \label{tab:mnist_fid_diff}
    \vskip-4pt
    \begin{adjustbox}{width=0.99\textwidth,center}
    \begin{tabular}{lccc}
      \toprule
        & \textbf{FID (Train)} & \textbf{FID (Test)} & \textbf{Diff. (test $-$ train)}. \\
      \midrule
      LightSB-adv-256 & $60.746$ & $61.604$ & $0.858$ \\
      LightSB-adv-1024 & $25.934$ & $26.569$ & $0.635$ \\
      LightSB-adv-4096 & $19.960$ & $20.196$ & $0.237$ \\
      \midrule
      VMSB-adv-256 & $51.684$ & $52.283$ & $0.599$ \\
      VMSB-adv-1024 & $23.853$ & $24.053$ & $0.200$ \\
      VMSB-adv-4096 & $15.508$ & $15.496$ & $-0.012$ \\
      \bottomrule
    \end{tabular}
    \end{adjustbox}
  \end{minipage}
\end{table}

\cref{fig:mnist_qual} demonstrates that VMSB generated more diverse samples with high fidelity. Note that the proposed method suffers less from mode collapse than LightSB method (especially on the transfer MNIST-to-EMNIST), with the same Gaussian mixture setting. This result is especially a good point where the difference only lies in the learning methodology, which aligns with our theory. Tables~\ref{tab:mnist_stat}~and~\ref{tab:mnist_fid_diff} effectively show the statistics and FID scores on both the train and the test datasets. The quantitative results highlight that the VMSB solver is more performant with less overfitting than its counterpart. Consequently, our claim regarding the stability of SB solution acquisition is verified by additional experiments involving pixel spaces.

\begin{figure}[t]
  \tikzset{inner sep=0pt, outer sep=0pt}
  \centering
  \begin{tikzpicture}[tight background]
    \begin{scope}[yshift=1.8cm]
      \begin{scope}[yshift=0.85cm]
        \begin{scope}[xshift=-2.4cm, every node/.style={font=\sffamily, scale=0.51}]
          \node at (-1.4602, 0) {\textit{\textbf{Adult $\bm{\to}$ Child}}};
          \node at (0, 0) {VMSB};
          \node at (1.4602, 0) {VMSB-M};
        \end{scope}  
        \begin{scope}[xshift=2.4cm, every node/.style={font=\sffamily, scale=0.51}]
          \node at (-1.4602, 0) {\textit{\textbf{Male $\bm{\to}$ Female}}};
          \node at (0, 0) {VMSB};
          \node at (1.4602, 0) {VMSB-M};
        \end{scope}        
      \end{scope}
      \begin{scope}[xshift=-2.4cm]
        \node at (-1.4588, 0) {\includegraphics[width=41.7pt]{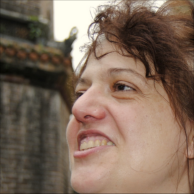}};
        \node at (0, 0) {\includegraphics[width=41.7pt]{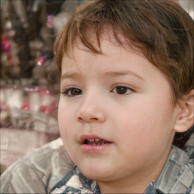}};
        \node at (1.4588, 0) {\includegraphics[width=41.7pt]{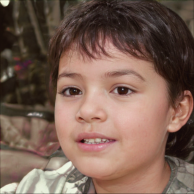}};
      \end{scope}   
      \begin{scope}[xshift=2.4cm]
        \node at (-1.4588, 0) {\includegraphics[width=41.7pt]{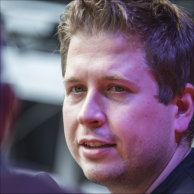}};
        \node at (0, 0) {\includegraphics[width=41.7pt]{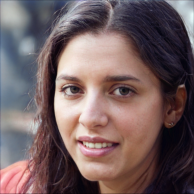}};
        \node at (1.4588, 0) {\includegraphics[width=41.7pt]{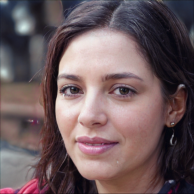}};
      \end{scope}
    \end{scope}
    \begin{scope}
      \begin{scope}[yshift=0.85cm]
        \begin{scope}[xshift=-2.4cm, every node/.style={font=\sffamily, scale=0.51}]
          \node at (-1.4602, 0) {\textit{\textbf{Child $\bm{\to}$ Adult}}};
          \node at (0, 0) {VMSB};
          \node at (1.4602, 0) {VMSB-M};
        \end{scope}  
        \begin{scope}[xshift=2.4cm, every node/.style={font=\sffamily, scale=0.51}]
          \node at (-1.4602, 0) {\textit{\textbf{Female $\bm{\to}$ Male}}};
          \node at (0, 0) {VMSB};
          \node at (1.4602, 0) {VMSB-M};
        \end{scope}        
      \end{scope}
      \begin{scope}[xshift=-2.4cm]
        \node at (-1.4588, 0) {\includegraphics[width=41.7pt]{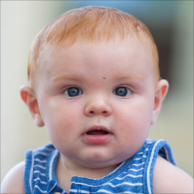}};
        \node at (0, 0) {\includegraphics[width=41.7pt]{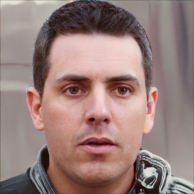}};
        \node at (1.4588, 0) {\includegraphics[width=41.7pt]{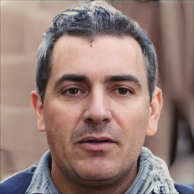}};
      \end{scope}   
      \begin{scope}[xshift=2.4cm]
        \node at (-1.4602, 0) {\includegraphics[width=41.7pt]{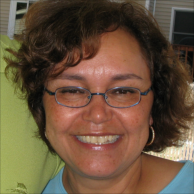}};
        \node at (0, 0) {\includegraphics[width=41.7pt]{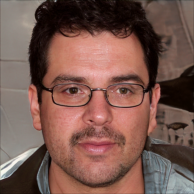}};
        \node at (1.4602, 0) {\includegraphics[width=41.7pt]{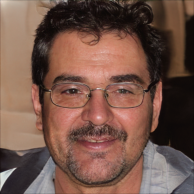}};
      \end{scope}
    \end{scope}
  \end{tikzpicture}
  \caption{Image-to-Image translation on a latent space for the  VMSB and VMSB-M algorithms.} \label{fig:i2i2}
\end{figure} 

We present Embedding-ED scores \citep{jayasumana2024rethinking} in \cref{tab:emmd}, and some qualitative generation results which is visualized in \cref{fig:i2i}. For quantitative results, we calculated statistics from ED scores on embeddings of the ALAE model \citep{alae}, for the four different unpaired image-to-image translation tasks. The results show that VMSB is capable of translating an arbitrary representation, which is closer to target domain than baselines. In \cref{fig:i2i2}, as well as \cref{fig:i2i}, we can see that VMSB and VMSB-M algorithms generate FFHQ data with a given translation task. To qualitatively verify these generation results, we generated images using LightSB and VMSB in Figures~\ref{fig:app_alae2}~and~\ref{fig:app_alae1}. Since these improvements are purely based on information geometry and learning theory, we anticipate that following works on the variational principle application across various fields such as image processing, natural language processing, and control systems \citep{swav, i2sb, wassword, likesb}.

\begin{table}[h]
  \centering
  \caption{ALAE Embedding-ED scores. To evaluate the performance, we computed averages and standard deviations of the ED scores across four different transfer tasks.}\label{tab:emmd}
  \vskip-4pt
  \begin{adjustbox}{width=0.8\textwidth}
    \begin{tabular}{c c c c c}
      \toprule 
      & $\varepsilon=0.1$ & $\varepsilon=0.5$ & $\varepsilon=1.0$ & $\varepsilon=10.0$ \\
      \midrule
      SF$^2$M-Sink & $0.02916 \pm 0.00145$ & $0.04112 \pm 0.00191$ & $0.05670 \pm 0.00249$ & $0.06641 \pm 0.00441$ \\
      DSBM-IMF & $0.02275 \pm 0.00101$ & $0.03358 \pm 0.00142$ & $0.04866 \pm 0.00168$ & $0.06474 \pm 0.00381$ \\
      \midrule
      LightSB & $0.01086 \pm 0.00045$ & $0.02382 \pm 0.00093$ & $0.03462 \pm 0.00148$ & $0.05376 \pm 0.00273$\\
      LightSB-M & $0.01066 \pm 0.00055$ & $0.02366 \pm 0.00107$ & $0.03519 \pm 0.00153$ & $0.05975 \pm 0.00298$\\
      \midrule
      VMSB & $\mathbf{0.01002 \pm 0.00055}$ & $\mathbf{0.02288 \pm 0.00101}$ & $\mathbf{0.03396 \pm 0.00174}$ & $\mathbf{0.05315 \pm 0.00307}$\\
      VMSB-M & $\mathbf{0.00997 \pm 0.00054}$ & $\mathbf{0.02298 \pm 0.00106}$ & $\mathbf{0.03391 \pm 0.00140}$ & $\mathbf{0.05351 \pm 0.00241}$\\
      \bottomrule
    \end{tabular}
  \end{adjustbox}
\end{table}

\begin{figure}[h]
  \tikzset{inner sep=0pt, outer sep=0pt}    
  \centering
  \begin{tikzpicture}[tight background]
    \clip (-5, -1.637) rectangle (5, 11.84);  
    \begin{scope}[yshift=10.2cm]
      \node[rotate=90, scale=0.84] at (-4.6,0) {VMSB,\enspace$\varepsilon = 0.1$};
      \node at (0,0) {\includegraphics[width=250pt]{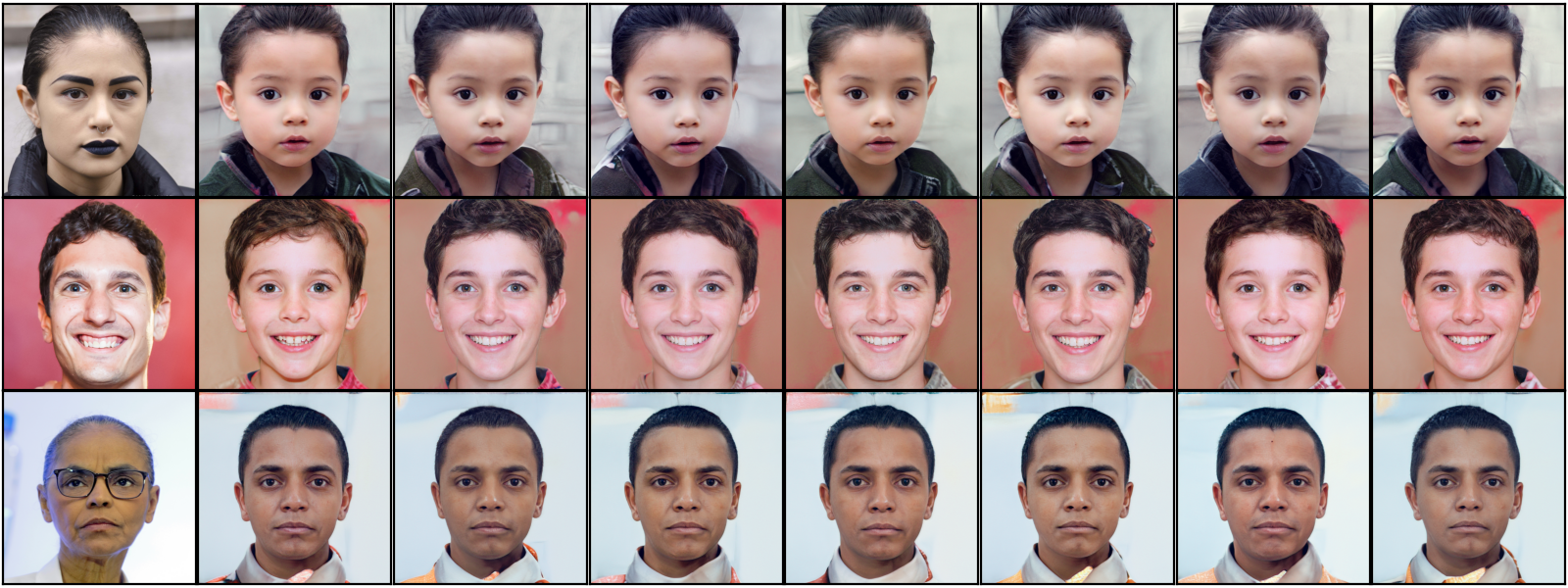}};
    \end{scope}
    \begin{scope}[yshift=6.8cm]
      \node[rotate=90, scale=0.84] at (-4.6,0) {VMSB,\enspace$\varepsilon = 0.5$};
      \node at (0,0) {\includegraphics[width=250pt]{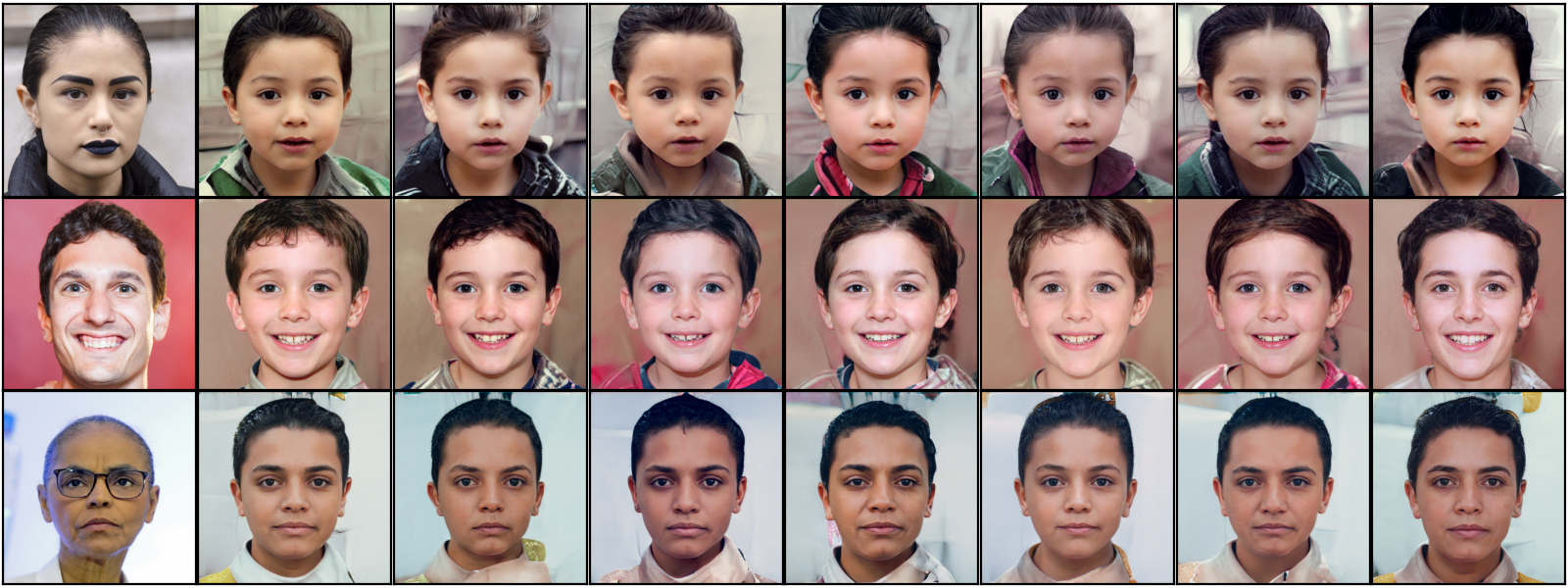}};
    \end{scope}
    \begin{scope}[yshift=3.4cm]
      \node[rotate=90, scale=0.84] at (-4.6,0) {VMSB,\enspace$\varepsilon = 1.0$};
      \node at (0,0) {\includegraphics[width=250pt]{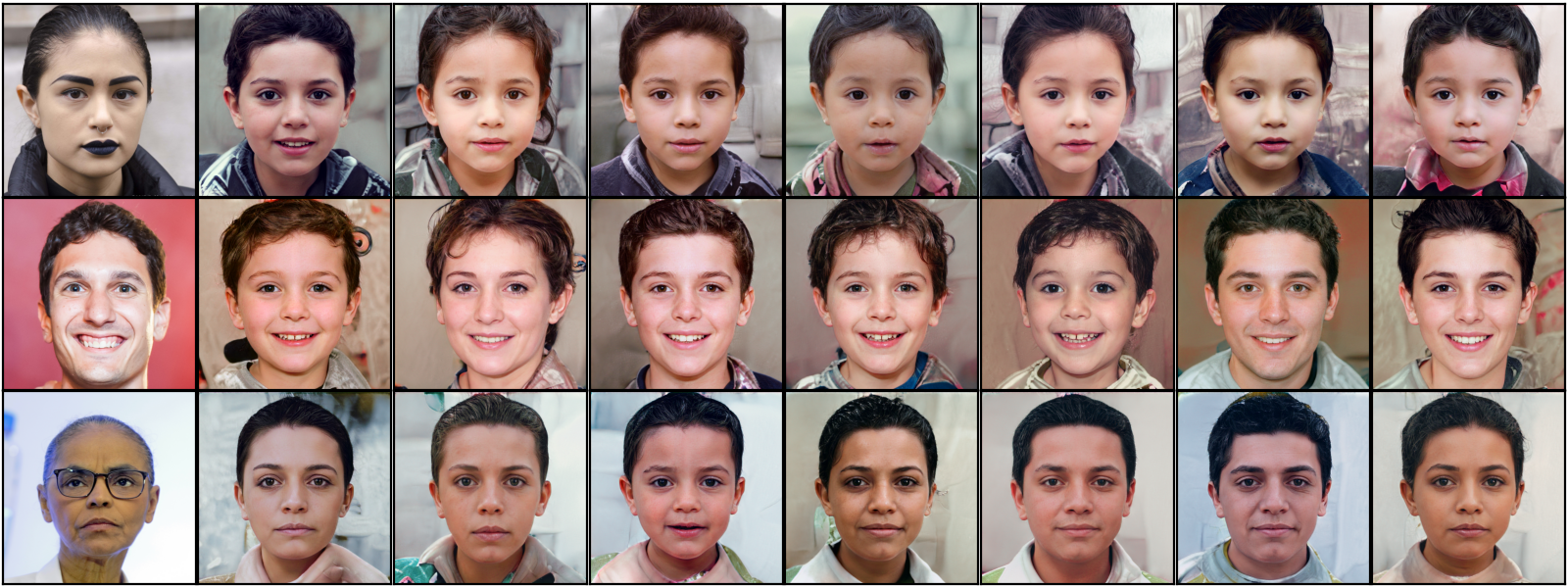}};
    \end{scope}
    \begin{scope}
      \node[rotate=90, scale=0.84] at (-4.6,0) {VMSB,\enspace$\varepsilon = 10.0$};
      \node at (0,0) {\includegraphics[width=250pt]{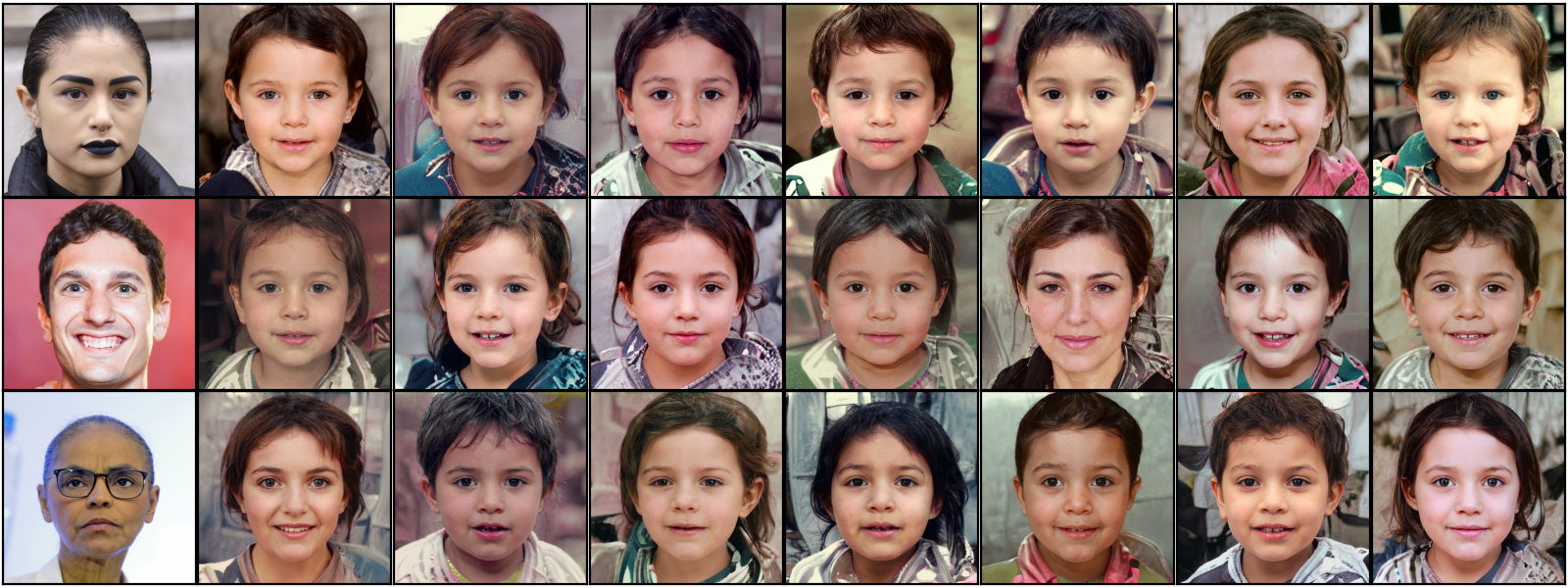}};
    \end{scope}
  \end{tikzpicture}
  \caption{Generation results of VMSB (\textit{Adult}\,$\bm{\to}$\,\textit{Child}) with different volatility settings} \label{fig:app_alae2}
\end{figure} 

\clearpage

\begin{figure}[p]
  \vspace*{\fill}
  \tikzset{inner sep=0pt, outer sep=0pt}
  \centering
  \begin{tikzpicture}[tight background]
    \clip (-6, -1.97) rectangle (6, 13.97);
    \begin{scope}[yshift=12cm]
      \node[rotate=90] at (-5.5,0) {LightSB,\enspace$\varepsilon = 1.0$};
      \node at (0,0) {\includegraphics[width=300pt]{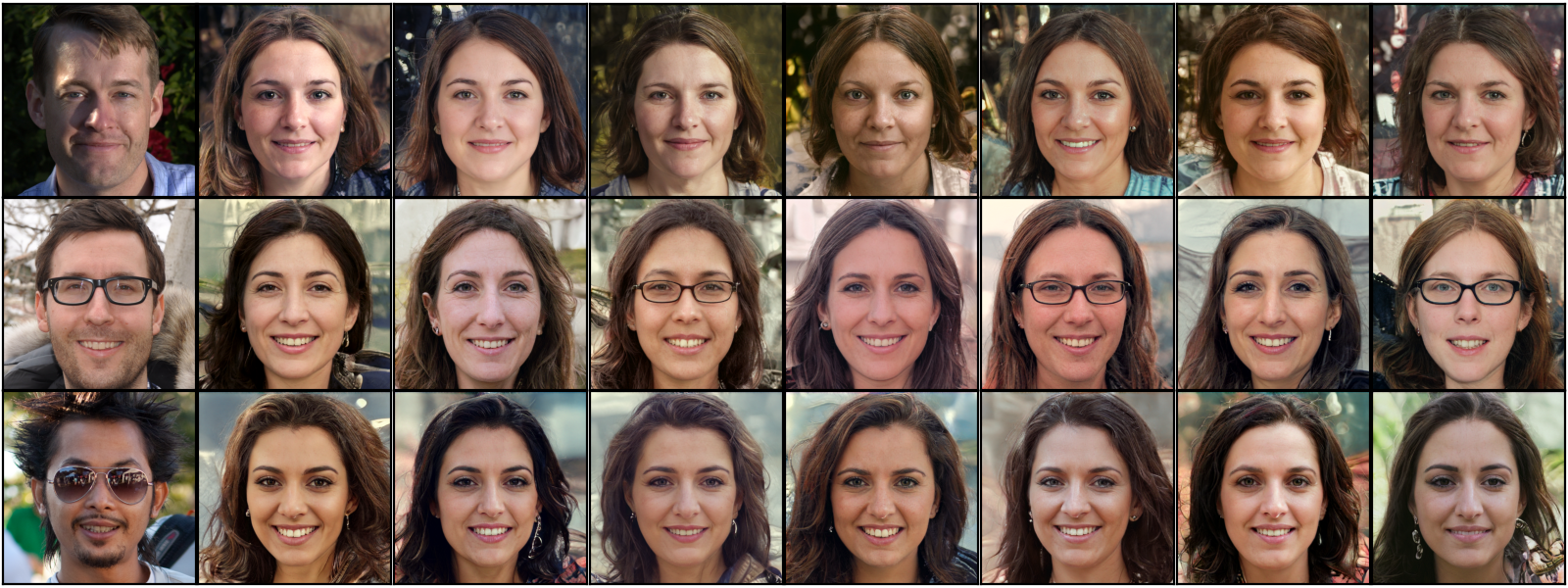}};
    \end{scope}
    \begin{scope}[yshift=8cm]
      \node[rotate=90] at (-5.5,0) {VMSB,\enspace$\varepsilon = 1.0$};
      \node at (0,0) {\includegraphics[width=300pt]{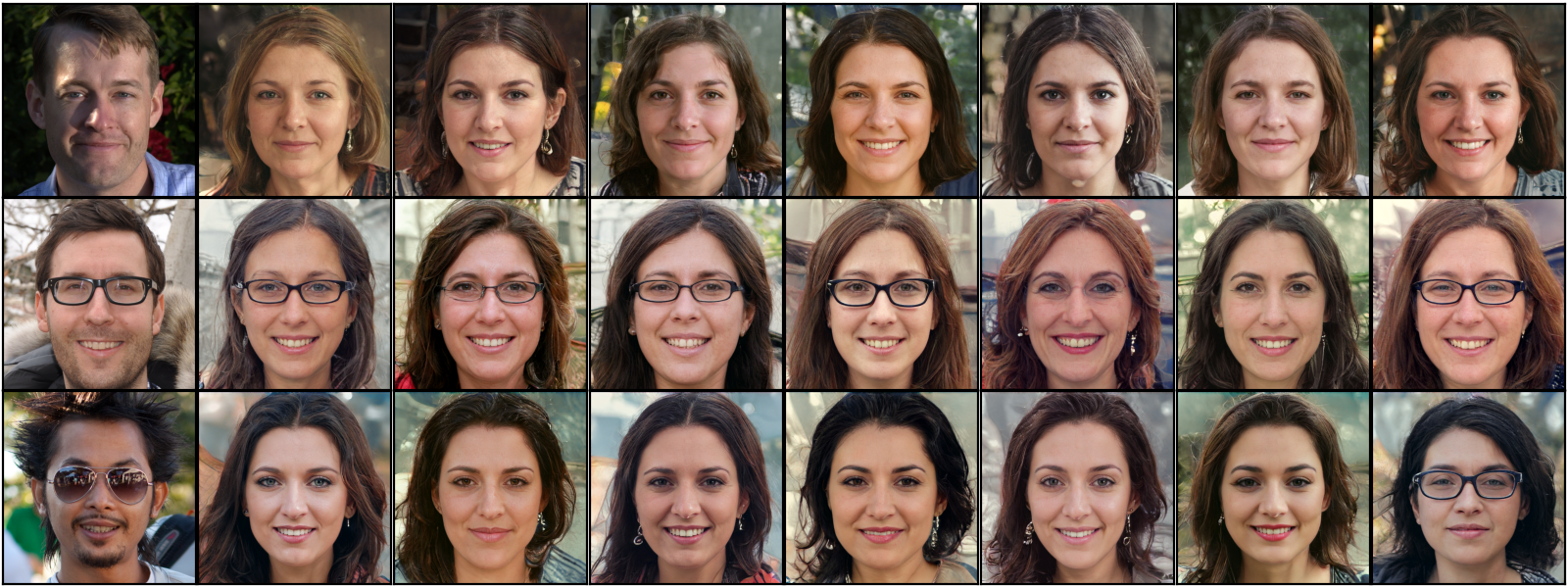}};
    \end{scope}
    \begin{scope}[yshift=4cm]
      \node[rotate=90] at (-5.5,0) {LightSB,\enspace$\varepsilon = 1.0$};
      \node at (0,0) {\includegraphics[width=300pt]{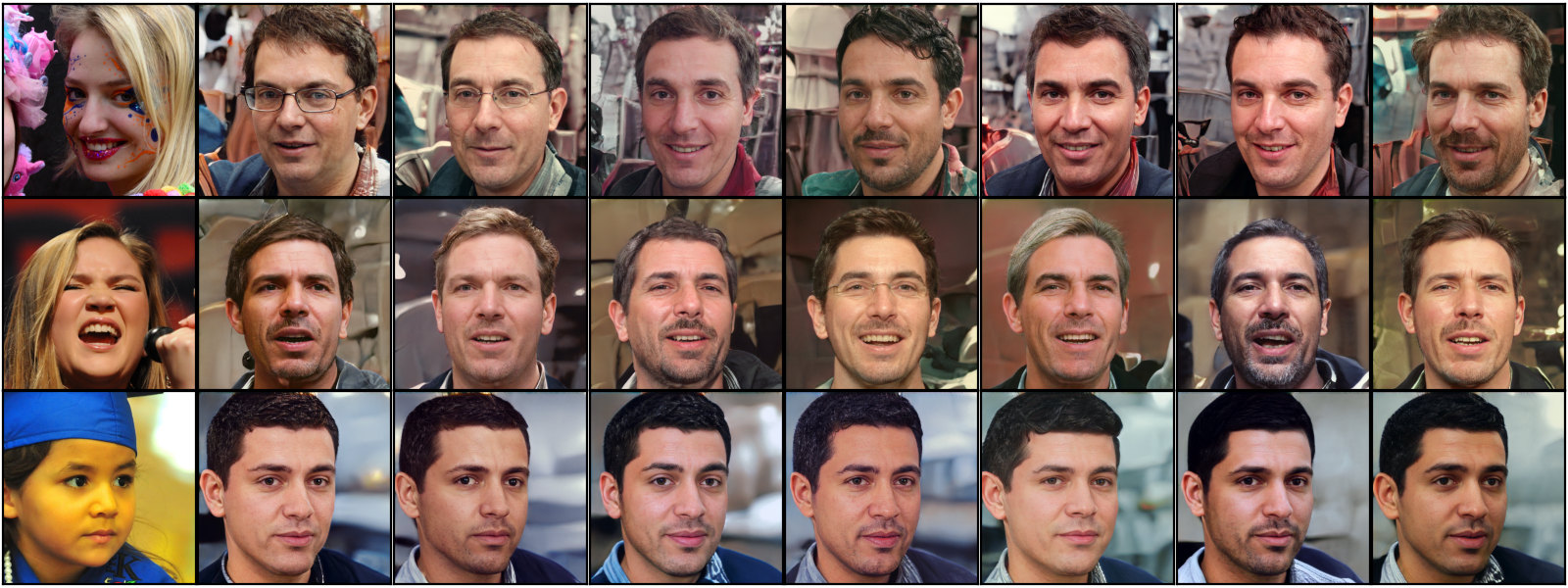}};
    \end{scope}
    \begin{scope}
      \node[rotate=90] at (-5.5,0) {VMSB,\enspace$\varepsilon = 1.0$};
      \node at (0,0) {\includegraphics[width=300pt]{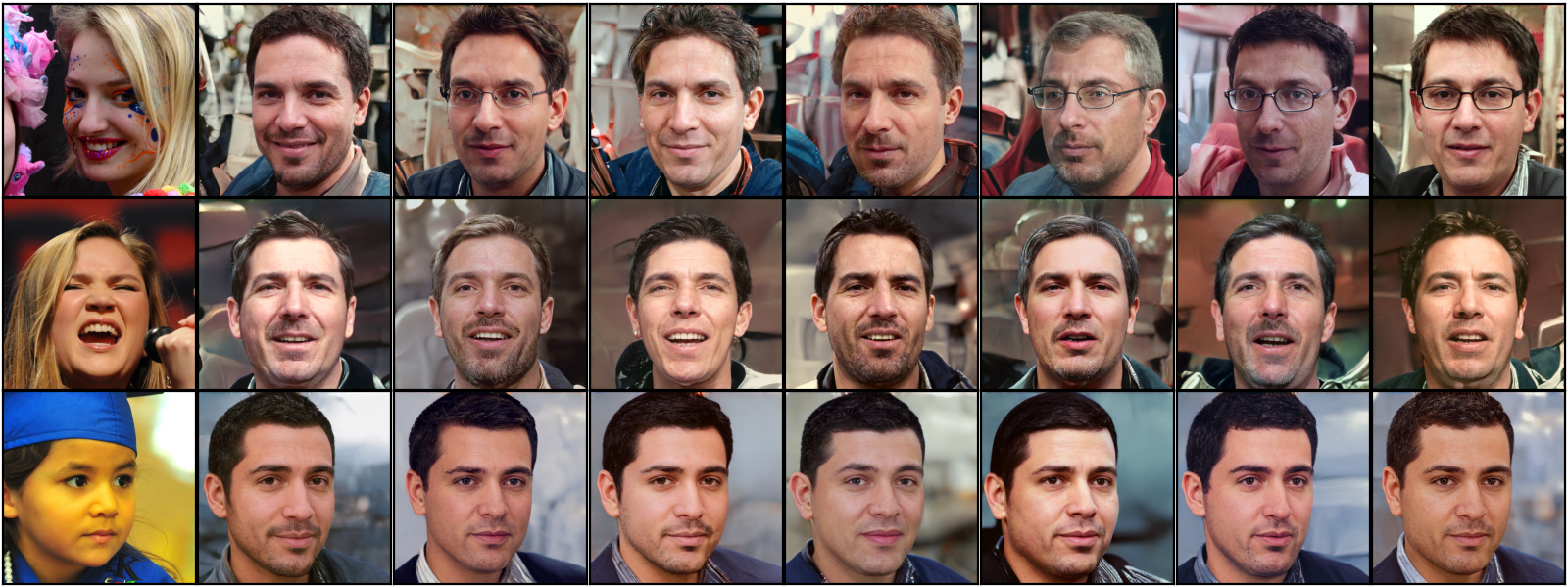}};
    \end{scope}
  \end{tikzpicture}
  \caption{Qualitative comparison between LightSB and VMSB for relatively high volatility, $\varepsilon=1.0$. Top (\textit{Male}\,$\bm{\to}$\,\textit{Female}): We find that VSBM has preserved more facial details, such as wearing glasses, than LightSB. Bottom (\textit{Adult}\,$\bm{\to}$\,\textit{Child}): VSBM was stable at retaining facial position even with high $\varepsilon$.} \label{fig:app_alae1}
  \vspace*{\fill}
\end{figure}

\end{document}